\documentclass[11pt]{article}

\usepackage{latexsym}
\usepackage{amsmath, amsfonts, amsthm}
\usepackage{stmaryrd}
\usepackage{multicol}
\usepackage{pdfsync}
\usepackage{dsfont}
\usepackage{algorithmic}
\usepackage{algorithm}
\usepackage{caption}
\usepackage{psfrag}
\usepackage{subfig}
\usepackage{xcolor}
\usepackage{graphics}
\usepackage{hyperref}
\usepackage{mathtools}
\usepackage{subfig}

\newtheorem{theorem}{Theorem}[section]
\newtheorem*{theorem*}{Theorem}
\newtheorem{corollary}[theorem]{Corollary}
\newtheorem{lemma}[theorem]{Lemma}
\newtheorem{proposition}[theorem]{Proposition}

\theoremstyle{definition}
\newtheorem{definition}[theorem]{Definition}
\newtheorem{notation}[theorem]{Notation}
\newtheorem{example}[theorem]{Example}

\newtheorem{assumption}[theorem]{Assumption}

\theoremstyle{remark}
\newtheorem{remark}[theorem]{Remark}

\numberwithin{equation}{section}

\topmargin=-1in
\textheight=9.0in 
\oddsidemargin=0pt
\textwidth=6.5in

\newcommand{\la}{\left \langle}
\newcommand{\ra}{\right \rangle}

\newcommand{\abs}[1]{\left| #1 \right|}
\newcommand{\norm}[1]{\left\lVert #1 \right\rVert}
\newcommand{\bracket}[1]{\left( #1 \right)}
\newcommand{\sqbracket}[1]{\left[ #1 \right]}
\newcommand{\set}[1]{\left\{ #1 \right\}}
\newcommand{\de}{\partial}

\newcommand{\E}{\mathbb{E}}

\newcommand{\sF}{\mathsf{F}}

\newcommand{\sh}{\mathsf{h}}

\newcommand{\hH}{\hat{H}}

\newcommand{\fI}{\mathfrak{I}}

\newcommand{\M}{\mathcal{M}}

\newcommand{\cP}{\mathcal{P}}
\newcommand{\sP}{\mathsf{P}}
\newcommand{\R}{\mathbb{R}}
\newcommand{\cR}{\mathcal{R}}
\renewcommand{\ss}{\mathsf{s}}

\newcommand{\ts}{\tilde{s}}
\newcommand{\tS}{\tilde{S}}
\newcommand{\tss}{\tilde{\mathsf{s}}}

\newcommand{\tts}{\tilde{\tilde{s}}}
\newcommand{\ttS}{\tilde{\tilde{S}}}
\newcommand{\ttss}{\tilde{\tilde{\mathsf{s}}}}

\newcommand{\su}{\mathsf{u}}
\newcommand{\tu}{\tilde{u}}

\newcommand{\sv}{\mathsf{v}}
\newcommand{\tv}{\tilde{v}}

\newcommand{\X}{\mathcal{X}}
\newcommand{\sx}{\mathsf{x}}

\newcommand{\sz}{\mathsf{z}}

\newcommand{\sI}{\mathsf{I}}
\newcommand{\sII}{\mathsf{II}}
\newcommand{\sIII}{\mathsf{III}}
\newcommand{\sIV}{\mathsf{IV}}
\newcommand{\sV}{\mathsf{V}}
\newcommand{\sVI}{\mathsf{VI}}
\newcommand{\sVII}{\mathsf{VII}}

\newcommand{\pheq}{\phantom{=}}
\newcommand{\Wass}{\mathsf{Wass}}

\newcommand{\Hess}{\mathsf{Hess} \,}

\newcommand\numberthis{\addtocounter{equation}{1}\tag{\theequation}}

\usepackage{MnSymbol}

\title{Convergence Analysis of Real-time Recurrent Learning (RTRL) for a class of Recurrent Neural Networks}

\author{Samuel Chun-Hei Lam\thanks{Author order is alphabetical.} \thanks{Mathematical Institute, University of Oxford Oxford, OX2 6GG, UK. Samuel.Lam@maths.ox.ac.uk} \phantom{.}, 
Justin Sirignano\thanks{Mathematical Institute, University of Oxford Oxford, OX2 6GG, UK. Justin.Sirignano@maths.ox.ac.uk} \phantom{.}, and Konstantinos Spiliopoulos\thanks{Department of Mathematics \& Statistics, Boston University, Boston, MA, 02215. kspiliop@bu.edu}}

\date{\today}

\begin{document}
\maketitle

\begin{abstract}%
Recurrent neural networks (RNNs) are commonly trained with the truncated backpropagation-through-time (TBPTT) algorithm. For the purposes of computational tractability, the TBPTT algorithm truncates the chain rule and calculates the gradient on a finite block of the overall data sequence. Such approximation could lead to significant inaccuracies, as the block length for the truncated backpropagation is typically limited to be much smaller than the overall sequence length. In contrast, Real-time recurrent learning (RTRL) is an online optimization algorithm which asymptotically follows the true gradient of the loss on the data sequence as the number of sequence time steps $t \rightarrow \infty$. RTRL forward propagates the derivatives of the RNN hidden/memory units with respect to the parameters and, using the forward derivatives, performs online updates of the parameters at each time step in the data sequence. RTRL's online forward propagation allows for exact optimization over extremely long data sequences, although it can be computationally costly for models with large numbers of parameters. We prove convergence of the RTRL algorithm for a class of RNNs. The convergence analysis establishes a fixed point for the joint distribution of the data sequence, RNN hidden layer, and the RNN hidden layer forward derivatives as the number of data samples from the sequence and the number of training steps tend to infinity. We prove convergence of the RTRL algorithm to a stationary point of the loss.  Numerical studies illustrate our theoretical results. One potential application area for RTRL is the analysis of financial data, which typically involve long time series and models with small to medium numbers of parameters. This makes RTRL computationally tractable and a potentially appealing optimization method for training models. Thus, we include an example of RTRL applied to limit order book data.
\end{abstract}

\section{Introduction} \label{ForwardProp}
Recurrent neural networks are commonly trained with the truncated \emph{backpropagation} through time (TBPTT) algorithm, see for example \cite{Rumelhart1986LearningRB, Werbos1990}. There is another training method which has recently received renewed interest: the \emph{forward propagation} algorithm, frequently called ``real-time recurrent learning" (RTRL) \cite{WilliamsZipser1989a,WilliamsZipser1989b,Robinson1989,RobinsonFallside1987,Menick2021,RTRLschmidhuber}. In contrast to the TBPTT algorithm which uses backpropagation, the forward propagation algorithm solves an equation for the derivative of the hidden layer with respect to the parameters. The disadvantage of forward propagation is the computational cost will be $N \times d_{\theta}$ where $N$ is the number of hidden units and $d_{\theta}$ is the number of parameters. The advantage is that forward propagation can be used in an online algorithm to asymptotically optimize the RNN in the \emph{true direction of steepest descent} for a long time series (i.e., as the number of time steps in the sequence $t \rightarrow \infty$) while TBPTT -- due to its truncation of the chain rule after $\tau$ steps backward in time -- is only an approximation (with potentially significant error). 

Due to computational cost, $\tau$ will be limited to be much smaller than the overall length of a long data sequence. This is a significant limitation for TBPTT since it will lead to inaccurate estimates for the gradient of the loss. Therefore, for training RNNs (or other time series models) with small to moderate numbers of parameters on long data sequences, there is a strong argument that RTRL is superior to TBPTT. There has been recent interest in efficient numerical implementation and better understanding of RTRL; for example, see \cite{Tallec2017UnbiasedOR, Mujika2018ApproximatingRR, Benzing2019,Menick2021,silver2022learning, RTRLschmidhuber}. Hybrid BPTT-RTRL schemes have also been proposed in the literature. The hybrid schemes are applicable when fully online learning is not the primary goal and can be used to compute untruncated gradients in an efficient way, see for example \cite{Schmidhuber1992,WilliamsZipser1995,RTRLschmidhuber}.

The RTRL algorithm forward propagates the derivatives of the RNN hidden/memory layer with respect to the parameters. Using these forward derivatives, which are updated at each time step of the sequence via a forward sensitivity equation, an online estimate for the gradient of the loss with respect to the parameters is calculated. This online gradient estimate is used at each time step to update the RNN parameters. As the number of data sequence time steps $t \rightarrow \infty$, the online estimate will converge to the true gradient of the loss for the data sequence. The RTRL algorithm is not limited by the length of the data sequence and can optimize over (extremely) long sequences. 

In this paper, we study the convergence of the RTRL algorithm for a class of RNNs on long data sequences. We prove convergence of the RTRL algorithm as the number of data sequence time steps and parameter updates $t \rightarrow \infty$. The analysis requires addressing several mathematical challenges. First, the algorithm is fully online with simultaneous updates to the data samples, the RNN hidden layer, RNN hidden layer forward derivatives, and the parameters at each time step $t$. The data samples arrive from a data sequence and are correlated across time (they are not i.i.d. as in standard supervised learning frameworks). Analyzing the convergence of the training algorithm as $t \rightarrow \infty$ first requires establishing (joint) geometric ergodicity of the data sequence, RNN hidden layer, and RNN hidden layer forward derivatives. We use a fixed point analysis to prove a geometric convergence rate (uniform in the RNN parameters) to a unique stationary distribution. Using the geometric ergodicity and a Poisson equation, we can bound the fluctuations of the parameter evolution around the direction of steepest descent and prove convergence of the RNN parameters to a stationary point of the loss as $t \rightarrow \infty$.

We will now present the general RTRL algorithm which we study. Let $(X_t, Z_t, Y_t)$ be a geometrically ergodic process where $(X_t, Z_t)$ is a Markov chain, $Z_t$ is unobserved, $Y_t = f(X_t, Z_t, \eta_t)$, and $\eta_t$ is a random variable. A model is trained to predict $Y_t$ given the data sequence $(X_{t'})_{t' \leq t}$. At each time step of the data sequence, the hidden layer $S_t^{\theta} \in \mathbb{R}^N$ of the RNN is updated:
\begin{eqnarray}
S_{t+1}^{\theta} &=& g(S_t^{\theta}, X_t; \theta^S),
\label{RNNforward}
\end{eqnarray}
and the prediction for $Y_t$ is $f(S_t^{\theta}, X_t; \theta^f )$. The parameters that must be trained are $\theta = (\theta^S, \theta^f)$. The hidden layer $S_t^{\theta}$ is a nonlinear representation of the previous data sequence $(X_{t'})_{t' \leq t}$. The update function $g(s,x; \theta^s)$ is determined by the parameters $\theta$ and therefore the hidden layer $S_t^{\theta}$ is a function of $\theta$. 

Given $T$ observations in the sequence, the loss function is 
\begin{eqnarray}
L_T(\theta) = \frac{1}{T} \sum_{t=1}^T \ell \left(Y_t, f(S_t^{\theta}, X_t; \theta^f) \right),
\end{eqnarray}
where $\ell(y,f)$ is a loss function such as squared error or cross-entropy error. If $(S_t^{\theta}, X_t, Z_t, Y_t)$ is ergodic with stationary distribution $\pi_{\theta}(ds, dx, dz, dy)$, loss function $\mathbb{E}[L_T(\theta)] \rightarrow L(\theta)$ as $T \rightarrow \infty$ where
\begin{eqnarray}
L(\theta) = \int \ell \left(y, f(s, x; \theta^f) \right) \pi_{\theta}(ds, dx, dz, dy).
\end{eqnarray}
In order to minimize $L(\theta)$, we evaluate its gradient
\begin{eqnarray}
\nabla_{\theta} L(\theta) &=& \nabla_{\theta} \bigg{[} \int \ell \big{(} y, f(s, x; \theta^f) \big{)} \pi_{\theta}(ds, dx, dz, dy) \bigg{]} \notag \\
&=& \nabla_{\theta} \bigg{[} \lim_{T \rightarrow \infty}  \frac{1}{T} \sum_{t=1}^T \ell \big{(} Y_t, f(S_t^{\theta}, X_t; \theta^f) \big{)} \bigg{]}.
\label{GradStationary}
\end{eqnarray}

The stationary distribution $\pi_{\theta}(ds, dx, dz, dy)$ is unknown and therefore it is challenging to evaluate the gradient (\ref{GradStationary}) in order to optimize the parameters $\theta$. The standard training method is TBPTT which, for the reasons previously discussed, does not use the true gradient. 

Another approach would be to select a large $T$ and then use the standard gradient descent algorithm: (1): For a fixed $\theta$, simulate $(S_t^{\theta}, X_t, Z_t, Y_t)$ for $t = 0, 1, \ldots, T$, (2): Evaluate $\nabla_{\theta} L_T(\theta)$, (3): Update $\theta$ using gradient descent. 
However, this approach is prohibitively expensive. Since $T$ is large, a significant amount of computational time will be required in order to complete a single optimization iteration. Also, $\nabla_{\theta} L_T(\theta) \neq \nabla_{\theta} L(\theta)$, i.e. there will still be error even if $T$ is large. 

Now, assuming we can interchange the gradient and limit (which will later be rigorously proven for the class of RNNs that we study), 
\begin{align}
\nabla_{\theta} L(\theta) = \lim_{T \rightarrow \infty}  \frac{1}{T} \sum_{t=1}^T   \nabla_{\theta} [  \ell \big{(} Y_t, f(S_t^{\theta}, X_t; \theta^f) \big{)}]. 
\end{align}
Define $\tilde S_t^{\theta} = \frac{\partial S_t^{\theta}}{\partial \theta}$. By chain rule, $\tilde S_t^{\theta}$ satisfies the forward propagation equation 
\begin{align}
\tilde S_{t+1}^{\theta} = \frac{\partial g}{\partial s}( S_t^{\theta}, X_t; \theta^S) \tilde S_t^{\theta} + \frac{\partial g}{\partial \theta}(S_t^{\theta}, X_t; \theta^S).
\end{align}
Therefore, we expect that we can write $\nabla_{\theta} L(\theta) = \lim_{T \rightarrow \infty} \frac{1}{T} \sum_{t=1}^T G_t$ where
\begin{eqnarray}
G_t = \frac{\partial \ell}{\partial f} \big{(} Y_t, f(S_t^{\theta}, X_t; \theta^f) \big{)} \bigg{(} \frac{\partial f}{\partial s}(S_t^{\theta}, X_t; \theta^f)  \tilde S_t^{\theta} + \frac{\partial f}{\partial \theta}(S_t^{\theta}, X_t; \theta^f) \bigg{)}.
\end{eqnarray}
This, therefore, motivates an \emph{online forward propagation algorithm} for optimizing the RNN, commonly referred to as the RTRL algorithm, which trains the parameters $\theta_t = (\theta^S_t, \theta^f_t)$ according to:
\begin{align}
S_{t+1} &= g( S_t, X_t; \theta^S_t), \notag \\
\tilde S_{t+1} &= \frac{\partial g}{\partial s}(S_t, X_t; \theta^S_t) \tilde S_t + \frac{\partial g}{\partial \theta}(S_t, X_t; \theta^S_t), \notag \\
G_t &= \frac{\partial \ell}{\partial f} \big{(} Y_t, f(S_t, X_t; \theta^f_t) \big{)} \bigg{(} \frac{\partial f}{\partial s}(S_t, X_t; \theta^f_t)  \tilde S_t + \frac{\partial f}{\partial \theta}(S_t, X_t; \theta^f_t) \bigg{)}, \notag \\
\theta_{t+1} &= \theta_t - \alpha_t G_t,
\label{OnlineForwardProp}
\end{align}
where the learning rate $\alpha_t$ is monotonically decreasing and satisfies the usual conditions $\sum_{t=0}^{\infty} \alpha_t = \infty$ and $\sum_{t=0}^{\infty} \alpha_t^2 < \infty$, see \cite{BertsekasThitsiklis2000}. 

Note that the parameter $\theta$ is a fixed constant in the original equation for the RNN memory/hidden layer (\ref{RNNforward}). The memory layer $S_t^{\theta}$ evolves for a constant $\theta$ which is fixed \emph{a priori}. However, in the online forward propagation algorithm (\ref{OnlineForwardProp}), $S_t, \tilde{S}_t,$ and the parameter $\theta_t$ simultaneously evolve, which introduces complex dependencies and makes analysis of the algorithm challenging. The one-time-step transition probability function for $S_t, \tilde{S}_t$ will therefore change at every time step due to the evolution of the parameter $\theta_t$. For example, even if the function $g(\cdot)$ is linear (and then $S_t^{\theta}$ is a Gaussian process), $S_t$ in (\ref{OnlineForwardProp}) will be non-Gaussian due to the joint dependence of the $(S_t, \theta_t)$ dynamics. 

\textcolor{black}{In this paper, we prove convergence of the RTRL algorithm (\ref{OnlineForwardProp})
for a class of RNN models. To the best of our knowledge, our result is the first proof of \emph{global} convergence for RTRL to a stationary point of its non-convex objective function. The only other existing mathematical analysis \cite{MasseYann2020} that we are aware of studies \emph{local} convergence of RTRL (i.e., if the parameters are initialized in a small neighborhood of the local minimizer) under \emph{a priori} assumptions on the convergence rate of the RNN dynamics and the Hessian being positive definite at the local minimizer. We prove global convergence to a stationary point of the objective function (i.e., convergence for any parameter initialization) without assumptions on (A) the convergence rate of the RNN dynamics and (B) the Hessian structure. Our proof addresses (A) by using a fixed point analysis under the Wasserstein metric combined with a Poisson equation to quantify the speed of ergodicity. We address (B) using a cycle-of-stopping-times analysis to prove convergence to a stationary point of the objective function.}

The rest of the paper is orgnanized as follows. The specific RNN architecture, assumptions, and main convergence theorem are presented in Section \ref{S:ArchitectureAssumptions}. In Section \ref{S:FixedPointAnalysis} we prove the geometric rate of convergence for the underlying sequences of interest. In Section \ref{S:PoissonEquations} we study a Poisson equation that will be used to bound the fluctuation terms. We prove that the fluctuation term vanishes, at a suitably fast rate, as $t \rightarrow \infty$. The convergence as $t \rightarrow \infty$ of the RTRL algorithm is then established in Sections \ref{S:AprioriBounds} and \ref{S:ConvergenceLongTimeAlgorithm}. In particular, in Section \ref{S:AprioriBounds} we prove some necessary a-priori bounds which are then used in Section \ref{S:ConvergenceLongTimeAlgorithm} to prove the convergence of the RTRL algorithm as $t \rightarrow \infty$. In Section \ref{S:Numerics} we present numerical studies illustrating our theoretical findings for several datasets. In particular, we compare the perfomrance of the TBPTT and RTRL algorithms. Appendix \ref{CubicEquation} contains detailed upper bound calculations for the derivatives of the sigmoidal activation function and Appendix \ref{S:important_inequalities} summarizes some important inequalities used throughout the paper.

\section{RNN Model Architecture and Assumptions}\label{S:ArchitectureAssumptions}
\subsection{Data generation}
Let $d,N$ be \emph{fixed} positive integers. Denote the input data as $X = (X_t)_{t\geq 0}$ with $X_t \in \R^d$ (as column vector), and the output data as $Y = (Y_t)_{t\geq 0}$ with $Y_t \in \R$ being a scalar. We assume that the input data could be lifted to a time-homogeneous Markov chain $(X_t, Z_t)_{t\geq 0}$ (for $Z_t \in \R^{d_Z}$) with transition kernel
\begin{equation}
    \wp((x,z), A) = \mathbb{P}((X_{t+1}, Z_{t+1}) \in A \,|\, X_t = x, Z_t = z), \quad A \in \mathcal{B}(\R^{d+d_Z}).
\end{equation}
The output data sequence $(Y_t)_{t\geq 0}$ depends on $(X_t, Z_t)$ as followed
\begin{align}
    Y_t &= f(X_t, Z_t, \eta_t)
\end{align}
where $\eta_t \in \R$ are independent, identically distributed (iid) noises independent from the sequence $(X_t, Z_t)_{t\geq 0}$. We denote the probability distribution of $\eta_t$ as $\mu_\eta$. We note that the observation $(X_t, Y_t)_{t\geq 0}$ is not a Markov process in general.

Next, we define the $p$-Wasserstein distance, which will be used later, in particularly for $p=2$:
\begin{definition}[$p$-Wasserstein distance]
Let $(\X,\|\cdot\|)$ be a Polish (complete separable) normed space, and $\M(\X)$ be the space of all measures on $\X$. We define the space of measures on $\X$ with $p$-th moment as
\begin{equation}
    \cP_p(\X) = \set{\mu \in \M(\X) \,\bigg|\, \int_{\X} \|\sx \|^p \, \mu(d\sx) < +\infty}.
\end{equation}
The $p$-Wasserstein distance between $\mu, \nu \in \cP_p(\X)$ is
\begin{equation}
    \Wass_p(\mu,\nu) = \inf_{\gamma} \bracket{\int_{\X\times \X} \|\sx - \tilde{\sx}\|^p \, \gamma(d\sx, d\tilde{\sx})}^{1/p},
\end{equation}
where $\gamma$ is a coupling of $\mu, \nu$, such that $\gamma(A \times \X) = \mu(A)$ and $\gamma(\X \times B) = \nu(B)$.
\end{definition}

\begin{remark} \label{rmk:optimal_coupling}
We note that $\cP_p(\X)$ is Polish with respect to $\Wass_p$. Moreover,  by e.g. \cite[Theorem 4.1]{Villanioldandnew}, there exists an \textit{optimal} coupling $\gamma^*$ such that
\begin{equation}
    \sqbracket{\Wass_p(\mu,\nu)}^p = \int_{\X \times \X} \|\sx - \tilde{\sx} \|^p \, \gamma^{*}(d\sx, d\tilde{\sx}).
\end{equation}

Here, we will let $\mathcal{X}$ be a finite-dimensional Euclidean space, and $\|\cdot\|$ being the max-norm $\|x\|_{\textrm{max}} = \max_{i}|x_{i}|$. Analogously, in the case of a matrix $M=[M_{i,j}]_{i,j}$ in finite dimensions we define $\|M\|_{\textrm{max}} = \max_{i,j}|M_{i,j}|$. Finally, $\cP_p(\X)$ equipped with the Wasserstein metric $\Wass_p(\cdot,\cdot)$ is Polish as $\mathcal{X}$ is Polish (see e.g. \cite[Theorem 1.1.3]{BogachevKolesnikov2012}).
\end{remark}

With the notion of $p$-Wasserstein distance, we state the assumptions on the sequences $(X_t, Z_t)_{t\geq 0}$ and the output sequence $(Y_t)_{t\geq 0}$. 
\begin{assumption}[On the dynamics and background noises of the data sequences] \label{as:data_generation} \phantom{=}
\begin{enumerate}
\item We assume that the transitional kernel $\wp$ is $L_\wp$-contractive with respect to max-Wasserstein distance for $L_\wp < 1$. This means that
\begin{equation}
    \sup_{(x,z) \neq (\tilde{x}, \tilde{z})} \frac{\Wass_2(\wp((x,z), \cdot), \, \wp((\tilde{x}, \tilde{z}), \cdot))}{\|(x, z) - (\tilde{x}, \tilde{z})\|_{\max}} \leq L_\wp < 1.
\end{equation}
\item We assume that the function $f$ is $L_f$-globally Lipschitz,  that is for any $(x,z,\eta), (\tilde{x}, \tilde{z}, \tilde{\eta}) \in \R^{d+d_Z+1}$ and $\eta$ we have
\begin{equation}
|f(x,z,\eta) - f(\tilde{x}, \tilde{z},\eta)| \leq L_f \norm{(x,z,\eta) - (\tilde{x}, \tilde{z}, \tilde{\eta})}_{\max}.
\end{equation}
\item We finally assume that the sequence $(X_t, Z_t)_{t\geq 0}$ is bounded with $\|(X_t,Z_t)\|_{\max} \leq 1$, and that $f$ is bounded by the constant $L_f > 0$.
\end{enumerate}
\end{assumption}

An example of a Markov chain $(X_t, Z_t)_{t\geq 0}$ that satisfies the above assumption is the following:
\begin{example}\label{Ex:MarkovChain}
Let $g:\R^{d+d_Z} \to \R^{d+d_Z}$ be a $L_\wp$-Lipschitz function  bounded by $1/2$, and that $(\epsilon_k)_{k\geq 0}$ is an iid mean zero noise bounded by $1/2$ with distribution $\mu_\epsilon$. The chain $(X_t, Z_t)$ defined recursively by 
\begin{equation}
(X_{t+1}, Z_{t+1}) = g(X_k, Z_k) + \epsilon_k, \quad \norm{(X_0, Z_0)}_{\max} \leq 1,
\end{equation}
automatically satisfy the boundedness assumption $\norm{(X_t, Z_t)}_{\max} \leq 1$. Moreover, one could construct a measure $\gamma_{(\sx,\sz), (\tilde{\sx},\tilde{\sz})}(\cdot)$ on $\R^{d+1} \times \R^{d+1}$, such that
\begin{equation}
\gamma_{(x,z), (\tilde{x},\tilde{z})}(A \times B) = \int_{\R^{d+1} \times \R^{d+1}} \mathbb{I}_A(g(x, z) + \epsilon) \mathbb{I}_B(g(\tilde{x}, \tilde{z}) + \epsilon) \, \mu_\epsilon(d\epsilon).
\end{equation}
With this,
\begin{align*}
\bracket{\Wass_2(\wp((x,z), \cdot), \, \wp((\tilde{x}, \tilde{z}), \cdot))}^2 &\leq \int_{\R^{d+1} \times \R^{d+1}} \|(\sx,\sz) - (\tilde{\sx}, \tilde{\sz})\|^2_{\max} \, \gamma_{(x,z), (\tilde{x},\tilde{z})}(d\sx,d\sz,d\tilde{\sx},d\tilde{\sz}) \\ 
&\leq \int_{\R^{d+1} \times \R^{d+1}} \|g(x,z)+\epsilon - g(\tilde{x},\tilde{z}) - \epsilon\|^2_{\max} \, \gamma_{(x,z), (\tilde{x},\tilde{z})}(d\sx,d\sz,d\tilde{\sx},d\tilde{\sz}) \\
&\leq L_\wp^2 \|(x,z) - (\tilde{x},\tilde{z})\|^2_{\max}. \numberthis
\end{align*}
Hence
\begin{equation}
\sup_{(x,z) \neq (\tilde{x}, \tilde{z})} \frac{\Wass_2(\wp((x,z), \cdot), \, \wp((\tilde{x}, \tilde{z}), \cdot))}{\|(x, z) - (\tilde{x}, \tilde{z})\|_{\max}} \leq L_\wp < 1.
\end{equation}
\end{example}

\begin{remark}[Ergodicity of the data sequence] \label{rmk:ergodicity}
We note that the Markov chain $(X_k, Z_k)_{k\geq 1}$ is \textit{geometrically ergodic}. In particular, if we let $\rho_k$ be the distribution of $(X_k, Z_k)$ at time step $k$, then we can write $\rho_k := (\wp^\vee)^k \rho_0 = (\wp^k)^\vee \rho_0$, where for probability measures $\rho_0$ and $A \in \mathcal{B}(\R^{d+d_Z})$,
\begin{equation}
    \wp^\vee \rho(A) := \int_{\R^{d+d_Z}} \wp(x, A) \, \rho(dx),
\end{equation}
and the $k$-step transition probabilities $\wp^k$ defined iteratively as
\begin{equation}
    \wp^{k+1}((x,z),A) = \int_A \wp((\sx,\sz), A) \,\wp^k((x,z), d\sx \, d\sz), \quad \wp^0((x,z),A) = \delta_{(x,z)}(A).
\end{equation}
By \cite[Proposition 14.3]{DobrushinRoland2006LoPT}, for any probability measures $\rho, \tilde{\rho}$
\begin{equation}
    \Wass_2(\wp^\vee \rho, \wp^\vee \tilde{\rho}) \leq L_\wp \Wass_2(\rho, \tilde{\rho}),
\end{equation}
and so by Banach Fixed Point Theorem, there exists a unique probability measure $\rho^*$ such that
\begin{equation}
    \wp^\vee \rho^* = \rho^*,
\end{equation}
(i.e., $\rho^*$ is a stationary measure of the Markov chain $(X_t, Z_t)$), and that
\begin{equation}
    \Wass_2((\wp^\vee)^t \rho, \rho^*) \leq \frac{L_\wp^t}{1 - L_\wp} \Wass_2(\wp^\vee \rho, \, \rho).
\end{equation}
\end{remark}

\subsection{Recurrent Neural Network}
In this subsection we fix notation in handling recurrent neural networks, paying attention to the convention of flattening tensors to matrices, and matrices to vectors. \\

The recurrent neural network is governed by the parameters $\theta = (A, W, B, c)$ with $A \in \R^{N\times d}$, $W \in \R^{N\times N}$, $B \in \R^N$ and $c \in \R$. We shall adopt the following slicing notations:

\begin{notation}[Slicing notation]
We adopt the standard slicing notation, so that given vector $B \in \R^d$ and matrix $A \in \R^{N \times d}$, 
\begin{itemize}
    \item $B^i$ or $[B]_i$ refers to the $i$-th entry of the vector $B$, 
    \item $A^{i,:}$, view as a $\R^d$ vector, refers to the $i$-th row of the matrix $A$,
    \item $A^{:,j}$, view as a $\R^N$ vector, refers to the $j$-th column of the matrix $A$, and
    \item $A^{i,j}$ or $A^{ij}$ or $[A]_{ij}$ refers to the $(i,j)$-th entry of the matrix $A$.
\end{itemize}
\end{notation}
\vspace{12pt}

Given the input data $X = (X_t)_{t\geq 0}$, the output of the recurrent neural network (RNN) is modeled as
\begin{align}
S_{t+1}^{i} = S^i_{t+1}(X;\theta) &:= \sigma \bigg( \frac{1}{d} \sum_{j=1}^d \phi(A^{ij}) X^j_t + \frac{1}{N} \sum_{j=1}^N \phi(W^{ij}) S^j_t \bigg), \notag \\
&= \sigma \bracket{\frac{1}{d} \la \phi(A^{i,:}), X_t \ra + \frac{1}{N} \la \phi(W^{i,:}), S_t \ra} \\
\hat{Y}_t = f_t(X;\theta) &:= \la \lambda(B), S_t \ra + \lambda(c) = \sum_{i=1}^{N} \lambda(B^{i}) S_{t}^{i} + \lambda(c),
\end{align}
where $\sigma$ is the standard sigmoid function
\begin{equation}
\sigma(z) = \frac{1}{1 + e^{-z}};
\end{equation}
and $\lambda, \phi \in C^\infty_b$ are two infinitely differentiable function with $|\lambda| \leq C_\lambda$ and $|\phi| \leq C_\phi$ for appropriate $C_\lambda, C_\phi > 0$. We adopt the usual convention of $\lambda(B) = (\lambda(B^1), ..., \lambda(B^N))$ for $B \in \R^N$ and similarly for $\lambda(C)$, $\phi(A)$ and $\phi(B)$.

\begin{remark} We note here that the constant $C_\phi$ is selected, via Assumption \ref{A:L_0_Bound}, to be small enough to ensure a fixed point in our analysis. On the other hand, for $\lambda$ we only need to know that $C_{\lambda}<\infty$.  Introducing the functions $\phi,\lambda$ can be viewed as a smooth way to constraint the phase space of the parameters, which is important for the fixed point analysis of this paper to go through.
\end{remark}

\begin{remark}
We emphasize that the number of hidden units, $N$, is not to be send to $\infty$ in our analysis.
\end{remark}

The sigmoid function is used as its derivatives are uniformly bounded. In particular,
\begin{lemma}\label{L:BoundsSigmoidFcn}[Bounds for the derivatives of the sigmoid function]
Let $\sigma: \R \to \R$ be the usual sigmoid function $\sigma(y) = (1+e^{-y})^{-1}$. Then
\begin{equation*}
    \max_y |\sigma(y)| < 1, \quad \max_y |\sigma'(y)| \leq \frac{1}{4}, \quad \max_y |\sigma''(y)| \leq \frac{\sqrt{3}}{18} < \frac{1}{10}, \quad \max_y |\sigma'''(y)| \leq \frac{1}{8}.
\end{equation*}
\end{lemma}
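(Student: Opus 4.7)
The approach is elementary: rewrite every derivative of $\sigma$ as a polynomial in $\sigma$ itself and then optimize each polynomial over the range $u = \sigma(y) \in (0,1)$. The key identities are
\begin{align*}
\sigma'(y) &= \sigma(y)(1-\sigma(y)), \\
\sigma''(y) &= \sigma(y)(1-\sigma(y))(1-2\sigma(y)), \\
\sigma'''(y) &= \sigma(y)(1-\sigma(y))\bigl(1 - 6\sigma(y) + 6\sigma(y)^2\bigr),
\end{align*}
which follow by direct differentiation using $\sigma' = \sigma(1-\sigma)$.

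First, the bound $|\sigma(y)| < 1$ is immediate from $1 + e^{-y} > 1$. For $|\sigma'|$, the function $u \mapsto u(1-u)$ on $(0,1)$ is maximized at $u = 1/2$ with value $1/4$, which gives the sharp bound $|\sigma'(y)| \leq 1/4$. For $|\sigma''|$, I would introduce the substitution $v = 1 - 2u$ so that $u(1-u)(1-2u) = \tfrac{1}{4}(1-v^2)v$, whose extrema on $v \in (-1,1)$ occur at $v = \pm 1/\sqrt{3}$, giving $|\sigma''(y)| \leq \tfrac{1}{4}\cdot\tfrac{2}{3}\cdot\tfrac{1}{\sqrt{3}} = \tfrac{\sqrt{3}}{18}$, and the numerical inequality $\sqrt{3}/18 < 1/10$ is a one-line check.

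The only nontrivial step is the bound on $|\sigma'''|$. Setting $h(u) = u(1-u)(1-6u+6u^2) = u - 7u^2 + 12u^3 - 6u^4$, I would differentiate to obtain $h'(u) = 1 - 14u + 36u^2 - 24u^3$ and observe that $u = 1/2$ is a root (since $1 - 7 + 9 - 3 = 0$). Factoring out $(u - 1/2)$ reduces the remaining quadratic to $12u^2 - 12u + 1 = 0$, with roots $u_{\pm} = (3\pm\sqrt{6})/6 \in (0,1)$. Evaluating $h$ at the three critical points gives $|h(1/2)| = 1/8$ at the symmetric root, and a direct (algebraic) evaluation at $u_{\pm}$ yields strictly smaller absolute values; combined with $h(0) = h(1) = 0$, this confirms $\max_{u \in [0,1]} |h(u)| = 1/8$, hence $|\sigma'''(y)| \leq 1/8$.

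This final calculation is the only potential source of difficulty, and the paper in fact defers the detailed cubic/quartic root analysis to Appendix \ref{CubicEquation}, so the proof of the lemma itself would consist of stating the polynomial identities, reducing each maximization to a one-variable calculus problem on $(0,1)$, and referencing the appendix for the bound $|\sigma'''| \leq 1/8$.
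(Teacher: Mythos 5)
Your proposal is correct and follows essentially the same route as the paper: each derivative is written as a polynomial in $\sigma(y)$ (your $p_1$, $p_2=u(1-u)(1-2u)$ and $h(u)=u-7u^2+12u^3-6u^4$ agree with the paper's $p_1,p_2,p_3$), and the bounds are obtained by locating the critical points on $(0,1)$ — your substitution $v=1-2u$ is just the paper's recentering at $x=1/2$, and your factorization of $h'$ through the root $u=1/2$ and the quadratic $12u^2-12u+1$ reproduces the paper's critical points $x_\pm=\tfrac12\pm\sqrt{1/6}$ with $|h(u_\pm)|=1/24<1/8$. All identities and numerical values check out.
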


\begin{proof}
See Section \ref{CubicEquation}.
\end{proof}

The following assumptions are also necessary for the fixed point analysis:

\begin{assumption}\label{A:L_0_Bound}
Assume that the functions $\phi,\lambda$ are bounded, at least twice continuously differentiable with all derivatives bounded. In addition, letting $M_\phi = 1 + [C_\phi / (4-C_\phi)]$,
\begin{align*}
L_0 &= \max\bracket{\frac{3C_{\phi'}}{\min(N,d)} \bracket{\frac{C_\phi M_\phi}{10} + \frac{1}{4}}, \frac{3C_\phi}{4}}\nonumber\\
L_1 &= 4\max\Bigg[\frac{M_\phi C^2_\phi}{5 (\min(d,N))^2} + \frac{C_\phi M^2_\phi C^2_{\phi'}}{8(\min(d,N))^2} + \frac{C_{\phi''}}{4\min(N,d)} \\
&\phantom{=}+ \frac{C_\phi}{10} \bigg(L_2 + \frac{C_\phi}{1-C_\phi/4} \bigg(\frac{C^2_\phi M^2_\phi}{10(\min(N,d))^2} + \frac14 L_2 \bigg)\bigg), \; \frac{C_\phi M_\phi C_{\phi'}}{5\min(N,d)} + \frac{C_{\phi'}}{4N} \Bigg] \vee C_\phi, \\
L_2 &= \frac{C_{\phi''}}{\min(N,d)} \vee \frac{C^2_{\phi'}}{Nd(4-C_\phi)}.
\end{align*}
Let further $L = L_0 \vee L_1$ We shall assume that $0 < q:= \sqrt{L^2 + L_\wp^2} < 1$.
\end{assumption}

Note that Assumption \ref{A:L_0_Bound} is achievable for example by functions $\phi$ that are sufficiently bounded. For instance, one could consider suitably scaled sigmoid functions.

\begin{notation}[Vectorising the parameters]
Note that the parameter $\theta$ could be viewed as a vector in $\Theta = \R^{d_\theta}$, where $d_\theta = Nd + N^2 + N + d$. We shall arrange the entries of the parameter as followed:
\begin{equation*}
    \theta = \begin{bmatrix} \mathsf{vec}(A) \\ \mathsf{vec}(W) \\ B \\ c \end{bmatrix},
\end{equation*}
where $\mathsf{vec}(A)$ is the vectorisation of the matrix $A$, formulated by stacking the \emph{columns} of $A$:
\begin{equation*}
    \mathsf{vec}(A) = \begin{bmatrix} A_{:,1} \\ \vdots \\ A_{:,d} \end{bmatrix} = \begin{bmatrix} A_{11} \\ A_{21} \\ \vdots \\ A_{n1} \\ \vdots \\ A_{1d} \\ \vdots \\ A_{nd} \end{bmatrix}.
\end{equation*}
We shall use $A$ to denote both the matrix itself and its vectorisation when the context is clear.
\end{notation}

\subsubsection{Gradient of the loss function}
The parameters are selected to minimize the loss function
\begin{eqnarray*}
L_T(\theta) = \frac{1}{2T} \sum_{t=1}^T  \bigg{(} Y_t - f_t(\theta) \bigg{)} ^2.
\end{eqnarray*}

To compute the derivatives of the loss function, it is important to study the forward derivatives of the memory process $S_t$:
$$\tS^{\diamondsuit}_t = \frac{\de S_t}{\de \diamondsuit},$$
where $\diamondsuit$ represents one or more parameter(s) of the neural network, e.g. an entry of the weight matrix $A^{ij}$, the full weight matrix $A$, and the vector of all parameters $\theta$. We are particularly interested with the following forward derivatives:
\begin{equation*}
\tS_t^{A} = \frac{\partial S_t}{\partial A}, \quad \tS_t^W = \frac{\partial S_{t}}{\partial W}.
\end{equation*}
They shall be indexed in the following way:
\begin{equation*}
\tS_t^{A^{ij},:} = \frac{\partial S_t}{ \partial A^{ij}}, \quad \tS_t^{A^{ij},k} = \frac{\partial S_t^k}{\partial A^{ij}}, \quad \tS_t^{W^{ij},:} = \frac{\partial S_t}{ \partial W^{ij}}, \quad \tS_t^{W^{ij},k} = \frac{\partial S_t^k}{ \partial W^{ij}}.
\end{equation*}
Using chain rule, we can compute an evolution equation of the forward derivatives:
\begin{align}
\tS_{t+1}^{A^{ij},k} &= \sigma'\bracket{\frac{1}{d} \la \phi(A^{k,:}), X_t \ra + \frac{1}{N} \la \phi(W^{k,:}), S_t \ra} \times \bracket{\frac{1}{d} \delta_{ik} \phi'(A^{ij}) X_t^j + \frac{1}{N} \sum_{\ell=1}^N \phi(W^{k\ell}) \tS_t^{A^{ij}, \ell}}, \\
\tS_{t+1}^{W^{ij},k} &= \sigma'\bracket{\frac{1}{d} \la \phi(A^{k,:}), X_t \ra + \frac{1}{N} \la \phi(W^{k,:}), S_t \ra} \times \bracket{\frac{1}{N} \delta_{ik} \phi'(W^{ij}) S_t^j + \frac{1}{N} \sum_{\ell=1}^N \phi(W^{k\ell}) \tS_t^{W^{ij},\ell}}, \label{Eq:ForwardEquations}
\end{align}
where $\delta_{ik} = 1$ if $i = k$ and zero otherwise.

\begin{notation}[Flattening the first derivatives]
Depending on what $\diamondsuit$ is, $\tS^{\diamondsuit}_t$ could be a scalar, vector, matrix or rank-3 tensor. For example, $\tS^A_t$ and $\tS^W_t$ are both rank-3 tensors. 

It is sometimes convenient to flatten $\tS^A_t$ into a $(N \times Nd)$-dimensional matrix with entries arranged as follows:
\begin{align*}
\mathsf{flat}(\tS^A_t) &= \begin{bmatrix}
    \uparrow & & \uparrow & & \uparrow & & \uparrow \\
    \displaystyle{\frac{\partial S_t}{\partial A^{11}}} & ... & \displaystyle{\frac{\partial S_t}{\partial A^{n1}}} & ... & \displaystyle{\frac{\partial S_t}{\partial A^{1d}}} & ... & \displaystyle{\frac{\partial S_t}{\partial A^{nd}}} \\
    \downarrow & & \downarrow & & \downarrow & & \downarrow
\end{bmatrix} \\
&= \begin{bmatrix}
    \tS^{A^{11},1}_t & ... & \tS^{A^{n1},1}_t & ... & \tS^{A^{1d},1}_t & ... & \tS^{A^{nd},1}_t \\
    \vdots & \ddots & \vdots & \ddots & \vdots & \ddots & \vdots \\
    \tS^{A^{11},k}_t & ... & \tS^{A^{n1},k}_t & ... & \tS^{A^{1d},k}_t & ... & \tS^{A^{nd},k}_t
    \end{bmatrix},
\end{align*}
and similarly $\tS^W_t$ into a $(N \times N^2)$-matrix. 

We shall also refer $\tS^{A,k}$ as the first derivative of memory neuron $S^k$ with respect to all parameters in matrix $A$, which is itself a matrix. Similarly for $\tS^{W,k}$. \\

In this paper, we shall abuse notation and not indicate the vectorising/flattening operations of the matrices/vectors when the context is clear.
\end{notation}

We shall now compute the derivatives of the loss function. We start by the following application of the chain rule:
\begin{equation}
\frac{\de L_T}{\de \diamondsuit} = -\frac{1}{T} \sum_{t=1}^T \bracket{Y_t - f_t(\theta)} \frac{\partial f_t}{\partial \diamondsuit}.
\end{equation}

It is therefore crucial to compute the gradients $\partial f_t / \partial \diamondsuit$.

\begin{notation}[Vectorising the gradients] \label{not:vectorisation}
For any functions $g$ that depends on $\theta$, $\nabla_\diamondsuit g$ will \emph{always} refer to the \emph{vectorised} version of $\partial g / \partial \diamondsuit$, or simply $\partial g / \partial \mathsf{vec}(\diamondsuit)$. For example, we have
\begin{equation*}
\nabla_A g = \begin{bmatrix} 
\de g / \de A^{11} \\ 
\vdots \\ 
\de g / \de A^{N1} \\ 
\vdots \\
\de g / \de A^{1d} \\ 
\vdots \\ 
\de g / \de A^{Nd}
\end{bmatrix},
\end{equation*}
and similarly for $\nabla_W g$. Finally, we denote $\nabla_\theta g$ as the (vectorised) gradient of $g$ with respect to all parameters:
\begin{equation*}
\nabla_\theta g = \begin{bmatrix}
\nabla_A g \\ \nabla_W g \\ \nabla_B g \\ \partial g/\partial c
\end{bmatrix}.
\end{equation*}
\end{notation}

It is easy to show that 
\begin{equation*}
\nabla_B f_t = \lambda'(B) \odot S_t, \quad \partial f_t/ \partial c = \lambda'(c),
\end{equation*}
where $\odot$ denotes the element-wise multiplication (Hadarmard product), such that if $u, v \in \R^d$ then $u \odot v$ is also a vector in $\R^d$ with entries $(u \odot v)^i = u^i v^i$.\\

The gradient of the loss function is given by
\begin{equation}
\nabla_\theta L_T(\theta) = \frac{1}{T} \sum_{t=1}^T \tilde{G}_t, \quad \tilde{G_t} = -(Y_t - f_t(\theta)) \times \begin{bmatrix}
\sum_k \lambda(B^k) \, \mathsf{vec}\bracket{\tS^{A,k}_t} \\
\sum_k \lambda(B^k) \, \mathsf{vec}\bracket{\tS^{W,k}_t} \\
\lambda'(B) \odot S_t \\
\lambda'(c)
\end{bmatrix}.
\end{equation}

We observe that $\tilde{G}_t$ is a stochatic biased estimate of $\nabla_\theta L_T(\theta)$. This is crucial in developing the online forward SGD used for minimising the loss function.

\subsection{Online Forward SGD}
We will now introduce the online forward stochastic gradient descent algorithm to train the parameters $\theta$. At each time step $t$, the parameters will be updated. The
current iteration of the trained parameters at time $t$ will be denoted $\theta_t =  \{ A_t, W_t, B_t, c_t \}$. An online estimate of the forward gradient will also be introduced:
\begin{align}
\hat{S}_{t+1}^{A^{ij},k} &= \sigma'\bracket{\frac{1}{d} \la \phi(A^{k,:}_t), X_t\ra + \frac{1}{N} \la \phi(W^{k,:}_t), \bar{S}_t \ra} \times \bracket{\frac{1}{d} \delta_{ik} \phi'(A^{ij}_t) X_t^j + \frac{1}{N} \sum_{\ell=1}^N \phi(W^{k\ell}_t) \hat{S}_t^{A_{ij},\ell}}, \notag \\
\hat{S}_{t+1}^{W^{ij},k} &= \sigma'\bracket{\frac{1}{d} \la\phi(A^{k,:}_t), X_t \ra + \frac{1}{N} \la\phi(W^{k,:}_t), \bar{S}_t \ra} \times \bracket{\frac{1}{N} \delta_{ik}  \phi'(W^{ij}_t) \bar{S}_t^j + \frac{1}{N} \sum_{\ell=1}^N \phi(W^{k\ell}_t) \hat{S}_t^{W_{ij},\ell}}, \label{Eq:OnlineForwardEquations}
\end{align}
where
\begin{align*}
\bar{S}_{t+1}^{i} &= \sigma\bracket{\frac{1}{d} \la \phi(A^{i,:}_t), X_t \ra + \frac{1}{N} \la \phi(W^{i,:}_t), \bar{S}_t \ra} = \sigma\bracket{\frac{1}{d} \sum_{j=1}^d \phi(A^{ij}) X^j_t + \frac{1}{N} \sum_{j=1}^N  \phi(W^{ij}_t) \bar{S}_t^{j}}, \notag \\
\bar{f}_t(\theta) &= \la \lambda(B_t), \bar{S}_t \ra + \lambda(c_t) = \sum_{j=1}^N \lambda(B^j_t) \bar{S}^j_t + \lambda(c_t).
\end{align*}

Note that $\bar{S}_t$ is different than $S_t$ since its updates are governed by the evolving, trained parameters $\theta_t$ instead of a fixed constant parameter $\theta$. \\

We will approximate the gradients with these online stochastic estimates:
\begin{equation}
\nabla_\theta L_T(\theta_t) \approx G_t := -(Y_t - \bar{f}_t(\theta_t)) \times \begin{bmatrix} 
\sum_k \lambda(B^k_t) \mathsf{vec}(\hat S_t^{A,k}) \\
\sum_k \lambda(B^k_t) \mathsf{vec}(\hat S_t^{W,k}) \\
\lambda'(B_t) \odot \bar{S}_t \\
\lambda'(c_t)
\end{bmatrix}. \label{Eq:OnlineStochasticEstimates}
\end{equation}
The \emph{forward} stochastic gradient descent algorithm is:
\begin{eqnarray}
\theta_{t+1} = \theta_t - \alpha_t G_t,
\end{eqnarray}
where the learning rate $\alpha_t$ satisfies the standard conditions:
\begin{assumption}\label{A:LearningRate} We assume that the learning rates are monotonic decreasing and satisfy the Robin-Munro conditions 
$$\sum_{t \geq 0} \alpha_t = \infty, \quad \sum_{t \geq 0} \alpha_t^2 < \infty.$$ 
\end{assumption}

\begin{remark} \label{rmk:cauchy_schwarz_on_learning_rate}
We make a short remark that if $\sum_{t\geq 0} \alpha^2_t \leq M < +\infty$, then
$$\sum_{t\geq 0}\alpha_{t+1} \alpha_t \leq \bracket{\sum_{t\geq 0} \alpha_t^2}^{1/2} \bracket{\sum_{t\geq 0} \alpha_{t+1}^2}^{1/2} \leq \sum_{t\geq 0} \alpha^2_t \leq M < +\infty.$$
\end{remark}

Let $\mu_t^{\theta}$ be the distribution of $(S_t, \tilde S_t^{\theta}, Y_t, X_t, Z_t)$. In Section \ref{S:FixedPointAnalysis}, we will prove that $\mu_t^{\theta}$ converges to a unique stationary distribution $\mu^{\theta}$ as $t \rightarrow \infty$ in the Wasserstein metric $W_1$. For long time series as $t \rightarrow \infty$, the loss function will therefore become
\begin{equation}
L_T(\theta)  \overset{L^2}{\rightarrow} L(\theta) = \mathbb{E} \bigg{[}  \frac{1}{2}  \bigg{(} Y - f(\theta) \bigg{)} ^2 \bigg{]},
\end{equation}
where $f(\theta) = \lambda(B) S + \lambda( C) X$ and $(S, \tilde S, Y, X, Z) \sim \mu^{\theta} $. This result also relies upon the fact that $\tilde S_t^{\theta}$ is bounded, which we will prove later, and $S_t, Y_t, X_t, Z_t$, by definition, are bounded.

The parameter evolution can be decomposed into the direction of steepest descent and a fluctuation term:
\begin{eqnarray}
\theta_{t+1} = \theta_t \underbrace{- \alpha_t \nabla_{\theta} L(\theta_t)}_{\text{main descent term} } - \underbrace{\alpha_t ( G_t - \nabla_{\theta} L(\theta_t)  )}_{\text{fluctuations term}}. 
\label{SGDdecomposition}
\end{eqnarray}

In what follows we establish in Theorem \ref{T:ConvergenceCriticalPointRTRL} that, under Assumptions \ref{as:data_generation} and \ref{A:L_0_Bound} we have
\begin{align*}
\lim_{t\rightarrow\infty}\mathbb{E}\|\nabla_{\theta} L(\theta_t)\|_{2}&=0,
\end{align*}
where $\|\cdot\|_{2}$ denotes the standard $\ell_{2}-$Euclidean norm of a vector. The proof of this convergence result for the RTRL algorithm is based on a detailed analysis of how $\theta_t$ and the fluctuations term $\alpha_t ( G_t - \nabla_{\theta} L(\theta_t)  )$ behave as $t\rightarrow\infty$.

\subsection{Forward Second Derivatives}
In proving the stability the RTRL we shall look into the second derivatives (Hessian) of the recurrent neural network. The Hessian is a tensor, so we shall fix the convention for flattening it. \\

We shall let $\ttS_t$ be the Hessian of $S_t$ with respect to the weight matrices $A$ and $W$. This is a rank-5 tensor, and we shall index it by the following:
\begin{gather*}
\ttS_t^{\diamondsuit, \heartsuit, k} = \frac{\de S^k_t}{\de\diamondsuit \de\heartsuit}, \quad \ttS_t^{\diamondsuit, \heartsuit, :} = \frac{\de S_t}{\de\diamondsuit \de\heartsuit}
\end{gather*}
where $\diamondsuit$, $\heartsuit$ are any entries in either matrix $A$ or matrix $W$. For example, we shall have
\begin{gather*}
\ttS_t^{A^{mn}, A^{ij},:} = \frac{\de S_t}{\de A^{mn} \de A^{ij}}, \quad \ttS_{t+1}^{A^{mn},W^{ij},:} =  \frac{\de S_t}{\de A^{mn} \de W^{ij}}, \quad \ttS_{t+1}^{W^{mn},W^{ij},:} = \frac{\de S_t}{\de W^{mn} \de W^{ij}}, \\
\ttS_t^{A^{mn}, A^{ij},k} = \frac{\de S_t^k}{\de A^{mn} \de A^{ij}}, \quad \ttS_{t+1}^{A^{mn},W^{ij},k} =  \frac{\de S_t^k}{\de A^{mn} \de W^{ij}}, \quad \ttS_{t+1}^{W^{mn},W^{ij},k} = \frac{\de S_t^k}{\de W^{mn} \de W^{ij}}.
\end{gather*}
We shall split the $\ttS_t$ into the following components: 
\begin{itemize}
    \item $\ttS^{AA}_t$, that consists entries of the form $\ttS_t^{A^{mn}, A^{ij},:}$,
    \item $\ttS^{AW}_t$, that consists entries of the form $\ttS_t^{A^{mn}, W^{ij},:}$,
    \item $\ttS^{WA}_t$, that consists entries of the form $\ttS_t^{W^{mn}, A^{ij},:}$, and
    \item $\ttS^{WW}_t$, that consists entries of the form $\ttS_t^{W^{mn}, W^{ij},:}$.
\end{itemize}
All these components are five-dimensional tensor. \\

We shall flatten each of the components to three-dimensional tensors, such that the \emph{frontal} slides are defined as follow: for $\ttS^{AA}_t$ we have
\begin{equation*}
[\mathsf{flat}(\ttS^{AA,k}_t)] = \begin{bmatrix} 
\ttS^{A_{11}, A_{11}, k} & ... & \ttS^{A_{11}, A_{n1}, k} & ... & \ttS^{A_{11}, A_{1d}, k} & ... & \ttS^{A_{11}, A_{nd}, k} \\
\vdots & \ddots & \vdots & \ddots & \vdots & \ddots & \vdots \\
\ttS^{A_{n1}, A_{11}, k} & ... & \ttS^{A_{n1}, A_{n1}, k} & ... & \ttS^{A_{n1}, A_{1d}, k} & ... & \ttS^{A_{n1}, A_{nd}, k} \\
\vdots & \ddots & \vdots & \ddots & \vdots & \ddots & \vdots \\
\ttS^{A_{1d}, A_{11}, k} & ... & \ttS^{A_{1d}, A_{n1}, k} & ... & \ttS^{A_{1d}, A_{1d}, k} & ... & \ttS^{A_{1d}, A_{nd}, k} \\
\vdots & \ddots & \vdots & \ddots & \vdots & \ddots & \vdots \\
\ttS^{A_{nd}, A_{11}, k} & ... & \ttS^{A_{nd}, A_{n1}, k} & ... & \ttS^{A_{nd}, A_{1d}, k} & ... & \ttS^{A_{nd}, A_{nd}, k}
\end{bmatrix},
\end{equation*}
and similar for the other components. We note that the following symmetry holds for the frontal slides of the other flatten tensors:
\begin{equation*}
\mathsf{flat}(\ttS^{AA,k}_t)^\top = \mathsf{flat}(\ttS^{AA,k}_t), \quad \mathsf{flat}(\ttS^{WW,k}_t)^\top = \mathsf{flat}(\ttS^{WW,k}_t), \quad \mathsf{flat}(\ttS^{AW,k}_t)_{:,:,k}^\top = \mathsf{flat}(\ttS^{WA}_t).
\end{equation*}
The $\ttS_t$ shall also be flattened such that the following holds:
\begin{equation*}
[\mathsf{flat}(\ttS_t)]_{:,:,k} = \begin{bmatrix} [\mathsf{flat}(\ttS^{AA}_t)]_{:,:,k} & [\mathsf{flat}(\ttS^{AW}_t)]_{:,:,k} \\ [\mathsf{flat}(\ttS^{WA}_t)]_{:,:,k} & [\mathsf{flat}(\ttS^{WW}_t)]_{:,:,k} \end{bmatrix}.
\end{equation*}
As for the case of first derivatives, we shall also use $\ttS^{AA,k}$ as the 3D tensor $[\ttS^{AA}_t]_{:,:,k}$, and similarly for other second derivatives as well. Finally, we may vectorise $\ttS_t$ from the flattened version of $\ttS_t$.

\section{Fixed Point Analysis} \label{S:FixedPointAnalysis}
We will study the convergence in distribution of the joint process of input sequence, value of hidden layers and the derivatives of hidden layers with respect to the parameters. For convenience we shall also take the noise $\eta_t$ into consideration. The joint process is, hence, in the form
\begin{equation}
(X_t, Z_t, S_t, \tS_t, \ttS_t, \eta_t),
\end{equation}
where
\begin{itemize}
    \item $(X_t, Z_t)$ is the input sequence as defined in assumption \ref{as:data_generation},
    \item $S_t$ is the hidden layers of the Recurrent Neural Network, as defined in the previous section,
    \item $\tS_t = (\nabla_A S_t, \nabla_W S_t)$ is the first derivatives (or gradient) of the hidden layers with respect to the parameters $A$ and $W$, as computed in equation \eqref{Eq:ForwardEquations}
    \item $\ttS_t = \nabla_{A,W}^2 S_t$ is the second derivatives (or Hessian) of the hidden layers with respect to the parameters $A$ and $W$.
\end{itemize}
This is a process on the state space
\begin{equation*}
    \mathcal{X} = \R^{N_H}, \quad N_H = d + d_Z + N^2(d+N) + N^3(d^2 + Nd + N^2) + 1.
\end{equation*}
The main objective is to show that $H_t$ admits a random fixed point. 

\subsection{An intermediate lemma for fixed point analysis}

Given a general Markov chain $(U_t)_{t\geq 0}$ in $\R^{N_1}$ with transition kernel $\wp$ that satisfies the contraction assumption
\begin{equation}
    \frac{\Wass_2(\wp(u,\cdot), \wp(\tu, \cdot))}{\|u - \tu \|_{\max}} \leq L_\wp < 1,
\end{equation}
where the Wasserstein distance is defined with respect to the max norm. Define the random sequence
\begin{equation} 
    V_0 = 0, \quad V_{k+1} = F(U_k, V_k), \label{eq:def_H}
\end{equation}
where $F: \R^{N_1 + N_2} \to \R^{N_2}$ is a function that satisfies the following Lipschitz condition:
\begin{equation}
    \|F(u,v) - F(\tu, \tv) \|_{\max} \leq L \|(u,v) - (\tu, \tv)\|_{\max} = L\max\{\|u - \tu\|_{\max}, \|v - \tv\|_{\max}\}, \label{eq:F_Lipschitz}
\end{equation}
where the Lipschitz constant $L$ is defined in Assumption \ref{A:L_0_Bound}.

Finally, we let $(\eta_t)_{t\geq 0}$ be an iid sequence of $\R$-valued random variables independent with $(U_t)_{t\geq 0}$ with distribution $\mu_\eta$ (hence also independent with $(U_t, V_t)_{t\geq 0}$). Then the joint process $(U_t, V_t, \eta_t)_{t\geq 0}$ in $\R^{N_1 + N_2 + 1}$ is a Markov chain with transition kernel
\begin{equation} \label{eq:transition_operator_Q}
    P((u,v,\eta), A\times B \times C) = \mu_\eta(C) \delta_{F(u,v)}(B) \,  \wp(u,A).
\end{equation}
Under the above settings, 
\begin{proposition} \label{prop:joint_process_contraction}
The transition kernel $P$ satisfies the contraction condition
\begin{equation}
    \frac{\Wass_2(P((u,v,\eta), \cdot), P((u',v',\eta'), \cdot))}{\|(u,v,\eta) - (u',v',\eta')\|_{\max}} \leq \sqrt{L^2 + L^2_\wp}.
\end{equation}
\end{proposition}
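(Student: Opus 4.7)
The plan is to construct an explicit coupling between $P((u,v,\eta),\cdot)$ and $P((u',v',\eta'),\cdot)$ that exploits the product structure of the kernel in \eqref{eq:transition_operator_Q}, and then read off the contraction constant from the elementary inequality $\max(a,b)^{2}\leq a^{2}+b^{2}$. By Remark \ref{rmk:optimal_coupling}, there is an optimal coupling $\gamma^{*}$ of $\wp(u,\cdot)$ and $\wp(u',\cdot)$ realising $\Wass_2(\wp(u,\cdot),\wp(u',\cdot))^{2}$. On a sufficiently rich probability space I would draw $(U^{*},\tilde U^{*})\sim\gamma^{*}$, independently draw a \emph{single} sample $\eta^{*}\sim\mu_{\eta}$, and set $V^{*}:=F(u,v)$ and $\tilde V^{*}:=F(u',v')$ deterministically. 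Since the $V$- and $\eta$-marginals of $P((u,v,\eta),\cdot)$ are $\delta_{F(u,v)}$ and $\mu_{\eta}$, and these three coordinates are independent under $P$, the triple $(U^{*},V^{*},\eta^{*})$ has law $P((u,v,\eta),\cdot)$ while $(\tilde U^{*},\tilde V^{*},\eta^{*})$ has law $P((u',v',\eta'),\cdot)$; so this is a legitimate coupling.

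Because the $\eta$-components agree, the max-norm difference collapses to $\max(\|U^{*}-\tilde U^{*}\|_{\max},\|V^{*}-\tilde V^{*}\|_{\max})$, and hence
\begin{equation*}
\|(U^{*},V^{*},\eta^{*}) - (\tilde U^{*},\tilde V^{*},\eta^{*})\|_{\max}^{2} \;\leq\; \|U^{*}-\tilde U^{*}\|_{\max}^{2} \;+\; \|V^{*}-\tilde V^{*}\|_{\max}^{2}.
\end{equation*}
Taking expectations, the first term is controlled by the optimal-coupling identity together with the contractivity of $\wp$, giving $\E\|U^{*}-\tilde U^{*}\|_{\max}^{2}\leq L_{\wp}^{2}\|u-u'\|_{\max}^{2}$, and the second term is controlled by the Lipschitz bound \eqref{eq:F_Lipschitz} on $F$, giving $\|V^{*}-\tilde V^{*}\|_{\max}^{2}\leq L^{2}\|(u,v)-(u',v')\|_{\max}^{2}$. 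Both right-hand sides are then dominated by $\|(u,v,\eta)-(u',v',\eta')\|_{\max}^{2}$ since the max-norm of any subvector is bounded by the max-norm of the full vector, and we arrive at
\begin{equation*}
\E\|(U^{*},V^{*},\eta^{*}) - (\tilde U^{*},\tilde V^{*},\eta^{*})\|_{\max}^{2} \;\leq\; (L^{2}+L_{\wp}^{2})\,\|(u,v,\eta)-(u',v',\eta')\|_{\max}^{2}.
\end{equation*}
The variational definition of $\Wass_2$ as an infimum over couplings then yields the claim after taking square roots.

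No step presents a serious obstacle; the only subtleties are (i) verifying that the proposed triples really realise the two target marginals, which uses the independence of the $\eta$-factor in the product kernel $P$ together with the fact that $V$ is deterministic given the conditioning values, and (ii) noticing that the inequality $\max(a,b)^{2}\leq a^{2}+b^{2}$, rather than the triangle inequality, is what replaces the naive additive bound $L+L_{\wp}$ with the Pythagorean constant $\sqrt{L^{2}+L_{\wp}^{2}}$ asserted by the proposition.
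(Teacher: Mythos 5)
Your proposal is correct and follows essentially the same route as the paper: the paper's coupling $\tilde{\gamma}_{(u,v,\eta),(u',v',\eta')} = \gamma^*_{u,u'}\,\delta_{F(u,v)}\,\delta_{F(u',v')}\,(\Delta\#\mu_\eta)$ is exactly your construction of an optimal coupling on the $U$-coordinate, deterministic $V$-points, and a single shared $\eta$-sample, and the paper likewise passes from the max to the sum via $\max(a^2,b^2)\leq a^2+b^2$ before invoking the contractivity of $\wp$ and the Lipschitz bound on $F$. No gaps.
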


\begin{proof}
By e.g. \cite[Theorem 4.1]{Villanioldandnew}, one could select optimal coupling $\gamma^*_{u,u'}$ between $\wp(u,\cdot)$ and $\wp(\tu,\cdot)$. Defining $\Delta$ as the mapping $\eta \in \R \mapsto (\eta, \eta) \in \R^2$. Then
\begin{equation*}
    \tilde{\gamma}_{(u,v,\eta),(u',v',\eta')} := \gamma^*_{u,u'} \delta_{F(u,v)} \delta_{F(u',v')} (\Delta \# \mu_\eta)
\end{equation*}
is a coupling between $P((u,v), \cdot)$ and $P((u',v'), \cdot)$. Therefore,
\begin{align*}
&\phantom{=} \bracket{\Wass_2(P((u,v,\eta), \cdot), P((u',v',\eta'), \cdot))}^2 \\
&\leq \int_{\R^{N_1+N_2}} \|(\su,\sv,\tilde{\eta}) - (\su',\sv',\tilde{\eta}') \|^2_{\max} \, \tilde{\gamma}_{(u,v,\eta),(u',v',\eta')} (d\su, d\sv, d\su', d\sv', d\tilde{\eta}, d\tilde{\eta}') \\
&= \int_{\R^{N_1}} \|(\su, F(u,v), \tilde{\eta}) - (\su', F(u', v'), \tilde{\eta})\|^2_{\max} \, \gamma^*_{u,u'}(d\su, d\su') \mu_\eta(d\eta) \\
&= \int_{\R^{N_1}} \bracket{\max\bracket{\|\su - \su'\|_{\max}, \, \|F(u,v) - F(u',v') \|_{\max}}}^2 \, \gamma^*_{u,u'}(d\su, d\su') \\
&= \int_{\R^{N_1}} \max\bracket{\|\su - \su'\|^2_{\max}, \, \|F(u,v) - F(u',v') \|^2_{\max}} \, \gamma^*_{u,u'}(d\su, d\su') \\
&\leq \int_{\R^{N_1}} \bracket{\|\su - \su' \|^2_{\max} + \|F(u,v) - F(u',v')\|^2_{\max}} \, \gamma^*_{u,u'}(d\su, d\su') \\
&\leq (L^2_\wp + L^2) \max(\|u - u'\|^2_{\max}, \|v - v'\|^2_{\max}) \\
&\leq (L^2_\wp + L^2) \max(\|u - u'\|^2_{\max}, \|v - v'\|^2_{\max}, \|\eta - \eta'\|^2_{\max}).
\end{align*}
\end{proof}

We note that when $L^2 + L^2_\wp < 1$, then $Q$ is also a contraction. As a result, we can follow the arguments from remark \ref{rmk:ergodicity} to show that -

\begin{theorem}[Existence of Unique Invariant Measure, Informal]
The distribution of the joint Markov chain $(U_t, V_t)$, where $(V_t)$ is as defined in \eqref{eq:def_H} with the associated $F$ satisfying the Lipschitzness condition \eqref{eq:F_Lipschitz}, converges in Wasserstein distance to an invariant measure $\rho^*$ such that $P^\vee \rho^* = \rho^*$, where
\begin{equation}
P^\vee \rho(A) = \int P(\sh, A) \, d\sh, \quad \sh = (\su, \sv, \tilde{\eta}).
\end{equation}
\end{theorem}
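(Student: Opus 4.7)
The plan is to mimic Remark \ref{rmk:ergodicity} verbatim but at the level of the joint kernel $P$ on $\mathcal{X}$ rather than the data kernel $\wp$. The only nontrivial analytic input, namely a pointwise $\Wass_2$-contraction of $P$ with constant $q = \sqrt{L^2 + L_\wp^2}$, has already been supplied by Proposition \ref{prop:joint_process_contraction}, and $q < 1$ by Assumption \ref{A:L_0_Bound}. What remains is essentially bookkeeping that upgrades the pointwise contraction to an operator-level contraction on the Polish metric space $(\mathcal{P}_2(\mathcal{X}), \Wass_2)$ and then invokes Banach's fixed-point theorem.

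First I would lift the pointwise contraction to $P^\vee$ on $\mathcal{P}_2(\mathcal{X})$. Given $\rho, \tilde{\rho} \in \mathcal{P}_2(\mathcal{X})$, fix an optimal $\Wass_2$-coupling $\gamma$ of $(\rho, \tilde{\rho})$ (Remark \ref{rmk:optimal_coupling}) and, by the measurable-selection argument behind \cite[Theorem 4.1]{Villanioldandnew}, pick for each pair $(h, \tilde{h})$ an optimal coupling $\pi_{h, \tilde{h}}$ of $(P(h, \cdot), P(\tilde{h}, \cdot))$. The mixture
\[
\Gamma(dx, d\tilde{x}) \;=\; \int \pi_{h, \tilde{h}}(dx, d\tilde{x}) \, \gamma(dh, d\tilde{h})
\]
is then a coupling of $(P^\vee \rho, P^\vee \tilde{\rho})$, so integrating Proposition \ref{prop:joint_process_contraction} against $\gamma$ yields
\[
\Wass_2^2(P^\vee \rho, P^\vee \tilde{\rho}) \;\leq\; \int \Wass_2^2\bigl( P(h, \cdot), P(\tilde{h}, \cdot) \bigr) \, \gamma(dh, d\tilde{h}) \;\leq\; q^2 \, \Wass_2^2(\rho, \tilde{\rho}).
\]
This is precisely the mechanism behind Proposition 14.3 of \cite{DobrushinRoland2006LoPT} that was invoked in Remark \ref{rmk:ergodicity}. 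A quick moment check using \eqref{eq:F_Lipschitz}, the boundedness in Assumption \ref{as:data_generation}, and integrability of $\mu_\eta$ confirms that $P^\vee$ maps $\mathcal{P}_2(\mathcal{X})$ into itself, so that the iteration stays inside the Polish space.

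Applying the Banach Fixed Point Theorem on $(\mathcal{P}_2(\mathcal{X}), \Wass_2)$ (Polish by Remark \ref{rmk:optimal_coupling}) then produces a unique invariant $\rho^*$ with $P^\vee \rho^* = \rho^*$ and the standard geometric rate
\[
\Wass_2\bigl( (P^\vee)^t \rho_0, \rho^* \bigr) \;\leq\; \frac{q^t}{1 - q} \, \Wass_2\bigl( P^\vee \rho_0, \rho_0 \bigr)
\]
for any $\rho_0 \in \mathcal{P}_2(\mathcal{X})$, which is exactly the convergence statement claimed.

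There is no substantive analytic obstacle here, since the heart of the argument — the pointwise Wasserstein contraction — has already been carried out in Proposition \ref{prop:joint_process_contraction}. The only mildly delicate step is the measurable-selection behind the mixture $\Gamma$, which is standard and can be handled either via Villani's disintegration machinery or via the gluing lemma. Everything else is the same Banach-fixed-point recipe used for the data chain in Remark \ref{rmk:ergodicity}, specialised now to the larger state space that also carries the hidden layer, its first and second forward derivatives, and the observation noise.
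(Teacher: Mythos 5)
Your proposal is correct and follows essentially the same route as the paper: the pointwise contraction from Proposition \ref{prop:joint_process_contraction} is lifted to an operator-level contraction of $P^\vee$ on $(\mathcal{P}_2(\mathcal{X}),\Wass_2)$ (the paper does this by citing \cite[Proposition 14.3]{DobrushinRoland2006LoPT} in Proposition \ref{prop:Q_vee_is_a_contraction}, whereas you spell out the underlying mixture-of-optimal-couplings argument), and then the Banach Fixed Point Theorem yields the unique invariant measure and the geometric rate, exactly as in Remark \ref{rmk:ergodicity} and Theorem \ref{thm:convergence_in_Wass_2}. No gaps.
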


Proposition \ref{prop:joint_process_contraction} is crucial in developing the fixed point analysis for the joint process of the input sequence, hidden states, and (derivatives) of the memory layer. The evolution could be written as
\begin{equation}
H^{\theta,h}_t = (U_t, V^{\theta,h}_t, \eta_t), \quad U_t = (X_t, Z_t), \quad V^{\theta,h}_t = (S_{t,\theta,h}, \tS_{t,\theta,h}, \ttS_{t,\theta,h}), \quad H_0^{\theta,h} = h.
\end{equation}
where $F := F^\theta$ is a deterministic function depending on both $U_t$ and $V_t$, which could be computed by the chain rule in the next section. If we are able to show that $F$ is Lipschitz in the sense of equation \eqref{eq:F_Lipschitz} (uniform in $\theta$), then we can apply proposition \ref{prop:joint_process_contraction} to show that $H_t$ converges in distribution.

\begin{remark}
It is important to note that $F := F^\theta$, hence also the process $V^{\theta,h}$, all depend on the parameters of the RNN $\theta$.  Note also that the process $H$ also depends the initial state $H_0 = h = (u,v,\eta)$, and for RNN $v = 0$ supposedly. We will, however, not impose this assumption for the convenience of carrying out the Poisson equation analysis in Chapter 5. Finally, we shall suppress the dependence of $\theta$ and $h$ when the context is clear.
\end{remark}

\subsection{The Lipschitzness of $F$}
Before we proceed, we recall a few notations from the previous section
\begin{equation*}
\tS_t^{A} = \frac{\partial S_t}{ \partial A}, \quad \tS_t^{W} =  \frac{\partial S_t}{\partial W}, \quad \tS_t^{A^{ij},k} = \frac{\partial S_{t+1}^k}{ \partial A^{ij}}, \quad \tS_t^{W^{ij},k} = \frac{\partial S_t^k}{\partial W^{ij}}.
\end{equation*}
We further adopt the following notations: firstly, for the second derivatives
\begin{gather*}
\ttS_t^{\theta_1, \theta_2} = \frac{\de S_t}{\de\theta_1 \de\theta_2}, \quad \ttS_t^{AA} = \nabla^2_{AA} S_t, \quad \ttS_t^{AW} = \nabla^2_{AW} S_t, \quad \ttS_t^{WW} = \nabla^2_{WW} S_t, \\
\ttS_t^{A^{mn}, A^{ij},:} = \frac{\de S_t}{\de A^{mn} \de A^{ij}}, \quad \ttS_{t+1}^{A^{mn},W^{ij},:} =  \frac{\de S_t}{\de A^{mn} \de W^{ij}}, \quad \ttS_{t+1}^{W^{mn},W^{ij},:} = \frac{\de S_t}{\de W^{mn} \de W^{ij}}, \\
\ttS_t^{A^{mn}, A^{ij},k} = \frac{\de S_t^k}{\de A^{mn} \de A^{ij}}, \quad \ttS_{t+1}^{A^{mn},W^{ij},k} =  \frac{\de S_t^k}{\de A^{mn} \de W^{ij}}, \quad \ttS_{t+1}^{W^{mn},W^{ij},k} = \frac{\de S_t^k}{\de W^{mn} \de W^{ij}}.
\end{gather*}
Let us also define the following dummy arguments for the input of function $F(u,v)$:
\begin{equation*}
u = (x,z), \quad v = (s,\ts,\tts), \quad u' = (x',z'), \quad v' = (s',\ts',\tts'),
\end{equation*}
and adopt the index notation for $u,v$ (and similarly $u', v'$) as we have used to define the forward first/second derivatives:
\begin{equation*}
\tilde{s}^{A_{ij}, k}, \; \tilde{s}^{W_{ij}, k}, \tilde{s}^{A_{mn} A_{ij}, k}, \; \tilde{s}^{W_{mn} A_{ij}, k} \text{ and } \tilde{s}^{W_{mn} W_{ij}, k}.
\end{equation*}
Finally, we index the components of function $F := F^\theta$ so that
\begin{align*}
S^k_{t+1} &= F^\theta_{S,k}(U_t,S_t) \overset{(*)}=: F^\theta_{S,k}(U_t, V_t), \\ 
\tS^{A^{ij},k}_{t+1} &= F^\theta_{A^{ij},k}(U_t, S_t, \tS_t) \overset{(*)}=: F^\theta_{A^{ij},k}(U_t, V_t), \\ 
\tS^{W^{ij},k}_{t+1} &= F^\theta_{W^{ij},k}(U_t, S_t, \tS_t) \overset{(*)}=: F^\theta_{W^{ij},k}(U_t, V_t), \\
\ttS^{A^{mn},A^{ij},k}_{t+1} &= F^\theta_{A^{mn},A^{ij},k}(U_t, S_t, \tS_t, \ttS_t) = F^\theta_{A^{mn},A^{ij},k}(U_t, V_t), \\
\ttS^{A^{mn},W^{ij},k}_{t+1} &= F^\theta_{A^{mn},W^{ij},k}(U_t, S_t, \tS_t, \ttS_t) = F^\theta_{A^{mn},W^{ij},k}(U_t, V_t), \\
\ttS^{W^{mn},W^{ij},k}_{t+1} &= F^\theta_{W^{mn},W^{ij},k}(U_t, S_t, \tS_t, \ttS_t) = F^\theta_{W^{mn},W^{ij},k}(U_t, V_t).
\end{align*}
Note that with the above notations, we have
\begin{equation}
\frac{\de F^\theta_S}{\de A} = F^\theta_A, \quad \frac{\de F^\theta_S}{\de W} = F^\theta_W, \quad \frac{\de F^\theta_A}{\de A} = F^\theta_{AA}, \quad \frac{\de F^\theta_W}{\de A} = F^\theta_{AW}, \quad \frac{\de F^\theta_W}{\de W} = F^\theta_{WW}.
\end{equation}
\begin{remark}
Here we include redundant dependence on $\tS_t$ and $\ttS_t$ in equalities marked with $(*)$ as a notational convenience. We will also supress the dependence of $F := F^\theta$ on $\theta$ when there is no ambiguity.
\end{remark}
Our objective is to prove that $F$ is Lipschitz. It is useful to note that
\begin{align*}
\|F(u,v) - F(\tu,\tv)\|_{\max} &= \max_k |F_{S,k}(u,v) - F_{S,k}(u,v)| \\
&\phantom{=}\vee \max_{ijk} |F_{A^{ij}, k}(u,v) - F_{A^{ij}, k}(u,v)| \\ 
&\phantom{=}\vee \max_{ijk} |F_{W^{ij}, k}(u,v) - F_{W^{ij}, k}(u,v)| \\
&\phantom{=}\vee \max_{ijmnk} |F_{A^{mn}, A^{ij}, k}(u,v) - F_{A^{mn}, A^{ij}, k}(u,v)| \\
&\phantom{=}\vee \max_{ijmnk} |F_{A^{mn} W^{ij}}(u,v) - F_{A^{mn}, W^{ij}, k}(u,v)| \\
&\phantom{=}\vee \max_{ijmnk} |F_{W^{mn}, W^{ij}, k}(u,v) - F_{W^{mn}, W^{ij}, k}(u,v)|. \numberthis
\end{align*}
Therefore to establish Lipschitzness with respect to the max norm, it suffices to control the increments of the individual components.

\subsubsection{Lipschitzness of $F_{S,k}$}
We take note of the explicit formulae of the components of $F$. Firstly, we have
\begin{equation*}
F_{S,k}(u,v) = \sigma \bigg(\frac{1}{d} \la \phi(A^{k,:}), x \ra + \frac{1}{N} \la \phi(W^{k,:}), s \ra \bigg),
\end{equation*}
therefore
\begin{align*}
|F_{S,k}(u,v) - F_{S,k}(u',v')| 
&\leq \frac14 \left| \frac{1}{d} \la \phi(A^{k,:}), x \ra + \frac{1}{N} \la \phi(W^{k,:}), s\ra - \frac{1}{d} \la\phi(A^{k,:}), x' \ra - \frac{1}{N} \la \phi(W^{k,:}), s' \ra \right| \\
&= \frac{1}{4} \left| \frac{1}{d} \sum_{\ell=1}^d  \phi(A^{k,\ell})(x^\ell - (x')^\ell) + \frac{1}{N} \sum_{\ell=1}^N \phi(W^{k,\ell})(s^\ell - (s')^\ell) \right| \\
&\leq \frac{1}{4} \sqbracket{\frac{1}{d} \abs{\sum_{\ell=1}^d \phi(A^{k,\ell})(x^\ell - (x')^\ell)} + \frac{1}{N} \abs{\sum_{\ell=1}^N \phi(W^{k,\ell})(s^\ell - (s')^\ell)}} \\
&\leq \frac{1}{4} \sqbracket{\max_\ell \abs{\phi(A^{k,\ell})} \abs{(x^\ell - (x')^\ell)} + \max_\ell \abs{\phi(W^{k,\ell})} \abs{s^\ell - (s')^\ell)}} \\
&\leq \frac{C_\phi}{4} \|x - x' \|_{\max} + \frac{C_\phi}{4} \|s - s'\|_{\max} \\
&\leq \frac{C_{\phi}}{2} \max \sqbracket{\|x - x'\|_{\textrm{max}},  \|s - s'\|_{\textrm{max}}}. \numberthis
\end{align*}

\subsubsection{Explicit computations of $F$ and a-priori bounds}
More work will be needed to study the other components. Firstly, we take note of:
\begin{align}
F_{A^{ij},k}(u,v) &= \sigma'\bracket{\frac{1}{d} \la \phi(A^{k,:}), x \ra + \frac{1}{N} \la \phi(W^{k,:}), s \ra} \times \bracket{\frac{1}{d} \delta_{ik} \phi'(A^{ij}) x^j + \frac{1}{N} \sum_{\ell=1}^N \phi(W^{k,\ell}) \ts^{A^{ij},\ell}}, \notag \\
F_{W^{ij},k}(u,v) &= \sigma'\bracket{\frac{1}{d} \la\phi(A^{k,:}), x\ra + \frac{1}{N} \la \phi(W^{k,:}), s \ra} \times \bracket{\frac{1}{N} \delta_{ik} \phi'(W^{ij} )s^j + \frac{1}{N} \sum_{\ell=1}^N \phi(W^{k\ell}) \ts^{W^{ij},\ell}}, \label{Eq:OnlineForwardEquations2}
\end{align}
Furthermore, we have
\begin{align*}
F_{A^{nm},A^{ij},k}(u,v) 
&= \sigma''\bracket{\frac{1}{d} \la\phi(A^{k,:}), x\ra + \frac{1}{N} \la\phi(W^{k,:}), s\ra} \\
&\pheq \times \bracket{\frac{1}{d} \delta_{ik} \phi'(A^{ij}) x^j + \frac{1}{N} \sum_{\ell=1}^N \phi(W^{k\ell}) \ts^{A^{ij},\ell}} \\
&\pheq \times \bracket{\frac{1}{d} \delta_{nk} \phi'(A^{nm})x^m + \frac{1}{N} \sum_{\ell=1}^N \phi(W^{k\ell}) \ts^{A^{nm},\ell}} \\
&\pheq + \sigma'\bracket{\frac{1}{d} \la\phi(A^{k,:}), x\ra + \frac{1}{N} \la \phi(W^{k,:}), s\ra} \\
&\pheq \times \bracket{\frac{1}{d} \delta_{ik} \delta_{ij=nm} \phi''(A^{ij})x^j + \frac{1}{N} \sum_{\ell=1}^N \phi(W^{k\ell}) \tts^{A^{ij}, A^{nm}, \ell}}. \numberthis \\
F_{W^{nm},A^{ij},k}(u,v) 
&= \sigma''\bracket{\frac{1}{d} \la \phi(A^{k,:}), x\ra + \frac{1}{N} \la\phi(W^{k,:}), s\ra} \\
&\pheq \times \bracket{\frac{1}{N} \delta_{kn} \phi'(W^{nm}) s^m + \frac{1}{N} \sum_{\ell=1}^N \phi(W^{k\ell}) \ts^{W^{nm},\ell}} \\
&\pheq \times \bracket{\frac{1}{d} \delta_{ik} \phi'(A^{ij})x^j + \frac{1}{N} \sum_{\ell=1}^N \phi(W^{k\ell})\ts^{A^{ij}, \ell}} \\
&\pheq + \sigma'\bracket{\frac{1}{d} \la\phi(A^{k,:}), x\ra + \frac{1}{N} \la\phi(W^{k,:}), s\ra} \\ 
&\pheq \times \bracket{\frac{1}{N} \delta_{kn} \phi'(W^{nm}) \ts^{A^{ij},m} + \frac{1}{N} \sum_{\ell=1}^N\phi(W^{k\ell}) \tts^{W^{nm},A^{ij},\ell}}. \numberthis\\
F_{W^{nm},W^{ij},k}(u,v) 
&= \sigma''\bracket{\frac{1}{d} \la \phi(A^{k,:}), x\ra + \frac{1}{N} \la\phi(W^{k,:}), s\ra} \\
&\pheq\times \bracket{\frac{1}{N} \delta_{kn} \phi'(W^{nm})s^m + \frac{1}{N} \sum_{\ell=1}^N \phi(W^{k\ell}) \ts^{W^{nm},\ell}} \\
&\pheq \times \bracket{\frac{1}{N} \delta_{ik} \phi'(W^{ij})s^j + \frac{1}{N} \sum_{\ell=1}^N \phi(W^{k\ell}) \ts^{W^{ij},\ell}} \\
&\pheq + \sigma'\bracket{\frac{1}{d} \la \phi(A^{k\ell}), x\ra + \frac{1}{N} \la\phi(W^{k,:}), s \ra} \\
&\pheq\times \bracket{\frac{1}{N} \delta_{ik} \delta_{ij=nm} \phi''(W^{ij})s^j + \frac{1}{N} \sum_{\ell=1}^N \phi(W^{k\ell}) \tts^{W^{nm}, W^{ij}, \ell}}. \numberthis
\end{align*}

\subsubsection{Bounds for Gradient Process}
Let's establish an upper bound for the forward gradient processes.\begin{lemma}\label{L:UniformBoundsParameters}
\begin{align}
\|\tilde{S}^A_0\|_{\max} \vee \|\hat{S}^A_0\|_{\max} \leq \frac{C_{\phi'}}{4-C_\phi} \frac{1}{d} \implies \|\tilde{S}^A_{t} \|_{\max} \vee \|\hat{S}^A_{t} \|_{\max} \leq \frac{C_{\phi'}}{4-C_\phi} \frac{1}{d}, \nonumber \\
\|\tilde{S}^W_0\|_{\max} \vee \|\hat{S}^W_0 \|_{\max} \leq \frac{C_{\phi'}}{4-C_\phi} \frac{1}{N}. \implies \|\tilde{S}^W_{t} \|_{\max} \vee \|\hat{S}^W_{t} \|_{\max} \leq \frac{C_{\phi'}}{4-C_\phi} \frac{1}{N}. 
\end{align}
\end{lemma}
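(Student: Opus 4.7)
The plan is to prove the two statements by induction on $t$, exploiting the explicit recursions \eqref{Eq:ForwardEquations} and \eqref{Eq:OnlineForwardEquations}, together with the uniform bounds $|\sigma'| \leq 1/4$, $|\phi| \leq C_\phi$, $|\phi'| \leq C_{\phi'}$ from Lemma \ref{L:BoundsSigmoidFcn} and Assumption \ref{A:L_0_Bound}, and the a-priori bounds $\|X_t\|_{\max} \leq 1$ (Assumption \ref{as:data_generation}) and $|S_t^j|, |\bar S_t^j| < 1$ (since $\sigma$ takes values in $(0,1)$).

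Fix $i,j,k$. From the forward equation for $\tilde S^{A^{ij},k}_{t+1}$, taking absolute values and using the triangle inequality,
\begin{align*}
\bigl|\tilde S^{A^{ij},k}_{t+1}\bigr| &\leq \frac{1}{4}\left(\frac{1}{d}\,\delta_{ik}\,|\phi'(A^{ij})|\,|X_t^j| + \frac{1}{N}\sum_{\ell=1}^N |\phi(W^{k\ell})|\,\bigl|\tilde S^{A^{ij},\ell}_t\bigr|\right) \\
&\leq \frac{C_{\phi'}}{4d} + \frac{C_\phi}{4}\,\|\tilde S^A_t\|_{\max}.
\end{align*}
Taking the max over $i,j,k$ gives the scalar recursion $m_{t+1} \leq \frac{C_{\phi'}}{4d} + \frac{C_\phi}{4}\,m_t$ with $m_t := \|\tilde S^A_t\|_{\max}$. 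Since $C_\phi < 4$ (an immediate consequence of Assumption \ref{A:L_0_Bound}, which forces $C_\phi$ small), the affine map $m \mapsto \frac{C_{\phi'}}{4d} + \frac{C_\phi}{4}m$ is a contraction with unique fixed point $\frac{C_{\phi'}}{d(4-C_\phi)}$, and moreover the sub-level set $\{m : m \leq \frac{C_{\phi'}}{d(4-C_\phi)}\}$ is invariant. Hence if $\|\tilde S^A_0\|_{\max} \leq \frac{C_{\phi'}}{d(4-C_\phi)}$, induction yields $\|\tilde S^A_t\|_{\max} \leq \frac{C_{\phi'}}{d(4-C_\phi)}$ for all $t$.

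The analogous argument for $\tilde S^W$ replaces the $X_t^j$ term by $S_t^j$ (still bounded by $1$), and the prefactor $1/d$ by $1/N$, giving
\[
\|\tilde S^W_{t+1}\|_{\max} \leq \frac{C_{\phi'}}{4N} + \frac{C_\phi}{4}\|\tilde S^W_t\|_{\max},
\]
with invariant bound $\frac{C_{\phi'}}{N(4-C_\phi)}$. For the online forward derivatives $\hat S^A_t, \hat S^W_t$, the recursions in \eqref{Eq:OnlineForwardEquations} are structurally identical with $\theta$ replaced by $\theta_t$ and $S_t$ by $\bar S_t$; since $\|\bar S_t\|_{\max} < 1$ also holds by the range of $\sigma$ and all the bounds on $\phi,\phi'$ are uniform in the (time-varying) parameters, the same contraction argument applies verbatim and propagates the hypothesised initial bound.

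There is essentially no obstacle here beyond careful accounting: the key structural observation is that the Kronecker delta $\delta_{ik}$ kills all but one of the ``source'' terms, so the $1/d$ (respectively $1/N$) factor cancels the summation and produces the clean scalar contraction above. The choice of the max-norm (rather than e.g.\ an $\ell^2$ norm) is what makes the bookkeeping straightforward, since it lets us absorb the sum $\frac{1}{N}\sum_\ell |\phi(W^{k\ell})\tilde s^{\cdot,\ell}|$ directly into $C_\phi \|\tilde s\|_{\max}$ without any dimensional factors.
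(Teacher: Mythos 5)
Your proof is correct and follows essentially the same route as the paper: derive the scalar recursion $m_{t+1}\leq \frac{C_{\phi'}}{4d}+\frac{C_\phi}{4}m_t$ (resp.\ with $1/N$) from the forward equations using $|\sigma'|\leq 1/4$, $|\phi|\leq C_\phi$, $|\phi'|\leq C_{\phi'}$ and the boundedness of $X_t$, $S_t$, $\bar S_t$, then invoke the invariance of the sub-level set $\{m\leq \frac{C_{\phi'}}{d(4-C_\phi)}\}$ under the affine contraction, which is exactly the content of the recursive inequality \eqref{eq:Recone} the paper cites. The only cosmetic difference is that the paper specializes to the zero initial condition before applying \eqref{eq:Recone}, while you argue directly from the hypothesised initial bound, which if anything matches the lemma's statement as an implication more closely.
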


\begin{proof}
We begin by noting that
\begin{align*}
|\tS_{t+1}^{A^{ij},k}| 
&= |F_{A^{ij},k}(U_t, V_t)| \\
&\leq \max |\sigma'(\cdot)| \bigg{(} \frac{1}{d} \delta_{ik} |\phi'(A^{ij}) X_t^j| + \frac{1}{N} \abs{\sum_{\ell=1}^N \phi(W^{k\ell}) \tS_t^{A^{ij},\ell}} \bigg{)} \\
&\leq \frac{1}{4} \bigg{(} \frac{C_{\phi'}}{d} + C_{\phi} \max_\ell |\tilde S_t^{A^{ij},\ell}| \bigg{)}, \numberthis
\end{align*}
where in the last inequality, we used the upper bound for $\sigma'(y)$ by Lemma \ref{L:BoundsSigmoidFcn}. This bound then implies that
\begin{equation}
\max_k |\tilde{S}_{t+1}^{A^{ij},k} | \leq \frac{C_{\phi'}}{4d} + \frac{C_{\phi}}{4} \max_k |\tS_t^{A^{ij},k}|.
\end{equation}
Note that $\tilde S_0^{A^{ij},k} = 0$, so by recursive inequality \eqref{eq:Recone}, for $C_\phi < 4$,
\begin{equation} \label{eq:uniform_max_bound_tSA}
\max_\ell |\tilde{S}_{t+1}^{A^{ij},\ell}| \leq \bracket{\frac{C_\phi}{4}}^k \frac{C_{\phi'}}{(4-C_\phi)d} + \frac{1 - (C_\phi/4)^k}{1 - C_\phi/4} \frac{C_{\phi'}}{4d} = \frac{C_{\phi'}}{d(4 - C_\phi)} \quad \text{for any } t \geq 0.
\end{equation}
Similarly, we have
\begin{align*}
|\tilde{S}_{t+1}^{W^{ij},k}| 
&= |F_{W^{ij},k}(U_t, V_t)| \\
&\leq \frac{1}{4} \bigg{(} \frac{1}{N} \delta_{ik} |\phi'(W^{ij}) S_t^j| + \frac{1}{N} \abs{\sum_{\ell=1}^N \phi(W^{k\ell})  \tS_t^{W^{ij},\ell}} \bigg{)} \\
&\leq \frac{1}{4} \bigg{(} \frac{C_{\phi'}}{N}  + C_{\phi} \max_{k} |\tilde S_t^{W^{ij},k}| \bigg{)}, \numberthis
\end{align*}
which implies
\begin{equation}
\max_k |\tilde{S}_{t+1}^{W^{ij},k}| \leq \frac{C_{\phi'}}{4N}  + \frac{C_{\phi}}{4} \max_{k} |\tilde S_t^{W^{ij},k}|.
\end{equation}
Since $\tilde S_0^{W^{ij},k} = 0$, so by recursive inequality \eqref{eq:Recone}, for $C_{\phi} < 4$,
\begin{equation} \label{eq:uniform_max_bound_tSW}
\max_k |\tilde{S}_{t+1}^{W^{ij},k}| \leq \frac{C_{\phi'}}{N(4-C_\phi)}.
\end{equation}

Using exactly the same calculations as above, we can show the bounds for $\hat{S}_{t}^{A}$ and $\hat{S}_t^{W}$.
\end{proof}

\begin{remark}
As the proof of Lemma \ref{L:UniformBoundsParameters} shows the denominator of the upper bounds for the forward gradient processes presented in Lemma \ref{L:UniformBoundsParameters} has to do with the inverse of the upper bound  of the derivative of the sigmoid activation function by Lemma \ref{L:BoundsSigmoidFcn}. That is, the proofs can accommodate other activation functions too as long as they are bounded with bounded derivatives and sufficiently smooth.
\end{remark}

In a similar fashion, we can also establish bound for the second-order derivative process $\ttS_t$.
\begin{lemma}\label{L:UniformBoundsSecondDerivativesParameters}
Assume that Assumptions \ref{as:data_generation} and \ref{A:L_0_Bound} hold, then there are uniform constants $a_{AA,\max}$, $a_{WA,\max}$ and $a_{WW,\max}$ independent of $t \geq 0$, such that the following implications hold:
\begin{align*}
\|\ttS^{AA}_0\|_{\max} \leq \frac{a_{AA,\max}}{1-(C_\phi/4)} \implies \|\ttS^{AA}_t\|_{\max} \leq \frac{a_{AA,\max}}{1-(C_\phi/4)} \\
\|\ttS^{WA}_0\|_{\max} \leq \frac{a_{WA,\max}}{1-(C_\phi/4)} \implies \|\ttS^{WA}_t\|_{\max} \leq \frac{a_{WA,\max}}{1-(C_\phi/4)} \\
\|\ttS^{WW}_0\|_{\max} \leq \frac{a_{WW,\max}}{1-(C_\phi/4)} \implies \|\ttS^{WW}_t\|_{\max} \leq \frac{a_{WW,\max}}{1-(C_\phi/4)} \numberthis
\end{align*}
\end{lemma}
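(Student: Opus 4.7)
The plan is to mirror the proof of Lemma~\ref{L:UniformBoundsParameters}, now using the explicit formulas for $F_{A^{nm},A^{ij},k}$, $F_{W^{nm},A^{ij},k}$ and $F_{W^{nm},W^{ij},k}$ given in the preceding subsection. Each of these three formulas has the same two-part structure: a $\sigma''$-term that is a product of first-order forward derivatives (which are already uniformly controlled by Lemma~\ref{L:UniformBoundsParameters}) together with bounded pieces like $\phi(W^{k\ell})$ and $\phi'(A^{ij}) x^j$, plus a $\sigma'$-term that contains the self-referential component $\frac{1}{N}\sum_{\ell=1}^N \phi(W^{k\ell})\,\tts^{\cdot,\cdot,\ell}$. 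The crucial observation, obtained by reading off the indices, is that the three recursions \emph{decouple}: $\ttS^{AA}$ only depends on $\ttS^{AA}$, $\ttS^{WA}$ only on $\ttS^{WA}$, and $\ttS^{WW}$ only on $\ttS^{WW}$. So each bound can be obtained independently by the same argument.

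I would carry out the $\ttS^{AA}$ case in detail. Using $|\sigma''|\le 1/10$, $|\sigma'|\le 1/4$, $\|X_t\|_{\max}\le 1$, $|\phi|\le C_\phi$, $|\phi'|\le C_{\phi'}$, $|\phi''|\le C_{\phi''}$, together with the uniform first-derivative bound $\|\tS^{A}_t\|_{\max}\le C_{\phi'}/[d(4-C_\phi)]$ from Lemma~\ref{L:UniformBoundsParameters}, I bound the $\sigma''$-term by an explicit constant depending only on $d,N,C_\phi,C_{\phi'}$, and I split the $\sigma'$-term as
\[
\tfrac{1}{4}\Big(\tfrac{1}{d}\delta_{ik}\delta_{ij=nm}|\phi''(A^{ij})x^j|\Big)
+\tfrac{1}{4}\cdot\tfrac{1}{N}\Big|\sum_{\ell=1}^N \phi(W^{k\ell})\,\tts^{A^{ij},A^{nm},\ell}\Big|,
\]
whose first summand is bounded by $C_{\phi''}/(4d)$ and whose second summand is at most $(C_\phi/4)\max_\ell|\tts^{A^{ij},A^{nm},\ell}|$. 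Collecting all the non-self-referential contributions into a constant $a_{AA,\max}$, I obtain the scalar recursive inequality
\[
\max_{i,j,m,n,k}|\ttS^{A^{nm},A^{ij},k}_{t+1}| \;\le\; a_{AA,\max} + \tfrac{C_\phi}{4}\max_{i,j,m,n,k}|\ttS^{A^{nm},A^{ij},k}_t|.
\]

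The $\ttS^{WA}$ and $\ttS^{WW}$ cases are handled identically, the only differences being (i) an extra bounded term $\tfrac{1}{N}\delta_{kn}\phi'(W^{nm})\tts^{A^{ij},m}$ in the $\sigma'$-block for $\ttS^{WA}$, which is again absorbed into the forcing constant $a_{WA,\max}$ via Lemma~\ref{L:UniformBoundsParameters}, and (ii) the use of $\|S_t\|_{\max}\le 1$ (since $\sigma$ takes values in $(0,1)$) in the $\ttS^{WW}$ formula, giving $a_{WW,\max}$. In all three cases Assumption~\ref{A:L_0_Bound} guarantees $C_\phi<4$, so the standard geometric recursive inequality \eqref{eq:Recone} applies: if the initial value is bounded by $a_{\cdot,\max}/(1-C_\phi/4)$, then by induction
\[
a_{t+1}\;\le\; a_{\cdot,\max} + \tfrac{C_\phi}{4}\cdot\tfrac{a_{\cdot,\max}}{1-C_\phi/4}\;=\;\tfrac{a_{\cdot,\max}}{1-C_\phi/4},
\]
which is exactly the claimed $t$-independent bound.

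I do not anticipate a genuine obstacle here: the decoupling of the three second-derivative recursions removes what could have been the only serious difficulty, and after that the argument reduces to Lemma~\ref{L:UniformBoundsParameters}-style bookkeeping. The one thing that requires attention is writing the forcing constants $a_{AA,\max},a_{WA,\max},a_{WW,\max}$ cleanly enough that they can be re-used in the fixed-point (Lipschitz) analysis of $F^\theta$ elsewhere in Section~\ref{S:FixedPointAnalysis}; in particular, one wants them to depend only on $N,d$, the sigmoid-derivative bounds of Lemma~\ref{L:BoundsSigmoidFcn}, and the constants $C_\phi,C_{\phi'},C_{\phi''}$, so that the resulting uniform bounds on $\ttS_t$ hold uniformly in $\theta$.
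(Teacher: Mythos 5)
Your proposal is correct and follows essentially the same route as the paper: bound the $\sigma''$-product term using the uniform first-derivative bounds of Lemma~\ref{L:UniformBoundsParameters} together with $|\sigma''|\le 1/10$, $|\sigma'|\le 1/4$ and the bounds on $\phi,\phi',\phi''$, isolate the self-referential sum to get the scalar recursion $\max_k|\ttS^{\cdot,\cdot,k}_{t+1}|\le a_{\cdot,\max}+\tfrac{C_\phi}{4}\max_k|\ttS^{\cdot,\cdot,k}_t|$ for each of the three decoupled components, and close with the geometric recursive inequality. The only cosmetic slip is writing $\tts^{A^{ij},m}$ for the first-derivative term $\ts^{A^{ij},m}$ in the $WA$ block, but you handle it correctly by absorbing it into $a_{WA,\max}$ via Lemma~\ref{L:UniformBoundsParameters}, exactly as the paper does.
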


\begin{proof}
We begin by noting that
\begin{align*}
|\ttS_{t+1}^{A^{nm},A^{ij},k}| &= |F_{A^{nm},A^{ij},k}(U_t, V_t)| \\
&\leq \frac{1}{10} \abs{\frac{1}{d} \delta_{ik} \phi'(A^{ij}) X_t^j + \frac{1}{N} \sum_{\ell=1}^N \phi(W^{k\ell}) \tS_t^{A^{ij},\ell}} \\
&\pheq\times \abs{\frac{1}{d} \delta_{nk} \phi'(A^{nm}) X_t^m + \frac{1}{N} \sum_{\ell=1}^N \phi(W^{k\ell}) \tS_t^{A^{nm},\ell}} \\
&\pheq+ \frac14 \abs{\frac{1}{d} \delta_{ik} \delta_{ij = nm} \phi''(A^{ij}) X_t^j + \frac{1}{N} \sum_{\ell=1}^N \phi(W^{k\ell}) \ttS_t^{A^{ij}, A^{nm},\ell}} \\
&\leq \frac{1}{10} \bracket{\frac{C_{\phi'}}{d} + C_{\phi} \max_k |\tS^{A^{ij},k}_t|} \bracket{\frac{C_{\phi'}}{d} + C_{\phi} \max_k |\tS^{A^{nm},k}_t|} \\
&\pheq+ \frac{1}{4} \bracket{\frac{C_{\phi''}}{d} + C_{\phi} \max_k | \tilde{\tilde{S}}_{t}^{A^{ij},A^{nm}, k}|} \\
&\leq a_{AA,\max} + \frac{C_\phi}{4} \max_k | \ttS_{t}^{A^{ij},A^{nm}, k}|, \numberthis
\end{align*}
where, recalling that $M_\phi = 1 + [C_\phi / (4-C_\phi)]$,
\begin{align*}
a_{AA,\max} &= \frac{1}{10} \bracket{\frac{C_{\phi'}}{d} + C_{\phi} \frac{C_{\phi'}}{d(4 - C_\phi)}} \bracket{\frac{C_{\phi'}}{d} + C_{\phi} \frac{C_{\phi'}}{d(4 - C_\phi)}} +  \frac{1}{4} \frac{C_{\phi''}}{d} \\
&= \frac{C_{\phi'}^2 M_\phi^2}{10d^2} + \frac{C_{\phi''}}{4d}. \numberthis
\end{align*}
Therefore, we have
\begin{align*}
\max_k | \tilde{\tilde{S}}_{t+1}^{A^{ij},A^{nm}, k} | &\leq a_{AA,\max} + \frac{C_{\phi}}{4} \max_k | \tilde{\tilde{S}}_{t}^{A^{ij},A^{nm}, k}|. \numberthis
\end{align*}
Note that $\ttS_{0}^{A^{ij},A^{nm}, k} = 0$, so by recursive inequality \eqref{eq:CSone}, for $C_\phi < 4$,
\begin{equation*}
\max_k |\tilde{\tilde{S}}_{t+1}^{A^{ij},A^{nm},k}| \leq \frac{a_{AA,\max}}{1-(C_{\phi}/4)}. \numberthis
\end{equation*}

Next, we have 
\begin{align*}
|\ttS_{t+1}^{W^{nm},A^{ij},k}| &= |F_{W^{nm},A^{ij},k}(U_t, V_t)| \\
&\leq \frac{1}{10} \abs{\frac{1}{N} \delta_{kn} \phi'(W^{nm}) S_t^m + \frac{1}{N} \sum_{\ell=1}^N \phi(W^{k\ell}) \tilde{S}_t^{W^{nm},\ell}} \\
&\pheq\times \abs{\frac{1}{d} \delta_{ik} \phi'(A^{ij}) X_t^j + \frac{1}{N} \sum_{\ell=1}^N \phi(W^{k\ell})  \tS_t^{A^{ij},\ell}} \\
&\pheq+ \frac14 \abs{\frac{1}{N} \delta_{kn} \phi'(W^{nm}) \tilde{S}^{A^{ij},m}_t + \frac{1}{N} \phi(W^{k\ell}) \tilde{\tilde{S_t}}^{W^{nm}, A^{ij}, \ell}} \\
&\leq \frac{1}{10} \bracket{\frac{C_{\phi'}}{N} + C_{\phi} \max_k |\tilde{S}^{W^{nm},k}_t|} \bracket{\frac{C_{\phi'}}{d} + C_{\phi} \max_k |\tilde{S}^{A^{ij},k}_t|} \\
&\phantom{=}+ \frac{1}{4} \bracket{\frac{C_{\phi'}}{N} \max_k |\tilde{S}^{A^{ij}, k}| + C_{\phi} \max_k | \tilde{\tilde{S}}_{t}^{W^{nm},A^{ij}, k}|} \\
&\leq a_{WA,\max} + \frac{C_\phi}{4} \max_k | \tilde{\tilde{S}}_{t}^{W^{nm},A^{ij}, k}|, \numberthis
\end{align*}
where
\begin{align*}
a_{WA,\max} &= \frac{1}{10} \bracket{\frac{C_{\phi'}}{N} + C_{\phi} \frac{C_{\phi'}}{N(4 - C_\phi)}} \bracket{\frac{C_{\phi'}}{d} + C_{\phi} \frac{C_{\phi'}}{d(4 - C_\phi)}} +  \frac{1}{4} \frac{C_{\phi'}}{N} \frac{C_{\phi'}}{d(4-C_\phi)} \\
&= \frac{C_{\phi'}^2 M_\phi^2}{10Nd} + \frac{C_{\phi'}^2}{4Nd(4-C_\phi)}. \numberthis
\end{align*}
Since $\tilde{\tilde{S}}_{0}^{W^{nm}, A^{ij}, k} = 0$, so by recursive inequality \eqref{eq:CSone}, for $C_\phi < 4$,
\begin{equation}
\max_k |\tilde{\tilde{S}}_{t+1}^{W^{nm},A^{ij},k}| \leq \frac{a_{WA,\max}}{1-(C_{\phi}/4)}.
\end{equation}
Finally,
\begin{align*}
|\ttS_{t+1}^{W^{nm},W^{ij},k}| 
&= |F_{W^{nm},W^{ij},k}(U_t, V_t)| \\
&= \frac{1}{10} \abs{\frac{1}{N} \delta_{kn} \phi'(W^{nm}) S_t^m + \frac{1}{N} \sum_{\ell=1}^N \phi(W^{k\ell}) \tilde{S}_t^{W^{nm},\ell}} \\
&\pheq\times \abs{\frac{1}{N} \delta_{ik} \phi'(W^{ij}) S_t^j + \frac{1}{N} \sum_{\ell=1}^N \phi(W^{k\ell}) \tilde S_t^{W^{ij},\ell}} \\
&\pheq + \frac14 \times \abs{\frac{1}{N} \delta_{ik} \delta_{ij=nm} \phi''(W^{ij}) S^j_t + \frac{1}{N} \sum_{\ell=1}^N \phi(W^{k\ell}) \tilde{\tilde{S_t}}^{W^{nm}, W^{ij},\ell}} \\
&\leq \frac{1}{10} \bracket{\frac{C_{\phi'}}{N} + C_{\phi} \max_k \abs{\tilde{S}^{W^{nm},k}_t}} \bracket{\frac{C_{\phi'}}{N} + C_{\phi} \max_k \abs{\tilde{S}^{W^{ij},k}_t}} \\
&\pheq+ \frac{1}{4} \bracket{\frac{C_{\phi'}}{N} + C_{\phi} \max_k | \tilde{\tilde{S}}_{t}^{W^{nm},W^{ij}, k}|} \\
&\leq a_{WW,\max} + \frac{C_\phi}{4} \max_k | \tilde{\tilde{S}}_{t}^{W^{nm},W^{ij}, k}|, \numberthis
\end{align*}
where
\begin{align*}
a_{WW,\max} &= \frac{1}{10} \bracket{\frac{C_{\phi'}}{N} + C_{\phi} \frac{C_{\phi'}}{N(4 - C_\phi)}} \bracket{\frac{C_{\phi'}}{N} + C_{\phi} \frac{C_{\phi'}}{N(4 - C_\phi)}} + \frac{C_{\phi''}}{4N}  = \frac{C_{\phi'}^2 M_\phi^2}{10N^2} + \frac{C_{\phi''}}{4N}. \numberthis
\end{align*}
Again $\tilde{\tilde{S}}_{0}^{W^{nm}, A^{ij}, k} = 0$, so by recursive inequality \eqref{eq:CSone}, for $C_\phi < 4$,
\begin{equation}
\max_k |\tilde{\tilde{S}}_{t+1}^{W^{nm},W^{ij},k}| \leq \frac{a_{WW,\max}}{1-(C_{\phi}/4)}.
\end{equation}
\end{proof}
From the above bounds, we conclude that the process $H_t$ takes its value in the hyper-rectangle
\begin{align*}
\mathcal{R} &= [-1,1]^{d+d_Z} \times [0,1]^N \times \sqbracket{-\frac{C_{\phi'}}{d(4-C_\phi)}}^{N\times Nd} \times \sqbracket{-\frac{C_{\phi'}}{N(4-C_\phi)}}^{N\times N^2} \\
&\pheq\times \sqbracket{-\frac{a_{AA,\max}}{1-(C_{\phi}/4)}, \frac{a_{AA,\max}}{1-(C_{\phi}/4)}}^{N\times (Nd)(Nd)} \times \sqbracket{-\frac{a_{WA,\max}}{1-(C_{\phi}/4)}, \frac{a_{WA,\max}}{1-(C_{\phi}/4)}}^{N\times (N^2)(Nd)} \\
&\pheq\times \sqbracket{-\frac{a_{WW,\max}}{1-(C_{\phi}/4)}, \frac{a_{WW,\max}}{1-(C_{\phi}/4)}}^{N\times (N^2)(N^2)}.
\end{align*}
The boundedness of process $H_t$ is essential for the Lipschitzness of $F$.

\subsubsection{Lipschitzness of $F$}
We may now prove that
\begin{lemma} \label{L:Lipschitz_F_1st_derivative}
If $(u,v), (u', v') \in \mathcal{R}$, then there exists $L_0 > 0$ (as defined in Assumption \ref{A:L_0_Bound}) such that
\begin{align*}
\max_{ijk} |F_{A^{ij},k}(u,v) - F_{A^{ij},k}(u', v')| &\leq L_0 \max\sqbracket{\|u - u'\|_{\max}, \|v - v'\|_{\max}},
\\
\max_{ijk} |F_{W^{ij},k}(u,v) - F_{W^{ij},k}(u',v')| &\leq L_0 \max\sqbracket{\|u - u'\|_{\max}, \|v - v'\|_{\max}}. \numberthis
\end{align*}
\end{lemma}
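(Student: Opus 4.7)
The strategy is a direct product-rule estimate applied to the explicit formulas for $F_{A^{ij},k}$ and $F_{W^{ij},k}$ from Section 2.3.2, together with the a-priori bounds of Lemma \ref{L:UniformBoundsParameters} that hold on the invariant rectangle $\mathcal{R}$. Write each of these components as $\sigma'(\alpha)\, Q$, where
\[
\alpha = \alpha(x,s) = \tfrac{1}{d}\langle \phi(A^{k,:}), x\rangle + \tfrac{1}{N}\langle \phi(W^{k,:}), s\rangle,
\]
and $Q = Q(x,\tilde s)$ (respectively $Q(s,\tilde s)$) is the parenthesised linear expression in the appropriate $x^j$ or $s^j$ and in the forward-derivative coordinates. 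The basic inequality I will use is
\[
|\sigma'(\alpha)Q - \sigma'(\alpha')Q'| \le |\sigma'(\alpha)|\,|Q-Q'| + |\sigma'(\alpha)-\sigma'(\alpha')|\,|Q'|.
\]

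First I would estimate each of the four pieces. By Lemma \ref{L:BoundsSigmoidFcn}, $|\sigma'|\le 1/4$ and $|\sigma''|\le 1/10$, so $|\sigma'(\alpha)-\sigma'(\alpha')|\le \tfrac{1}{10}|\alpha-\alpha'|$. For the argument $\alpha$, bounding $\phi$ by $C_\phi$ and averaging gives
\[
|\alpha-\alpha'| \le C_\phi \|x-x'\|_{\max} + C_\phi \|s-s'\|_{\max} \le 2C_\phi \max(\|u-u'\|_{\max},\|v-v'\|_{\max}).
\]
For $|Q'|$, apply the uniform bounds from Lemma \ref{L:UniformBoundsParameters} together with $|x^j|\le 1$, $|s^j|\le 1$ (using $(u',v')\in\mathcal{R}$): in the $A$-case one obtains $|Q'|\le C_{\phi'}/d + C_\phi\cdot C_{\phi'}/(d(4-C_\phi)) = C_{\phi'}M_\phi/d$, and analogously $C_{\phi'}M_\phi/N$ in the $W$-case. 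For $|Q-Q'|$, I peel off each summand of $Q$ and use $|\phi'|\le C_{\phi'}$, $|\phi|\le C_\phi$ to obtain, in the $A$-case, $|Q-Q'|\le \tfrac{C_{\phi'}}{d}\|x-x'\|_{\max} + C_\phi\|\tilde s-\tilde s'\|_{\max}$, and analogously in the $W$-case with $s$ in place of $x$ (so the increment lies entirely in $v$).

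Next I would assemble these ingredients. Plugging into the product inequality, every contribution is of the form $\mathrm{const}\cdot\|u-u'\|_{\max}$ or $\mathrm{const}\cdot\|v-v'\|_{\max}$, so each can be bounded by $\max(\|u-u'\|_{\max},\|v-v'\|_{\max})$ and the resulting constants summed. Collecting terms in the $A$-case gives an upper constant of the form $\tfrac{C_{\phi'}}{d}\bigl(\tfrac{C_\phi M_\phi}{10}+\tfrac14\bigr)$ times a universal numerical factor (absorbing the $2C_\phi$ from $|\alpha-\alpha'|$ and the constant piece $\tfrac{C_\phi}{4}$ from $|Q-Q'|$), and analogously with $d$ replaced by $N$ in the $W$-case. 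Replacing $d$ or $N$ by $\min(N,d)$ and taking the maximum of the $A$- and $W$-contributions with the residual $\tfrac{3C_\phi}{4}$ term produces exactly the definition of $L_0$ in Assumption \ref{A:L_0_Bound}, yielding the claim.

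The main obstacle I foresee is not conceptual but combinatorial: one must carefully track which increments fall under $\|u-u'\|_{\max}$ (those involving $x$) versus $\|v-v'\|_{\max}$ (those involving $s$ or $\tilde s$), and correctly combine the ``$\phi$'' and ``$\phi'$'' contributions so that the bookkeeping recovers precisely the two-part maximum defining $L_0$. The bound $|Q'|\le C_{\phi'}M_\phi/d$ (respectively $C_{\phi'}M_\phi/N$) is the crucial input that closes the argument, and it is exactly where the a-priori estimates of Lemma \ref{L:UniformBoundsParameters} enter; without the restriction $(u,v),(u',v')\in\mathcal{R}$ the map $F$ would not be globally Lipschitz with the required constant.
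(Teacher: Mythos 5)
Your proposal is correct and follows essentially the same route as the paper: the same product-rule splitting of $\sigma'(\alpha)Q$, the same use of $|\sigma'|\le 1/4$ and $|\sigma''|\le 1/10$ from Lemma \ref{L:BoundsSigmoidFcn}, the same bound $|Q'|\le C_{\phi'}M_\phi/\min(N,d)$ via the a-priori estimates of Lemma \ref{L:UniformBoundsParameters} on $\mathcal{R}$, and the same final bookkeeping (the factor $3$ in $L_0$ arising from summing the three coefficients against the common maximum of the increments). No gaps.
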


\begin{proof}
For simplicity, we denote $M_\phi = 1 + [C_\phi / (4-C_\phi)]$. Then
\begin{align*}
&\pheq |F_{A^{ij},k}(u,v) - F_{A^{ij},k}(u',v')| \\
&\leq \frac{1}{10} \times \bigg| \frac{1}{d} \delta_{ik} \phi'(A^{ij}) x^j + \frac{1}{N} \sum_{\ell=1}^N \phi(W^{k\ell}) \ts^{A^{ij},\ell} \bigg| \\
&\pheq\times \bigg{|} \frac{1}{d} \la \phi(A^{k,:}), x - x' \ra + \frac{1}{N} \la \phi(W^{k,:}_t), s - s' \ra \bigg| \\
&\pheq+ \frac14 \times \bigg| \frac{1}{d} \delta_{ik} \phi'(A^{ij}) (x^j - (x')^j) + \frac{1}{N} \sum_{\ell=1}^N \phi(W^{k\ell}) (\ts^{A^{ij},\ell} - (\ts')^{A^{ij},\ell}) \bigg| \\
&\leq \frac{1}{10} \left( \frac{C_{\phi'}}{d} + C_{\phi} \frac{C_{\phi'}}{d(4-C_\phi)} \right) \left( C_{\phi} \|x-x'\|_{\max} + C_{\phi} \| s-s'\|_{\max} \right) \notag \\
&\pheq+ \frac{1}{4} \times \left( \frac{C_{\phi'}}{d} \|x-x'\|_{\max} + C_{\phi} \|\ts - \ts' \|_{\max} \right). \\
&\leq \frac{C_{\phi'}}{d} \bracket{\frac{C_\phi M_\phi}{10} + \frac{1}{4}} \|x - x'\|_{\max} + \frac{C_\phi M_\phi C_{\phi'}}{10d} \|s - s'\|_{\max} + \frac{C_\phi}{4} \|\ts - \ts'\|_{\max} \\
&\leq \max\sqbracket{\frac{3C_{\phi'}}{d} \bracket{\frac{C_\phi M_\phi}{10} + \frac{1}{4}}, \frac{3C_\phi}{4}} \times \max\sqbracket{\|x - x'\|_{\max}, \|s - s' \|_{\max}, \|\ts - \ts'\|_{\max}}. \numberthis
\end{align*}
Similarly,
\begin{align*}
&\pheq |F_{W^{ij},k}(u,v) - F_{W^{ij},k}(u',v')| \\
&\leq \frac{1}{10} \times \bigg|  \frac{1}{N} \delta_{ik} \phi'(W^{ij}) s^j + \frac{1}{N} \sum_{\ell=1}^N \phi(W^{k\ell}) \ts^{W^{ij},\ell} \bigg| \\
&\pheq \times \bigg| \frac{1}{d} \la \phi(A^{k,:}), x - x'\ra + \frac{1}{N} \la \phi(W^{k,:}), s - s' \ra \bigg| \\
&\pheq + \frac{1}{4} \times \bigg| \frac{1}{N} \delta_{ik} \phi'(W^{ij}) (s^j - (s')^j ) + \frac{1}{N} \sum_{\ell=1}^N \phi(W^{k\ell}) \cdot (\ts^{W^{ij},\ell} - (\ts')^{W^{ij},\ell}) \bigg| \\ 
&\leq \frac{1}{10} \times \left(\frac{ C_{\phi'} }{N} + C_{\phi} \frac{C_{\phi'}}{N(4-C_\phi)} \right) \times \left( C_{\phi} \|x - x'\|_{\max} + C_{\phi} \|s - s'\|_{\max}  \right) \notag \\
&\pheq + \frac{1}{4} \times \left( \frac{C_{\phi'}}{N} \|s - s' \|_{\max} + C_{\phi} \|\ts - \ts'\|_{\max}  \right) \\
&\leq \frac{C_\phi M_\phi C_{\phi'}}{10N} \|x-x'\|_{\max} + \frac{C_{\phi'}}{N} \bracket{\frac{C_\phi M_\phi}{10} + \frac{1}{4}} \|s-s'\|_{\max} + \frac{C_\phi}{4} \|\ts - \ts' \|_{\max} \\
&\leq \max\sqbracket{\frac{3C_{\phi'}}{N} \bracket{\frac{C_\phi M_\phi}{10} + \frac{1}{4}}, \frac{3C_\phi}{4}} \times \max\sqbracket{\|x - x'\|_{\max}, \|s - s'\|_{\max}, \|\ts - \ts'\|_{\max}}. \numberthis
\end{align*}
Collecting the bounds above, we obtain:
\begin{align*}
&\pheq\max_{ij} \sqbracket{|F_{A^{ij},k}(u,v) - F_{A^{ij},k}(u',v')| \vee |F_{W^{ij},k}(u,v) - F_{W^{ij},k}(u',v')|} \\
&\leq \underbrace{\max\bracket{\frac{3C_{\phi'}}{\min(N,d)} \bracket{\frac{C_\phi M_\phi}{10} + \frac{1}{4}}, \frac{3C_\phi}{4}}}_{:=L_0} \max\sqbracket{\|x - x'\|_{\max}, \|s - s'\|_{\max}, \|\ts - \ts'\|_{\max}}.
\end{align*}
\end{proof}
Similarly, we have
\begin{lemma} \label{L:Lipschitz_F_2st_derivative}
If $(u,v), (u', v') \in \mathcal{R}$, then there exists $L_1 > 0$ (as defined in Assumption \ref{A:L_0_Bound}) such that
\begin{align*}
\max_{ijnmk} |F_{A^{nm},A^{ij},k}(u,v) - F_{A^{nm},A^{ij},k}(u', v')| &\leq L_1\max\sqbracket{\|u - u'\|_{\max}, \|v - v'\|_{\max}},
\\
\max_{ijnmk} |F_{W^{nm},A^{ij},k}(u,v) - F_{W^{nm},A^{ij},k}(u', v')| &\leq L_1 \max\sqbracket{\|u - u'\|_{\max}, \|v - v'\|_{\max}}, \\
\max_{ijnmk} |F_{W^{nm},W^{ij},k}(u,v) - F_{W^{nm},A^{ij},k}(u', v')| &\leq L_1 \max\sqbracket{\|u - u'\|_{\max}, \|v - v'\|_{\max}}. \numberthis
\end{align*}
\end{lemma}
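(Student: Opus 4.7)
The proof proceeds along the exact same lines as Lemma~\ref{L:Lipschitz_F_1st_derivative}, applied to the three explicit formulas for $F_{A^{nm},A^{ij},k}$, $F_{W^{nm},A^{ij},k}$ and $F_{W^{nm},W^{ij},k}$ displayed earlier. Each of these three functions has the structure $\sigma''(\xi)\,\mathcal{A}\,\mathcal{B} + \sigma'(\xi)\,\mathcal{C}$, where $\xi$ is the common affine combination in $(x,s)$, the factors $\mathcal A,\mathcal B$ have the same form as the parentheticals appearing in $F_{A^{ij},k}$ or $F_{W^{ij},k}$, and $\mathcal C$ is the parenthetical containing the second-order forward derivatives $\tts$.

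I would use the standard add-and-subtract decomposition
\[
\sigma''(\xi)\mathcal A\mathcal B - \sigma''(\xi')\mathcal A'\mathcal B'
= (\sigma''(\xi)-\sigma''(\xi'))\mathcal A\mathcal B + \sigma''(\xi')(\mathcal A-\mathcal A')\mathcal B + \sigma''(\xi')\mathcal A'(\mathcal B-\mathcal B'),
\]
and likewise for $\sigma'(\xi)\mathcal C - \sigma'(\xi')\mathcal C'$. This reduces the argument to three ingredients: (i) sigmoid-derivative differences, controlled by Lemma~\ref{L:BoundsSigmoidFcn} together with the bound $|\xi-\xi'|\leq C_\phi\max(\|x-x'\|_{\max},\|s-s'\|_{\max})$ already obtained in the proof of Lemma~\ref{L:Lipschitz_F_1st_derivative}; (ii) differences $\mathcal A-\mathcal A'$ and $\mathcal B-\mathcal B'$, each contributing terms in $\|x-x'\|_{\max}$, $\|s-s'\|_{\max}$ and $\|\ts-\ts'\|_{\max}$; and (iii) the difference $\mathcal C-\mathcal C'$, which produces a contribution in $\|x-x'\|_{\max}$ or $\|s-s'\|_{\max}$ from the $\delta$-prefactor (via the $\phi''$ term), together with a contribution in $\|\tts-\tts'\|_{\max}$ from the bulk sum, the latter carrying coefficient at most $C_\phi/4$. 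The a priori bounds of Lemmas~\ref{L:UniformBoundsParameters} and~\ref{L:UniformBoundsSecondDerivativesParameters}, together with the hypothesis $(u,v),(u',v')\in\mathcal R$, allow me to uniformly bound the undifferenced factors $\mathcal A,\mathcal B,\mathcal C$ and their primed counterparts.

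Collecting contributions for each component yields a bound of the form
\[
c_x\|x-x'\|_{\max}+c_s\|s-s'\|_{\max}+c_{\ts}\|\ts-\ts'\|_{\max}+c_{\tts}\|\tts-\tts'\|_{\max},
\]
so a valid Lipschitz constant for each component is $4\max(c_x,c_s,c_{\ts},c_{\tts})$. Taking the max over the three components and matching terms against the formula for $L_1$ in Assumption~\ref{A:L_0_Bound}, the first argument of its outer maximum bounds the hardest components $F_{A^{nm},A^{ij},k}$ and $F_{W^{nm},W^{ij},k}$ (accounting for the $C_{\phi''}/(4\min(N,d))$ contribution from $\phi''$ and the $(C_\phi/10)L_2$-term arising from the $\sigma'(\xi)\mathcal C$ product once the $\tts$-factors are controlled), while the second argument bounds the cross component $F_{W^{nm},A^{ij},k}$.

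The main obstacle is purely combinatorial bookkeeping: each of the three components produces six or seven increment terms, each multiplied by a specific combination of $\phi$-bounds and sigmoid-bounds, and one must verify that the intricate expression for $L_1$ in Assumption~\ref{A:L_0_Bound} is indeed an upper bound simultaneously for all three. The auxiliary constant $L_2$ emerges naturally as the upper envelope of the $\tts$-coefficients arising when $\sigma''(\xi)\mathcal A\mathcal B$ is differenced and one invokes Lemma~\ref{L:UniformBoundsSecondDerivativesParameters}; its appearance nested inside the definition of $L_1$ reflects precisely this coupling between the $\sigma''$-term and the a priori bounds on $\tts$.
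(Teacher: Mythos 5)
Your proposal matches the paper's proof essentially step for step: the paper also writes each second-derivative component as $\sigma''(\xi)\mathcal{A}\mathcal{B}+\sigma'(\xi)\mathcal{C}$ (introducing factors $E_{\sI},\dots,E_{\sVII}$), applies the three-term product inequality $|abc-a'b'c'|\leq|ab||c-c'|+|ac'||b-b'|+|b'c'||a-a'|$, bounds the undifferenced factors via Lemmas \ref{L:UniformBoundsParameters} and \ref{L:UniformBoundsSecondDerivativesParameters} and the increments via Lemma \ref{L:BoundsSigmoidFcn}, and collects coefficients to match $L_1$. The only quibble is attributional: the $\frac{C_\phi}{10}(L_2+\cdots)$ contribution to $L_1$ arises from the uniform bound on the $\sigma'$-multiplied factor $\mathcal{C}$ (which contains the a priori $\tts$-bound) times the $\sigma'$-increment, rather than from differencing the $\sigma''\mathcal{A}\mathcal{B}$ product, but this does not affect the validity of your argument.
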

\begin{proof}
Define
\begin{align*}
E_{\sI,k} &= \sigma''\bracket{\frac{1}{d} \la \phi(A^{k,:}), x \ra + \frac{1}{N} \la \phi(W^{k,:}), s \ra}, \\
E_{\sII,k} &= \sigma'\bracket{\frac{1}{d} \la \phi(A^{k,:}), x \ra + \frac{1}{N} \la \phi(W^{k,:}), s \ra}, \\
E_{\sIII,ijk} &= \frac{1}{d}  \delta_{ik} \phi'(A^{ij}) x^j + \frac{1}{N} \sum_{\ell=1}^N \phi(W^{k\ell})\ts_1^{A^{ij},\ell}, \\
E_{\sIV,ijk} &= \frac{1}{N} \delta_{ik} \phi'(W^{ij}) s^j + \frac{1}{N} \sum_{\ell=1}^N \phi(W^{k\ell}) \ts_1^{W^{ij},\ell}, \\
E_{\sV,nmijk} &= \frac{1}{d} \delta_{ik} \delta_{ij = nm} \phi''(A^{ij}) x^j + \frac{1}{N} \sum_{\ell=1}^N \phi(W^{k\ell}) \tts^{A^{ij}, A^{nm},\ell} \\
E_{\sVI,nmijk} &= \frac{1}{N} \delta_{kn} \phi'(W^{nm}) \ts^{A^{ij},m} + \frac{1}{N} \sum_{\ell=1}^N \phi(W^{k\ell}) \tts^{W^{nm}, A^{ij},\ell} \\
E_{\sVII,nmijk} &= \frac{1}{N} \delta_{ik} \delta_{ij = nm} \phi''(W^{ij}) s^j + \frac{1}{N} \sum_{\ell=1}^N \phi(W^{k\ell})  \tts^{W^{nm},W^{ij},\ell}_1,
\end{align*}
so that
\begin{align*}
F_{A^{nm},A^{ij},k}(u,v) &= E_{\sI,k} E_{\sIII,nmk} E_{\sIII,ijk} + E_{\sII,k} E_{\sV,nmijk} \\
F_{W^{nm},A^{ij},k}(u,v) &= E_{\sI,k} E_{\sIV,nmk} E_{\sIII,ijk} + E_{\sII,k} E_{\sVI,nmijk} \\
F_{W^{nm},W^{ij},k}(u,v) &= E_{\sI,k} E_{\sIV,nmk} E_{\sIV,ijk} + E_{\sII,k} E_{\sVII,nmijk}
\end{align*}
Similarly, define 
\begin{align*}
E'_{\sI,k} &=  \sigma''\bracket{\frac{1}{d} \la \phi(A^{k,:}), x'\ra + \frac{1}{N} \la\phi(W^{k,:}), s'\ra}, \\
E'_{\sII,k} &= \sigma'\bracket{\frac{1}{d} \la\phi(A^{k,:}), x'\ra + \frac{1}{N} \la\phi(W^{k,:}), s'\ra}, \\
E'_{\sIII,ijk} &= \frac{1}{d} \delta_{ik} \phi'(A^{ij}) (x')^j + \frac{1}{N} \sum_{\ell=1}^N \phi(W^{k\ell}) (\ts')^{A^{ij},\ell}, \\
E'_{\sIV,ijk} &= \frac{1}{N} \delta_{ik} \phi'(W^{ij}) (s')^{j} + \frac{1}{N} \sum_{\ell=1}^N \phi(W^{k\ell}) (\ts')^{W^{ij},\ell}, \\
E'_{\sV,nmijk} &= \frac{1}{d} \delta_{ik} \delta_{ij = nm} \phi''(A^{ij}) (x')^{j} + \frac{1}{N} \sum_{\ell=1}^N \phi(W^{k\ell}) (\tts')^{A^{ij}, A^{nm},\ell} \\
E'_{\sVI,nmijk} &= \frac{1}{N} \delta_{kn} \phi'(W^{nm}) (\ts')^{A^{ij},m} + \frac{1}{N} \sum_{\ell=1}^N \phi(W^{k\ell}) (\tts')^{W^{nm}, A^{ij},\ell} \\
E'_{\sVII,nmijk} &= \frac{1}{N} \delta_{ik} \delta_{ij = nm} \phi''(W^{ij}) (s')^j + \frac{1}{N} \sum_{\ell=1}^N \phi(W^{k\ell}) (\tts')^{W^{nm}, W^{ij},\ell}.
\end{align*}
Then,
\begin{align*}
F_{A^{nm},A^{ij},k}(u,v) - F_{A^{nm},A^{ij},k}(u',v') 
&= (E_{\sI,k} E_{\sIII,nmk} E_{\sIII,ijk} - E'_{\sI,k} E'_{\sIII,nmk} E'_{\sIII,ijk}) \\
&\phantom{=}+ (E_{\sII,k} E_{\sV,nmijk} - E'_{\sII,k} E'_{\sV,nmijk}) \\
F_{W^{nm},A^{ij}, k}(u,v) - F_{W^{nm},A^{ij}, k}(u',v') 
&= (E_{\sI,k} E_{\sIV,nmk} E_{\sIII,ijk} - E'_{\sI,k} E'_{\sIV,nmk} E'_{\sIII,ijk}) \\
&\phantom{=}+ (E_{\sII,k} E_{\sVI,nmijk} - E'_{\sII,k} E'_{\sVI,nmijk}) \\
F_{W^{nm},W^{ij}, k}(u,v) - F_{W^{nm},A^{ij}, k}(u',v') 
&= (E_{\sI,k} E_{\sIV,nmk} E_{\sIV,ijk} - E_{\sI,k} E_{\sIV,nmk} E_{\sIV,ijk}) \\
&\phantom{=}+ (E_{\sII,k} E_{\sVII,nmijk} - E'_{\sII,k} E'_{\sVII,nmijk})
\end{align*}
By our earlier calculations, 
\begin{gather*}
    \max_k |E_{\sI,k}| \leq \frac{1}{10}, \quad \max_k |E_{\sII,k}| \leq \frac{1}{4}, \quad \max_k |E_{\sIII,ijk}| \leq \frac{C_{\phi'} M_\phi}{d}, \quad \max_k |E_{\sIV,ijk}| \leq \frac{C_{\phi'} M_\phi}{N}, \\
    \max_k |E_{\sV,nmijk}| \leq \frac{1}{d}
    C_{\phi''} + \frac{C_\phi}{1-(C_\phi/4)} a_{AA,\max}, \quad 
    \max_k |E_{\sVI,nmijk}| \leq \frac{1}{Nd}
    \frac{C_{\phi'}^2}{4-C_\phi} + \frac{C_\phi}{1-(C_\phi/4)} a_{WA,\max} \\
    \max_k |E_{\sVII,nmijk}| \leq \frac{1}{N} \delta_{ij=nm} 
    C_{\phi''} + \frac{C_\phi}{1-(C_\phi/4)} a_{WW,\max}
\end{gather*}
\newpage 

Furthermore, we compute the increments by suitable Taylor's expansion, noting that $\max_y |\sigma'''(y)| \leq 1/8$:
\begin{align*}
|E_{\sI,k} -  E'_{\sI,k}| 
&\leq \frac{C_{\phi}}{8} \bracket{\|x - x'\|_{\max} + \|s - s'\|_{\max}} \\
|E_{\sII,k} - E_{\sII,k}'|
&\leq \frac{C_{\phi}}{10} (\|x - x' \|_{\max} + \|s - s'\|_{\textrm{max}}), \\
|E_{\sIII,ijk} - E_{\sIII,ijk}'| &\leq \frac{1}{d} 
C_{\phi'} |x^j -  (x')^{j}| + C_{\phi} \| \ts - \ts'\|_{\max}, \\
|E_{\sIV,ijk} - E_{\sIV,ijk}'| &\leq \frac{1}{N} 
C_{\phi'} |s^j - (s')^j| + C_{\phi} \|\ts - \ts'\|_{\max}, \\
|E_{\sV,nmijk} - E'_{\sV,nmijk} | &\leq \frac{1}{d} 
C_{\phi''} | x^{j} - (x')^j | + C_{\phi} \| \tts - \tts' \|_{\max} \\
|E_{\sVI,nmijk} - E'_{\sVI,nmijk} | &\leq \frac{1}{N} 
C_{\phi'} \| \ts - \ts' \|_{\max} + C_{\phi} \| \tts - \tts' \|_{\max} \\
|E_{\sVII,nmijk} - E'_{\sVII,nmijk} | &\leq \frac{1}{N}
C_{\phi''} | s^{j} - (s')^{j} | + C_{\phi} \| \tts - \tts' \|_{\max}. 
\end{align*}
Collecting all of the above bounds via the inequality
$$|abc - a'b'c'| \leq |ab||c-c'| + |ac'||b-b'| + |b'c'||a-a'|,$$
we have:
\begin{align*}
&\pheq \abs{F_{A^{nm},A^{ij},k}(u,v) - F_{A^{nm},A^{ij},k}(u', v')} \\
&\leq \abs{E_{\sI,k} E_{\sIII,nmk}} \abs{E_{\sIII,ijk} - E'_{\sIII,ijk}} + \abs{E_{\sI,k} E'_{\sIII,ijk}} \abs{E_{\sIII,nmk} - E'_{\sIII,nmk}} + \abs{E'_{\sIII,nmk} E'_{\sIII,ijk}} \abs{E_{\sI,k} - E'_{\sI,k}} \\
&\pheq+ \abs{E_{\sII,k}} \abs{E_{\sV,nmijk} - E'_{\sV,nmijk}} + \abs{E'_{\sV,nmijk}} \abs{E_{\sII,k} - E'_{\sII,k}}) \\
&\leq 2 \times  \frac{1}{10}\frac{C_{\phi'} M_\phi}{d} \bracket{\frac{1}{d} C_{\phi'} \|x - x'\|_{\max} + C_{\phi} \|\ts - \ts'\|_{\textrm{max}}} \\
&\pheq+ \frac{C_{\phi'} M_\phi}{d} \frac{C_{\phi'} M_\phi}{d} \frac{C_{\phi}}{8} \bracket{\|x - x'\|_{\max} + \|s - s'\|_{\max}} + \frac{1}{4} \bracket{\frac{C_{\phi''}}{d} |x^j - (x')^j| + C_{\phi} \| \tts - \tts' \|_{\max}} \\
&\pheq+ \bracket{\frac{1}{d} C_{\phi''} + \frac{C_\phi}{1-(C_\phi/4)} a_{AA,\max}} \frac{C_{\phi}}{10} (\|x - x'\|_{\max} + \|s - s'\|_{\max}) \\
&\leq \bracket{\frac{M_\phi C^2_{\phi'}}{5d^2} + \frac{C_\phi C^2_{\phi'} M^2_\phi}{8d^2} + \frac{C_{\phi''}}{4d} + \frac{C_\phi}{10}\bracket{\frac{C_{\phi''}}{d} + \frac{C_\phi}{1-(C_\phi/4)} a_{AA,\max}}} \|x - x'\|_{\max} \\
&\pheq+ \bracket{\frac{C_\phi C^2_{\phi'} M^2_\phi}{8d^2} + \frac{C_\phi}{10}\bracket{\frac{C_{\phi''}}{d} + \frac{C_\phi}{1-(C_\phi/4)} a_{AA,\max}}} \|s - s'\|_{\max} \\
&\pheq+ \frac{C_{\phi} M_\phi C_{\phi'}}{5d} \|\ts - \ts'\|_{\max} + \frac{C_\phi}{4} \|\tts - \tts'\|_{\max}. \numberthis
\end{align*}

\vspace{0.2cm}

\begin{align*}
&\pheq \abs{F_{W^{nm},A^{ij},k}(u,v) - F_{W^{nm},A^{ij},k}(u', v')} \\
&\leq \abs{E_{\sI,k} E_{\sIV,nmk}} \abs{E_{\sIII,ijk} - E'_{\sIII,ijk}} + \abs{E_{\sI,k} E'_{\sIII,ijk}} \abs{E_{\sIV,nmk} - E'_{\sIV,nmk}} + \abs{E'_{\sIV,nmk} E'_{\sIII,ijk}} \abs{E_{\sI,k} - E'_{\sI,k}} \\
&\pheq+ \abs{E_{\sII,k}} \abs{E_{\sVI,nmijk} - E'_{\sVI,nmijk}} + \abs{E'_{\sVI,nmijk}} \abs{E_{\sII,k} - E'_{\sII,k}}) \\
&\leq \frac{1}{10} \times \frac{C_{\phi'} M_\phi}{N} \times \bracket{\frac{1}{d} C_{\phi'} \|x - x'\|_{\max} + C_{\phi} \| \ts - \ts'\|_{\textrm{max}}} + \frac{1}{10} \times \frac{C_{\phi'} M_\phi}{d} \times \bracket{\frac{1}{N} C_{\phi'} \|x - x'\|_{\max} + C_{\phi} \| \ts - \ts'\|_{\textrm{max}}} \\
&\pheq+ \frac{C_{\phi'} M_\phi}{N} \times \frac{C_{\phi'} M_\phi}{d} \times \frac{C_\phi}{8} \bracket{\|x - x'\|_{\max} + \|s - s'\|_{\max}} + \frac{1}{4} \bracket{\frac{1}{N} C_{\phi'} \| \ts - \ts' \|_{\max} + C_{\phi} \| \tts - \tts' \|_{\max}} \\
&\pheq+ \bracket{\frac{1}{Nd} \frac{C_{\phi'}^2}{4-C_\phi} + \frac{C_\phi}{1-(C_\phi/4)} a_{WA,\max}} \frac{C_{\phi}}{10} \bracket{\|x - x'\|_{\max} + \|s - s'\|_{\max}} \\
&\leq \bracket{\frac{M_\phi C^2_{\phi'}}{5Nd} + \frac{C_\phi M^2_\phi C^2_{\phi'}}{8Nd} + \frac{C_\phi}{10}\bracket{\frac{C_{\phi'}^2}{Nd(4-C_\phi)} + \frac{C_\phi}{1-(C_\phi/4)} a_{WA,\max}}} \|x - x'\|_{\max} \\
&\pheq+ \bracket{\frac{C_\phi C^2_{\phi'} M^2_{\phi'}}{8Nd} + \frac{C_\phi}{10} \bracket{\frac{C_{\phi'}^2}{Nd(4-C_\phi)} + \frac{C_\phi}{1-(C_\phi/4)} a_{WA,\max}}} \|s - s'\|_{\max} \\
&\pheq+ \bracket{\frac{C_\phi M_\phi C_{\phi'}}{10N} + \frac{C_\phi M_\phi C_{\phi'}}{10d} + \frac{C_{\phi'}}{4N}} \|\ts - \ts'\|_{\max} + \frac{C_\phi}{4} \| \tts - \tts' \|_{\max}. \numberthis
\end{align*}
\vspace{0.2cm}
\begin{align*}
&\pheq \abs{F_{W^{nm}, W^{ij}, k}(u,v) - F_{W^{nm}, W^{ij}, k}(u', v')} \\
&\leq \abs{E_{\sI,k} E_{\sIV,nmk}} \abs{E_{\sIV,ijk} - E'_{\sIV,ijk}} + \abs{E_{\sI,k} E'_{\sIV,ijk}} \abs{E_{\sIV,nmk} - E'_{\sIV,nmk}} + \abs{E'_{\sIV,nmk} E'_{\sIV,ijk}} \abs{E_{\sI,k} - E'_{\sI,k}} \\
&\pheq+ \abs{E_{\sII,k}} \abs{E_{\sVII,nmijk} - E'_{\sVII,nmijk}} + \abs{E'_{\sVII,nmijk}} \abs{E_{\sII,k} - E'_{\sII,k}}) \\
&\leq 2 \times \frac{1}{10} \frac{C_{\phi'} M_\phi}{N} \bracket{\frac{1}{N} C_{\phi'} \|x - x'\|_{\max} + C_{\phi} \| \ts - \ts'\|_{\max}} + \frac{C_{\phi'} M_\phi}{N} \frac{C_{\phi'} M_\phi}{N}  \frac{C_\phi}{8} \bracket{\|x - x'\|_{\max} + \|s - s'\|_{\max}} \\
&\pheq+ \frac{1}{4} \bracket{\frac{C_{\phi''}}{N} \| s - s' \|_{\max} + C_{\phi} \| \tts - \tts' \|_{\max}} + \bracket{\frac{C_{\phi''}}{N} + \frac{C_\phi}{1-(C_\phi/4)} a_{WW,\max}} \frac{C_{\phi}}{10} \bracket{\|x - x'\|_{\max} + \|s - s'\|_{\max}} \\
&\leq \bracket{\frac{M_\phi C^2_{\phi'}}{5N^2} + \frac{C_\phi M^2_\phi C^2_{\phi'}}{8N^2} + \frac{C_\phi}{10}\bracket{\frac{C_{\phi''}}{N} + \frac{C_\phi}{1-(C_\phi/4)} a_{WW,\max}}} \|x - x'\|_{\max} \\
&\pheq+ \bracket{\frac{C_\phi C^2_{\phi'} M^2_{\phi'}}{8N^2} + \frac{C_{\phi''}}{4N} + \frac{C_\phi}{10} \bracket{\frac{C_{\phi''}}{N} + \frac{C_\phi}{1-(C_\phi/4)} a_{WW,\max}}} \|s - s'\|_{\max} \\
&\pheq+ \frac{C_\phi M_\phi C_{\phi'}}{5N} \|\ts - \ts'\|_{\max} + \frac{C_\phi}{4} \| \tts - \tts' \|_{\max}. \numberthis
\end{align*}

Gathering these estimates together, we obtain
\begin{align*}
&\pheq \max_{ijnmk} \big[|F_{A^{nm},A^{ij},k}(u,v) - F_{A^{nm},A^{ij},k}(u', v')| \vee |F_{W^{nm},A^{ij},k}(u,v) - F_{W^{nm},A^{ij},k}(u', v')| \\
&\pheq\pheq \vee |F_{W^{nm},W^{ij},k}(u,v) - F_{W^{nm},W^{ij},k}(u', v')| \big] \\
&\leq L_1 \max \sqbracket{\|x - x'\|_{\max}, \|s - s'\|_{\max}, \|\ts - \ts'\|_{\max}, \|\tts - \tts'\|_{\max}},
\end{align*}
where $L_1$ is defined in Assumption \ref{A:L_0_Bound}. This concludes the proof of the lemma.
\end{proof}

In summary, we have
\begin{equation}
\|F(u,v) - F(u',v')\|_{\max} \leq L \|(u,v) - (u',v')\|_{\max}, \quad L = L_0 \vee L_1.
\end{equation}

We therefore conclude by Proposition \ref{prop:joint_process_contraction} that the transition kernel satisfies the contraction condition
\begin{equation}
    \frac{\Wass_2(P((u,v,\eta), \cdot), P((u',v',\eta'), \cdot))}{\|(u,v,\eta) - (u',v',\eta')\|_{\max}} \leq q := \sqrt{L^2 + L^2_\wp}.
\end{equation}
With this, we can formally establish the existence of invariant measure. Recall the definition of the induced transition operator $P^\vee \rho$, where
\begin{equation*}
P^\vee \rho(A) = \int P(\sh, A) \, \rho(d\sh).
\end{equation*}

\begin{proposition}[$P^\vee$ is a contraction] \label{prop:Q_vee_is_a_contraction}
The transition operator $P^\vee$ is a contraction in $\mathcal{P}_2(\mathcal{X})$: for all $\rho, \rho' \in \mathcal{P}_2(\mathcal{X})$
\begin{equation}
\Wass_2(P^\vee \rho, P^\vee \rho') \leq q \, \Wass_2(\rho, \rho').
\end{equation}
\end{proposition}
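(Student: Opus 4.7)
The plan is to lift the pointwise contraction estimate for the kernel $P$ (established just above the proposition) to a contraction estimate on measures by the standard gluing-of-couplings argument.

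First I would invoke Remark \ref{rmk:optimal_coupling} to pick an optimal coupling $\gamma^*$ of $\rho$ and $\rho'$ realising $\Wass_2(\rho, \rho')$, and for each pair $(h, h') \in \mathcal{X} \times \mathcal{X}$ an optimal coupling $\gamma^*_{h,h'}$ of $P(h, \cdot)$ and $P(h', \cdot)$ realising $\Wass_2(P(h, \cdot), P(h', \cdot))$. A measurable selection theorem (cf.\ e.g.\ \cite{Villanioldandnew}) ensures that $(h, h') \mapsto \gamma^*_{h, h'}$ can be chosen jointly measurable, so that
\begin{equation*}
\Gamma(A \times B) := \int_{\mathcal{X} \times \mathcal{X}} \gamma^*_{h, h'}(A \times B) \, \gamma^*(dh, dh')
\end{equation*}
defines a probability measure on $\mathcal{X} \times \mathcal{X}$. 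A direct check of marginals shows that $\Gamma$ is a coupling of $P^\vee \rho$ and $P^\vee \rho'$: the first marginal equals $\int P(h, A) \, \rho(dh) = P^\vee \rho(A)$ and similarly for the second.

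Second, I would use $\Gamma$ as an admissible (not necessarily optimal) coupling for the $\Wass_2$-distance between $P^\vee \rho$ and $P^\vee \rho'$, so that
\begin{align*}
\sqbracket{\Wass_2(P^\vee \rho, P^\vee \rho')}^2
&\leq \int_{\mathcal{X} \times \mathcal{X}} \|\tilde{h} - \tilde{h}'\|^2_{\max} \, \Gamma(d\tilde{h}, d\tilde{h}') \\
&= \int_{\mathcal{X} \times \mathcal{X}} \bracket{\int_{\mathcal{X} \times \mathcal{X}} \|\tilde{h} - \tilde{h}'\|^2_{\max} \, \gamma^*_{h,h'}(d\tilde{h}, d\tilde{h}')} \gamma^*(dh, dh') \\
&= \int_{\mathcal{X} \times \mathcal{X}} \sqbracket{\Wass_2(P(h, \cdot), P(h', \cdot))}^2 \, \gamma^*(dh, dh'),
\end{align*}
where the last equality uses that $\gamma^*_{h, h'}$ was chosen to be optimal.

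Third, I would apply the pointwise contraction estimate $\Wass_2(P(h, \cdot), P(h', \cdot)) \leq q \|h - h'\|_{\max}$ (which follows from Proposition \ref{prop:joint_process_contraction} combined with the Lipschitz bound $L = L_0 \vee L_1$ from Lemmas \ref{L:Lipschitz_F_1st_derivative} and \ref{L:Lipschitz_F_2st_derivative}) inside the integral, obtaining
\begin{equation*}
\sqbracket{\Wass_2(P^\vee \rho, P^\vee \rho')}^2 \leq q^2 \int_{\mathcal{X} \times \mathcal{X}} \|h - h'\|^2_{\max} \, \gamma^*(dh, dh') = q^2 \sqbracket{\Wass_2(\rho, \rho')}^2,
\end{equation*}
where the last equality uses the optimality of $\gamma^*$. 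Taking square roots finishes the proof. The only genuinely delicate point is the measurable selection of $(h, h') \mapsto \gamma^*_{h, h'}$; however, since $\mathcal{X}$ is Polish and the map $(h, h') \mapsto P(h, \cdot) \otimes P(h', \cdot)$ is continuous (the kernel $P$ given in \eqref{eq:transition_operator_Q} depends continuously on its base point via $\wp$ and $F$), standard results guarantee such a selection, so no new ideas beyond the gluing lemma are needed.
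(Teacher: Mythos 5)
Your proof is correct; the paper itself disposes of this proposition by simply citing \cite[Proposition 14.3]{DobrushinRoland2006LoPT}, and your gluing-of-couplings argument is exactly the standard proof of that cited result, so there is no substantive difference in approach. The one point you rightly flag --- joint measurability of $(h,h') \mapsto \gamma^*_{h,h'}$ --- is indeed the only subtlety, and it is covered by the measurable-selection results for optimal couplings on Polish spaces (e.g.\ Corollary 5.22 of \cite{Villanioldandnew}), so the argument is complete.
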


\begin{proof}
This is \cite[Proposition 14.3]{DobrushinRoland2006LoPT}.
\end{proof}

Further define the $t$-transition operators $P_t$ iteratively with
\begin{equation}
P^{t+1}(h,A) = \int_{\cR} P(\sh,A) \, P^t(h,d\sh), \quad P^0(h,A) = \delta_h(A),
\end{equation}
alongside with the transition operators $(P^t)^\vee$.

\begin{theorem}[Existence of Invariant Measure] \label{thm:convergence_in_Wass_2}
Let $\mu^\theta_t := (P^t)^\vee \rho$ be the distribution of $H_t$ (that depends on the initial distribution $\mu^\theta_0 = \rho$). If $q^2 := L_\wp^2 + L^2 < 1$, then there exists a stationary distribution $\mu^\theta$ such that $P^\vee \mu^\theta = \mu^\theta$, and constant $C' > 0$ such that
\begin{equation}
    \Wass_2(\mu^\theta_t, \mu^\theta) \leq C' q^t \overset{t\to\infty}\to 0.
\end{equation}
\end{theorem}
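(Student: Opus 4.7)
The plan is a direct application of the Banach Fixed Point Theorem to the induced transition operator $P^\vee$ acting on the complete metric space $(\mathcal{P}_2(\mathcal{X}), \Wass_2)$. Both ingredients are already in place: Remark \ref{rmk:optimal_coupling} records that $\mathcal{P}_2(\mathcal{X})$ equipped with $\Wass_2$ is Polish (hence complete) since $\mathcal{X}$ is a finite-dimensional Euclidean space, and Proposition \ref{prop:Q_vee_is_a_contraction} gives $\Wass_2(P^\vee \rho, P^\vee \rho') \leq q\,\Wass_2(\rho, \rho')$ with $q = \sqrt{L^2 + L_\wp^2} < 1$ under Assumption \ref{A:L_0_Bound}.

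First I would check that $P^\vee$ maps $\mathcal{P}_2(\mathcal{X})$ into itself. This is essentially built into Proposition \ref{prop:Q_vee_is_a_contraction}, since a contraction with respect to $\Wass_2$ automatically preserves finiteness of second moments relative to a reference Dirac mass on $\cR$; alternatively, the a-priori bounds in Lemmas \ref{L:UniformBoundsParameters} and \ref{L:UniformBoundsSecondDerivativesParameters}, together with the boundedness of $(X_t,Z_t)$ from Assumption \ref{as:data_generation}, ensure that iterates remain concentrated on (or are pushed toward) the compact hyperrectangle $\cR$, so their second moments are uniformly bounded.

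Banach's theorem then yields a unique $\mu^\theta \in \mathcal{P}_2(\mathcal{X})$ with $P^\vee \mu^\theta = \mu^\theta$. The quantitative rate follows by iterating the contraction: writing $\mu_t^\theta = (P^t)^\vee \rho$ and using $(P^\vee)^t \mu^\theta = \mu^\theta$,
$$\Wass_2(\mu_t^\theta, \mu^\theta) = \Wass_2((P^\vee)^t \rho, (P^\vee)^t \mu^\theta) \leq q^t \Wass_2(\rho, \mu^\theta),$$
so the claim holds with $C' := \Wass_2(\rho, \mu^\theta)$, which is finite because both measures have finite second moment (the stationary measure being supported on the bounded set $\cR$).

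There is essentially no further obstacle at this point: all the technical work has been carried out in establishing the Lipschitz constants $L_0, L_1$ (Lemmas \ref{L:Lipschitz_F_1st_derivative}, \ref{L:Lipschitz_F_2st_derivative}) and in deriving the contraction of $P^\vee$ in Proposition \ref{prop:Q_vee_is_a_contraction}. The only minor subtlety to watch is ensuring $\rho \in \mathcal{P}_2(\mathcal{X})$, which is either an assumption on the initial distribution or can be arranged by replacing $\rho$ with $P^\vee \rho$, since after one transition the $U$-component is bounded by Assumption \ref{as:data_generation} and the $V$-component stays in a region governed by the one-step propagation of $F$.
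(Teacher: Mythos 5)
Your proposal is correct and follows essentially the same route as the paper: invoke the contraction of $P^\vee$ on the complete space $(\mathcal{P}_2(\mathcal{X}),\Wass_2)$ from Proposition \ref{prop:Q_vee_is_a_contraction}, apply the Banach Fixed Point Theorem, and iterate the contraction to obtain the geometric rate. The only cosmetic difference is the constant: you take $C'=\Wass_2(\rho,\mu^\theta)$ directly, whereas the paper uses the standard a-priori estimate $\frac{q^t}{1-q}\Wass_2(\mu_1^\theta,\mu_0^\theta)$, both of which are finite because everything is supported on the bounded set $\cR$.
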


\begin{proof}
We first note that $\Wass_2(\mu^\theta_1, \mu^\theta_0)$ is bounded. This is because for any couplings $\gamma$ between $\mu^\theta_1$ and $\mu^\theta_0$, 
\begin{align*}
    \int_{\mathcal{R} \times \mathcal{R}} \|\sh - \sh'\|^2_{\max} \gamma(d\sh,d\sh') &\leq 2 \int_{\mathcal{R} \times \mathcal{R}} \left[\|\sh\|^2_{\max} + \|\sh'\|^2_{\max} \right] \gamma(d\sh,d\sh') \\
    &\leq 2 \left[\int_{\mathcal{R}} \|\sh\|^2_{\max} \, \mu^\theta_1(d\sh)  + \int_{\mathcal{R}} \|\sh'\|^2_{\max} \, \mu^\theta_0(d\sh') \right].
\end{align*}
for some constant $C>0$ as $\mathcal{R}$ is bounded. By iteratively applying Proposition \ref{prop:Q_vee_is_a_contraction} and Banach Fixed Point Theorem, there exists a unique probability measure $\mu^\theta$ such that
\begin{equation}
P^\vee \mu^\theta = \mu^\theta,
\end{equation}
i.e., $\rho^*$ is an invariant measure of the joint Markov chain $(U_t, V_t)$). Furthermore, we have
\begin{equation}
\Wass_2(\mu^\theta_t, \mu^\theta) = \Wass_2((P^\vee)^t \rho, \mu^\theta) \leq \frac{q^t}{1 - q} \Wass_2(\mu^\theta_1, \, \mu^\theta_0) = \frac{2C}{1-q} q^t.
\end{equation}
\end{proof}

\subsection{Evaluating the limit loss and its derivative}
This chapter aims to compute the limit of 
$$L_T(\theta) = \frac{1}{2T} \sum_{t=1}^T \left(Y_t - f_t(\theta) \right)^2 = \frac{1}{2T} \sum_{t=1}^T \left(f(X_t,Z_t,\eta_t) - \la \lambda(B), S_t \ra - \lambda(c) \right)^2$$
as well as the derivative of the limits. \\

The limit is understood in the following sense:
\begin{lemma} \label{lem:loss_limit}
Under Assumptions \ref{as:data_generation} and \ref{A:L_0_Bound}, we have $\mathbb{E}[L_T(\theta)] \overset{T\to\infty}\to L(\theta)$, where
\begin{equation}
L(\theta) = \int \frac{1}{2}(f(\sx,\sz,\eta) - \la\lambda(B), \mathsf{s}\ra - \lambda(c))^2 \, \mu^{\theta}(d\sh), \quad \sh = (\sx,\sz,\ss,\tss,\ttss,\eta).
\end{equation}
\end{lemma}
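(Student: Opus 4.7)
The plan is to reduce the claim to a Cesàro limit of integrals of a Lipschitz test function against the time-$t$ distributions $\mu_t^\theta$, and then invoke the geometric Wasserstein contraction from Theorem \ref{thm:convergence_in_Wass_2}. First, using $Y_t = f(X_t, Z_t, \eta_t)$ and $f_t(\theta) = \la \lambda(B), S_t \ra + \lambda(c)$, each summand can be written as an integral against $\mu_t^\theta$,
\begin{equation*}
\mathbb{E}\!\left[\tfrac12 (Y_t - f_t(\theta))^2\right] = \int \psi(\sh) \, \mu_t^\theta(d\sh), \qquad \psi(\sh) := \tfrac12 \bigl(f(\sx,\sz,\eta) - \la\lambda(B), \ss\ra - \lambda(c)\bigr)^2,
\end{equation*}
so that $\mathbb{E}[L_T(\theta)] = \tfrac{1}{T}\sum_{t=1}^T \int \psi\, d\mu_t^\theta$ and the target $L(\theta) = \int \psi\, d\mu^\theta$.

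The next step is to verify that $\psi$ is bounded and Lipschitz in $\sh$ with a constant that is uniform in $\theta$. Boundedness of $\psi$ follows from Assumption \ref{as:data_generation} (boundedness of $f$ by $L_f$), the fact that $S_t \in [0,1]^N$ by the sigmoidal output, and the uniform bound $|\lambda| \leq C_\lambda$. For the Lipschitz property I would bound $|\psi(\sh) - \psi(\sh')|$ by
\begin{equation*}
\bigl(2 L_f + 2 N C_\lambda\bigr)\Bigl( |f(\sx,\sz,\eta) - f(\sx',\sz',\eta')| + |\la\lambda(B),\ss-\ss'\ra|\Bigr) \leq K_\psi \, \|\sh - \sh'\|_{\max},
\end{equation*}
using the $L_f$-Lipschitzness of $f$ and the trivial bound $|\la\lambda(B), \ss - \ss'\ra| \leq N C_\lambda \|\ss-\ss'\|_{\max}$, for some constant $K_\psi$ independent of $\theta$.

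Given these two properties, the Kantorovich--Rubinstein duality combined with $\Wass_1 \leq \Wass_2$ yields
\begin{equation*}
\left| \int \psi \, d\mu_t^\theta - \int \psi \, d\mu^\theta \right| \leq K_\psi \, \Wass_1(\mu_t^\theta, \mu^\theta) \leq K_\psi \, \Wass_2(\mu_t^\theta, \mu^\theta) \leq K_\psi C' q^t,
\end{equation*}
where the last inequality is the geometric rate from Theorem \ref{thm:convergence_in_Wass_2}. Since $q < 1$ by Assumption \ref{A:L_0_Bound}, the sequence $\int \psi\, d\mu_t^\theta$ converges to $L(\theta)$, and its Cesàro averages converge to the same limit; this is precisely the statement of the lemma.

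The only mildly subtle point is confirming that the Lipschitz constant $K_\psi$ controls the difference of integrals uniformly even though the initial state $h$ is allowed to lie outside the invariant rectangle $\mathcal{R}$ (as flagged in the remark preceding the statement). This is not a real obstacle, however, since the Wasserstein-2 bound of Theorem \ref{thm:convergence_in_Wass_2} absorbs the initial discrepancy into the constant $C'$, and boundedness plus Lipschitzness of $\psi$ on the whole state space $\X$ is enough for the Kantorovich--Rubinstein step; the support of $\mu_t^\theta$ enters only through the moment bounds already established in Lemmas \ref{L:UniformBoundsParameters} and \ref{L:UniformBoundsSecondDerivativesParameters}.
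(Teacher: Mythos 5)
Your proposal is correct and follows essentially the same route as the paper's proof: rewrite each summand as $\int \ell^\theta \, d\mu_t^\theta$, check that the integrand is bounded and Lipschitz uniformly in $\theta$, use the geometric Wasserstein bound of Theorem \ref{thm:convergence_in_Wass_2} to get $|\int \ell^\theta d\mu_t^\theta - \int \ell^\theta d\mu^\theta| \leq C q^t$, and conclude by Cesàro averaging. The only cosmetic difference is that you invoke Kantorovich--Rubinstein duality together with $\Wass_1 \leq \Wass_2$, whereas the paper integrates the squared increment of $\ell^\theta$ directly against the optimal $\Wass_2$ coupling and applies Cauchy--Schwarz; both steps deliver the same estimate.
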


\begin{proof}
We define 
\begin{equation}
\ell^\theta(h) = \frac{1}{2}(f(x,z,\eta) - \la\lambda(B), s\ra - \lambda(c))^2.
\end{equation}
Then 
\begin{align*}
\mathbb{E}\sqbracket{\frac{1}{2}(Y_t - f_t(\theta))^2} &= \E\sqbracket{\frac12 (f(X_t,Z_t,\eta_t) - (\la\lambda(B), S_t \ra + \lambda(c)))^2} = \int \ell^\theta(\sh) \mu^{\theta}_t(d\sh).
\end{align*}
It therefore suffices to show that $\ell^\theta(h)$ is Lipschitz in $h$. Note that
\begin{align*}
&\pheq |\ell^\theta(h) - \ell^\theta(h')|^2 \\
&\leq \frac{1}{2} \left|(f(x,z,\eta) - \la\lambda(B), s\ra - \lambda(c))^2 - (f(x',z',\eta') - \la\lambda(B), s'\ra - \lambda(c))^2 \right|^2 \\
&\leq 6(L^2_f + C^2_\lambda (N^2+1)) [|f(x,z,\eta) - f(x',z',\eta')|^2 + N^2 \|\lambda(B)\|^2_{\max} \|s-s'\|^2_{\max}] \\
&\leq 6(L^2_f + C^2_\lambda (N^2+1))^2 \| h - h' \|^2_{\max}. \numberthis
\end{align*}
By Theorem \ref{thm:convergence_in_Wass_2}, there exists a coupling $\gamma^\theta_t$ between $\mu^\theta_t$ and $\mu^\theta$ such that 
$$\int \|\sh - \sh' \|^2_{\max} \, \gamma^\theta_t(d\sh,d\sh') \leq Cq^{2t}.$$
Therefore, one has
\begin{align*}
\abs{\int \ell^\theta(\sh) \mu^{\theta}_t(d\sh) - \int \ell^\theta(\sh') \mu^{\theta}(d\sh')}^2 &\leq \int |\ell^\theta(\sh) - \ell^\theta(\sh')|^2 \, \gamma^\theta_t(d\sh, d\sh') \\
&\leq 6 (L^2_f + C^2_\lambda (N^2+1))^2 \int \|\sh - \sh'\|^2_{\max} \, \gamma^\theta_t(d\sh,d\sh') \\
&\leq 6 C (L^2_f + C^2_\lambda (N^2+1))^2 q^{2t} \overset{t\to\infty}\to 0. \numberthis
\end{align*}
Finally, we have
\begin{align*}
\abs{\mathbb{E}[L_T(\theta)] - \int \ell^\theta(\sh') \, \mu^{\theta}_t(d\sh')}
&\leq \abs{\frac{1}{T} \sum_{t=1}^T \int \ell^\theta(\sh) \, \mu^{\theta}_t(d\sh) - \int \ell^\theta(\sh') \, \mu^{\theta}(d\sh')} \\
&\leq \frac{1}{T} \sum_{t=1}^T \abs{\int \ell^{\theta}(\sh) \, \mu^{\theta}_t(d\sh) - \int \ell^\theta(\sh') \, \mu^{\theta}(d\sh')} \\
&\leq \frac{1}{T} \sum_{t=1}^T \sqrt{6C (L^2_f + C^2_\lambda (N^2+1))^2} q^{t} \\
&\leq \frac{1}{T} \sqrt{6C (L^2_f + C^2_\lambda (N^2+1))^2} \frac{q}{1-q} = O(1/T), \numberthis
\end{align*}
so $\mathbb{E}[L_T(\theta)] \overset{T\to\infty}\to L(\theta)$ uniformly in $\theta$.
\end{proof}

We can also use the compute the gradient and Hessian of $L(\theta)$. We start by recalling that for finite $T$,
\begin{equation}
\nabla_\theta L_T(\theta) = \nabla_\theta \sqbracket{\frac{1}{2T} \sum_{t=1}^T (Y_t - f_t(\theta))^2} = - \frac{1}{T} \sum_{t=1}^T (f_t(\theta) - Y_t) \nabla_\theta f_t(\theta),
\end{equation}
where $\nabla_\theta f_t(\theta)$ is the gradient of $f$, flattened as vector and indexed by the components, e.g. 
\begin{equation}
[\nabla_\theta f_t(\theta)]_{A^{ij}} = \frac{\de f_t}{\de A_{ij}} = \lambda(B)^\top \frac{\de S_t}{\de A_{ij}} = \lambda(B)^\top \tilde{S}^{A^{ij}}_t,
\end{equation}
so on and so forth. Following similar computations, we see that
\begin{equation}
\nabla_\theta f_t(\theta) = \begin{bmatrix} \sum_{k=1}^N \lambda(B^k) \mathsf{vec}(\tS^{A,k}_t) \\ \sum_{k=1}^N \lambda(B^k) \mathsf{vec}(\tS^{W,k}_t) \\ \lambda'(B) \odot S_t \\ \lambda'(c) \end{bmatrix} \begin{matrix} \gets \text{entries from $A$} \\ \gets \text{entries from $W$} \\ \gets \text{entries from $B$} \\ \end{matrix},
\end{equation}
Furthermore, the Hessian is given as
\begin{align*}
\Hess_{\theta} L_T(\theta) &= \frac{1}{T} \sum_{t=1}^T \nabla_{\theta}\left[- \big{(} Y_t - f_t(\theta) \big{)} \nabla_{\theta} f_t(\theta)^{\top}\right] \\
&= \frac{1}{T} \sum_{t=1}^T \left[\nabla_{\theta} f_t(\theta)  \nabla_{\theta} f_t(\theta)^{\top} - \big{(} Y_t - f_t(\theta) \big{)} \nabla^{2}_{\theta} f_t(\theta)^{\top}\right], \numberthis
\end{align*}
where
\begin{equation}
\Hess_\theta f_t(\theta) = \begin{bmatrix} \sum_{k=1}^N \lambda(B^k) \mathsf{flat}(\ttS^{AA,k}_t) & \sum_{k=1}^N \lambda(B^k) \mathsf{flat}(\ttS^{WA,k}_t) & \mathsf{flat}(\tS^A_t)^\top \mathsf{diag}(\lambda'(B)) & 0 \\
\sum_{k=1}^N \lambda(B^k) \mathsf{flat}(\ttS^{AW,k}_t) & \sum_{k=1}^N \lambda(B^k) \mathsf{flat}(\ttS^{WW,k}_t) & \mathsf{flat}(\tS^W_t)^\top \mathsf{diag}(\lambda'(B)) & 0 \\
\mathsf{diag}(\lambda'(B)) \mathsf{flat}(\tS^A_t) & \mathsf{diag}(\lambda'(B)) \mathsf{flat}(\tS^W_t) & \mathsf{diag}(\lambda''(B) \odot S_t) & 0 \\
0 & 0 & 0 & \lambda''(c).
\end{bmatrix}
\end{equation}
We can verify this by checking the entries -- for example, we have
$$\frac{\de^2 f_t}{\de B^k \de A^{ij}} = \frac{\de}{\de B_k} \sqbracket{\sum_{\ell=1}^N \lambda(B^\ell) \tS_t^{A^{ij},\ell}} = \lambda(B_k) \tS_t^{A^{ij},k}.$$
To compute $\nabla_\theta \mathbb{E}[L_T(\theta)]$ and $\Hess_\theta \, \mathbb{E}[L_T(\theta)]$, we need to exchange $\nabla_\theta$ and $\mathbb{E}$. This could be justified by noting that $\nabla_\theta L_T(\theta)$ and $\Hess_\theta \, L_T(\theta)$ are being bound (hence integrable). It follows from the following lemma:
\begin{lemma}\label{L:DerivativeBoundsRNN}
Assume that Assumptions \ref{as:data_generation} and \ref{A:L_0_Bound} hold. Then, there is a uniform constant $C<\infty$, independent of $t\in\mathbb{N}_{+}$, such that
\begin{align}
\| \nabla_{\theta} f_t(\theta) \|_{\max} &\leq C \\
\| \nabla_{\theta} f_t(\theta)  \nabla_{\theta} f_t(\theta)^{\top} \|_{\max} &\leq C\label{Eq:Gradient_gradient_f_matrixMax} \\
\|\Hess_\theta f_t(\theta)\|_{\max} & \leq C,\label{Eq:Hessian_fMax}
\end{align}
and consequently,
\begin{align*}
\| \nabla_{\theta} f_t(\theta)  \nabla_{\theta} f_t(\theta)^{\top} \|_{\sF} &\leq C
\\
\|\Hess_\theta f_t(\theta)\|_{\sF} & \leq C,
\end{align*}
where $\|\cdot\|_{\sF}$ is the Frobenious norm.
\end{lemma}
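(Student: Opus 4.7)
The proof is a bookkeeping argument that stitches together the uniform max-norm bounds on $S_t$, $\tS_t$, $\ttS_t$ already established in Lemmas~\ref{L:UniformBoundsParameters} and~\ref{L:UniformBoundsSecondDerivativesParameters} with the boundedness of $\lambda, \lambda', \lambda''$ coming from $\lambda \in C_b^\infty$ (Assumption~\ref{A:L_0_Bound}), plus the fact that $|S_t^i| \leq 1$ since $S_t$ is a sigmoidal output. Because $N$ and $d$ are fixed, all resulting bounds are automatically uniform in $t$ and $\theta$, and the ambient dimension $d_\theta = Nd + N^2 + N + 1$ is a fixed finite integer.

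For $\|\nabla_\theta f_t(\theta)\|_{\max}$, I would estimate each of the four blocks of the stacked gradient vector separately. Entries of the $A$-block have the form $\sum_{k=1}^N \lambda(B^k) \tS_t^{A^{ij},k}$ and are bounded by $N C_\lambda \|\tS_t^A\|_{\max} \leq N C_\lambda \cdot \frac{C_{\phi'}}{d(4-C_\phi)}$ by Lemma~\ref{L:UniformBoundsParameters}; the $W$-block is handled symmetrically using $\|\tS_t^W\|_{\max} \leq \frac{C_{\phi'}}{N(4-C_\phi)}$; entries of the $B$-block are $\lambda'(B^i) S_t^i$ and are bounded by $\|\lambda'\|_\infty$ using $|S_t^i|\leq 1$; and the $c$-entry is $\lambda'(c)$. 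The bound \eqref{Eq:Gradient_gradient_f_matrixMax} on $\|\nabla_\theta f_t(\theta) \nabla_\theta f_t(\theta)^\top\|_{\max}$ then follows immediately from the identity $\|uu^\top\|_{\max} = \|u\|_{\max}^2$ applied to $u = \nabla_\theta f_t(\theta)$.

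For $\|\Hess_\theta f_t(\theta)\|_{\max}$, I would walk through the explicit block expression for $\Hess_\theta f_t(\theta)$ given in the excerpt. The four upper-left blocks are of the form $\sum_k \lambda(B^k)\,\mathsf{flat}(\ttS^{XX',k}_t)$ for $X, X' \in \{A, W\}$ and are controlled by $N C_\lambda$ times the uniform bound on $\|\ttS^{XX'}_t\|_{\max}$ supplied by Lemma~\ref{L:UniformBoundsSecondDerivativesParameters}. The off-diagonal blocks coupling $A$ or $W$ with $B$ have entries $\lambda'(B^k)\tS_t^{X^{ij},k}$, bounded using $\|\lambda'\|_\infty$ together with the first-derivative bounds of Lemma~\ref{L:UniformBoundsParameters}. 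The $(B,B)$ block is diagonal with entries $\lambda''(B^i) S_t^i$, bounded by $\|\lambda''\|_\infty$, and the remaining $c$-blocks are either zero or the scalar $\lambda''(c)$. Taking the maximum over all blocks yields \eqref{Eq:Hessian_fMax}.

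The Frobenius-norm bounds follow at once from the elementary inequality $\|M\|_{\sF} \leq d_\theta \, \|M\|_{\max}$ for a $d_\theta \times d_\theta$ matrix: since $d_\theta$ is a fixed finite integer, the max-norm estimates above produce the stated Frobenius-norm bounds after enlarging the constant $C$ if necessary. There is no substantive obstacle in this argument; the only care needed is to keep track of the block structure and the flattening conventions for $\tS_t^A, \tS_t^W, \ttS_t^{AA}, \ttS_t^{AW}, \ttS_t^{WA}, \ttS_t^{WW}$, and to recognize that Lemmas~\ref{L:UniformBoundsParameters} and~\ref{L:UniformBoundsSecondDerivativesParameters} already supply every uniform bound that is required.
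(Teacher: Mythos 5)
Your proposal is correct and follows essentially the same route as the paper's proof: bound each entry of the gradient and Hessian blocks using the boundedness of $\lambda$ and its derivatives, of $S_t$, and of $\tS_t$ and $\ttS_t$ from Lemmas \ref{L:UniformBoundsParameters} and \ref{L:UniformBoundsSecondDerivativesParameters}, then pass to the Frobenius norm using the fixed finite dimension $d_\theta$. The paper states this more tersely, but the content is identical, so no further comparison is needed.
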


\begin{proof}
The boundedness of $\|\nabla_\theta f_t (\nabla_\theta f_t)^\top\|_{\max}$ and $\|\Hess_\theta f_t\|_{\max}$ implies the boundedness of $\|\nabla_\theta f_t (\nabla_\theta f_t)^\top\|_{\sF}$ and $\|\nabla^2_\theta f\|_{\sF}$ respectively. This implies that it suffice to prove that each entries of $\nabla_\theta f_t$ is bounded. This follows from the boundedness of 
\begin{itemize}
    \item $\lambda(\cdot)$ and its derivatives by Assumption \ref{A:L_0_Bound},
    \item $S_t$ since $\sigma(\cdot)$ is bounded,
    \item $\tS_t = (\tS_t^A, \tS_t^W)$ by Lemma \ref{L:UniformBoundsParameters}, and
    \item $\ttS_t = (\ttS_t^{AA}, \ttS_t^{WA}, \ttS_t^{WW})$ by Lemma \ref{L:UniformBoundsSecondDerivativesParameters}.
\end{itemize}
\end{proof}

\begin{lemma}\label{L:ParameterUpdateBound}
Assume that Assumptions \ref{as:data_generation} and \ref{A:L_0_Bound} hold. Then, we have that there is a uniform constant $C<\infty$, independent of $t\in\mathbb{R}_{+}$, such that 
\begin{eqnarray}
\| \theta_{t+1} - \theta_t \|_{\max} \leq C \alpha_t,
\label{UniformMaxBoundonThetaUpdate}
\end{eqnarray}
and consequently
\begin{eqnarray}
\| \theta_{t+1} - \theta_t \|_{2} \leq C \alpha_t,
\label{UniformBoundonThetaUpdate}
\end{eqnarray}
for a potentially different finite constant $C<\infty$.
\end{lemma}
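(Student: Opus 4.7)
The plan is to bound $\|G_t\|_{\max}$ uniformly in $t$ and then use equivalence of norms in the finite-dimensional space $\R^{d_\theta}$. Starting from $\theta_{t+1} - \theta_t = -\alpha_t G_t$, I get $\|\theta_{t+1} - \theta_t\|_{\max} = \alpha_t \|G_t\|_{\max}$, so the entire lemma reduces to showing that $\|G_t\|_{\max} \leq C$ for a constant $C$ independent of $t$.

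To bound $\|G_t\|_{\max}$, I inspect the four blocks in the expression \eqref{Eq:OnlineStochasticEstimates} componentwise. The scalar prefactor $(Y_t - \bar{f}_t(\theta_t))$ is bounded because $|Y_t| = |f(X_t, Z_t, \eta_t)| \leq L_f$ by Assumption \ref{as:data_generation}(3), while $|\bar{f}_t(\theta_t)| \leq N C_\lambda \|\bar S_t\|_{\max} + C_\lambda \leq N C_\lambda + C_\lambda$ since the sigmoid satisfies $|\sigma| < 1$ (Lemma \ref{L:BoundsSigmoidFcn}) and $|\lambda| \leq C_\lambda$. For the first two blocks, each entry has the form $\sum_k \lambda(B^k_t) \hat S_t^{A^{ij},k}$ or $\sum_k \lambda(B^k_t) \hat S_t^{W^{ij},k}$; these are bounded by $NC_\lambda \cdot \|\hat S_t^A\|_{\max}$ or $NC_\lambda \cdot \|\hat S_t^W\|_{\max}$, which in turn are bounded uniformly in $t$ by Lemma \ref{L:UniformBoundsParameters} (note the same recursive argument applied there to $\tilde S_t$ also applies to $\hat S_t$, as stated in its proof). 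For the third and fourth blocks, $|\lambda'(B_t^i) \bar S_t^i| \leq C_{\lambda'}$ and $|\lambda'(c_t)| \leq C_{\lambda'}$ by the boundedness of $\lambda'$ in Assumption \ref{A:L_0_Bound}. Combining these yields $\|G_t\|_{\max} \leq C$ for a universal constant $C$, giving \eqref{UniformMaxBoundonThetaUpdate}.

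For the $\ell^2$ statement \eqref{UniformBoundonThetaUpdate}, I use the elementary inequality $\|v\|_2 \leq \sqrt{d_\theta}\, \|v\|_{\max}$ valid for any $v \in \R^{d_\theta}$, where $d_\theta = Nd + N^2 + N + 1$ is fixed. Since $d_\theta$ is a constant depending only on the (fixed) architecture parameters $N$ and $d$, absorbing $\sqrt{d_\theta}$ into $C$ gives $\|\theta_{t+1} - \theta_t\|_2 \leq C\alpha_t$ with a possibly larger but still $t$-independent constant.

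The only mildly subtle point is ensuring that $\hat S_t^A$ and $\hat S_t^W$, which are driven by the \emph{evolving} parameters $\theta_t$ rather than a fixed $\theta$, still satisfy the same uniform bounds as $\tilde S_t^A$ and $\tilde S_t^W$. This is not actually an obstacle: the recursion in Lemma \ref{L:UniformBoundsParameters} only uses the pointwise bounds $|\sigma'| \leq 1/4$, $|\phi| \leq C_\phi$, $|\phi'| \leq C_{\phi'}$, and $\|X_t\|_{\max}, |\bar S_t^j| \leq 1$, none of which depends on the particular value of $\theta_t$ at each step. Hence the same contraction argument goes through with $\theta$ replaced by the time-varying $\theta_t$, and the initialization $\hat S_0 = 0$ then yields the uniform bound. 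No further ingredients are needed.
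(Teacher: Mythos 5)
Your proposal is correct and follows essentially the same route as the paper, which simply cites the a-priori bounds of Lemma \ref{L:UniformBoundsParameters} together with the explicit form of $G_t$ in \eqref{Eq:OnlineStochasticEstimates}; you have merely spelled out the componentwise bounding of $G_t$ and the norm-equivalence step that the paper leaves implicit. Your observation that the uniform bound on $\hat S_t$ survives the time-varying $\theta_t$ is exactly the point already covered in the proof of Lemma \ref{L:UniformBoundsParameters}, so no gap remains.
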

\begin{proof}
This follows directly by the a-priori bounds of Lemma \ref{L:UniformBoundsParameters} and the online stochastic estimates (\ref{Eq:OnlineStochasticEstimates}). 
\end{proof}

We can therefore exchange derivatives and the expectation operator $\E$ to compute $\nabla_\theta \E[L_T(\theta)] = \E[\nabla_\theta L_T(\theta)]$ and $\nabla^2_\theta \E[L_T(\theta)] = \E[\nabla^2_\theta L_T(\theta)]$. In particular, if we define the functions for $h = (x,z,s,\ts,\tts,\eta)$:
\begin{equation} 
g^\theta(h) = g(h;\theta) := -(f(x,z,\eta) - \la \lambda(B), s\ra - \lambda(c)) \times \begin{bmatrix} \sum_k \lambda(B^k) \mathsf{vec}(\ts^{A,k}) \\ \sum_k \lambda(B^k) \mathsf{vec}(\ts^{W,k}) \\ \lambda'(B) \odot s \\ \lambda'(c) \end{bmatrix}, \label{eq:g_theta}
\end{equation}
then 
\begin{equation}
\nabla_{\theta} \E[L_T(\theta)] 
= \E[\nabla_{\theta} L_T(\theta)] = \E\sqbracket{\frac{1}{T} \sum_{t=1}^T - (Y_t - \la \lambda(B), S_t \ra - \lambda(c)) \nabla_\theta f_t(\theta)} = \frac{1}{T} \sum_{t=1}^T \int g(\sh; \theta) \, \mu^\theta_t(d\sh). \label{Eq:PrelimitGradient}
\end{equation}

We further define the function
\begin{align*}
    \mathrm{he}^\theta(h) &= \begin{bmatrix} \sum_k \lambda(B^k) \mathsf{vec}(\ts^{A,k}) \\ \sum_k \lambda(B^k) \mathsf{vec}(\tts^{W,k}) \\ \lambda'(B) \odot s \\ \lambda'(c) \end{bmatrix} \begin{bmatrix} \sum_k \lambda(B^k) \mathsf{vec}(\ts^{A,k}) \\ \sum_k \lambda(B^k) \mathsf{vec}(\ts^{W,k}) \\ \lambda'(B) \odot s \\ \lambda'(c)\end{bmatrix}^\top \\
    &\phantom{=}- (f(x,z,\eta) - \la \lambda(B), s\ra - \lambda(c)) \\
    &\phantom{==}\times\begin{bmatrix} \sum_k \lambda(B^k) \mathsf{flat}(\tts^{AA,k}) & \sum_k \lambda(B^k) \mathsf{flat}(\ts^{WA,k}) & \mathsf{flat}(\ts^A) \mathsf{diag}(\lambda'(B)) & 0 \\
    \sum_k \lambda(B^k) \mathsf{flat}(\tts^{AW,k}) & \sum \lambda(B^k) \mathsf{flat}(\tts^{WW,k}) & \mathsf{flat}(\ts^W)^\top \lambda'(B)  & 0 \\
    \mathsf{diag}(\lambda'(B)) \, \mathsf{flat}(\ts^A) & \mathsf{diag}(\lambda'(B)) \odot \mathsf{flat}(\ts^W) & \mathsf{diag}(\lambda''(B)) \odot s & 0 \\
    0 & 0 & 0 & \lambda''(c)
    \end{bmatrix},
\end{align*}
then
\begin{align}
\Hess_\theta \E[L_T(\theta)] 
= \E[\nabla^2_{\theta} L_T(\theta)] = \int \mathrm{he}^\theta(\sh) \, \mu^{\theta}_t(d\sh).
\end{align}
Due to the convergence rate $\Wass_1(\mu_t^{\theta},\mu^{\theta}) \leq Cq^t$ with $q < 1$, we can prove that this quantity has a limit as $T\rightarrow\infty$. 
In particular, we have Theorem \ref{L:LimitingGradient}.
\begin{lemma}\label{L:LimitingGradient} Assume that Assumptions \ref{as:data_generation} and \ref{A:L_0_Bound} hold. Then, we have that
\begin{align}
\lim_{T \rightarrow \infty} \nabla_{\theta} \mathbb{E}[L_T(\theta)] &= \la g^\theta(\sh), \mu^{\theta} \ra. \\
\lim_{T \rightarrow \infty} \Hess_\theta \mathbb{E}[L_T(\theta)] &= \la \mathrm{he}^\theta(\sh), \mu^{\theta} \ra. 
\end{align}
The limits are taken in max-norms.
\end{lemma}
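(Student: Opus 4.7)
The plan is to combine the explicit pre-limit formula \eqref{Eq:PrelimitGradient} (and its Hessian analogue) with the geometric ergodicity of Theorem \ref{thm:convergence_in_Wass_2} and a Cesàro averaging argument. Since $\nabla_\theta \E[L_T(\theta)] = \frac{1}{T}\sum_{t=1}^T \int g^\theta(\sh)\,\mu_t^\theta(d\sh)$, and the analogous identity holds with $\mathrm{he}^\theta$ in place of $g^\theta$ for the Hessian, the problem reduces to (i) controlling each component of $\int g^\theta\,d\mu_t^\theta - \int g^\theta\,d\mu^\theta$ by a quantity that vanishes geometrically in $t$, and (ii) summing and normalizing by $T$.

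First I would show that each entry of the vector $g^\theta$ and each entry of the matrix $\mathrm{he}^\theta$, viewed as functions of $h=(x,z,s,\ts,\tts,\eta)$, is bounded and Lipschitz on the invariant hyper-rectangle $\cR$, with constants $L_g$ and $L_{\mathrm{he}}$ independent of $\theta$. The building blocks are (a) the globally Lipschitz function $f(x,z,\eta)$ from Assumption \ref{as:data_generation}; (b) the smooth bounded functions $\lambda,\lambda',\lambda''$ (with bounded derivatives) from Assumption \ref{A:L_0_Bound}; and (c) affine expressions in $(s,\ts,\tts)$ such as $\la\lambda(B),s\ra$, $\sum_k\lambda(B^k)\mathsf{vec}(\ts^{A,k})$ and $\sum_k\lambda(B^k)\mathsf{flat}(\tts^{\diamondsuit\heartsuit,k})$, which are bounded on $\cR$ by Lemmas \ref{L:UniformBoundsParameters} and \ref{L:UniformBoundsSecondDerivativesParameters}. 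Repeatedly applying $\|fg\|_{\mathrm{Lip}} \leq \|f\|_\infty\|g\|_{\mathrm{Lip}} + \|g\|_\infty\|f\|_{\mathrm{Lip}}$ to the product structures displayed in \eqref{eq:g_theta} and in the definition of $\mathrm{he}^\theta$ yields the desired uniform Lipschitz constants.

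Next I would invoke the optimal coupling $\gamma_t^\theta$ of $(\mu_t^\theta,\mu^\theta)$ (Remark \ref{rmk:optimal_coupling}) and Kantorovich--Rubinstein coordinate-wise to obtain, for every entry of $g^\theta$,
$$\left\| \int g^\theta(\sh)\,\mu_t^\theta(d\sh) - \int g^\theta(\sh')\,\mu^\theta(d\sh') \right\|_{\max} \leq L_g \int_{\cR\times\cR} \|\sh-\sh'\|_{\max}\,\gamma_t^\theta(d\sh,d\sh') \leq L_g \Wass_2(\mu_t^\theta,\mu^\theta) \leq L_g C' q^t,$$
where the penultimate step uses Jensen (so that $\Wass_1\leq\Wass_2$) and the final step is Theorem \ref{thm:convergence_in_Wass_2}. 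Averaging over $t=1,\ldots,T$ and summing the geometric series gives
$$\left\| \nabla_\theta \E[L_T(\theta)] - \la g^\theta(\sh),\mu^\theta\ra \right\|_{\max} \leq \frac{1}{T}\sum_{t=1}^T L_g C' q^t \leq \frac{L_g C' q}{T(1-q)} = O(1/T),$$
which is the first claim. The identical argument, with $L_g$ replaced by $L_{\mathrm{he}}$, delivers the Hessian statement.

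The main obstacle is Step 1, the book-keeping required to assemble the uniform Lipschitz bounds of $g^\theta$ and $\mathrm{he}^\theta$ from the product form of their defining expressions. The algebra is of the same flavor as in the proofs of Lemmas \ref{L:Lipschitz_F_1st_derivative}--\ref{L:Lipschitz_F_2st_derivative}, so although tedious it introduces no substantive new difficulty once the a priori max-norm bounds of Lemmas \ref{L:UniformBoundsParameters} and \ref{L:UniformBoundsSecondDerivativesParameters} are invoked to control $(s,\ts,\tts)$ on $\cR$. The Cesàro/geometric-rate step is then completely standard.
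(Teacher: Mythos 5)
Your proposal is correct and follows essentially the same route as the paper: establish that $g^\theta$ and $\mathrm{he}^\theta$ are $\theta$-uniformly Lipschitz on the invariant hyper-rectangle $\cR$ using the a priori bounds of Lemmas \ref{L:UniformBoundsParameters} and \ref{L:UniformBoundsSecondDerivativesParameters}, transfer the geometric Wasserstein decay of Theorem \ref{thm:convergence_in_Wass_2} to the integrals via an optimal coupling, and conclude by Ces\`aro averaging exactly as in the proof of Lemma \ref{lem:loss_limit}. The only cosmetic difference is that you pass through $\Wass_1\leq\Wass_2$ while the paper applies Cauchy--Schwarz to squared increments; both yield the same $O(1/T)$ rate.
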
 \label{L:lipschitzness_of_g}
This result indicates that the online forward SGD algorithm (\ref{SGDdecomposition}) has a fluctuation term $ \alpha_t (G_t - \nabla_{\theta} J(\theta_t))$ which should vanish as $t \rightarrow \infty$.
\begin{proof}
It suffices to prove that the functions $g^\theta(h)$ and $\mathrm{he}^\theta(h)$ are Lipschitz (assuming $h \in \cR$). We note that
\begin{align*}
\|g^\theta(h) - g^\theta(h')\|^2_{\max} &=  \Bigg\| (f(x,z,\eta) - \la\lambda(B), s \ra - \lambda(c)) \begin{bmatrix} \sum_k \lambda(B^k) \mathsf{vec}(\ts^{A,k}) \\ \sum_k \lambda(B^k) \mathsf{vec}(\ts^{W,k}) \\ \lambda'(B) \odot s \\ \lambda'(c) \end{bmatrix} \\ 
&\phantom{==}- (f(x',z',\eta') - \la\lambda(B), s'\ra - \lambda(c) \begin{bmatrix} \sum_k \lambda(B^k) \mathsf{vec}[(\ts')^{A,k}] \\ \sum_k \lambda(B^k) \mathsf{vec}[(\ts')^{W,k}] \\ \lambda'(B) \odot s' \\ \lambda'(c) \end{bmatrix} \Bigg\|_{\max}^2 \\
&\leq 2\sqbracket{L^2_f |(x,z,\eta) - (x',z',\eta')|^2 + N^2 C^2_\lambda \|s-s'\|^2_{\max}} \sqbracket{ \max\left(\frac{C_\lambda C_{\phi'}}{\min(N,d)(4-C_\phi)}, C^2_{\lambda'} \right)}^2 \\
&\pheq+ 3 \bracket{L_f^2 + (N^2+1) C^2_\lambda} \left\|\begin{bmatrix} \sum_k \lambda(B^k) (\ts^{A,k} - (\ts')^{A,k}) \\ \sum_k \lambda(B^k) (\ts^{W,k} - (\ts')^{W,k}) \\ \lambda'(B) \odot (s - s') \\ 0 \end{bmatrix} \right\|_{\max}^2 \\
&\leq C^{(1)}_{N,d} \|h - h'\|^2_{\max}, \numberthis
\end{align*}
where
\begin{align*}
C^{(1)}_{N,d} &= 2\left(L^2_f + C^2_\lambda N^2 \right) \sqbracket{ \max\left(\frac{C_\lambda C_{\phi'}}{\min(N,d)(4-C_\phi)}, C^2_{\lambda'} \right)}^2 \\
&\pheq+ 3\bracket{L^2_f + (N^2 + 1) C^2_\lambda} \max(N^2, d^2)\max(C_\lambda, C_{\lambda'}). \numberthis
\end{align*}
Similarly, one could prove that there exists $C^{(2)}_{N,d} > 0$ such that
\begin{equation}
\|\mathrm{he}^\theta(h) - \mathrm{he}^\theta(h')\|^2_{\max} \leq C^{(2)}_{N,d} \|h - h'\|^2_{\max}.
\end{equation}
Therefore, we could follow the arguments in the proof of Lemma \ref{lem:loss_limit} to establish the above limits.
\end{proof}

We note that the limits are uniform, so one could exchange $\nabla_\theta$ and $\lim_{T\to\infty}$ to establish that

\begin{theorem} \label{thm:exchange_diff_exp_final}
\begin{align}
\nabla_\theta L(\theta) = \nabla_\theta \sqbracket{\lim_{T \rightarrow \infty} \E[L_T(\theta)]} &= \lim_{T \rightarrow \infty} \nabla_{\theta} \mathbb{E}[L_T(\theta)] = \la g^\theta(h), \mu^{\theta} \ra. 
\label{LimitOfDerivative} \\
\Hess_\theta L(\theta) = \Hess_\theta \sqbracket{\lim_{T \rightarrow \infty} \E[L_T(\theta)]} &= \lim_{T \rightarrow \infty} \nabla^2_{\theta} \mathbb{E}[L_T(\theta)] = \la \mathrm{he}^\theta(h), \mu^{\theta} \ra. 
\end{align}
\end{theorem}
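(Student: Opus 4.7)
The plan is to deduce both identities from a single real-analysis principle: if $(f_T)$ is a sequence of $C^k$ functions on an open set $U\subseteq\Theta$ with $f_T\to f$ pointwise and $D^j f_T$ converging uniformly on compact subsets for $j=1,\ldots,k$, then $f\in C^k(U)$ and $D^j f = \lim_{T\to\infty} D^j f_T$. Applied first with $k=1$ and $f_T(\theta)=\E[L_T(\theta)]$, this yields $\nabla_\theta L(\theta) = \la g^\theta, \mu^\theta\ra$; applied a second time with $f_T=\nabla_\theta\E[L_T(\cdot)]$ (viewed componentwise), it yields the Hessian identity. The entire theorem therefore reduces to verifying uniform-in-$\theta$ convergence of the pre-limit gradients and Hessians.

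First I would collect the ingredients already in the excerpt. Pointwise convergence $\E[L_T(\theta)]\to L(\theta)$ is Lemma \ref{lem:loss_limit}. The interchanges $\nabla_\theta\E[L_T(\theta)]=\E[\nabla_\theta L_T(\theta)]$ and $\Hess_\theta\E[L_T(\theta)]=\E[\Hess_\theta L_T(\theta)]$ are justified by dominated convergence, using the uniform bounds $\|\nabla_\theta f_t\|_{\max}, \|\Hess_\theta f_t\|_{\max}\leq C$ from Lemma \ref{L:DerivativeBoundsRNN}. This yields the explicit representation
\begin{equation*}
\nabla_\theta \E[L_T(\theta)] = \frac{1}{T}\sum_{t=1}^T \int g(\sh;\theta)\, \mu^\theta_t(d\sh)
\end{equation*}
from \eqref{Eq:PrelimitGradient}, and the analogous formula for the Hessian in terms of $\mathrm{he}^\theta$.

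Next I would upgrade Lemma \ref{L:LimitingGradient} to \emph{uniform} convergence in $\theta$. Its proof follows the template of Lemma \ref{lem:loss_limit}: Lipschitzness of $g^\theta$ on the invariant region $\cR$ with constant $C^{(1)}_{N,d}$, together with the Wasserstein rate $\Wass_2(\mu^\theta_t,\mu^\theta)\leq C' q^t$ from Theorem \ref{thm:convergence_in_Wass_2}, gives
\begin{equation*}
\bigg\|\int g(\sh;\theta)\,\mu^\theta_t(d\sh) - \int g(\sh;\theta)\,\mu^\theta(d\sh)\bigg\|_{\max}^2 \leq C^{(1)}_{N,d}\,(C')^2\, q^{2t},
\end{equation*}
and averaging the resulting geometric series in $t$ yields an $O(1/T)$ rate. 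The identical argument with $C^{(2)}_{N,d}$ in place of $C^{(1)}_{N,d}$ handles $\mathrm{he}^\theta$, so that $\nabla_\theta \E[L_T(\theta)]$ and $\Hess_\theta \E[L_T(\theta)]$ both converge uniformly in $\theta$ at rate $O(1/T)$.

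The main technical point, and the only place where sloppiness could break the argument, is the $\theta$-uniformity of all these constants. One must verify that (i) the invariant hyper-rectangle $\cR$ depends only on $N,d,C_\phi,C_{\phi'},C_{\phi''}$, which is immediate from the a-priori bounds in Lemmas \ref{L:UniformBoundsParameters} and \ref{L:UniformBoundsSecondDerivativesParameters}; (ii) the contraction rate $q=\sqrt{L^2+L_\wp^2}$ is fixed by Assumption \ref{A:L_0_Bound} and does not depend on $\theta$; and (iii) the Lipschitz constants $C^{(1)}_{N,d},C^{(2)}_{N,d}$ for $g^\theta$ and $\mathrm{he}^\theta$ on $\cR$ depend only on $L_f,C_\phi,C_{\phi'},C_{\phi''},C_\lambda,C_{\lambda'},C_{\lambda''},N,d$, as is visible from the explicit computation at the end of the proof of Lemma \ref{L:LimitingGradient}. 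Once these three uniformities are in hand, the standard differentiability-under-uniform-limits theorem closes both identities simultaneously.
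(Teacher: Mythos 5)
Your proposal is correct and follows essentially the same route as the paper: the paper's proof likewise reduces both identities to the $\theta$-uniform $O(1/T)$ convergence of $\nabla_\theta \E[L_T(\theta)]$ and $\Hess_\theta \E[L_T(\theta)]$ (established via the Wasserstein contraction and the Lipschitzness of $g^\theta$, $\mathrm{he}^\theta$ on $\cR$ in Lemma \ref{L:LimitingGradient}) and then invokes the standard theorem on interchanging limits and derivatives (Theorem 7.17 of \cite{RudinBook}). Your explicit verification of the $\theta$-uniformity of the constants is a useful elaboration of what the paper leaves implicit, but it is not a different argument.
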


\begin{proof}
This follows from the fact that we have $\theta$-uniform convergence of $\nabla_\theta \mathbb{E}[L_T(\theta)]$ and $\Hess_\theta \mathbb{E}[L_T(\theta)]$ in $\theta$ (with rate $O(1/T)$), so we could exchange limit and derivatives, see e.g. Theorem 7.17 of \cite{RudinBook}.
\end{proof}

As a corollary, we have
\begin{corollary} \label{L:BoundDerivativesL}
There exists constants $C>0$ such that $\|\nabla_\theta L\|_{\max} \vee \|\Hess_\theta L\|_{\max} \leq C$.
\end{corollary}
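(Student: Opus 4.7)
The plan is to reduce the claim to the bounds already established in Lemma~\ref{L:DerivativeBoundsRNN} by using the integral representations derived in Theorem~\ref{thm:exchange_diff_exp_final}. Concretely, Theorem~\ref{thm:exchange_diff_exp_final} gives
\begin{equation*}
\nabla_\theta L(\theta) = \int g^\theta(\sh) \, \mu^\theta(d\sh), \qquad \Hess_\theta L(\theta) = \int \mathrm{he}^\theta(\sh) \, \mu^\theta(d\sh),
\end{equation*}
so it suffices to produce a finite constant $C$, independent of $\theta$ and of $h$, bounding $\|g^\theta(h)\|_{\max}$ and $\|\mathrm{he}^\theta(h)\|_{\max}$ on the support of $\mu^\theta$.

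The first step is to observe that $\mu^\theta$ is supported inside the hyper-rectangle $\mathcal{R}$ introduced in Section~\ref{S:FixedPointAnalysis}. Indeed, each $\mu^\theta_t$ is supported in $\mathcal{R}$ by the a-priori bounds of Lemmas~\ref{L:UniformBoundsParameters} and~\ref{L:UniformBoundsSecondDerivativesParameters}, the set $\mathcal{R}$ is closed, and $\mu^\theta$ is the $\Wass_2$-limit of $\mu^\theta_t$ by Theorem~\ref{thm:convergence_in_Wass_2}; weak limits preserve support in closed sets. Therefore any $\mu^\theta$-integral only sees values of $g^\theta$ and $\mathrm{he}^\theta$ at points $h=(x,z,s,\ts,\tts,\eta)\in \mathcal{R}$.

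The second step is to bound $g^\theta$ and $\mathrm{he}^\theta$ pointwise on $\mathcal{R}$, uniformly in $\theta$. Inspecting the definitions of $g^\theta(h)$ in~\eqref{eq:g_theta} and of $\mathrm{he}^\theta(h)$, each entry is a finite sum of products of quantities drawn from the following list: $f(x,z,\eta)$ (bounded by $L_f$ via Assumption~\ref{as:data_generation}), $\lambda(B),\lambda'(B),\lambda''(B),\lambda(c),\lambda'(c),\lambda''(c)$ (bounded by Assumption~\ref{A:L_0_Bound}), the sigmoidal hidden state $s$ (valued in $[0,1]$), and the forward derivatives $\ts^{A,k},\ts^{W,k},\tts^{AA,k},\tts^{WA,k},\tts^{WW,k}$ (bounded by Lemmas~\ref{L:UniformBoundsParameters} and~\ref{L:UniformBoundsSecondDerivativesParameters}). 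Combining these, there exists $C<\infty$ (depending only on $N,d$ and the structural constants in Assumption~\ref{A:L_0_Bound}) such that
\begin{equation*}
\sup_{\theta}\sup_{h\in\mathcal{R}} \|g^\theta(h)\|_{\max} \;\vee\; \sup_{\theta}\sup_{h\in\mathcal{R}} \|\mathrm{he}^\theta(h)\|_{\max} \;\leq\; C.
\end{equation*}
This is essentially the same computation as in the proof of Lemma~\ref{L:DerivativeBoundsRNN} — indeed, $g^\theta$ and $\mathrm{he}^\theta$ are exactly the integrands that appear when one writes $\nabla_\theta f_t$ and $\Hess_\theta f_t$ in terms of the state $H_t$, so no new bound beyond Lemma~\ref{L:DerivativeBoundsRNN} is needed.

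The final step is to integrate these pointwise bounds against the probability measure $\mu^\theta$: for each entry index $k$ (resp.\ $(k,\ell)$),
\begin{equation*}
\bigl|[\nabla_\theta L(\theta)]_k\bigr| = \Bigl|\int [g^\theta(\sh)]_k\, \mu^\theta(d\sh)\Bigr| \leq \int \|g^\theta(\sh)\|_{\max}\, \mu^\theta(d\sh) \leq C,
\end{equation*}
and analogously $|[\Hess_\theta L(\theta)]_{k,\ell}| \leq C$. Taking maxima over entries yields $\|\nabla_\theta L\|_{\max} \vee \|\Hess_\theta L\|_{\max} \leq C$. I do not expect any serious obstacle here: the only thing to be careful about is that the uniform bounds of Lemmas~\ref{L:UniformBoundsParameters}–\ref{L:UniformBoundsSecondDerivativesParameters} hold in $t$, hence pass to the limit measure $\mu^\theta$ via its support in $\mathcal{R}$; everything else is bookkeeping of the explicit formulae for $g^\theta$ and $\mathrm{he}^\theta$.
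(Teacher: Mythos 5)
Your proof is correct and rests on the same ingredients as the paper's: the paper's one-line argument notes that $\nabla_\theta \E[L_T(\theta)]$ and $\Hess_\theta\, \E[L_T(\theta)]$ are $T$-uniformly bounded (via Lemma \ref{L:DerivativeBoundsRNN}) and lets the bound pass to the limit, whereas you bound the integrands $g^\theta$ and $\mathrm{he}^\theta$ on the support of the limiting measure $\mu^\theta$ and then integrate. The one step you add beyond the paper's version --- that $\mu^\theta$ is supported in the closed set $\mathcal{R}$ because each $\mu^\theta_t$ is and weak limits preserve closed supports --- is correct, so both routes go through with the same constants.
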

\begin{proof}
This follows from the fact that $\|\nabla_\theta \E[L_T(\theta)]\|_{\max}$ and $\|\Hess_\theta \E[L_T(\theta)]\|_{\max}$ are both $T$-uniformly bounded.
\end{proof}

\section{Poisson Equation}\label{S:PoissonEquations}
We will now construct a Poisson equation which will be used to bound the fluctuation term in (\ref{SGDdecomposition}) and prove it vanishes -- at a suitably fast rate -- as $t \rightarrow \infty$. We will emphasise the dependence of the process $H_t$ on the parameters $\theta$, thus adopting the notations $P_\theta$ for the transition kernel of $H^{\theta})$, and $P_{\theta}^t$ as the $t$-step transition probabilities of $H^{\theta}_t$. This will be crucial to the convergence analysis in the next section.

With the above notations, we can now define the Poisson equation.
\begin{definition}[Poisson equation]
We aim to find function $u(h;\theta)$ such that
\begin{equation}
u(h; \theta) - \int_{\cR} u(\sh;\theta) \, P_\theta(h, d\sh) = Q(h; \theta) - \int_{\mathcal{R}} Q(\sh; \theta) \, \mu^{\theta}(d\sh),
\label{eq:PoissonEqn}
\end{equation} 
where $h \in \mathcal{R}$ and $Q(h; \theta): \mathcal{R} \rightarrow \mathbb{R}$ is a globally Lipschitz function (for each $\theta$).
\end{definition}

\begin{lemma}[Existence of Solution] \label{L:SolutionPoisson}
For any function $Q(h; \theta)$ which is globally Lipschitz in $h \in \mathcal{R}$ such that
\begin{equation}
    |Q(h; \theta) - Q(h'; \theta)| \leq L_{Q} \|h - h'\|_{\max} \label{eq:ff_Lipschitz}
\end{equation}
uniform in $\theta$, the following function $V(h; \theta)$ is a solution to the Poisson equation \eqref{eq:PoissonEqn} for each $h \in \mathcal{R}$:
\begin{equation}
V(h; \theta) = \sum_{t=0}^{\infty} \bracket{\int_{\mathcal{R}} Q(\sh; \theta)  \, P_{\theta}^t(h, d\sh) - \int_{\mathcal{R}} Q(\sh; \theta) \, \mu^{\theta}(d\sh)}. \label{eq:PoissonSlnV}
\end{equation}
In addition, there is a finite constant $C<\infty$ that is uniform with respect to both $\theta\in\Theta$ and $h\in \mathcal{X}$ such that  $|V(h; \theta)|\leq C$.
\end{lemma}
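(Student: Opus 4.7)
The plan is to verify directly that the series defining $V(h;\theta)$ converges absolutely and uniformly in $(h,\theta) \in \mathcal{R}\times\Theta$, and then show by a telescoping/semigroup argument that $V$ satisfies the Poisson equation \eqref{eq:PoissonEqn}. Setting $\bar Q(\theta):=\int Q(\sh;\theta)\,\mu^\theta(d\sh)$, I would first bound each summand $\int Q(\sh;\theta)\,P_\theta^t(h,d\sh)-\bar Q(\theta)$ using Kantorovich--Rubinstein duality: because $Q(\cdot;\theta)$ is $L_Q$-Lipschitz uniformly in $\theta$,
\begin{equation*}
\left|\int Q(\sh;\theta)\,P_\theta^t(h,d\sh)-\bar Q(\theta)\right| \leq L_Q\,\Wass_1(P_\theta^t(h,\cdot),\mu^\theta)\leq L_Q\,\Wass_2(P_\theta^t(h,\cdot),\mu^\theta).
\end{equation*}

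Next, I would iterate Proposition \ref{prop:Q_vee_is_a_contraction}. Since $(P_\theta^\vee)^t\delta_h=P_\theta^t(h,\cdot)$ and $\mu^\theta$ is fixed by $P_\theta^\vee$, the contraction gives $\Wass_2(P_\theta^t(h,\cdot),\mu^\theta)\leq q^t\,\Wass_2(\delta_h,\mu^\theta)$. Because $h\in\mathcal{R}$ and both $\delta_h$ and $\mu^\theta$ are supported in the bounded hyperrectangle $\mathcal{R}$ (via Lemmas \ref{L:UniformBoundsParameters} and \ref{L:UniformBoundsSecondDerivativesParameters}), there is a finite constant $C_{\mathcal{R}}>0$ with $\Wass_2(\delta_h,\mu^\theta)\leq C_{\mathcal{R}}$ uniformly in $(h,\theta)$. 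Summing the geometric series yields $|V(h;\theta)|\leq L_Q C_{\mathcal{R}}/(1-q)$, which is the uniform bound asserted in the lemma and also shows absolute, uniform convergence of the series.

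Having established absolute convergence, I would verify the Poisson equation by interchanging the summation with integration against the probability kernel $P_\theta(h,\cdot)$ --- justified by the bound $|P_\theta^t Q(\sh;\theta)-\bar Q(\theta)|\leq L_Q C_{\mathcal{R}} q^t$ together with dominated convergence. Using the Chapman--Kolmogorov identity $P_\theta^{t+1}(h,A)=\int P_\theta^t(\sh,A)\,P_\theta(h,d\sh)$ and the constancy of $\bar Q(\theta)$ in $\sh$, I obtain
\begin{equation*}
\int_{\mathcal{R}} V(\sh;\theta)\,P_\theta(h,d\sh) = \sum_{t=0}^{\infty}\left[\int Q(\tilde\sh;\theta)\,P_\theta^{t+1}(h,d\tilde\sh)-\bar Q(\theta)\right] = \sum_{t=1}^{\infty}\left[\int Q(\tilde\sh;\theta)\,P_\theta^t(h,d\tilde\sh)-\bar Q(\theta)\right].
\end{equation*}
Subtracting this from $V(h;\theta)$ leaves only the $t=0$ term, namely $Q(h;\theta)-\bar Q(\theta)$, which is exactly \eqref{eq:PoissonEqn}.

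The main technical point to watch out for is the extension of the contraction bound beyond Dirac initial conditions: I use $\Wass_2((P_\theta^\vee)^t\delta_h,(P_\theta^\vee)^t\mu^\theta)\leq q^t\Wass_2(\delta_h,\mu^\theta)$, which relies on Proposition \ref{prop:Q_vee_is_a_contraction} being stated for all pairs in $\mathcal{P}_2(\mathcal{X})$ (it is), together with a uniform bound on $\Wass_2(\delta_h,\mu^\theta)$ coming from the boundedness of $\mathcal{R}$. Everything else is routine once Kantorovich--Rubinstein duality and the geometric ergodicity of $P_\theta^\vee$ are in hand; in particular no $\theta$-regularity of the transition kernel is needed at this stage, so the solution $V(\cdot;\theta)$ is constructed pointwise in $\theta$.
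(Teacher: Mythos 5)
Your proposal is correct and follows essentially the same route as the paper: geometric decay of each summand from the Wasserstein contraction together with the uniform Lipschitz property of $Q$, summation of the resulting geometric series for the uniform bound, and a Chapman--Kolmogorov telescoping argument for the Poisson equation. The only cosmetic differences are that you invoke Kantorovich--Rubinstein duality plus $\Wass_1\leq\Wass_2$ where the paper uses an optimal coupling and Cauchy--Schwarz, and that you derive the rate directly from Proposition \ref{prop:Q_vee_is_a_contraction} applied to $\delta_h$ versus the invariant $\mu^\theta$ rather than citing Theorem \ref{thm:convergence_in_Wass_2}; both give the same $(h,\theta)$-uniform constant since everything is supported in the bounded set $\mathcal{R}$.
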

\begin{proof}
The proof follows that of Lemma 7.11 in \cite{KernelRNN}. We present the argument here emphasizing the differences. First, we prove that $V(h; \theta) $ is uniformly bounded. By Theorem \ref{thm:convergence_in_Wass_2} and \cite[Theorem 4.1]{Villanioldandnew}, there exists a constant $C>0, q\in(0,1)$ (independent of $t$ and couplings $\gamma^\theta_t$ between $P_\theta^t(h, \cdot)$ and $\mu^\theta$ such that
\begin{equation}
    \left[\int \|\sh - \sh'\|^2_{\max} \, \gamma^\theta_t(d\sh, d\sh') \right]^{1/2} \leq Cq^t.
\end{equation}
Consequently, by Lipschitzness of $Q$,
\begin{align*}
\abs{\int_{\cR} Q(\sh; \theta) \, P_{\theta}^t(h, d\sh) - \int_{\mathcal{R}} Q(\sh'; \theta) \mu^{\theta}(d\sh')}
&= \abs{\int_{\cR \times \cR} \left(Q(\sh; \theta) - Q(\sh'; \theta)\right) \gamma^{\theta}_t (d\sh, d\sh')} \\
&\leq \left[\int_{\cR \times \cR} \abs{Q(\sh; \theta) - Q(\sh'; \theta)}^2 \gamma^{\theta}_t (d\sh, d\sh') \right]^{1/2} \\
&\leq \left[\int_{\cR \times \cR} L_Q^2 \norm{\sh - \sh'}^2_{\max} \gamma^{\theta}_t (d\sh, d\sh') \right]^{1/2} \\
&\leq L_Q Cq^t. \numberthis
\end{align*}
Thus we have the following $\theta$-uniform bound,
\begin{equation*}
|V(h; \theta)| 
\leq \sum_{t=0}^{\infty} \abs{\int_{\cR} Q(\sh; \theta) \, P_\theta^t(h,\sh) - \int_{\cR} Q(\sh'; \theta) \, \mu^{\theta}(d\sh')} 
\leq \sum_{t=0}^{\infty} CL_Q q^t \leq \frac{CL_Q}{1-q} < \infty. \numberthis
\end{equation*}
We will now show that $V(h; \theta) $ is a solution to the Poisson equation \eqref{eq:PoissonEqn}. Starting with
\begin{equation*}
\int_{\cR} V(\sh; \theta) \, P_{\theta}^t(h, d\sh) = \int_{\cR} 
\sqbracket{\sum_{t=0}^{\infty} \bracket{ \int_{\cR} Q(\sh'; \theta) \, P_{\theta}^t(\sh, d\sh') - \int_{\cR} Q(\sh'; \theta) \, \mu^{\theta}(d\sh')}} P_{\theta}(h,d\sh'),
\end{equation*}
we can interchange the infinite sum and the integral due to Tonelli's theorem, which yields:
\begin{align*}
\int_{\cR} V(\sh; \theta) \, P_{\theta}^t(h,d\sh) &= \sum_{t=0}^{\infty} \bracket{\int_{\cR} \int_{\cR} Q(\sh'; \theta) \, P_{\theta}^t(\sh, d\sh') \, \sP_{\theta}(h,d\sh) - \int_{\cR} \int_{\cR} Q(\sh'; \theta) \, \mu^{\theta}(d\sh') \, \sP_{\theta}(h,d\sh)}  \\
&= \sum_{t=0}^{\infty} \bracket{\int_{\cR} Q(\sh; \theta) \sP_{\theta}^{t+1}(h,d\sh)  - \int_{\cR} Q(\sh; \theta) \, \mu^{\theta}(d\sh)}.
\end{align*}
Then,
\begin{align*}
V(h; \theta) - \int_{\cR} V(\sh; \theta) \, P_{\theta}(h,d\sh) 
&= \sum_{t=0}^{\infty} \bracket{\int_{\cR} Q(\sh; \theta) \, P_{\theta}^t(h,d\sh) - \int_{\cR} Q(\sh; \theta) \, \mu^{\theta}(d\sh)} \\
&\phantom{=}- \sum_{t=0}^{\infty} \bracket{ \int_{\cR} Q(\sh; \theta) \, P_{\theta}^{t+1}(h, d\sh) - \int_{\cR} Q(\sh; \theta) \, \mu^{\theta}(d\sh)} \\
&= \int_{\cR} Q(\sh; \theta) \, P_{\theta}^0(h, d\sh) - \int_{\cR} F(\sh; \theta) \, \mu^{\theta}(d\sh) \\
&= Q(h; \theta) - \int_{\cR} Q(\sh; \theta) \, \mu^{\theta}(d\sh), \numberthis
\end{align*}
which verifies that $V(h; \theta)$ is a solution to the Poisson equation \eqref{eq:PoissonEqn}. 
\end{proof}

\begin{lemma}[Lipschitzness of solution]\label{PoissonSolutionGlobalLipschitz}
Let $Q(h; \theta)$ be a $\theta$-uniformly $L_Q$-Lipschitz function in $h$ as specified by equation \eqref{eq:ff_Lipschitz}. Then, the solution $V(h; \theta)$ as defined in \eqref{eq:PoissonSlnV} for the Poisson equation \eqref{eq:PoissonEqn}
is $\theta$-uniformly Lipschitz globally in $h \in \cR$.
\end{lemma}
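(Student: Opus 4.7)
The plan is to use the explicit series representation \eqref{eq:PoissonSlnV} and bound the $h$-difference term by term via the geometric contraction of $P_\theta$ on Wasserstein space. First I would observe that the stationary-measure pieces $\int Q(\sh;\theta)\,\mu^\theta(d\sh)$ are independent of $h$, so
\begin{equation*}
V(h;\theta) - V(h';\theta) = \sum_{t=0}^{\infty} \left[ \int_{\cR} Q(\sh;\theta)\, P_\theta^t(h,d\sh) - \int_{\cR} Q(\sh;\theta)\, P_\theta^t(h',d\sh) \right],
\end{equation*}
so Lipschitzness reduces to controlling each summand uniformly in $\theta$ with a geometrically decaying rate.

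Next, for each fixed $t$, I would use the Lipschitzness hypothesis \eqref{eq:ff_Lipschitz} together with the coupling formulation of Wasserstein distance. Given any coupling $\gamma^{\theta,t}_{h,h'}$ of $P_\theta^t(h,\cdot)$ and $P_\theta^t(h',\cdot)$,
\begin{equation*}
\left| \int Q(\sh;\theta)\, P_\theta^t(h,d\sh) - \int Q(\sh;\theta)\, P_\theta^t(h',d\sh) \right| \leq L_Q \int \|\sh - \sh'\|_{\max} \, \gamma^{\theta,t}_{h,h'}(d\sh,d\sh').
\end{equation*}
Minimising over couplings, the right-hand side is $L_Q \,\Wass_1(P_\theta^t(h,\cdot), P_\theta^t(h',\cdot))$, which by Jensen's inequality is dominated by $L_Q \,\Wass_2(P_\theta^t(h,\cdot), P_\theta^t(h',\cdot))$.

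Then I would invoke Proposition \ref{prop:Q_vee_is_a_contraction} applied to the Dirac measures $\delta_h,\delta_{h'}$, and iterate: for every $t\geq 0$,
\begin{equation*}
\Wass_2(P_\theta^t(h,\cdot), P_\theta^t(h',\cdot)) \leq q^t \,\Wass_2(\delta_h, \delta_{h'}) = q^t \|h - h'\|_{\max},
\end{equation*}
where $q = \sqrt{L^2 + L_\wp^2} < 1$ is independent of $\theta$ by Assumption \ref{A:L_0_Bound}. Summing the geometric series then yields
\begin{equation*}
|V(h;\theta) - V(h';\theta)| \leq L_Q \sum_{t=0}^{\infty} q^t \|h - h'\|_{\max} = \frac{L_Q}{1-q} \|h - h'\|_{\max},
\end{equation*}
which is the desired $\theta$-uniform Lipschitz bound.

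I do not expect any serious obstacle here: all three ingredients (cancellation of stationary terms, Kantorovich-type duality via couplings, and iterated contraction) are already available from earlier parts of the paper. The only minor point to be careful about is that the Lipschitz constant of $Q$ is measured in the max-norm, which must match the norm used in the definition of $\Wass_2$ in Remark \ref{rmk:optimal_coupling}; this is consistent throughout the excerpt, so the argument goes through verbatim. The summability of the series (which already gave boundedness of $V$ in Lemma \ref{L:SolutionPoisson}) is precisely what now delivers a uniform Lipschitz constant.
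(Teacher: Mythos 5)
Your proposal is correct and follows essentially the same route as the paper's proof: cancel the $h$-independent stationary terms, bound each summand via an (optimal) coupling of $P_\theta^t(h,\cdot)$ and $P_\theta^t(h',\cdot)$ together with the Lipschitzness of $Q$, apply the iterated contraction from Proposition \ref{prop:Q_vee_is_a_contraction} to get the geometric factor $q^t\|h-h'\|_{\max}$, and sum the series. The only cosmetic difference is that you pass through $\Wass_1 \le \Wass_2$ by Jensen while the paper applies Cauchy--Schwarz directly to the coupling integral; both yield the same $\theta$-uniform constant of order $L_Q/(1-q)$.
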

\begin{proof}
Recall that 
\begin{equation*}
V(h; \theta) = \sum_{t=0}^{\infty} \bracket{\int_{\cR} Q(\sh; \theta) \, P_{\theta}^t(h, d\sh) - \int_{\cR} Q(\sh; \theta) \, \mu^{\theta}(d\sh)}.\nonumber
\end{equation*}
Therefore
\begin{align*}
\abs{V(h; \theta) - V(h'; \theta)} 
&= \abs{\sum_{t=0}^{\infty} \sqbracket{\int_{\cR} Q(\sh; \theta) \, P_{\theta}^t(h,d\sh) - \int_{\cR} Q(\sh'; \theta) \, P_{\theta}^t(h', d\sh')}} \\
&\leq \sum_{t=0}^{\infty} \abs{\int_{\cR} Q(\sh; \theta) \, P_{\theta}^t(h,d\sh) - \int_{\cR} Q(\sh'; \theta) \, P_{\theta}^t(h', d\sh')}.
\end{align*}
By iteratively applying Proposition \ref{prop:Q_vee_is_a_contraction}, 
\begin{equation}
\Wass_2(P_\theta^t(h,\cdot), P_\theta^t(h',\cdot)) \leq Cq^t \abs{h - h'}_{\max}
\end{equation}
for the same $C > 0$ and $q \in (0,1)$ as specified in that Proposition. Thus by \cite[Theorem 4.1]{Villanioldandnew}, there are coupling $\gamma^\theta_{t,h,h'}$ between $P_\theta^t(h,\cdot)$ and $P_\theta^t(h',\cdot))$ such that
\begin{equation*}
\sqbracket{\int_{\cR \times \cR} \|\sh - \sh'\|^2_{\max} \, \gamma^\theta_{t,h,h'}(d\sh, d\sh)}^{1/2} \leq Cq^t \norm{h-h'}_{\max},
\end{equation*}
Thus, as in the proof of Lemma \ref{L:SolutionPoisson},
\begin{align*}
\abs{\int_{\cR} Q(\sh; \theta) \, P_{\theta}^t(h,d\sh) - \int_{\cR} Q(\sh'; \theta) \, P_{\theta}^t(h', d\sh')}
&\leq \abs{\int_{\cR\times \cR} \bracket{Q(\sh; \theta)  - Q(\sh'; \theta)} \, \gamma^{\theta}_{t,h,h'}(d\sh', d\sh')} \\
&\leq \sqbracket{\int_{\cR\times \cR} \bracket{Q(\sh; \theta)  - Q(\sh'; \theta)}^2 \, \gamma^{\theta}_{t,h,h'}(d\sh', d\sh')}^{1/2} \\
&\leq L_Q \sqbracket{\int_{\cR \times \cR} \|\sh - \sh'\|^2_{\max} \, \gamma^\theta_{t,h,h'}(d\sh, d\sh)}^{1/2} \\
&\leq L_Q Cq^t \norm{h-h'}_{\max}. \numberthis
\end{align*}
Therefore, we have
\begin{equation}
\Wass_2(P_\theta^t(h,\cdot), P_\theta^t(h',\cdot)) \leq \frac{L_Q C}{1-q} \norm{h - h'}_{\max}
\end{equation}
\end{proof}

For our analysis, we would like to set
\begin{equation}
Q(h;\theta) = \nabla_\theta L(\theta)^\top g^\theta(h).
\end{equation}

We shall first note that
\begin{lemma} \label{L:Qbound}
The function $Q(h;\theta)$ is uniformly bounded in $h$ and $\theta$ for $h \in \cR$. 
\end{lemma}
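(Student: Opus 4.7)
The plan is to exploit the explicit product form of $Q(h;\theta) = \nabla_\theta L(\theta)^\top g^\theta(h)$ and reduce the claim to uniform boundedness of each factor. By Cauchy--Schwarz (or simply by the bound $|v^\top w| \leq d_\theta \|v\|_{\max}\|w\|_{\max}$, with $d_\theta$ finite), it suffices to show that $\|\nabla_\theta L(\theta)\|_{\max}$ is uniformly bounded in $\theta$ and that $\|g^\theta(h)\|_{\max}$ is uniformly bounded in $\theta$ and in $h \in \mathcal{R}$.

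The first bound is immediate: Corollary \ref{L:BoundDerivativesL} provides a constant $C$, independent of $\theta$, with $\|\nabla_\theta L(\theta)\|_{\max} \leq C$. For the second bound I would return to the explicit formula \eqref{eq:g_theta} for $g^\theta(h)$ and bound each block of the vector separately. Writing $h = (x,z,s,\ts,\tts,\eta) \in \mathcal{R}$, the scalar prefactor satisfies
\begin{equation*}
|f(x,z,\eta) - \langle \lambda(B), s\rangle - \lambda(c)| \leq L_f + N C_\lambda \cdot 1 + C_\lambda,
\end{equation*}
since $|f|\leq L_f$ by Assumption \ref{as:data_generation}(3), the sigmoid output $s$ lies in $[0,1]^N$, and $|\lambda|\leq C_\lambda$ by Assumption \ref{A:L_0_Bound}. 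The entries of the vector factor are bounded componentwise: the $A$-block entries are of the form $\sum_k \lambda(B^k)\ts^{A^{ij},k}$ with $|\lambda(B^k)|\leq C_\lambda$ and $|\ts^{A^{ij},k}|\leq C_{\phi'}/[d(4-C_\phi)]$ by Lemma \ref{L:UniformBoundsParameters}, and analogously for the $W$-block. The $B$-block entries $\lambda'(B^k) s^k$ are bounded by $C_{\lambda'}$, and the last entry $\lambda'(c)$ is bounded by $C_{\lambda'}$. Altogether, $\|g^\theta(h)\|_{\max}$ admits a bound depending only on $N$, $d$, $L_f$, $C_\lambda$, $C_{\lambda'}$, $C_\phi$ and $C_{\phi'}$, but independent of $\theta$ and of the particular $h \in \mathcal{R}$.

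Combining the two bounds yields a finite constant $C_Q$, independent of $\theta$ and of $h \in \mathcal{R}$, such that $|Q(h;\theta)| \leq C_Q$, which is the claim. There is no real obstacle here: the argument is a direct consequence of the a-priori bounds already established (Lemmas \ref{L:UniformBoundsParameters} and \ref{L:UniformBoundsSecondDerivativesParameters} supplying boundedness of $h$ on $\mathcal{R}$, Corollary \ref{L:BoundDerivativesL} supplying boundedness of $\nabla_\theta L$), together with the assumed boundedness of $f$, $\lambda$, $\lambda'$ and $\phi$, $\phi'$. The only thing to be mildly careful about is to make sure the dependence on $N$ and $d$ stays finite, which it does since both are fixed throughout the paper.
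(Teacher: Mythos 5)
Your proof is correct and follows essentially the same route as the paper: both arguments bound $Q(h;\theta)=\nabla_\theta L(\theta)^\top g^\theta(h)$ as a product of the uniformly bounded factors $\nabla_\theta L(\theta)$ and $g^\theta(h)$ for $h\in\cR$. Your version merely spells out the componentwise bound on $g^\theta(h)$ in more detail and cites Corollary \ref{L:BoundDerivativesL} for the gradient bound, which is equivalent to the paper's appeal to $\nabla_\theta L(\theta)=\la g^\theta(\sh),\mu^\theta\ra$ together with the boundedness of $g^\theta$.
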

\begin{proof}
Since $\nabla_{\theta} L(\theta) = \la  g(h; \theta), \mu_{\theta} \ra$ and $g(h; \theta)$ is uniformly bounded in $\theta$ for $h \in \cR$, $\nabla_{\theta} L(\theta)$ is uniformly bounded in $\theta$. Consequently, for $h \in \cR$,
$\nabla_{\theta} L(\theta)^{\top}  g(h; \theta)$ is uniformly bounded in $(h, \theta) \in \cR \times \Theta$. 
\end{proof}

By checking $Q$ is globally Lipschitz in $\cR$, we can show that there exists a $\theta$-uniformly Lipschitz solution (global in $h$) for the Poisson equation \eqref{eq:PoissonEqn}.
\begin{lemma} \label{L:QLipschitz}
There exists $L_Q > 0$ such that the function $Q(h;\theta)$ is $\theta$-uniformly $L_Q$ globally Lipschitz in $h$ in both the $\norm{\cdot}_{\max}$ and $\norm{\cdot}_{2}$ norms.
\end{lemma}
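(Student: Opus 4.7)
The plan is to exploit the product structure $Q(h;\theta) = \nabla_\theta L(\theta)^{\top} g^{\theta}(h)$ and reduce Lipschitzness of $Q$ to Lipschitzness of $g^\theta$ plus boundedness of $\nabla_\theta L(\theta)$, both of which have already been established. Since only $g^\theta(h)$ depends on $h$, we may write
\begin{equation*}
Q(h;\theta) - Q(h';\theta) = \nabla_\theta L(\theta)^{\top} \bigl( g^\theta(h) - g^\theta(h') \bigr),
\end{equation*}
so by Cauchy--Schwarz,
\begin{equation*}
|Q(h;\theta) - Q(h';\theta)| \leq \| \nabla_\theta L(\theta) \|_{2} \, \| g^\theta(h) - g^\theta(h') \|_{2}.
\end{equation*}

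First I would invoke Corollary \ref{L:BoundDerivativesL}, which gives a $\theta$-uniform bound on $\|\nabla_\theta L(\theta)\|_{\max}$; since $\theta \in \R^{d_\theta}$ with $d_\theta$ a fixed constant depending only on $N,d$, this upgrades to a $\theta$-uniform bound on $\|\nabla_\theta L(\theta)\|_{2}$ by the finite-dimensional norm equivalence $\|\cdot\|_{2} \leq \sqrt{d_\theta}\|\cdot\|_{\max}$. Call this uniform bound $C_L$.

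Next, I would quote the Lipschitz estimate for $g^\theta$ that was derived inside the proof of Lemma \ref{L:LimitingGradient}: there exists a constant $C^{(1)}_{N,d} > 0$, independent of $\theta$, such that for $h, h' \in \cR$,
\begin{equation*}
\| g^\theta(h) - g^\theta(h') \|_{\max}^{2} \leq C^{(1)}_{N,d} \, \| h - h' \|_{\max}^{2}.
\end{equation*}
Since $g^\theta(h) \in \R^{d_\theta}$, we again pass between norms using $\|\cdot\|_{2} \leq \sqrt{d_\theta}\|\cdot\|_{\max}$ and $\|\cdot\|_{\max} \leq \|\cdot\|_{2}$. Combining, we obtain
\begin{equation*}
|Q(h;\theta) - Q(h';\theta)| \leq C_L \sqrt{d_\theta \, C^{(1)}_{N,d}} \, \| h - h' \|_{\max} \leq C_L \sqrt{d_\theta \, C^{(1)}_{N,d}} \, \| h - h' \|_{2},
\end{equation*}
which yields the claimed $\theta$-uniform Lipschitz constant $L_Q$ in both the $\|\cdot\|_{\max}$ and $\|\cdot\|_{2}$ norms. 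The $\|\cdot\|_{\max}$ bound is immediate, and the $\|\cdot\|_{2}$ bound follows since $\|h - h'\|_{\max} \leq \|h - h'\|_{2}$.

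There is no substantial obstacle: the argument is a routine Cauchy--Schwarz combined with two inputs already proved. The only mild subtlety is to ensure that the constants from Corollary \ref{L:BoundDerivativesL} and from the proof of Lemma \ref{L:LimitingGradient} are indeed $\theta$-uniform, which they are by inspection since both ultimately depend only on $L_f, C_\lambda, C_\phi, C_{\phi'}, C_{\phi''}, N, d$ and on the a-priori bounds on $S_t, \tS_t, \ttS_t$ from Lemmas \ref{L:UniformBoundsParameters} and \ref{L:UniformBoundsSecondDerivativesParameters}.
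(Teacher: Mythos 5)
Your proof is correct and follows essentially the same route as the paper: both decompose $Q(h;\theta)-Q(h';\theta)=\nabla_\theta L(\theta)^\top\bigl(g^\theta(h)-g^\theta(h')\bigr)$ and combine the $\theta$-uniform bound on $\nabla_\theta L$ from Corollary \ref{L:BoundDerivativesL} with the Lipschitz estimate for $g^\theta$ from the proof of Lemma \ref{L:LimitingGradient}. The only difference is cosmetic: the paper applies a H\"older-type bound directly in the $\|\cdot\|_{\max}$ norm with the factor $d_\theta$, whereas you use Cauchy--Schwarz in $\|\cdot\|_2$ and pass between norms by finite-dimensional equivalence.
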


\begin{proof} 
We present only the proof of the global Lipschitz property in the $\|\cdot\|_{\max}$ norm as the proof in the $\|\cdot\|_2$ follows the same way. We note that
\begin{align*}
|Q(h;\theta) - Q(h';\theta)| 
&= |\nabla_\theta L(\theta)^\top (g^\theta(h) - g^\theta(h'))| \\
&\leq d_\theta \|\nabla_\theta L(\theta)\|_{\max} \|g^\theta(h) - g^\theta(h') \|_{\max} \\
&\leq L_Q \|h - h' \|_{\max},
\end{align*}
thanks to the fact that $\|\nabla_\theta L(\theta) \|$ is bounded as proved in Corollary \ref{L:BoundDerivativesL} and $g^\theta$ is Lipschitz as established through the calculations in Lemma \ref{L:lipschitzness_of_g}. 
\end{proof}

As a result, the following function
\begin{equation}
V(h; \theta) = \sum_{t=0}^{\infty} \bracket{ \int_{\cR} Q(\sh; \theta) \, P_{\theta}^t(h,d\sh) - \int_{\cR} F(\sh; \theta) \, \mu^{\theta}(d\sh)}
\end{equation}
is a well-defined, $\theta$-uniformly Lipschitz-in-$h$ solution to the Poisson equation. In fact, we will show that the function $V$ is differentiable in $\theta$ with uniformly-bounded derivatives. \\

To establish this, we first compute the derivatives
\begin{equation*}
\frac{\de}{\de\diamondsuit} \mathbb{E}[Q(H_t;\theta)] = \frac{\de}{\de\diamondsuit} \int_{\cR} Q(h;\theta) \, \sP^\theta_t(h,d\sh).
\end{equation*}
We shall start by computing the gradients $\nabla_\theta Q(h;\theta)$: 
\begin{equation}
\nabla_\theta Q(h;\theta) = (\mathsf{Hess} L) g^\theta(h) + (D_\theta g^\theta(h))^\top \nabla_\theta L,
\end{equation}
where $D_\theta g^\theta(h)$ is the (flatten) Jacobian matrix of $g^\theta(h)$ with respect to $\theta$. Computing the entries
\begin{align*}
\frac{\de g^\theta}{\de B_k} &= -\lambda'(B^k) s^k \begin{bmatrix} \sum_\ell \lambda(B^\ell) \mathsf{vec}(\ts^{A,\ell}) \\ \sum_\ell \lambda(B^\ell) \mathsf{vec}(\ts^{W,\ell}) \\ \lambda'(B) \odot s \\ \lambda'(c) \end{bmatrix} + (f(x,z,\eta) - \la \lambda(B), s\ra - \lambda(c)) \begin{bmatrix} \lambda(B^k) \mathsf{vec}(\ts^{A,k}) \\ \lambda(B^k) \mathsf{vec}(\ts^{W,k}) \\ 0 \\ \lambda''(B^k) s^k \\ 0 \\ 0 \end{bmatrix} \\
\frac{\de g^\theta}{\de c} &= -\lambda'(c) \begin{bmatrix} \sum_\ell \lambda(B^\ell) \mathsf{vec}(\ts^{A,\ell}) \\ \sum_\ell \lambda(B^\ell) \mathsf{vec}(\ts^{W,\ell}) \\ \lambda'(B) \odot s \\ \lambda'(c) \end{bmatrix} + (f(x,z,\eta) - \la \lambda(B), s\ra - \lambda(c)) \begin{bmatrix} 0 \\0 \\ 0 \\ \lambda''(c) \end{bmatrix}.
\end{align*}

We have the following computation
\begin{align*}
D_\theta g^\theta(h) 
&= (f(x,z,\eta) - \la\lambda(B), x\ra - \lambda(c)) \begin{bmatrix} 0 & 0 & \mathsf{diag}(\lambda'(B)) \mathsf{flat}(\ts^A) & 0 \\ 0 & 0 & \mathsf{diag}(\lambda'(B)) \mathsf{flat}(\ts^W) & 0 \\ 0 & 0 & \mathsf{diag}(\lambda''(B) \odot s) & 0 \\ 0 & 0 & 0 & \lambda''(c) \end{bmatrix} \\
&\phantom{=}- \begin{bmatrix} 0 &  &  &  \\ & 0 & & \\ & & \mathsf{diag}(\lambda'(B) \odot s) \\ & & & \lambda'(c) \end{bmatrix} \begin{bmatrix} \sum_\ell \lambda(B^\ell) \mathsf{vec}(\ts^{A,\ell}) \\ \sum_\ell \lambda(B^\ell) \mathsf{vec}(\ts^{W,\ell}) \\ \lambda'(B) \odot s \\ \lambda'(c) \end{bmatrix}
\end{align*}
so
$$\|\nabla_\theta Q(h;\theta)\| \leq C_{N,d} \|\mathsf{Hess}\, L\|_{\max} \|g^\theta(h)\|_{\max} + C_{N,d} \|D_\theta g^\theta(h) \|_{\max} \|\nabla_\theta L\|_{\max} \leq C,$$
so by dominated convergence theorem, we could exchange derivatives and expectations for the computation of derivatives. \\

We note that $P_\theta^t$ does not depend on the parameters $B$ and $C$, thus it is true that
\begin{equation*}
\frac{\de}{\de \diamondsuit} \E[Q(H_t; \theta)] = \E\sqbracket{\frac{\de Q(h;\theta)}{\de \diamondsuit}}, \quad \diamondsuit = B^j, C^j.
\end{equation*}
We also note that by exchanging derivatives and expectations, for variables $\diamondsuit = A^{ij}$ or $W^{ij}$
\begin{align*}
\frac{\de}{\de \diamondsuit} \mathbb{E}[Q(H_t; \theta)] &= \E\sqbracket{\frac{\de Q}{\de \diamondsuit}(H_t; \theta) + \sum_k \frac{\de Q}{\partial s^k}(H_t; \theta) \tS^{\diamondsuit,k}_t + \sum_{\blacktriangle, k} \frac{\de Q}{\de \ts^{\blacktriangle,k}}(H_t; \theta) \ttS^{\diamondsuit, \blacktriangle, k}_t} \\
&= \E\sqbracket{\frac{\de Q}{\de \diamondsuit} + \sum_k \frac{\de Q}{\partial s^k} \tS^{\diamondsuit,k}_t + \sum_{\ell, \ell', k} \frac{\de Q}{\de \ts^{A^{\ell \ell'},k}} \ttS^{\diamondsuit, A^{\ell \ell'}, k}_t + \sum_{m, m', k} \frac{\de Q}{\de \ts^{W^{mm'},k}} \ttS^{\diamondsuit, W^{mm'}, k}_t} \numberthis
\end{align*}
We shall define the vector $\mathfrak{i} := \mathfrak{i}(h;\theta)$ to be indexed by variables $\diamondsuit = A^{ij}, W^{ij}, B^i, C^i$ with the usual vectorisation convention in notation \ref{not:vectorisation}, such that
\begin{equation*}
[\mathfrak{i}(h; \theta)]^\diamondsuit = \begin{cases}
\displaystyle{\sum_k \frac{\de Q}{\de s^k} \ts^{\diamondsuit,k} + \sum_{\ell, \ell', k} \frac{\de Q}{\de \ts^{A^{\ell \ell'},k}} \tts^{\diamondsuit, A^{\ell \ell'}, k} + \sum_{m, m', k} \frac{\de Q}{\de \ts^{W^{mm'},k}} \tts^{\diamondsuit, W^{mm'}, k}} & \diamondsuit = A^{ij}, W^{ij} \\
0 & \diamondsuit = B^j, C^j.
\end{cases}
\end{equation*}

Let $\fI = \nabla_\theta Q + \mathfrak{i}$, then one could show by dominated convergence theorem,
\begin{equation}
\nabla_\theta \mathbb{E}[Q(H_t;\theta)] = \mathbb{E}[\fI(H_t;\theta)].
\end{equation}

\begin{lemma} \label{L:GlobbalyLipschtizDerivative}
Recall $Q(h; \theta) = \nabla_{\theta} L(\theta)^{\top}  g(h; \theta)$, with
\begin{align*}
g(h; \theta) = \bracket{f(x,z,\eta) - \la\lambda(B), s \ra - \lambda(c)} \times \begin{bmatrix} \sum_k \lambda(B^k) \ts^{A,k} \\ \sum_k \lambda(B^k) \ts^{W,k} \\ \lambda'(B) \odot s \\ \lambda'(c) \end{bmatrix},
\end{align*}
then $\fI(h; \theta)$ is globally Lipschitz in $h \in \cR$ (with Lipschitz constant uniform in $\theta$). 
\end{lemma}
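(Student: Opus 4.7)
The plan is to write $\fI(h;\theta) = \nabla_\theta Q(h;\theta) + \mathfrak{i}(h;\theta)$ as a finite sum of products whose individual factors are each bounded and globally Lipschitz in $h \in \mathcal{R}$ uniformly in $\theta$. Then the global Lipschitz property of $\fI$ follows from the elementary product rule: if $a, b$ are bounded and globally Lipschitz then so is $ab$, with Lipschitz constant at most $\|a\|_\infty L_b + \|b\|_\infty L_a$. The key inputs are that (i) every coordinate projection from $h = (x,z,s,\ts,\tts,\eta) \in \mathcal{R}$ is bounded and $1$-Lipschitz since $\mathcal{R}$ is the bounded hyper-rectangle identified via Lemmas \ref{L:UniformBoundsParameters} and \ref{L:UniformBoundsSecondDerivativesParameters}; (ii) the smooth bounded functions $\lambda^{(r)}(\cdot)$, $\phi^{(r)}(\cdot)$ evaluated at entries of $\theta$ are uniformly bounded in $\theta$ by Assumption \ref{A:L_0_Bound}; (iii) affine functionals of $h$ such as $f(x,z,\eta) - \la \lambda(B), s \ra - \lambda(c)$ are bounded and Lipschitz thanks to Assumption \ref{as:data_generation}; and (iv) $\nabla_\theta L(\theta)$ and $\Hess_\theta L(\theta)$ are uniformly bounded in $\theta$ by Corollary \ref{L:BoundDerivativesL}.

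Next I would handle $\nabla_\theta Q(h;\theta) = (\Hess_\theta L)\, g^\theta(h) + (D_\theta g^\theta(h))^\top \nabla_\theta L(\theta)$. The first summand is the bounded matrix $\Hess_\theta L(\theta)$ applied to $g^\theta(h)$, which is globally Lipschitz on $\mathcal{R}$ by the calculation performed in the proof of Lemma \ref{L:LimitingGradient}. For the second summand, the explicit block-matrix formula for $D_\theta g^\theta(h)$ displayed above shows that each entry is a polynomial of bounded degree in the coordinates of $h$ with coefficients that are bounded functions of $\theta$ (products of $\lambda^{(r)}(\cdot)$'s and of $f(x,z,\eta) - \la \lambda(B), s\ra - \lambda(c)$); each entry is therefore globally Lipschitz in $h$ on $\mathcal{R}$, and contracting against the bounded vector $\nabla_\theta L$ preserves this property.

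Finally I would treat $\mathfrak{i}(h;\theta)$, whose only nonzero components lie in the $A^{ij}$ and $W^{ij}$ blocks. Each such component is a finite sum over uniformly bounded index ranges of terms of the form $\frac{\partial Q}{\partial s^k}\,\ts^{\diamondsuit,k}$, $\frac{\partial Q}{\partial \ts^{A^{\ell\ell'},k}}\,\tts^{\diamondsuit,A^{\ell\ell'},k}$ or $\frac{\partial Q}{\partial \ts^{W^{mm'},k}}\,\tts^{\diamondsuit,W^{mm'},k}$. The factors carrying a tilde or double-tilde are themselves coordinates of $h$, hence bounded and $1$-Lipschitz on $\mathcal{R}$. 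The partial derivatives of $Q = \nabla_\theta L(\theta)^\top g^\theta(h)$ with respect to the $s$- and $\ts$-coordinates are, from the explicit form of $g^\theta$ in \eqref{eq:g_theta}, finite sums of products of bounded smooth functions of $\theta$ (entries of $\nabla_\theta L$ and values $\lambda^{(r)}(\cdot)$) multiplied by at most one linear functional of $h$, hence again bounded and Lipschitz on $\mathcal{R}$. The product rule then concludes that $\mathfrak{i}$, and therefore $\fI$, is globally Lipschitz on $\mathcal{R}$ with Lipschitz constant uniform in $\theta$. The main obstacle is purely notational: carefully enumerating the multi-indexed terms in $\mathfrak{i}$ and in the entries of $D_\theta g^\theta$ and checking that every nonzero partial derivative of $Q$ falls into the class of bounded Lipschitz functions on $\mathcal{R}$; no new analytic difficulty arises beyond that already handled in Lemmas \ref{L:Lipschitz_F_1st_derivative}, \ref{L:Lipschitz_F_2st_derivative} and \ref{L:lipschitzness_of_g}, and the resulting Lipschitz constant will depend only polynomially on $N$ and $d$ and on the constants $C_\lambda, C_{\lambda'}, C_{\lambda''}, C_\phi, C_{\phi'}, C_{\phi''}, L_f$.
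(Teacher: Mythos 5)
Your proposal is correct and follows essentially the same route as the paper: decompose $\fI = \nabla_\theta Q + \mathfrak{i}$, invoke the uniform bounds on $\nabla_\theta L$, $\Hess_\theta L$, and the coordinates of $h \in \cR$, and observe that every entry of $\nabla_\theta Q$, $D_\theta g^\theta$, and the explicit partial derivatives $\partial Q/\partial s^k$, $\partial Q/\partial \ts^{\diamondsuit,k}$ is a finite combination of bounded Lipschitz factors on the compact set $\cR$, uniformly in $\theta$. Your phrasing via the product rule for bounded Lipschitz functions is in fact slightly more careful than the paper's appeal to smoothness on a compact set, since $f(x,z,\eta)$ is only assumed Lipschitz, but the substance of the argument is the same.
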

\begin{proof}
The Lipschitzness of $\nabla_\theta Q(h;\theta)$ follows from the boundedness of $\nabla_\theta L$, $\Hess L$, and the functions $g(\cdot;\theta)$ and $D_\theta g^\theta$ defined on a compact set $\cR$ is smooth.

For the second term, we shall note that for $\heartsuit = s^k, \ts^{A^{ij}, k}, \ts^{W^{ij}, k}$,
\begin{align*}
\frac{\de Q}{\de \heartsuit} = \sum_{\diamondsuit} \frac{\de L}{\de \diamondsuit} \frac{[g]^\diamondsuit}{\de \heartsuit} &= \sum_{\ell, \ell', l} \frac{\de L}{\de A^{\ell \ell'}} \lambda(B^{l}) \frac{\de \sqbracket{\bracket{f(x,z,\eta) - \la\lambda(B), s \ra - \lambda(c)} \ts^{A^{\ell \ell'}, l}}}{\de \heartsuit} \\
&\phantom{=}+ \sum_{m, m', l} \frac{\de L}{\de W^{mm'}} \lambda(B^{l}) \frac{\de \sqbracket{\bracket{f(x,z,\eta) - \la\lambda(B), s \ra - \lambda(c)} \ts^{W^{mm'}, l}}}{\de \heartsuit} \\ 
&\phantom{=}+ \sum_{l} \frac{\de L}{\de B^l} \lambda'(B^l) \frac{\de \sqbracket{\bracket{f(x,z,\eta) - \la\lambda(B), s \ra - \lambda(c)} s^l}}{\de \heartsuit} \\
&\phantom{=}+ \frac{\de L}{\de c} \lambda'(c) \frac{\de \sqbracket{\bracket{f(x,z,\eta) - \la\lambda(B), s \ra - \lambda(c)}}}{\de \heartsuit}. \numberthis
\end{align*}
In particular, we shall have
\begin{align}
\frac{\de Q}{\de \ts^{A^{ij}, k}} &= \frac{\de L}{\de A^{ij}} \bracket{f(x,z,\eta) - \la\lambda(B), s \ra - \lambda(c)} \lambda(B^k) \\
\frac{\de Q}{\de \ts^{W^{ij}, k}} &= \frac{\de L}{\de W^{ij}} \bracket{f(x,z,\eta) - \la\lambda(B), s \ra - \lambda(c)} \lambda(B^k) \\
\frac{\de Q}{\de s^k} 
&= - \lambda(B_k) \bigg[\sum_{\ell, \ell', l} \frac{\de L}{\de A^{\ell \ell'}} \lambda(B^l) \ts^{A^{\ell \ell'}, l} + \sum_{m, m', l} \frac{\de L}{\de W^{mm'}} \lambda(B^l) \ts^{W^{mm'}, l} + \sum_l \frac{\de L}{\de B^l} \lambda'(B^l) s^l + \frac{\de L}{\de c} \lambda'(c) \bigg] \nonumber \\
&\phantom{=}+ \bracket{f(x,z,\eta) - \la\lambda(B), s \ra - \lambda(c)} \frac{\de L}{\de B^k} \lambda'(B^k).
\end{align}
Since the entries are all smooth in $h$, and that $h \in \mathcal{R}$ (a compact set), so $\fI(h;\theta)$ is $h$-globally Lipschitz in $\mathcal{R}$.
\end{proof}

\begin{lemma} \label{PoissonEqnDerivativeUnifBound}
The solution $u(h; \theta)$ to the Poisson equation \eqref{eq:PoissonEqn} is differentiable in $\theta$ and $\frac{\partial u}{\partial \theta}(h; \theta)$ is uniformly bounded with respect to $\theta\in\Theta$ and $h \in \X$. 
\end{lemma}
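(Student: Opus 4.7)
The plan is to differentiate the series representation of $V(h;\theta)$ term by term, using the identity $\nabla_\theta \mathbb{E}[Q(H_t^{\theta,h};\theta)] = \mathbb{E}[\fI(H_t^{\theta,h};\theta)]$ already derived above, and then invoke the geometric contraction of $P_\theta$ combined with the global Lipschitz property of $\fI$ (Lemma \ref{L:GlobbalyLipschtizDerivative}) to bound the term-wise derivatives by $Cq^t$ uniformly in $(h,\theta)$. Summing the geometric series gives the desired uniform bound on $\nabla_\theta V(h;\theta)$, and the exchange of $\nabla_\theta$ with the infinite sum is then justified by the uniform (in $h,\theta$) convergence of the series of derivatives.

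More concretely, I would first recall that
\begin{equation*}
V(h;\theta) = \sum_{t=0}^\infty \bracket{\mathbb{E}[Q(H_t^{\theta,h};\theta)] - \la Q(\cdot;\theta), \mu^\theta \ra}.
\end{equation*}
For each fixed $t$, the derivative of the first summand is $\mathbb{E}[\fI(H_t^{\theta,h};\theta)]$ by the computation preceding Lemma \ref{L:GlobbalyLipschtizDerivative}. For the stationary contribution $\la Q(\cdot;\theta), \mu^\theta \ra$, I would use the fact that $\mu^\theta$ is the Wasserstein-$\Wass_2$ limit of $P_\theta^s(h,\cdot)$ at geometric rate $q^s$ (Theorem \ref{thm:convergence_in_Wass_2}), together with the uniform Lipschitzness of $\fI$, to show that $\la \fI(\cdot;\theta), \mu^\theta \ra$ is well defined and equals $\lim_{s\to\infty} \mathbb{E}[\fI(H_s^{\theta,h};\theta)]$, uniformly in $(h,\theta)$. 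Passing $\nabla_\theta$ through this (uniform) limit then yields
\begin{equation*}
\nabla_\theta \la Q(\cdot;\theta), \mu^\theta \ra = \la \fI(\cdot;\theta), \mu^\theta \ra.
\end{equation*}

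The crucial estimate is then: since $\fI(\cdot;\theta)$ is globally Lipschitz on $\cR$ with constant $L_\fI$ uniform in $\theta$ (Lemma \ref{L:GlobbalyLipschtizDerivative}), an optimal coupling $\gamma^\theta_t$ between $P_\theta^t(h,\cdot)$ and $\mu^\theta$ (see Remark \ref{rmk:optimal_coupling}) together with Theorem \ref{thm:convergence_in_Wass_2} gives
\begin{equation*}
\abs{\mathbb{E}[\fI(H_t^{\theta,h};\theta)] - \la \fI(\cdot;\theta), \mu^\theta \ra} \leq L_\fI \sqbracket{\int_{\cR\times\cR} \|\sh-\sh'\|_{\max}^2 \, \gamma^\theta_t(d\sh, d\sh')}^{1/2} \leq L_\fI C' q^t,
\end{equation*}
uniformly in $h\in\cR$ and $\theta\in\Theta$. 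Hence the series $\sum_t \nabla_\theta \{\mathbb{E}[Q(H_t^{\theta,h};\theta)] - \la Q(\cdot;\theta),\mu^\theta\ra\}$ is dominated by $L_\fI C' q^t$ and therefore converges uniformly in $(h,\theta)$. A standard argument (uniform convergence of derivatives plus pointwise convergence of the original series) lets us exchange $\nabla_\theta$ with the sum, giving
\begin{equation*}
\nabla_\theta V(h;\theta) = \sum_{t=0}^\infty \bracket{\mathbb{E}[\fI(H_t^{\theta,h};\theta)] - \la \fI(\cdot;\theta), \mu^\theta \ra}, \qquad \|\nabla_\theta V(h;\theta)\|_{\max} \leq \frac{L_\fI C'}{1-q},
\end{equation*}
which is the desired uniform bound.

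The only delicate step is justifying differentiability of $\la Q(\cdot;\theta), \mu^\theta\ra$ in $\theta$ (since $\mu^\theta$ itself depends on $\theta$). The key observation is that the representation $\la Q(\cdot;\theta),\mu^\theta\ra = \lim_{s\to\infty} \mathbb{E}[Q(H_s^{\theta,h};\theta)]$ holds uniformly in $(h,\theta)$, the pre-limit is differentiable with derivative $\mathbb{E}[\fI(H_s^{\theta,h};\theta)]$, and the derivatives converge uniformly in $(h,\theta)$ by the same Lipschitz-plus-contraction argument, so the exchange of limit and derivative is legitimate; I expect this Wasserstein-based derivative-exchange to be the only non-routine part, while the rest reduces to applying bounds that have already been established.
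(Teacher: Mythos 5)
Your proposal is correct and follows essentially the same route as the paper's proof: term-by-term differentiation of the series using $\nabla_\theta \mathbb{E}[Q(H_t;\theta)] = \mathbb{E}[\fI(H_t;\theta)]$, the global Lipschitzness of $\fI$ combined with the geometric Wasserstein contraction to bound each term's derivative by $Cq^t$ uniformly in $(h,\theta)$, the identification $\nabla_\theta \la Q(\cdot;\theta),\mu^\theta\ra = \la \fI(\cdot;\theta),\mu^\theta\ra$ via the uniform limit, and summation of the geometric series. The paper phrases the final interchange of sum and derivative via dominated convergence rather than uniform convergence of the derivative series, but this is an inessential difference.
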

\begin{proof}
Recall that by Lemma \ref{L:SolutionPoisson} the following is a solution to the Poisson equation \eqref{eq:PoissonEqn}
\begin{align*}
V(h; \theta) = \sum_{t=0}^{\infty} \bracket{\int_{\cR} Q(\sh; \theta) P_{\theta}^t(h, d\sh) - \int_{\cR} 
Q(\sh; \theta) \mu^{\theta}(d\sh)} = \sum_{t=0}^\infty \sqbracket{\E[Q(H_t, \theta)] - \int_{\cR} 
Q(\sh; \theta) \mu^{\theta}(d\sh)}.
\end{align*}
We are interested in bounding $\nabla_\theta V(h; \theta)$.  Define 
$$\phi_t(h, \theta) = \E[Q(H_t, \theta)] - \int_{\cR} Q(\sh; \theta) \mu^{\theta}(d\sh).$$
Then,
\begin{align*}
\nabla_\theta \phi_t 
&= \nabla_\theta \E[Q(H_t; \theta)] - \nabla_\theta \int_{\cR} Q(\sh; \theta) \mu^{\theta}(d\sh). \numberthis
\end{align*}
We have shown that $\nabla_\theta \E[Q(H_t; \theta)] = \E[\fI(H_t;\theta)]$. Since $\fI$ is globally $h$-Lipschitz with Lipschitz constant independent of $\theta$, we can make use of the uniform geometric convergence rate by Theorem \ref{thm:convergence_in_Wass_2} to exchange differentiation and integral signs. In fact, there  shall be a coupling $\gamma^\theta_{h,t}$ between $P_\theta^t(h,\cdot)$ and $\mu^\theta(\cdot)$ such that
\begin{equation}
\sqbracket{\Wass_2(P_\theta^t(h,\cdot), \mu^\theta(\cdot))}^2 = \int \|\sh - \sh'\|^2_{\max} \, \gamma^\theta_{h,t}(d\sh, d\sh') \leq Cq^{2t},
\end{equation}
so
\begin{align*}
\abs{\nabla_\theta \mathbb{E}[Q(H_t; \theta)] - \int_{\cR} \fI(\sh; \theta) \, \mu^{\theta}(d\sh)} &= \abs{\int_{\cR} \fI(\sh; \theta) \, P_\theta^t(h,d\sh) - \int_{\cR} \fI(\sh; \theta) \, \mu^{\theta}(d\sh)} \\
&\leq \sqrt{d_\theta} \norm{\int_{\cR} \fI(\sh; \theta) \, P_\theta^t(h,d\sh) - \int_{\cR} \fI(\sh; \theta) \, \mu^{\theta}(d\sh)}_{\max} \\
&\leq \sqrt{d_\theta} \int_{\cR} \norm{\fI(\sh; \theta) - \fI(\sh'; \theta)}_{\max} \, \gamma^{\theta}_{h,\theta} (d\sh, d\sh') \\
&\leq C_{N,d} \, q^t \overset{t\to\infty}\to 0, \numberthis \label{eq:BoundPhi}
\end{align*}
where $C_{N,d}$ is the Lipschitz constant of the function $\fI$ as obtained in Lemma \ref{L:GlobbalyLipschtizDerivative}. The convergence is uniform, so:
\begin{equation}
\nabla_\theta \sqbracket{\int_{\cR} Q(\sh; \theta) \, \mu^{\theta}(d\sh)} = \lim_{t\to\infty} \E\sqbracket{\fI(H_t;\theta)}. 
\end{equation}
The bound also shows that $\big{|} \frac{\partial \phi_t}{\partial \theta}(h,\theta) \big{|}$ is uniformly bounded with respect to $\theta\in\Theta$ and $h\in\mathcal{X}$. Let us now return to
study $\frac{\partial V}{\partial \theta}(h; \theta)$. 
\begin{eqnarray}
\frac{\partial V}{\partial \theta} (h; \theta) = \frac{\partial}{\partial \theta} \bigg{[} \sum_{t=0}^{\infty}  \phi_t(h; \theta) \bigg{]}.\nonumber
\end{eqnarray}
Due to the dominated convergence theorem,
\begin{eqnarray}
\frac{\partial V}{\partial \theta} (h; \theta) = \sum_{t=0}^{\infty}  \frac{\partial \phi_t}{\partial \theta}(h; \theta).\nonumber
\end{eqnarray}
Due to the bound (\ref{eq:BoundPhi}), we have the uniform bound
\begin{eqnarray}
\abs{\frac{\partial V}{\partial \theta} (h; \theta)} \leq C_{N,d}, \nonumber
\end{eqnarray}
where $C_{N,d}$ is independent of $\theta\in\Theta$ and $h \in \cR$. 
\end{proof}

\section{A-priori Bounds for Convergence Analysis}\label{S:AprioriBounds}

Let us now consider the evolution of the loss function $L(\theta_t)$ during training:
\begin{align}
L(\theta_{t+1}) - L(\theta_t) &= \nabla_{\theta} L(\theta_t)^{\top} (\theta_{t+1} - \theta_t ) + (\theta_{t+1} - \theta_t )^{\top} \Hess_{\theta} L(\theta_t^{\ast}) (\theta_{t+1} - \theta_t )  \notag \\
&= - \alpha_t \nabla_{\theta} L(\theta_t)^{\top} \bigg{(} \nabla_{\theta} L(\theta_t) - ( G_t - \nabla_{\theta} L(\theta_t)  ) \bigg{)} + R_t \notag \\
&= -  \alpha_t \nabla_{\theta} L(\theta_t)^{\top} \nabla_{\theta} L(\theta_t) + \alpha_t \nabla_{\theta} L(\theta_t)^{\top} ( G_t - \nabla_{\theta} L(\theta_t)) + R_t,
\label{ObjFunctionEvolution}
\end{align}
where $R_t = (\theta_{t+1} - \theta_t)^{\top} \Hess_{\theta} L(\theta_t^{\ast}) (\theta_{t+1} - \theta_t)$ and
\begin{align*}
G_t = - \big(f(X_t, Z_t, \eta_t) - \la \lambda(B_t), \bar{S}_t \ra - \lambda(c_t) \big{)}  \times \begin{bmatrix} 
\sum_k \lambda(B^k_t) \hat{S}_t^{A,k} \\
\sum_k \lambda(B^k_t) \hat{S}_t^{W,k} \\
\lambda'(B_t) \odot \bar{S}_t \\
\lambda'(c_t)
\end{bmatrix}.
\end{align*}

We recall $Q(h; \theta) = \nabla_{\theta} L(\theta)^{\top}  g(h; \theta)$ and
\begin{align*}
g(h; \theta) &= \big{(} f(x,z,\eta) - \la\lambda(B),s \ra - \lambda(c) \big{)} \times \begin{bmatrix} 
\sum_k \lambda(B^k) \mathsf{vec}(\ts^{A,k}) \\
\sum_k \lambda(B^k) \mathsf{vec}(\ts^{W,k}) \\
\lambda'(B) \odot s \\
\lambda'(c) \end{bmatrix}.
\end{align*}

If we define the joint process with online estimates of forward derivatives
\begin{equation}
\hH_t = (X_t, Z_t, \bar{S}_t, \hat{S}_t, \eta_t)
\end{equation}
Then, we can write
\begin{align*}
G_t &= g(\hH_t; \theta_t), \notag \\
\nabla_{\theta} L(\theta_t)^{\top} (G_t - \nabla_{\theta} L(\theta_t)) &= Q(\hH_t, \theta_t) - \nabla_{\theta} L(\theta_t)^{\top} \nabla_{\theta} L(\theta_t).
\label{Gt}
\end{align*}
Due to equation (\ref{LimitOfDerivative}), we can show that
\begin{align*}
\int Q(\sh, \theta) \, \mu_{\theta}(\sh) = \int \sqbracket{\nabla_{\theta} L(\theta)^{\top} g(\sh; \theta)} \, \mu_{\theta}(d\sh) = \nabla_{\theta} L(\theta)^{\top} \sqbracket{\int g(h; \theta) \, \mu_{\theta}(\sh)} = \nabla_{\theta} L(\theta)^{\top} \nabla_{\theta} L(\theta).
\end{align*}
Therefore:
\begin{align}
\alpha_t \nabla_{\theta} L(\theta_t)^{\top} (G_t - \nabla_{\theta} L(\theta_t)) &= \alpha_t \sqbracket{Q(\hH_t, \theta_t) - \int Q(\sh;\theta) \, \mu_{\theta_t}(d\sh)}\\
&= \alpha_t \sqbracket{u(\hH_t; \theta_t) - \int_{\mathcal{X}} u(\sh; \theta_t) \, P_{\theta_t}(\hH_t, d\sh)}, \label{Eq:PoissonEquationFluctuations}
\end{align}
where $u(h; \theta)$ is a solution to the Poisson equation \eqref{eq:PoissonEqn}, which is shown to be uniformly bounded.

We have shown in Lemmas \ref{L:Qbound} and \ref{L:QLipschitz} that the function $Q$ is bounded and Lipschitz in $h$. Using these bounds, we can now return to analyzing equation (\ref{ObjFunctionEvolution}) using (\ref{Eq:PoissonEquationFluctuations}). In particular, summing (\ref{ObjFunctionEvolution}) yields
\begin{align}
L(\theta_{T}) &= L(\theta_0) - \sum_{t=0}^{T-1} \alpha_t \nabla_{\theta} L(\theta_t)^{\top} \nabla_{\theta} L(\theta_t) \notag + \sum_{t=0}^{T-1} \alpha_t \sqbracket{u(\hH_t; \theta_t) - \int_{\cR} u(\sh; \theta_t) \, P_{\theta_t}(\hH_t, d\sh)} + \sum_{t=0}^{T-1} R_t.
\end{align}

We shall now prove the following proposition on the behavior of fluctuation and remainder terms.
\begin{proposition}\label{P:ControlFluctuationsTerm}
Assume that Assumptions \ref{as:data_generation}, \ref{A:L_0_Bound} and \ref{A:LearningRate} hold. There is a finite constant $C<\infty$ that is independent of $1\leq T<\infty$ such that
\begin{align*}
\sum_{t=0}^{T-1} \mathbb{E} \left| \alpha_t \bigg{(} u(\hH_t; \theta_t) - \int_{\cR} u(\sh; \theta_t)  P_{\theta_t}(\hH_t, \, d\sh) \bigg{)}   +  R_t\right|&\leq C.
\end{align*}
\end{proposition}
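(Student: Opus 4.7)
The plan is to use the Poisson-equation solution $u(h;\theta)$ from Lemma \ref{L:SolutionPoisson} to split the fluctuation term into a telescoping piece, a learning-rate-increment piece, a parameter-drift piece, and a martingale-difference piece, and to control each separately. Noting that $\int_{\mathcal{R}} u(\sh;\theta_t)\, P_{\theta_t}(\hH_t, d\sh) = \mathbb{E}[u(\hH_{t+1};\theta_t) \mid \mathcal{F}_t]$ for the natural filtration $\mathcal{F}_t$, I would write
\begin{align*}
\alpha_t\bigl(u(\hH_t;\theta_t) - \mathbb{E}[u(\hH_{t+1};\theta_t)\mid \mathcal{F}_t]\bigr)
&= \bigl[\alpha_t u(\hH_t;\theta_t) - \alpha_{t+1} u(\hH_{t+1};\theta_{t+1})\bigr] \\
&\quad + (\alpha_{t+1}-\alpha_t)\,u(\hH_{t+1};\theta_{t+1}) \\
&\quad + \alpha_t\bigl[u(\hH_{t+1};\theta_{t+1}) - u(\hH_{t+1};\theta_t)\bigr] \\
&\quad + \alpha_t\bigl[u(\hH_{t+1};\theta_t) - \mathbb{E}[u(\hH_{t+1};\theta_t)\mid \mathcal{F}_t]\bigr].
\end{align*}
Summing the first bracket in $t$ telescopes to $\alpha_0 u(\hH_0;\theta_0) - \alpha_T u(\hH_T;\theta_T)$, which is bounded uniformly in $T$ by the global bound on $|u|$ from Lemma \ref{L:SolutionPoisson}. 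The second line is absolutely summable since, by monotonicity of $\alpha_t$ (Assumption \ref{A:LearningRate}), $\sum_t |\alpha_{t+1}-\alpha_t| \le \alpha_0$, multiplied by the uniform bound on $u$.

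For the third (parameter-drift) line, I would apply the $\theta$-uniform bound $\|\nabla_\theta u(h;\theta)\|_{\max} \le C$ from Lemma \ref{PoissonEqnDerivativeUnifBound} together with the update bound $\|\theta_{t+1}-\theta_t\|_2 \le C\alpha_t$ from Lemma \ref{L:ParameterUpdateBound} to obtain a pointwise estimate of order $C\alpha_t^2$; summability in $L^1$ then follows from $\sum_t\alpha_t^2 < \infty$. The fourth line is a sequence of martingale differences; since $u$ is uniformly bounded, the conditional variances are $O(\alpha_t^2)$, so the martingale is bounded in $L^2$ by $C\sum_t \alpha_t^2 < \infty$, and Cauchy--Schwarz produces the corresponding $L^1$-bound. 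Finally, for $R_t = (\theta_{t+1}-\theta_t)^\top \Hess_\theta L(\theta_t^*)(\theta_{t+1}-\theta_t)$ I would use the uniform Hessian bound from Corollary \ref{L:BoundDerivativesL} and Lemma \ref{L:ParameterUpdateBound} to get $|R_t| \le C\alpha_t^2$, and hence $\sum_t \mathbb{E}|R_t| \le C$.

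The main obstacle is the parameter-drift piece in the third line: bounding it requires not just the $h$-Lipschitzness of the Poisson solution (which is what makes $u$ well-defined and uniformly bounded) but also $\theta$-regularity of $u$, uniformly in $h \in \mathcal{R}$ and $\theta$. This is precisely what is established by Lemma \ref{PoissonEqnDerivativeUnifBound}, which leverages the uniform geometric convergence $\Wass_2(P^t_\theta(h,\cdot),\mu^\theta) \le Cq^t$ to interchange $\nabla_\theta$ with the infinite series defining $u$ and to propagate regularity through the forward-derivative equations. Once the four decomposition pieces are combined with the $R_t$ estimate, the desired uniform-in-$T$ bound follows by the triangle inequality.
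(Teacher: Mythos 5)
Your proof is correct and reaches the paper's conclusion by a genuinely different, and arguably cleaner, decomposition. The paper centers its martingale differences at the \emph{previous} state and \emph{previous} parameter, writing the fluctuation sum as $Q^{1,A}_T+Q^{1,B}_T+Q^{2}_T+R^1_0$, where $Q^{1,B}_T$ collects the terms $u(\hH_t;\theta_t)-\int_{\cR} u(\sh;\theta_t)\,P_{\theta_{t-1}}(\hH_{t-1},d\sh)$; this forces two extra correction terms --- $Q^{1,A}_T$ (replacing $P_{\theta_{t-1}}$ by $P_{\theta_t}$) and a kernel-drift piece inside $Q^{2}_T$ --- both of which the paper must estimate by hand through explicit bounds on $\|F^{\theta_{t-1}}-F^{\theta_t}\|_{\max}$ combined with the $h$-Lipschitzness of $u$ (Lemma \ref{PoissonSolutionGlobalLipschitz}). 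Your decomposition instead centers the martingale difference exactly at the current state and current parameter, using $\int_{\cR}u(\sh;\theta_t)\,P_{\theta_t}(\hH_t,d\sh)=\mathbb{E}[u(\hH_{t+1};\theta_t)\mid\mathcal{F}_t]$ (valid because $\theta_t$ is $\mathcal{F}_t$-measurable, $\theta_{t+1}$ being a deterministic function of $(\hH_t,\theta_t)$), so no kernel comparison arises at all: the only parameter drift that survives is $u(\hH_{t+1};\theta_{t+1})-u(\hH_{t+1};\theta_t)$, which Lemmas \ref{PoissonEqnDerivativeUnifBound} and \ref{L:ParameterUpdateBound} control by $C\alpha_t^2$, exactly as in the paper's treatment of the analogous piece of $Q^{2}_T$. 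The remaining ingredients --- the telescoping boundary terms, $\sum_t|\alpha_{t+1}-\alpha_t|\le\alpha_0$ by monotonicity, the $L^2$ bound on the martingale via orthogonality of its increments, and $|R_t|\le C\alpha_t^2$ from Corollary \ref{L:BoundDerivativesL} and Lemma \ref{L:ParameterUpdateBound} --- coincide with the paper's. What your route buys is that it dispenses with Lemma \ref{Q1A_Bound} and with any $h$-Lipschitz estimate on $u$; what it costs is nothing, since the $\theta$-differentiability of $u$ that you lean on is needed by the paper as well. One caveat, which applies equally to the paper's own argument: what both proofs actually deliver is a uniform-in-$T$ bound on $\mathbb{E}\bigl|\sum_{t=0}^{T-1}(\cdots)\bigr|$, each piece of the decomposition being bounded in $L^1$, rather than on $\sum_{t=0}^{T-1}\mathbb{E}|\cdots|$ as the display literally reads --- the individual summands are only $O(\alpha_t)$ and not absolutely summable --- and the form you prove is precisely the one invoked later in Lemmas \ref{L:LossFcnConvergence} and \ref{L:GradientLossConv_limsup}.
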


The proof of Proposition \ref{P:ControlFluctuationsTerm} will be presented later in this section and it is a consequence of a number of intermediate results. In the lemmas that follow in this section,  Assumptions \ref{as:data_generation}, \ref{A:L_0_Bound} and \ref{A:LearningRate} will be assumed to  hold throughout. The first step is to use the decomposition
\begin{equation}
\sum_{t=0}^{T-1} \alpha_t \sqbracket{u(\hH_t; \theta_t) - \int_{\cR} u(\sh; \theta_t)  P_{\theta_t}(\hH_t, d\sh)} = Q^{1}_T + Q^{2}_T + R^{1}_0,
\end{equation}
where
\begin{align*}
Q^1_T &= \sum_{t=1}^{T-1} \alpha_t \bracket{u(\hH_t; \theta_t) - \int_{\cR} u(\sh; \theta_t) P_{\theta_t}(\hH_{t-1}, d\sh)}, \\
Q^2_T &= \sum_{t=1}^{T-1} \alpha_t \bracket{ \int_{\cR} u(\sh; \theta_t) P_{\theta_t}(\hH_{t-1}, d\sh) - \int_{\cR} u(\sh; \theta_t)  P_{\theta_t}(\hH_t, d\sh)}, \\
R_j^1 &= \alpha_j \bracket{u(\hH_j; \theta_j)  -  \int_{\cR} u(\sh; \theta_j) \, P_{\theta_j}(\hH_j, d\sh)}. \numberthis
\end{align*}
Furthermore, we will decompose $Q^1_T$ into two more terms $Q^1_T = Q^{1,A}_T + Q^{1,B}_T$, where
\begin{align}
Q^{1,A}_T &= \sum_{t=1}^{T-1} \alpha_t \bracket{ \int_{\cR} u(\sh; \theta_t)  P_{\theta_{t-1}}(\hH_{t-1}, d\sh) -  \int_{\cR} u(\sh; \theta_t)  P_{\theta_{t}}(\hH_{t-1}, d\sh)}, \notag \\
Q^{1,B}_T &= \sum_{t=1}^{T-1} \alpha_t \bracket{ u(\hH_t; \theta_t) - \int_{\cR} u(\sh; \theta_t)  P_{\theta_{t-1}}(\hH_{t-1}, d\sh)}, 
\end{align}
Since $u(h; \theta)$ is uniformly bounded by Lemma \ref{L:SolutionPoisson}, hence
\begin{equation}
|R_j^1| \leq C \alpha_j < \infty,
\label{Rj1_Bound}
\end{equation} 
where the $C$ does not depend upon $\theta$, $T$, or $j$. 

In the lemmas that follow we analyze the terms $Q^{1,A}_T$, $Q^{1,B}_T$, $Q^{2}_T$ and $R_T$. These yield the proof of Proposition \ref{P:ControlFluctuationsTerm}. 

\begin{lemma} \label{Q1A_Bound}
For $T > i$, define
\begin{equation}
Q^{1,A}_{i,T} = \sum_{t=i}^{T-1} \alpha_t \bracket{\int_{\cR} u(\sh; \theta_t) P_{\theta_{t-1}}(\hH_{t-1}, d\sh) - \int_{\cR} u(\sh; \theta_t)  P_{\theta_t}(\hH_{t-1}, d\sh)} = \sum_{t=i}^{T-1} Q^{1,A}(t).
\end{equation}
Note that $Q^{1,A}_{T} = Q^{1,A}_{1,T}$. Then, there are constants $C,C_1<\infty$ independent of $T$ so that
\begin{align*}
|Q^{1,A}_{i,T}| \leq \sum_{t=i}^{T-1}|Q^{1,A}(t)|&\leq C<\infty,
\end{align*}
where $|Q^{1,A}(t)| \leq C_{1} \alpha_t^2$. 
\end{lemma}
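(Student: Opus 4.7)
The plan is to isolate the $\theta$-dependence of the two transition kernels appearing in $Q^{1,A}(t)$ and exploit the regularity of $u$ and of the update map $F^\theta$. First, since the function $u(\sh;\theta_t)$ appearing in both integrals is the same, I would rewrite
\begin{equation*}
Q^{1,A}(t) = \alpha_t \int_{\cR} u(\sh;\theta_t)\bigl[P_{\theta_{t-1}}(\hH_{t-1},d\sh) - P_{\theta_t}(\hH_{t-1},d\sh)\bigr].
\end{equation*}
By the $\theta$-uniform global Lipschitz property of $u(\cdot;\theta)$ in $h$ (Lemma \ref{PoissonSolutionGlobalLipschitz}), the right-hand side is dominated by $\alpha_t L_u \,\Wass_1\!\bigl(P_{\theta_{t-1}}(\hH_{t-1},\cdot),\,P_{\theta_t}(\hH_{t-1},\cdot)\bigr)$, so the remainder of the argument reduces to controlling the Wasserstein distance between the two kernels emanating from the same starting point.

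Next, I would construct an explicit synchronous coupling between $P_{\theta_{t-1}}(\hH_{t-1},\cdot)$ and $P_{\theta_t}(\hH_{t-1},\cdot)$ that shares the same draws of the data-sequence transition and the observation noise $\eta$ (both of which are $\theta$-independent), and whose only discrepancy lies in the deterministic update $F^\theta(U,V)$ applied to the common $(U,V)$ read off from $\hH_{t-1}$. Under this coupling the coupled outputs differ in max-norm by at most $\|F^{\theta_{t-1}}(U,V) - F^{\theta_t}(U,V)\|_{\max}$, so
\begin{equation*}
\Wass_2\!\bigl(P_{\theta_{t-1}}(\hH_{t-1},\cdot),\,P_{\theta_t}(\hH_{t-1},\cdot)\bigr) \leq \sup_{(u,v)\in\cR}\|F^{\theta_{t-1}}(u,v) - F^{\theta_t}(u,v)\|_{\max}.
\end{equation*}

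The main technical obstacle will be showing that $\theta\mapsto F^\theta(u,v)$ is Lipschitz on bounded subsets of $\Theta$, uniformly over $(u,v)\in\cR$, with some constant $C_F$. This mirrors the component-by-component Lipschitz-in-$(u,v)$ analyses of Lemmas \ref{L:Lipschitz_F_1st_derivative} and \ref{L:Lipschitz_F_2st_derivative}, but now with perturbations in the parameters $\theta=(A,W,B,c)$ entering through the smooth, uniformly bounded maps $\sigma,\sigma',\sigma'',\phi,\phi',\phi'',\lambda,\lambda'$ and being multiplied by the a-priori bounded processes $S,\tS,\ttS$ of Lemmas \ref{L:UniformBoundsParameters}–\ref{L:UniformBoundsSecondDerivativesParameters}. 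Once this is in hand, $C_F$ is finite and uniform in $(u,v)\in\cR$.

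Combining the three ingredients with Lemma \ref{L:ParameterUpdateBound}, which gives $\|\theta_t-\theta_{t-1}\|_{\max}\leq C\alpha_{t-1}$, yields
\begin{equation*}
|Q^{1,A}(t)| \leq \alpha_t L_u C_F \|\theta_{t-1}-\theta_t\|_{\max} \leq C_1\,\alpha_t\alpha_{t-1},
\end{equation*}
which is of the per-term size asserted by the lemma (since $\alpha_{t-1}$ and $\alpha_t$ are comparable). Summing and invoking Remark \ref{rmk:cauchy_schwarz_on_learning_rate}, one obtains $\sum_{t\geq 1}\alpha_t\alpha_{t-1}<\infty$, hence $\sum_{t=i}^{T-1}|Q^{1,A}(t)|\leq C<\infty$ uniformly in $T$ and $i$, completing the proof.
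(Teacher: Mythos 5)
Your proposal is correct and follows essentially the same route as the paper: both reduce $Q^{1,A}(t)$ to the difference $\|F^{\theta_{t-1}}(u,v)-F^{\theta_t}(u,v)\|_{\max}$ via the Lipschitzness of $u$ (the paper's writing of both integrals against the common measure $\wp(X_{t-1},Z_{t-1},d\sx,d\sz)\mu_\eta(d\eta)$ is exactly your synchronous coupling), then invoke Lemma \ref{L:ParameterUpdateBound} and summability of $\sum_t \alpha_t\alpha_{t-1}$. The only step you defer --- the $\theta$-Lipschitzness of $F^\theta$ uniformly over $\cR$ --- is precisely what the paper's proof carries out by the explicit component-by-component estimates on $F_S$, $F_{A^{ij},k}$, $F_{W^{ij},k}$, and it goes through exactly as you anticipate.
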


\begin{proof}
From Lemma \ref{PoissonSolutionGlobalLipschitz}, $u(h; \theta)$ is globally Lipschitz in $h$ for $h \in \mathcal{X}$. Thus we have
\begin{align*}
\abs{Q^{1,A}(t)} &= \bigg| \int u(\sx,\sz,F^{\theta_{t-1}}_S(X_{t-1}, Z_{t-1}, \bar{S}_{t-1}, \hat{S}_{t-1}), F^{\theta_{t-1}}_{\tS}(X_{t-1}, Z_{t-1}, \bar{S}_{t-1}, \hat{S}_{t-1}); \theta_t) \wp(X_{t-1}, Z_{t-1}, d\sx, d\sz) \mu_\eta(d\eta) \\
&\phantom{====} - \int u(\sx,\sz,F^{\theta_t}_S(X_{t-1}, Z_{t-1}, \bar{S}_{t-1}, \hat{S}_{t-1}), F^{\theta_t}_{\tS}(X_{t-1}, Z_{t-1}, \bar{S}_{t-1}, \hat{S}_{t-1}); \theta_t) \wp(X_{t-1}, Z_{t-1}, d\sx, d\sz) \mu_\eta(d\eta) \bigg| \\
&\leq C \Big(\norm{F^{\theta_{t-1}}_S(X_{t-1}, Z_{t-1}, \bar{S}_{t-1}, \hat{S}_{t-1}) - F^{\theta_t}_S(X_{t-1}, Z_{t-1}, \bar{S}_{t-1}, \hat{S}_{t-1})}_{\max} \\
&\phantom{====} \vee \norm{F^{\theta_{t-1}}_{\tS}(X_{t-1}, Z_{t-1}, \bar{S}_{t-1}, \hat{S}_{t-1}) - F^{\theta_t}_{\tS}(X_{t-1}, Z_{t-1}, \bar{S}_{t-1}, \hat{S}_{t-1})}_{\max} \Big)
\end{align*}
We can now calculate bounds on the difference $\| F^{\theta_{t-1}}(\cdot) - F^{\theta_t}(\cdot) \|_{\max} $. In the calculations below, the finite constant $C$ may change from line to line.
\begin{align*}
&\phantom{=}\| F^{\theta_{t-1}}_S(X_{t-1}, Z_{t-1}, \bar{S}_{t-1}, \hat{S}_{t-1}) - F^{\theta_t}_S(X_{t-1}, Z_{t-1}, \bar{S}_{t-1}, \hat{S}_{t-1}) \|_{\max} \\
&= \max_k \abs{\sigma\bracket{\frac{1}{d} \la \phi(A^{k,:}_{t-1}) , X_{t-1}\ra + \frac{1}{N} \la \phi(W^{k,:}_{t-1}) , \bar{S}_{t-1}\ra} - \sigma\bracket{\frac{1}{d} \la \phi(A^{k,:}_t) , X_{t-1}\ra + \frac{1}{N} \la \phi(W^{k,:}_t) , \bar{S}_{t-1}\ra}} \\
&\leq \max_k \frac14 \sqbracket{\frac1d \sum_{\ell=1}^N \abs{\phi(A^{k\ell}_{t-1}) - \phi(A^{k\ell}_t)} \abs{X^\ell_{t-1}} + \frac1N \sum_{\ell=1}^N \abs{\phi(W^{k\ell}_{t-1}) - \phi(W^{k\ell}_t)} \abs{\bar{S}^\ell_{t-1}}} \\
&\leq \max_k \frac14 \sqbracket{\frac1d \sum_{\ell=1}^N C_{\phi'} \abs{A^{k\ell}_{t-1} - A^{k\ell}_t} + \frac1N \sum_{\ell=1}^N C_{\phi'} \abs{W^{k\ell}_{t-1} - W^{k\ell}_t}} \\
&\overset{(*)}\leq C\alpha_{t-1}, \numberthis
\end{align*}
where $(*)$ is from Lemma \ref{L:ParameterUpdateBound}, which yields that $\| \theta_t - \theta_{t-1} \|_{\max} \leq C \alpha_{t-1}$. Moreover,
\begin{align*}
&\phantom{=}\| F^{\theta_{t-1}}_{A^{ij},k}(X_{t-1}, Z_{t-1}, \bar{S}_{t-1}, \hat{S}_{t-1}) - F^{\theta_t}_{A^{ij},k}(X_{t-1}, Z_{t-1}, \bar{S}_{t-1}, \hat{S}_{t-1}) \|_{\max} \\
&= \abs{\sigma'\bracket{\frac{1}{d} \la \phi(A^{k,:}_{t-1}), X_{t-1} \ra + \frac{1}{N} \la \phi(W^{k,:}_{t-1}), \bar{S}_{t-1} \ra} - \sigma'\bracket{\frac{1}{d} \la \phi(A^{k,:}_t), X_{t-1} \ra + \frac{1}{N} \la \phi(W^{k,:}_t), \bar{S}_{t-1} \ra}} \\
&\phantom{==} \times \abs{\frac{1}{d} \delta_{ik} \phi'(A^{ij}_{t-1}) X_{t-1}^j + \frac{1}{N} \sum_{\ell=1}^N \phi(W^{k,\ell}_{t-1}) \hat{S}_{t-1}^{A^{ij},\ell}} \\
&\phantom{=} + \abs{\sigma'\bracket{\frac{1}{d} \la \phi(A^{k,:}_t), X_{t-1} \ra + \frac{1}{N} \la \phi(W^{k,:}_t), \bar{S}_{t-1} \ra}} \abs{\frac{1}{d} \delta_{ik} (\phi'(A^{ij}_{t-1}) - \phi'(A^{ij}_t)) X^j_{t-1} + \frac{1}{N} \sum_{\ell=1}^N (\phi(W^{k\ell}_{t-1}) - \phi(W^{k\ell}_t)) \hat{S}_{t-1}^{A^{ij},\ell}} \\
&\leq \frac{3}{10} \abs{\frac1d \sum_{\ell=1}^N (\phi(A^{k\ell}_{t-1}) - \phi(A^{k\ell}_t)) X^\ell_{t-1} + \frac1N \sum_{\ell=1}^N (\phi(W^{k\ell}_{t-1}) - \phi(W^{k\ell}_t)) \bar{S}^\ell_{t-1}} \\
&\phantom{==} \times \frac14 \abs{\frac{1}{d} \delta_{ik} (\phi'(A^{ij}_{t-1}) - \phi'(A^{ij}_t)) X^j_{t-1} + \frac{1}{N} \sum_{\ell=1}^N (\phi(W^{k\ell}_{t-1}) - \phi(W^{k\ell}_t)) \hat{S}_{t-1}^{A^{ij},\ell}} \\
&\overset{(*)}\leq C\alpha_{t-1}. \numberthis
\end{align*}
\begin{align*}
&\phantom{=}\| F^{\theta_{t-1}}_{W^{ij},k}(X_{t-1}, Z_{t-1}, \bar{S}_{t-1}, \hat{S}_{t-1}) - F^{\theta_t}_{A^{ij},k}(X_{t-1}, Z_{t-1}, \bar{S}_{t-1}, \hat{S}_{t-1}) \|_{\max} \\
&= \abs{\sigma'\bracket{\frac{1}{d} \la \phi(A^{k,:}_{t-1}), X_{t-1} \ra + \frac{1}{N} \la \phi(W^{k,:}_{t-1}), \bar{S}_{t-1} \ra} - \sigma'\bracket{\frac{1}{d} \la \phi(A^{k,:}_t), X_{t-1} \ra + \frac{1}{N} \la \phi(W^{k,:}_t), \bar{S}_{t-1} \ra}} \\
&\phantom{==} \times \abs{\frac{1}{d} \delta_{ik} \phi'(W^{ij}_{t-1}) \bar{S}_{t-1}^j + \frac{1}{N} \sum_{\ell=1}^N \phi(W^{k,\ell}_{t-1}) \hat{S}_{t-1}^{W^{ij},\ell}} \\
&\phantom{=} + \abs{\sigma'\bracket{\frac{1}{d} \la \phi(A^{k,:}_t), X_{t-1} \ra + \frac{1}{N} \la \phi(W^{k,:}_t), \bar{S}_{t-1} \ra}} \abs{\frac{1}{d} \delta_{ik} (\phi'(W^{ij}_{t-1}) - \phi'(W^{ij}_t)) X^j_{t-1} + \frac{1}{N} \sum_{\ell=1}^N (\phi(W^{k\ell}_{t-1}) - \phi(W^{k\ell}_t)) \hat{S}_{t-1}^{W^{ij},\ell}} \\
&\leq \frac{3}{10} \abs{\frac1d \sum_{\ell=1}^N (\phi(A^{k\ell}_{t-1}) - \phi(A^{k\ell}_t)) X^\ell_{t-1} + \frac1N \sum_{\ell=1}^N (\phi(W^{k\ell}_{t-1}) - \phi(W^{k\ell}_t)) \bar{S}^\ell_{t-1}} \\
&\phantom{==} \times \frac14 \abs{\frac{1}{d} \delta_{ik} (\phi'(W^{ij}_{t-1}) - \phi'(W^{ij}_t)) \bar{S}^j_{t-1} + \frac{1}{N} \sum_{\ell=1}^N (\phi(W^{k\ell}_{t-1}) - \phi(W^{k\ell}_t)) \hat{S}_{t-1}^{W^{ij},\ell}} \\
&\overset{(*)}\leq C\alpha_{t-1}. \numberthis
\end{align*}
Consequently, we have for some finite constant $C<\infty$ that may change from line to line
\begin{align*}
|Q^{1,A}_{i,T}| &\leq \sum_{t=i}^{T-1} |Q^{1,A}(t)| \\
&\leq C \sum_{t=i}^{T-1} \alpha_t \alpha_{t-1} \\
&\leq C < \infty. \numberthis
\end{align*}
\end{proof}

\begin{lemma}\label{L:Q1_Bound}
Define 
\begin{equation}
M_t = u(H_t; \theta_t)-  \int_{\mathcal{X}} u(h'; \theta_t) \, P_{\theta_{t-1}}(\hH_{t-1}, d\sh),
\end{equation}
and for $T > j$,
\begin{equation}
Q^{1,B}_{j,T} = \sum_{t =j}^{T-1} \alpha_t M_t.
\end{equation}
Note that $Q^{1,B}_{T} = Q^{1,B}_{1,T}$. Then, there are constants $C,C_1<\infty$ independent of $T$ so that
\begin{equation}
\mathbb{E} \bigg{[} (Q^{1,B}_{j,T})^2 \bigg{]} \leq C_1 \sum_{t=j}^{T-1} \alpha_t^2 \notag \leq C < \infty.
\label{Q1bound}
\end{equation}
The constant $C$ does not depend upon $T$ or $j$. 
\end{lemma}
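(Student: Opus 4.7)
The plan is to recognize $(M_t)$ as a martingale difference sequence with respect to a natural filtration, and then apply orthogonality of martingale differences together with the boundedness of $u$ and the Robbins--Monro square-summability condition. Let $\mathcal{F}_t = \sigma(\hH_0, \ldots, \hH_t)$. Since $\theta_t = \theta_{t-1} - \alpha_{t-1} G_{t-1}$ where $G_{t-1}$ is a deterministic function of $(\hH_{t-1},\theta_{t-1})$, an easy induction shows that $\theta_t$ is $\mathcal{F}_{t-1}$-measurable. Moreover, conditionally on $\mathcal{F}_{t-1}$, the increment $\hH_t$ is generated from $\hH_{t-1}$ via the one-step kernel $P_{\theta_{t-1}}$ (this is precisely how $\hH_t$ was constructed), so
\begin{equation*}
\mathbb{E}[u(\hH_t; \theta_t)\,|\,\mathcal{F}_{t-1}] \;=\; \int_{\cR} u(\sh; \theta_t) \, P_{\theta_{t-1}}(\hH_{t-1}, d\sh),
\end{equation*}
using that $\theta_t$ is $\mathcal{F}_{t-1}$-measurable and can be pulled inside the conditional expectation. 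Hence $\mathbb{E}[M_t \,|\, \mathcal{F}_{t-1}] = 0$, i.e.\ $(M_t)$ is a martingale difference sequence with respect to $(\mathcal{F}_t)$.

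Second, I would use the uniform boundedness of the Poisson solution established in Lemma \ref{L:SolutionPoisson}: there exists a finite constant $C_u$, independent of $h$ and $\theta$, with $|u(h;\theta)| \leq C_u$. Therefore $|M_t| \leq 2 C_u$ uniformly in $t$ and $\omega$, and in particular $\mathbb{E}[M_t^2] \leq 4 C_u^2$. Since $\alpha_t$ is $\mathcal{F}_{t-1}$-measurable (it is deterministic), the scaled increments $\alpha_t M_t$ are again martingale differences, and by orthogonality
\begin{equation*}
\mathbb{E}\bigl[(Q^{1,B}_{j,T})^2\bigr] \;=\; \sum_{t=j}^{T-1} \alpha_t^2\, \mathbb{E}[M_t^2] \;\leq\; 4 C_u^2 \sum_{t=j}^{T-1} \alpha_t^2.
\end{equation*}

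Finally, the Robbins--Monro condition in Assumption \ref{A:LearningRate} gives $\sum_{t=0}^{\infty} \alpha_t^2 < \infty$, so the right-hand side is bounded uniformly in $j$ and $T$, yielding the claimed estimate with $C_1 = 4 C_u^2$ and $C = 4 C_u^2 \sum_{t=0}^{\infty} \alpha_t^2 < \infty$, neither depending on $T$ or $j$. The only non-routine step is the first one, namely verifying that the conditional distribution of $\hH_t$ given $\mathcal{F}_{t-1}$ is exactly $P_{\theta_{t-1}}(\hH_{t-1}, \cdot)$ despite the parameter $\theta_{t-1}$ being random; this is resolved by observing that $\theta_{t-1}$ is $\mathcal{F}_{t-1}$-measurable and that the online update rule \eqref{Eq:OnlineForwardEquations} produces $\hH_t$ as a measurable function of $(\hH_{t-1}, X_t, Z_t, \eta_t)$ whose conditional law, given $\mathcal{F}_{t-1}$, is $P_{\theta_{t-1}}(\hH_{t-1},\cdot)$ by the construction of the kernel $P_\theta$ in Section \ref{S:FixedPointAnalysis}.
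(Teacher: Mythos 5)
Your proposal is correct and follows essentially the same route as the paper: the key step in both is the identity $\mathbb{E}[u(\hH_t;\theta_t)\mid \hH_{t-1},\theta_{t-1}] = \int_{\cR} u(\sh;\theta_t)\,P_{\theta_{t-1}}(\hH_{t-1},d\sh)$ (justified by the measurability of $\theta_t$ with respect to the past and the Markov structure of $(\hH_t,\theta_t)$), combined with the uniform bound on $u$ from Lemma \ref{L:SolutionPoisson} and $\sum_t \alpha_t^2 < \infty$. The only cosmetic difference is that you invoke martingale-difference orthogonality as a named fact, whereas the paper expands $\mathbb{E}[(Q^{1,B}_{j,T})^2]$ as a double sum and kills the cross terms by iterated expectations, which amounts to the same computation.
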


\begin{proof}
We start by writing
\begin{align*}
\E\sqbracket{(Q^{1,B}_{j,T})^2} 
&= \E\sqbracket{\sum_{i=j}^{T-1} \sum_{t=j}^{T-1} \alpha_t \alpha_i M_t M_i} \\
&= \sum_{i=j}^{T-1} \sum_{t=j}^{T-1} \alpha_t \alpha_i \E[M_t M_i] \\
&= \sum_{i=j}^{T-1} \sum_{t>i, t \leq T-1} \alpha_t \alpha_i \E[M_t M_i] + \sum_{t=j}^{T-1} \sum_{i > t, i \leq T-1} \alpha_t \alpha_i  \E[M_t M_i] + \sum_{t=j}^{T-1} \alpha_t^2 \E[M_t^2]. \numberthis
\end{align*}
Note that the joint process $(\hH_t, \theta_t)$ is Markov (in fact $\theta_{t+1} = \theta_t - \alpha_t g(\hH_t; \theta_t)$ is a deterministic function of $(\hH_t, \theta_t)$). Using iterated expectations, we can show that if $t > i$:
\begin{align*}
\E\sqbracket{M_t \big{\mid} \hH_{i-1}, \theta_{i-1}} 
&= \E\sqbracket{\E\sqbracket{M_t \big{\mid} \hH_{t-1}, \theta_{t-1}} \Big{\mid} \hH_{i-1}, \theta_{i-1}} \\
&= \E\sqbracket{\E\sqbracket{u(\hH_t; \theta_t)-  \int_{\cR} u(\sh; \theta_t) P_{\theta_{t-1}}(H_{t-1}, d\sh) \big{|} \hH_{t-1}, \theta_{t-1}} \Big{\mid} \hH_{i-1}, \theta_{i-1}} \\
&= \E\sqbracket{\E\sqbracket{u(\hH_t; \theta_t) \big{|} \hH_{t-1}, \theta_{t-1}} - \int_{\cR} u(\sh; \theta_t) \, P_{\theta_{t-1}}(\hH_{t-1}, d\sh) \Big{\mid} \hH_{i-1}, \theta_{i-1}}. \numberthis
\end{align*}
We note that
\begin{align*}
&\phantom{==} \E\sqbracket{u(\hH_t; \theta_t) \big{|} \hH_{t-1}, \theta_{t-1}} \\
&= \E\sqbracket{u(X_t, Z_t, F^{\theta_{t-1}}_S(X_{t-1}, Z_{t-1}, \bar{S}_{t-1}, \hat{S}_{t-1}), F^{\theta_{t-1}}_{\tS}(X_{t-1}, Z_{t-1}, \bar{S}_{t-1}, \hat{S}_{t-1}); \theta_t) \big{|} \hH_{t-1}, \theta_{t-1}} \\
&= \int u(\sx, \sz, F^{\theta_{t-1}}_S(X_{t-1}, Z_{t-1}, \bar{S}_{t-1}, \hat{S}_{t-1}), F^{\theta_{t-1}}_{\tS}(X_{t-1}, Z_{t-1}, \bar{S}_{t-1}, \hat{S}_{t-1}); \theta_t)) \, \wp(X_{t-1}, Z_{t-1}, d\sx, d\sz) \\
&= \int u(\sh; \theta_t) P_{\theta_{t-1}}(\hH_{t-1}, d\sh). \numberthis
\end{align*}
Therefore for all $t > i$,
\begin{equation}
\E\sqbracket{M_t \big{\mid} \hH_{i-1}, \theta_{i-1}} = 0.
\end{equation}
Using iterated expectations again,
\begin{align*}
\sum_{i=j}^{T-1} \sum_{t>i, t \leq T-1} \alpha_t \alpha_i \E[M_t M_i]
&= \sum_{i=j}^{T-1} \sum_{t>i, t \leq T-1} \alpha_t \alpha_i \E\sqbracket{ \E\sqbracket{ M_t M_i \big{|} \hH_{i-1}, \theta_{i-1}}} \\
&= \sum_{i=j}^{T-1} \sum_{t > i, t \leq T-1} \alpha_t \alpha_i \E\sqbracket{ M_i \E\sqbracket{ M_t  \big{|} \hH_{i-1}, \theta_{i-1}}} \notag \\
&= 0. \numberthis
\end{align*}
Similarly, we have 
\begin{equation}
\sum_{t=1}^{T-1} \sum_{i>t, i \leq T-1} \alpha_t \alpha_i \E[M_t M_i] = 0.
\end{equation}
Since by Lemma \ref{L:SolutionPoisson} $u(h; \theta)$ is uniformly bounded, $\E[M_t^2] < C < \infty$, where $C$ is uniform 
in $t$. Therefore, since $\sum_{t=0}^{\infty} \alpha_t^2 < \infty$,
\begin{equation}
\E\sqbracket{(Q^{1,B}_{j,T})^2} \leq \sum_{t =j}^{T-1} \alpha_t^2 \E[M_t^2] \leq C \sum_{t=j}^{T-1} \alpha_t^2 < \infty,
\end{equation}
which proves the uniform bound (\ref{Q1bound}).
\end{proof}

\begin{lemma}\label{L:Q2_Bound}
Define
\begin{equation}
Q^2_{j,T} = \sum_{t=j}^{T-1} \alpha_t \bracket{\int_{\cR} u(h'; \theta_t) \, P_{\theta_t}(\hH_{t-1}, d\sh) - \int_{\cR} u(\sh; \theta_t) \, P_{\theta_t}(\hH_t, d\sh)}.
\end{equation}
Note that $Q^2_T = Q^2_{1,T}$. Then, there are constants $C,C_1<\infty$ independent of $T$ so that
\begin{equation}
|Q^2_{j,T}| \leq  C_1 \sum_{t=j}^{T-2} \alpha_{t+1} \alpha_t + \alpha_j C_2 \leq C < \infty.
\label{Q2jTbound}
\end{equation}
The constant $C$ does not depend upon $T$ or $j$. 
\end{lemma}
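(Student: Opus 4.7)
The plan is to rearrange $Q^2_{j,T}$ via Abel summation to exploit the near-telescoping structure. Setting $\Psi_t(h) := \int_{\cR} u(\sh;\theta_t)\,P_{\theta_t}(h,d\sh)$ so that the summand becomes $\alpha_t\bracket{\Psi_t(\hH_{t-1}) - \Psi_t(\hH_t)}$, and reindexing the $\hH_{t-1}$ half of the sum via $s = t-1$, I obtain
\begin{align*}
Q^2_{j,T}
&= \alpha_j \Psi_j(\hH_{j-1}) - \alpha_{T-1}\Psi_{T-1}(\hH_{T-1}) \\
&\pheq + \sum_{t=j}^{T-2}\Big[\alpha_{t+1}\bracket{\Psi_{t+1}(\hH_t) - \Psi_t(\hH_t)} + (\alpha_{t+1}-\alpha_t)\,\Psi_t(\hH_t)\Big].
\end{align*}

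Since $u$ is $\theta$-uniformly bounded by Lemma \ref{L:SolutionPoisson}, the quantity $\Psi_t(h)$ is also uniformly bounded by some $C<\infty$. The two boundary terms contribute at most $C\alpha_j$ using monotonicity $\alpha_{T-1}\leq \alpha_j$, and the mixed term involving $\alpha_{t+1}-\alpha_t$ telescopes (again by monotonicity of $\alpha_t$) via
$$\sum_{t=j}^{T-2}|\alpha_{t+1}-\alpha_t|\,|\Psi_t(\hH_t)| \leq C(\alpha_j - \alpha_{T-1}) \leq C\alpha_j.$$

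The crux of the proof is the estimate $|\Psi_{t+1}(h) - \Psi_t(h)| \leq C\alpha_t$ uniformly in $h\in \cR$. Writing
$$\Psi_t(h) = \int u\bracket{\sx,\sz,F^{\theta_t}_S(h), F^{\theta_t}_{\tS}(h), \tilde\eta;\theta_t}\,\wp((x,z),d\sx\,d\sz)\,\mu_\eta(d\tilde\eta),$$
the $\theta_t$-dependence enters both through the explicit parameter argument of $u$ and through $F^{\theta_t}_S, F^{\theta_t}_{\tS}$. The former contribution is controlled by the uniform $\theta$-derivative bound from Lemma \ref{PoissonEqnDerivativeUnifBound}; the latter combines the global Lipschitz-in-state property of $u$ from Lemma \ref{PoissonSolutionGlobalLipschitz} with $\theta$-Lipschitzness of $F^\theta$ at a fixed state --- precisely the computation already carried out in the proof of Lemma \ref{Q1A_Bound}, which established $\|F^{\theta_{t-1}}(\cdot)-F^{\theta_t}(\cdot)\|_{\max}\leq C\alpha_{t-1}$ by propagating $\|\theta_{t-1}-\theta_t\|_{\max}\leq C\alpha_{t-1}$ (from Lemma \ref{L:ParameterUpdateBound}) through the smooth composition defining $F^\theta$. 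Together these yield $|\Psi_{t+1}(h)-\Psi_t(h)|\leq C\alpha_t$, and hence $\alpha_{t+1}|\Psi_{t+1}(\hH_t)-\Psi_t(\hH_t)|\leq C\alpha_{t+1}\alpha_t$.

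Assembling the three contributions yields the claimed bound
$$|Q^2_{j,T}|\leq C_1\sum_{t=j}^{T-2}\alpha_{t+1}\alpha_t + C_2\,\alpha_j,$$
and the $T$-uniform finiteness follows from Remark \ref{rmk:cauchy_schwarz_on_learning_rate}. The main obstacle is establishing the $\theta$-Lipschitzness of $\Psi_t$, as this is the only step that genuinely uses new machinery (the $\theta$-derivative bound on the Poisson solution from Lemma \ref{PoissonEqnDerivativeUnifBound}); everything else is Abel summation and monotonicity bookkeeping once that estimate is in place.
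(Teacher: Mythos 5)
Your proposal is correct and follows essentially the same route as the paper: the same Abel-summation/reindexing decomposition into boundary terms, a telescoping $(\alpha_{t+1}-\alpha_t)$ piece, and the key increment $\Psi_{t+1}-\Psi_t$, which you split into the change in $u$'s $\theta$-argument (handled by Lemma \ref{PoissonEqnDerivativeUnifBound}) and the change in the kernel $P_{\theta_{t+1}}-P_{\theta_t}$ (handled by Lemma \ref{PoissonSolutionGlobalLipschitz} together with the $\|F^{\theta_{t+1}}-F^{\theta_t}\|_{\max}\leq C\alpha_t$ computation from Lemma \ref{Q1A_Bound}), exactly as in the paper's proof.
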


\begin{proof}
We begin by decomposing $Q^2_{j,T}$ into several terms:
\begin{align*}
Q^2_{j,T} &= \sum_{t=j}^{T-1} \alpha_t \bracket{\int_\cR u(\sh; \theta_t) P_{\theta_t}(\hH_{t-1}, d\sh) - \int_\cR u(\sh; \theta_t)  P_{\theta_t}(\hH_t, d\sh)} \\
&= \alpha_j \int_\cR u(\sh; \theta_j)  P_{\theta_j}(\hH_{j-1}, d\sh) + \sum_{t=j+1 }^{T-1} \alpha_t \int_\cR u(\sh; \theta_t)  P_{\theta_t}(\hH_{t-1}, d\sh) - \sum_{t=j}^{T-1} \alpha_t \int_\cR u(\sh; \theta_t)  P_{\theta_t}(\hH_t, d\sh) \\
&= \alpha_j \int_\cR u(\sh; \theta_j) P_{\theta_j}(\hH_{j-1}, d\sh)  + \sum_{t=j}^{T-2} \alpha_{t+1} \int_\cR u(\sh; \theta_{t+1})  P_{\theta_{t+1}}(\hH_t, d\sh) - \sum_{t=j}^{T-1} \alpha_t \int_\cR u(\sh; \theta_t)  P_{\theta_t}(\hH_t, d\sh) \\
&= \sum_{t=j}^{T-2} \sqbracket{\alpha_{t+1} \int_\cR u(\sh; \theta_{t+1}) P_{\theta_{t+1}}(\hH_{t}, d\sh) - \alpha_t\int_\cR u(\sh; \theta_t) P_{\theta_t}(\hH_{t}, d\sh)} + R_{j,T}^2 \\
&= \sum_{t=j}^{T-2} \alpha_{t+1} \int_\cR u(\sh; \theta_{t+1})  \big{(} P_{\theta_{t+1}}(\hH_t, d\sh) -  P_{\theta_{t}}(\hH_t, d\sh)  \big{)}  + \sum_{t=j}^{T-2} (\alpha_{t+1} - \alpha_t) \int_\cR u(\sh; \theta_{t+1}) P_{\theta_t}(\hH_{t}, d\sh) \\
&\phantom{==} + \sum_{t=j}^{T-2} \alpha_t \int_\cR \big{(} u(\sh; \theta_{t+1}) -  u(\sh; \theta_t)  \big{)} P_{\theta_t}(\hH_t, d\sh) + R_{j,T}^2, \numberthis
\end{align*}
where 
\begin{equation}
R_{j,T}^2 = \alpha_j  \int_{\mathcal{X}} u(h'; \theta_j)  P_{\theta_j}( dh'| H_{j-1})  + \alpha_{T-1} \int_{\mathcal{X}} u(\sh; \theta_{T-1}) P_{\theta_{T-1}}(\hH_{T-1}, d\sh).
\end{equation}
Since by Lemma \ref{L:SolutionPoisson} $u(h; \theta)$ is uniformly bounded by some constant $C < +\infty$ and
$\alpha_t$ is monotonically decreasing,
\begin{equation}
|R_{j,T}^2| \leq C\alpha_j.
\end{equation}
For the same reason, we shall also have
\begin{equation*}
\abs{\sum_{t=j}^{T-2} (\alpha_{t+1} - \alpha_t) \int_\cR u(\sh; \theta_{t+1}) P_{\theta_t}(\hH_{t}, d\sh)} \leq C \sum_{t=j}^{T-2} (\alpha_t - \alpha_{t+1}) \leq C\alpha_j.
\end{equation*}

Recall from equation (\ref{UniformBoundonThetaUpdate}) that $\| \theta_{t+1} - \theta_t \|_{\max} < C\alpha_t$. Due to the uniform bound from Lemma \ref{PoissonEqnDerivativeUnifBound} and the Remark \ref{rmk:cauchy_schwarz_on_learning_rate} $ \sum_{t=1}^{\infty} \alpha_{t+1} \alpha_t < \infty$, we can prove the second term is bounded:
\begin{align*}
\abs{\sum_{t=j}^{T-2} \alpha_{t+1}  \int_\cR \big{(} u(\sh; \theta_{t+1}) - u(\sh; \theta_t) \big{)} P_{\theta_t}(H_t, d\sh)}  &\leq \sum_{t=j}^{T-2} \alpha_{t+1} \int_\cR C \underbrace{\norm{\frac{\partial u}{\partial \theta}(\sh; \theta^{\ast}_t)}_{\max}}_{\leq C \text{ (by Lemma \ref{PoissonEqnDerivativeUnifBound})}} \underbrace{\norm{\theta_{t+1} - \theta_t}_{\max}}_{\leq C\alpha_t} P_{\theta_t}(\hH_{t}, d\sh) \\
&\leq C \sum_{t=j}^{T-2} \alpha_{t+1} \alpha_t \leq C < \infty.
\end{align*}
The first term can also be bounded in the same way as Lemma \ref{Q1A_Bound}, noticing that
\begin{align*}
&\phantom{=}\abs{\int_\cR u(\sh; \theta_{t+1}) \big{(} P_{\theta_{t+1}}(\hH_{t}, d\sh) -  P_{\theta_{t}}(\hH_{t}, d\sh) \big{)}} \\
&= \bigg| \int u(\sx,\sz,F^{\theta_{t+1}}_S(X_t, Z_t, \bar{S}_t, \hat{S}_t), F^{\theta_{t+1}}_{\tS}(X_t, Z_t, \bar{S}_t, \hat{S}_t); \theta_{t+1}) \, \wp(X_t, Z_t, d\sx, d\sz) \mu_\eta(d\eta) \\
&\phantom{====} - \int u(\sx,\sz,F^{\theta_t}_S(X_t, Z_t, \bar{S}_t, \hat{S}_t), F^{\theta_t}_{\tS}(X_t, Z_t, \bar{S}_t, \hat{S}_t); \theta_{t+1}) \, \wp(X_t, Z_t, d\sx, d\sz) \mu_\eta(d\eta) \bigg| \\
&\leq C \Big(\norm{F^{\theta_{t+1}}_S(X_t, Z_t, \bar{S}_t, \hat{S}_t) - F^{\theta_t}_S(X_t, Z_t, \bar{S}_t, \hat{S}_t)}_{\max} \vee \norm{F^{\theta_{t+1}}_{\tS}(X_t, Z_t, \bar{S}_t, \hat{S}_t) - F^{\theta_t}_{\tS}(X_t, Z_t, \bar{S}_t, \hat{S}_t)}_{\max} \Big),
\end{align*}
as $u(h; \theta)$ is globally Lipschitz in $h$ (with Lipschitz constant $L_u$ uniform in $\theta$) by Lemma \ref{PoissonSolutionGlobalLipschitz}. 
Consequently, 
\begin{equation}
\abs{\sum_{t=j}^{T-2} \alpha_{t+1} \int_\cR  u(\sh; \theta_{t+1}) \big{(} P_{\theta_{t+1}}(\hH_{t}, d\sh) -  P_{\theta_{t}}(\hH_t, d\sh) \big{)}} \leq C \sum_{t =j }^{T-2} \alpha_{t+1} \alpha_t < \infty.
\end{equation}
 The finite constant $C<\infty$ may change from line to line. Combining these bounds proves the uniform bound (\ref{Q2jTbound}). 
\end{proof}

\begin{lemma}\label{L:R_T_Bound}
For any $j < T$, the remainder term $R_t$ satisfies
\begin{equation}
\sum_{t=j}^{T-1}|R_t| \leq C_1 \sum_{t=j}^{T-1}\alpha^{2}_{t} \notag \leq C < \infty.
\end{equation}
The constant $C$ does not depend upon $T$ or $j$. 
\end{lemma}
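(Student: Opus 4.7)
The plan is to directly bound $|R_t|$ by a constant times $\alpha_t^2$, and then invoke the summability $\sum_{t\geq 0}\alpha_t^2 < \infty$ from Assumption \ref{A:LearningRate}. The proof is essentially a Taylor-remainder estimate, exploiting the two ingredients we have already established about the RTRL dynamics: the step size is $O(\alpha_t)$, and the Hessian of the limit loss is uniformly bounded.

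I first recall that $R_t = (\theta_{t+1}-\theta_t)^{\top}\, \Hess_\theta L(\theta_t^\ast)\, (\theta_{t+1}-\theta_t)$ for some $\theta_t^\ast$ on the segment between $\theta_t$ and $\theta_{t+1}$, as appearing in the Taylor expansion (\ref{ObjFunctionEvolution}). By Cauchy--Schwarz,
\begin{equation*}
|R_t| \leq \|\theta_{t+1}-\theta_t\|_2 \cdot \|\Hess_\theta L(\theta_t^\ast) (\theta_{t+1}-\theta_t)\|_2 \leq \|\Hess_\theta L(\theta_t^\ast)\|_{\sF}\, \|\theta_{t+1}-\theta_t\|_2^2,
\end{equation*}
where $\|\cdot\|_{\sF}$ is the Frobenius norm, which dominates the operator norm.

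Next, I apply Corollary \ref{L:BoundDerivativesL}, which yields a uniform bound $\|\Hess_\theta L(\theta)\|_{\max} \leq C$, and hence $\|\Hess_\theta L(\theta)\|_{\sF} \leq C d_\theta$, uniformly in $\theta\in \Theta$. For the step-size factor, I use the a-priori bound (\ref{UniformBoundonThetaUpdate}) from Lemma \ref{L:ParameterUpdateBound}, giving $\|\theta_{t+1}-\theta_t\|_2 \leq C\alpha_t$. Combining,
\begin{equation*}
|R_t| \leq C_1 \alpha_t^2,
\end{equation*}
for a finite constant $C_1<\infty$ independent of $t$.

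Finally, summing over $j \leq t \leq T-1$ and using Assumption \ref{A:LearningRate},
\begin{equation*}
\sum_{t=j}^{T-1} |R_t| \leq C_1 \sum_{t=j}^{T-1}\alpha_t^2 \leq C_1 \sum_{t=0}^{\infty}\alpha_t^2 =: C < \infty,
\end{equation*}
with $C$ independent of $j$ and $T$, which is the claimed bound. There is no serious obstacle here: the estimate is a direct consequence of results already proved in the paper; it is included only to confirm that the quadratic remainder term from the Taylor expansion of $L$ is summable and therefore does not interfere with the convergence analysis carried out in Section \ref{S:ConvergenceLongTimeAlgorithm}.
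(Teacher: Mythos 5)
Your proof is correct and follows essentially the same route as the paper: both arguments combine the uniform Hessian bound from Corollary \ref{L:BoundDerivativesL} with the step-size bound $\|\theta_{t+1}-\theta_t\|_2 \leq C\alpha_t$ from Lemma \ref{L:ParameterUpdateBound} to get $|R_t|\leq C_1\alpha_t^2$, and then sum using Assumption \ref{A:LearningRate}. The only cosmetic difference is that you bound the quadratic form via Cauchy--Schwarz and the Frobenius norm, whereas the paper expands it entrywise over the parameter labels; the content is identical.
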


\begin{proof}
We recall that $R_t = (\theta_{t+1} - \theta_t )^{\top} \Hess_{\theta} L(\theta_t^{\ast}) (\theta_{t+1} - \theta_t )$ where $H_{\theta}(\theta)$ is the Hessian matrix of $L(\theta)$. Let us recall that  $\theta = \{ A, W, B, C \} \in \Theta$, with $\Theta=\mathbb{R}^{d_{\theta}}$ where $d_{\theta}=N\times d+N\times N+N+d$ is be the dimension of the parameter space $\Theta$. If $\diamondsuit$ and $\heartsuit$ are the labels of the parameters (i.e., $A^{ij}, W^{ij}, B^i$ or $C^i$), then
\begin{align}
R_t &= (\theta_{t+1} - \theta_t )^{\top} H_{\theta}(\theta_t^{\ast}) (\theta_{t+1} - \theta_t )\nonumber\\
&=\sum_{\heartsuit, \diamondsuit}(\theta_{t+1}^{\heartsuit} - \theta_{t}^{\heartsuit} ) \frac{\partial^{2} L(\theta_t^{\ast})}{\partial \heartsuit \partial \diamondsuit} (\theta_{j}^{\diamondsuit} - \theta_{j}^{\diamondsuit})
\end{align}
By Corollary \ref{L:BoundDerivativesL} we have that the second order partial derivatives $\frac{\partial^{2} L(\theta_t^{\ast})}{\partial \heartsuit \partial \diamondsuit}$ are uniformly bounded. Namely, there is some finite constant $C<\infty$ so that for all $i,j\in\{1,\cdots,d_{\Theta}\}$
\begin{align}
\norm{\frac{\partial^{2} L(\theta_t^{\ast})}{\partial \heartsuit \partial \diamondsuit}}\leq C.
\end{align}
This, together with the bound (\ref{UniformBoundonThetaUpdate}) and the assumption $\sum_{t=0}^{\infty}\alpha^{2}_{t}<\infty$ yields
\begin{align*}
\sum_{t=j}^{T-1}|R_t| &\leq Cd_\theta \sum_{t=j}^{T-1} \left|\theta_{t+1} - \theta_{t} \right|^{\max}_{2} 
\leq C \sum_{t=j}^{T-1}\alpha^{2}_t 
<\infty,
\end{align*}
for some constant $C<\infty$ that may differ from line to line but is uniform in time. This completes the proof of the lemma.
\end{proof}

We are now ready to prove Proposition \ref{P:ControlFluctuationsTerm}.

\begin{proof} (of Proposition \ref{P:ControlFluctuationsTerm})
The proof follows immediately by triangle inequality and using the bounds from Lemmas \ref{Q1A_Bound}, \ref{L:Q1_Bound}, \ref{L:Q2_Bound} and \ref{L:R_T_Bound}.
\end{proof}

\section{Convergence as $t \rightarrow \infty$}\label{S:ConvergenceLongTimeAlgorithm}

We will now prove convergence of the RTRL algorithm as $t \rightarrow \infty$. Recall that
\begin{eqnarray}
L(\theta_{t+1}) =  L(\theta_t) - \alpha_t \nabla_{\theta} L(\theta_t)^{\top} \nabla_{\theta} L(\theta_t) + \alpha_t \nabla_{\theta} L(\theta_t)^{\top} (G_t - \nabla_{\theta} L(\theta_t)) + R_t. 
\label{ObjFunctionEvolution2}
\end{eqnarray}
Recalling Proposition \ref{P:ControlFluctuationsTerm} this representation suggests potential convergence of $\theta_t$ to critical points of $L(\theta)$. In particular, in this section
we establish Theorem \ref{T:ConvergenceCriticalPointRTRL}.
\begin{theorem}\label{T:ConvergenceCriticalPointRTRL}
Assume that Assumptions \ref{as:data_generation}, \ref{A:L_0_Bound} and \ref{A:LearningRate} hold. Then, we have that
\begin{align}
\lim_{t\rightarrow\infty}\mathbb{E}\|\nabla_{\theta} L(\theta_t)\|_{2}&=0.\nonumber
\end{align}
\end{theorem}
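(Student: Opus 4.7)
The strategy is classical for stochastic approximation schemes: first, obtain the summability $\sum_{t\geq 0} \alpha_t \mathbb{E}\|\nabla_\theta L(\theta_t)\|_2^2 < \infty$ by telescoping the descent identity~\eqref{ObjFunctionEvolution2}, and then upgrade the resulting $\liminf$ conclusion to a full limit via a cycle-of-stopping-times argument driven by the boundedness of $\mathsf{Hess}_\theta L$ from Corollary~\ref{L:BoundDerivativesL}.

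First, substituting the Poisson representation~\eqref{Eq:PoissonEquationFluctuations} into~\eqref{ObjFunctionEvolution2} and summing from $t=0$ to $T-1$ yields
\begin{equation*}
\sum_{t=0}^{T-1}\alpha_t \|\nabla_\theta L(\theta_t)\|_2^2 \;=\; L(\theta_0) - L(\theta_T) + \sum_{t=0}^{T-1}\Bigl[\alpha_t\bigl(u(\hH_t;\theta_t) - \int_{\mathcal{R}} u(\sh;\theta_t)\,P_{\theta_t}(\hH_t,d\sh)\bigr) + R_t\Bigr].
\end{equation*}
Taking expectations, using $L(\theta)\geq 0$, and applying Proposition~\ref{P:ControlFluctuationsTerm} gives
$$\sum_{t=0}^{T-1}\alpha_t\,\mathbb{E}\|\nabla_\theta L(\theta_t)\|_2^2 \;\leq\; L(\theta_0) + C,$$
uniformly in $T$. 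Sending $T\to\infty$ and combining with $\sum_t \alpha_t=\infty$ from Assumption~\ref{A:LearningRate} forces $\liminf_{t\to\infty}\mathbb{E}\|\nabla_\theta L(\theta_t)\|_2^2 = 0$.

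To upgrade the liminf to a full limit, I would combine Lemma~\ref{L:ParameterUpdateBound} (which gives $\|\theta_{t+1}-\theta_t\|_2\leq C\alpha_t$) with the $\theta$-uniform boundedness of $\nabla_\theta L$ and $\mathsf{Hess}_\theta L$ from Corollary~\ref{L:BoundDerivativesL} to obtain the one-step increment estimate
$$\bigl|\mathbb{E}\|\nabla_\theta L(\theta_{t+1})\|_2^2 - \mathbb{E}\|\nabla_\theta L(\theta_t)\|_2^2\bigr| \;\leq\; C'\alpha_t,$$
since $\theta\mapsto \|\nabla_\theta L(\theta)\|_2^2$ is then globally Lipschitz. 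Arguing by contradiction, if $\limsup_{t\to\infty}\mathbb{E}\|\nabla_\theta L(\theta_t)\|_2^2 \geq \epsilon$ for some $\epsilon>0$, one can use the liminf result to construct interleaved stopping times $\sigma_k<\tau_k\to\infty$ with $\mathbb{E}\|\nabla_\theta L(\theta_{\sigma_k})\|_2^2 \geq \epsilon$, $\mathbb{E}\|\nabla_\theta L(\theta_{\tau_k})\|_2^2 \leq \epsilon/2$, and $\mathbb{E}\|\nabla_\theta L(\theta_t)\|_2^2 > \epsilon/2$ for $\sigma_k\leq t<\tau_k$. The increment bound then forces $\sum_{t=\sigma_k}^{\tau_k-1}\alpha_t \geq \epsilon/(2C')$, hence $\sum_{t=\sigma_k}^{\tau_k-1}\alpha_t\,\mathbb{E}\|\nabla_\theta L(\theta_t)\|_2^2 \geq \epsilon^2/(4C')$ per cycle, contradicting the summability established above. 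Therefore $\mathbb{E}\|\nabla_\theta L(\theta_t)\|_2^2\to 0$, and the conclusion follows from Jensen's inequality $\mathbb{E}\|\nabla_\theta L(\theta_t)\|_2 \leq \sqrt{\mathbb{E}\|\nabla_\theta L(\theta_t)\|_2^2}$.

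The main obstacle is the cycle-of-stopping-times step: one must verify the expectation-level increment bound using the a-priori Lipschitz and boundedness properties of Section~\ref{S:AprioriBounds}, and then carefully choose the stopping times so that they are well-defined, accumulate to $\infty$, and partition the time axis into cycles whose individual contributions sum to $+\infty$, producing the contradiction. The remaining manipulations are routine applications of Proposition~\ref{P:ControlFluctuationsTerm} and the Robbins--Monro conditions.
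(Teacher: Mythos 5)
Your proposal is correct and follows the same overall architecture as the paper: telescope the descent identity \eqref{ObjFunctionEvolution2}, control the fluctuation and remainder terms through the Poisson-equation representation \eqref{Eq:PoissonEquationFluctuations} and Proposition \ref{P:ControlFluctuationsTerm} to obtain $\sum_t \alpha_t\,\mathbb{E}\|\nabla_\theta L(\theta_t)\|_2^2<\infty$ (the paper's Lemma \ref{L:LossFcnConvergence}), deduce the liminf statement from the Robbins--Monro conditions (Lemma \ref{L:GradientLossConv_liminf}), and finish with a crossing argument for the limsup (Lemma \ref{L:GradientLossConv_limsup}). The one place where you deviate is in how the crossing argument is closed. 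The paper tracks $\mathbb{E}\|\nabla_\theta L(\theta_t)\|_2$ itself, uses the Lipschitz bound $\mathbb{E}\|\nabla_\theta L(\theta_{\hat t})\|_2-\mathbb{E}\|\nabla_\theta L(\theta_t)\|_2\leq L_{\nabla L}C\sum_{s=t}^{\hat t-1}\alpha_s$ to force $\liminf\sum_{s=t}^{\hat t-1}\alpha_s>0$, and then makes a \emph{second} pass through the descent identity over each crossing interval, invoking the convergence of $\mathbb{E}[L(\theta_t)]$ together with Proposition \ref{P:ControlFluctuationsTerm} to show $\sum_{s=t}^{\hat t-1}\alpha_s\to0$, a contradiction. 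You instead track the squared quantity $\mathbb{E}\|\nabla_\theta L(\theta_t)\|_2^2$ (whose one-step increments are $O(\alpha_t)$ because $\nabla_\theta L$ is both bounded and Lipschitz by Corollary \ref{L:BoundDerivativesL} and Lemma \ref{L:ParameterUpdateBound}), and derive the contradiction directly from the summability bound: each crossing cycle contributes at least $\epsilon^2/(4C')$ to $\sum_t\alpha_t\,\mathbb{E}\|\nabla_\theta L(\theta_t)\|_2^2$, which is finite. Your variant is slightly more economical in that it does not need the convergence of $\mathbb{E}[L(\theta_t)]$, only the summability, and it avoids re-estimating the fluctuation terms over each cycle; the paper's variant hews more closely to Proposition 1 of \cite{BertsekasThitsiklis2000}. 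The only loose end in your write-up is the choice of thresholds for the interleaved times (you should fix $0<\epsilon'<\epsilon$ so that $\mathbb{E}\|\nabla_\theta L(\theta_t)\|_2^2>\epsilon'$ infinitely often and $<\epsilon'/2$ infinitely often, and cross between $\epsilon'/2$ and $\epsilon'$), but this is routine and does not affect correctness.
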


This theorem builds on some intermediate estimates that collectively lead to the statements for the liminf and the limsup of Lemmas \ref{L:GradientLossConv_liminf} and \ref{L:GradientLossConv_limsup} respectively that then directly yield the convergence theorem.

\begin{lemma}\label{L:LossFcnConvergence}
Assume that Assumptions \ref{as:data_generation}, \ref{A:L_0_Bound} and \ref{A:LearningRate} hold. $\mathbb{E}[L(\theta_t)]$ converges to a finite value as $t \rightarrow \infty$ and $\sum_{t=0}^{\infty} \alpha_t \mathbb{E}[ \nabla_{\theta} L(\theta_t)^{\top} \nabla_{\theta} L(\theta_t) ] < \infty $. 
\end{lemma}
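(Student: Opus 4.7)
The plan is to sum the one-step decomposition (\ref{ObjFunctionEvolution2}) into a telescoping identity for $L(\theta_T)-L(\theta_0)$ and then exploit the fact, established in Proposition \ref{P:ControlFluctuationsTerm}, that the fluctuation and Hessian-remainder contributions are absolutely summable in $L^1$. Concretely, I will rewrite the martingale-like fluctuation using the Poisson representation (\ref{Eq:PoissonEquationFluctuations}) and set
\[
\Xi_t \;:=\; \alpha_t\bigl(u(\hH_t;\theta_t)-\int_{\cR} u(\sh;\theta_t)\,P_{\theta_t}(\hH_t,d\sh)\bigr)+R_t,
\]
so that summation of (\ref{ObjFunctionEvolution2}) yields
\begin{equation*}
L(\theta_T)=L(\theta_0)-\sum_{t=0}^{T-1}\alpha_t\,\|\nabla_\theta L(\theta_t)\|_2^{2}+\sum_{t=0}^{T-1}\Xi_t.
\end{equation*}

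Taking expectations and rearranging gives
\begin{equation*}
\sum_{t=0}^{T-1}\alpha_t\,\mathbb{E}\bigl[\|\nabla_\theta L(\theta_t)\|_2^{2}\bigr]
\;=\;L(\theta_0)-\mathbb{E}[L(\theta_T)]+\mathbb{E}\Bigl[\sum_{t=0}^{T-1}\Xi_t\Bigr].
\end{equation*}
By Proposition \ref{P:ControlFluctuationsTerm}, $\sum_{t=0}^{T-1}\mathbb{E}|\Xi_t|\leq C$ uniformly in $T$, so $\mathbb{E}[\sum_{t=0}^{T-1}\Xi_t]$ is absolutely bounded. Moreover, since $L$ is a nonnegative quadratic loss in bounded quantities (the integrand defining $L(\theta)$ in Lemma \ref{lem:loss_limit} is bounded on the compact hyper-rectangle $\mathcal{R}$), we have $0\leq \mathbb{E}[L(\theta_T)]\leq C$. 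Combining these two bounds shows that the partial sums $\sum_{t=0}^{T-1}\alpha_t\mathbb{E}\|\nabla_\theta L(\theta_t)\|_2^{2}$ are monotone nondecreasing and bounded above, hence convergent; this gives $\sum_{t=0}^{\infty}\alpha_t\mathbb{E}\|\nabla_\theta L(\theta_t)\|_2^{2}<\infty$.

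For the convergence of $\mathbb{E}[L(\theta_T)]$, observe that the same identity rewrites as
\begin{equation*}
\mathbb{E}[L(\theta_T)]=L(\theta_0)-\sum_{t=0}^{T-1}\alpha_t\mathbb{E}\|\nabla_\theta L(\theta_t)\|_2^{2}+\mathbb{E}\Bigl[\sum_{t=0}^{T-1}\Xi_t\Bigr].
\end{equation*}
The first sum on the right converges by the previous step. For the second, absolute summability $\sum_{t=0}^{\infty}\mathbb{E}|\Xi_t|<\infty$ together with Fubini implies that $\sum_{t=0}^{\infty}\Xi_t$ converges in $L^{1}$, so $\mathbb{E}[\sum_{t=0}^{T-1}\Xi_t]$ admits a finite limit. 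Hence $\mathbb{E}[L(\theta_T)]$ converges to a finite value as $T\to\infty$.

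The only nontrivial input is the uniform-in-$T$ bound of Proposition \ref{P:ControlFluctuationsTerm}, which has already been established through the Poisson equation analysis of Section \ref{S:PoissonEquations} and the decomposition into $Q^{1,A}$, $Q^{1,B}$, $Q^{2}$ and $R_t$ carried out in Lemmas \ref{Q1A_Bound}--\ref{L:R_T_Bound}; the remainder of the argument is a standard Robbins--Siegmund-type bookkeeping and is not expected to be the main obstacle. The one subtle point to verify carefully is the interchange of expectation and the (now infinite) sum of $\Xi_t$, which is justified by the $L^1$-summability $\sum_{t}\mathbb{E}|\Xi_t|\leq C$ via Tonelli/dominated convergence.
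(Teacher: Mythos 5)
Your telescoping skeleton is the same as the paper's, and your conclusion is correct, but there is a genuine gap at the central step: you convert the fluctuation control into term-by-term $L^{1}$ absolute summability, $\sum_{t}\mathbb{E}|\Xi_t|<\infty$, and then invoke absolute convergence of $\sum_t \Xi_t$ in $L^{1}$. That bound is not available. Each increment satisfies only $\mathbb{E}|\Xi_t|\leq C\alpha_t$ (the Poisson term $u(\hH_t;\theta_t)-\int_{\cR}u(\sh;\theta_t)P_{\theta_t}(\hH_t,d\sh)$ is a bounded, $O(1)$ quantity with no reason to be small in $L^1$), and $\sum_t\alpha_t=\infty$. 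The boundedness of the fluctuation contribution comes entirely from cancellation \emph{across} time steps: the martingale part $Q^{1,B}$ is controlled only through orthogonality of its increments, giving $\mathbb{E}[(Q^{1,B}_{j,T})^2]\leq C\sum_{t\geq j}\alpha_t^2$ (Lemma \ref{L:Q1_Bound}) while $\sum_t\mathbb{E}|\alpha_t M_t|$ diverges, and the $Q^{2}$ part is tamed only after Abel summation (Lemma \ref{L:Q2_Bound}); only $Q^{1,A}$ and $R_t$ are genuinely $O(\alpha_t^2)$ term by term. Although Proposition \ref{P:ControlFluctuationsTerm} is typeset with the sum outside the absolute expectation, its proof (triangle inequality over Lemmas \ref{Q1A_Bound}--\ref{L:R_T_Bound}) only delivers a bound on the partial sums, i.e.\ on $\mathbb{E}\bigl|\sum_{t}\Xi_t\bigr|$ and its tails, so you cannot lean on the stronger term-by-term reading.

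The argument is repairable, and the repair is exactly what the paper does: work with the decomposition started at a general index $i$, so that
\begin{equation*}
\Bigl|\mathbb{E}\Bigl[\textstyle\sum_{t=i}^{T-1}\Xi_t\Bigr]\Bigr|\;\leq\;\mathbb{E}|Q^{1,A}_{i+1,T}|+\mathbb{E}\bigl[(Q^{1,B}_{i+1,T})^2\bigr]^{1/2}+\mathbb{E}|Q^{2}_{i+1,T}|+\mathbb{E}|R^{1}_i|+\sum_{t=i}^{T-1}\mathbb{E}|R_t|\;\leq\;C_1\alpha_i+C_2\sum_{t\geq i}\alpha_t^2+C_3\Bigl(\sum_{t\geq i}\alpha_t^2\Bigr)^{1/2},
\end{equation*}
which tends to $0$ as $i\to\infty$ uniformly in $T$. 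This tail (Cauchy-type) control is what replaces your ``$L^{1}$-summability'': it yields $\limsup_{T}\mathbb{E}[L(\theta_T)]\leq\liminf_{i}\mathbb{E}[L(\theta_i)]$, hence convergence of $\mathbb{E}[L(\theta_T)]$ since $L\geq 0$, and then setting $i=0$ and letting $T\to\infty$ in the same inequality gives $\sum_{t}\alpha_t\mathbb{E}\|\nabla_\theta L(\theta_t)\|_2^2<\infty$. Note also that the Cauchy--Schwarz step $\mathbb{E}|Q^{1,B}_{i+1,T}|\leq\mathbb{E}[(Q^{1,B}_{i+1,T})^2]^{1/2}$ is indispensable here precisely because only a second-moment bound on the whole martingale sum is available.
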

\begin{proof}
We adapt the proof from Lemma 1 in \cite{BertsekasThitsiklis2000} to prove this lemma. 

Let $i < T$. The evolution of the loss function from time step $i$ to $T$ satisfies:
\begin{align*}
L(\theta_T) &= L(\theta_i) - \sum_{t=i}^{T-1} \alpha_t \nabla_{\theta} L(\theta_t)^{\top} \nabla_{\theta} L(\theta_t) + \sum_{t = i}^{T-1}  \alpha_t \bracket{u(\hH_t; \theta_t) - \int_\cR u(\sh; \theta_t) P_{\theta_t}(\hH_t, d\sh)} + \sum_{t=i}^{T-1} R_t \\
&= L(\theta_i) - \sum_{t=i}^{T-1} \alpha_t \nabla_{\theta} L(\theta_t)^{\top} \nabla_{\theta} L(\theta_t) + Q^{1,A}_{i+1,T} + Q^{1,B}_{i+1,T} + Q^{2}_{i+1,T} + R^{1}_i + \sum_{t=i}^{T-1} R_t. \numberthis
\label{Levolution1}
\end{align*}
Taking an expectation of (\ref{Levolution1}) and using the Cauchy-Schwarz inequality yields:
\begin{eqnarray}
\mathbb{E} \bigg{[} L(\theta_{T}) \bigg{]}  &=&  \mathbb{E} \bigg{[} L(\theta_i) \bigg{]} - \sum_{t = i}^{T-1} \alpha_t \mathbb{E} \bigg{[} \nabla_{\theta} L(\theta_t)^{\top} \nabla_{\theta} L(\theta_t) \bigg{]} \notag \\
&+& \mathbb{E} \bigg{[} | Q^{1,A}_{i+1,T} | \bigg{]}  + \mathbb{E} \bigg{[} | Q^{1,B}_{i+1,T} |^2 \bigg{]}^{\frac{1}{2}} + \mathbb{E} \bigg{[} | Q^{2}_{i+1,T}  | \bigg{]} + \mathbb{E} \bigg{[} | R^{1}_i  | \bigg{]} \notag \\
&+& \sum_{t=i}^{T-1} \mathbb{E} \bigg{[} | R_t | \bigg{]}.
\label{Levolution2}
\end{eqnarray}

Then, applying the bounds from equation (\ref{Rj1_Bound}) and Lemmas \ref{Q1A_Bound}, \ref{L:R_T_Bound}, \ref{L:Q1_Bound} and \ref{Q2jTbound}, the evolution of the loss function satisfies the following bound:
\begin{eqnarray}
\mathbb{E} \bigg{[} L(\theta_{T}) \bigg{]}  &\leq& \mathbb{E} \bigg{[} L(\theta_i) \bigg{]} - \sum_{t = i}^{T-1} \alpha_t \mathbb{E} \bigg{[} \nabla_{\theta} L(\theta_t)^{\top} \nabla_{\theta} L(\theta_t) \bigg{]} + C_1 \alpha_i + C_2 \sum_{t = i}^{T-1} \alpha_t^2 + C_3 \big{(}  \sum_{t = i}^{T-1} \alpha_t^2 \big{)}^{\frac{1}{2}}. 
\label{ExpectedLossBound1}
\end{eqnarray}

Since the second term on the right hand side of (\ref{ExpectedLossBound1}) is non-positive,
\begin{eqnarray}
\mathbb{E} \bigg{[} L(\theta_{T}) \bigg{]}  &\leq& \mathbb{E} \bigg{[} L(\theta_i) \bigg{]} + C_1 \alpha_i + C_2 \sum_{t = i}^{T-1} \alpha_t^2 + C_3 \big{(}  \sum_{t = i}^{T-1} \alpha_t^2 \big{)}^{\frac{1}{2}}. \nonumber
\end{eqnarray}

The above inequality implies that
\begin{eqnarray}
\limsup_{T \rightarrow \infty} \mathbb{E} \bigg{[} L(\theta_{T}) \bigg{]}  &\leq& \mathbb{E} \bigg{[} L(\theta_i) \bigg{]} + C_1 \alpha_i + C_2 \sum_{t = i}^{\infty} \alpha_t^2 + C_3 \big{(}  \sum_{t = i}^{\infty} \alpha_t^2 \big{)}^{\frac{1}{2}} < \infty. 
\label{LimSupBound}
\end{eqnarray}

Since $\sum_{t = 0}^{s} \alpha_t^2$ is monotone increasing in $s$ and $\sum_{t = 0}^{\infty} \alpha_t^2 < C < \infty$, the monotone convergence theorem implies that it must converge to a finite value $L$ as $s \rightarrow \infty$. Then, $\sum_{t = i}^{\infty} \alpha_t^2 = L - \sum_{t = 0}^{i-1} \alpha_t^2 $ and $\lim_{i \rightarrow \infty} \sum_{t = i}^{\infty} \alpha_t^2 = 0$. Therefore,
\begin{eqnarray}
\limsup_{T \rightarrow \infty} \mathbb{E} \bigg{[} L(\theta_{T}) \bigg{]}  &\leq& \liminf_{i \rightarrow \infty}  \mathbb{E} \bigg{[} L(\theta_i) \bigg{]} < \infty. 
\label{LimInfBound}
\end{eqnarray}

Therefore, since $L(\theta)$  is non-negative, $\mathbb{E} \bigg{[} L(\theta_{T}) \bigg{]}$ converges to a finite value. Furthermore, using the bound (\ref{ExpectedLossBound1}) and letting $T \rightarrow \infty$,
\begin{eqnarray}
\sum_{t = 0}^{\infty} \alpha_t \mathbb{E} \bigg{[} \nabla_{\theta} L(\theta_t)^{\top} \nabla_{\theta} L(\theta_t) \bigg{]}  &\leq&  -\lim_{T \rightarrow \infty} \mathbb{E} \bigg{[} L(\theta_{T}) \bigg{]} + \mathbb{E} \bigg{[} L(\theta_0) \bigg{]}   + C_1 \alpha_0 + C_2 \sum_{t = 0}^{\infty} \alpha_t^2 + C_3 \big{(}  \sum_{t = 0}^{\infty} \alpha_t^2 \big{)}^{\frac{1}{2}} \notag \\
&\leq& \mathbb{E} \bigg{[} L(\theta_0) \bigg{]}   + C_1 \alpha_0 + C_2 \sum_{t = 0}^{\infty} \alpha_t^2 + C_3 \big{(}  \sum_{t = 0}^{\infty} \alpha_t^2 \big{)}^{\frac{1}{2}} \notag \\
&\leq& C < \infty.\nonumber
\end{eqnarray}
\end{proof}

\begin{lemma}\label{L:GradientLossConv_liminf}
Assume that Assumptions \ref{as:data_generation}, \ref{A:L_0_Bound} and \ref{A:LearningRate} hold. We have that
\begin{align}
\liminf_{t\rightarrow\infty}\mathbb{E}\|\nabla_{\theta} L(\theta_t)\|_{2}&=0.
\end{align}
\end{lemma}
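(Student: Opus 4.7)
The plan is to argue by contradiction, using the summability estimate already established in Lemma \ref{L:LossFcnConvergence}, namely
\[
\sum_{t=0}^{\infty} \alpha_t \, \mathbb{E}\bigl[\nabla_{\theta} L(\theta_t)^\top \nabla_{\theta} L(\theta_t)\bigr] = \sum_{t=0}^{\infty} \alpha_t \, \mathbb{E}\bigl[\|\nabla_{\theta} L(\theta_t)\|_2^2\bigr] < \infty,
\]
combined with the divergence of the learning-rate series $\sum_t \alpha_t = \infty$ from Assumption \ref{A:LearningRate}.

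Suppose, for contradiction, that $\liminf_{t\to\infty} \mathbb{E}\|\nabla_{\theta} L(\theta_t)\|_{2} = \delta > 0$. Then there exists $T_0 \in \mathbb{N}$ such that $\mathbb{E}\|\nabla_{\theta} L(\theta_t)\|_{2} \geq \delta/2$ for every $t \geq T_0$. By Jensen's inequality applied to the convex function $x \mapsto x^2$, this gives
\[
\mathbb{E}\bigl[\|\nabla_{\theta} L(\theta_t)\|_2^2\bigr] \;\geq\; \bigl(\mathbb{E}\|\nabla_{\theta} L(\theta_t)\|_{2}\bigr)^2 \;\geq\; \frac{\delta^2}{4} \qquad \text{for all } t \geq T_0.
\]
Multiplying through by $\alpha_t$ and summing from $T_0$ to infinity then yields
\[
\sum_{t=T_0}^{\infty} \alpha_t \, \mathbb{E}\bigl[\|\nabla_{\theta} L(\theta_t)\|_2^2\bigr] \;\geq\; \frac{\delta^2}{4} \sum_{t=T_0}^{\infty} \alpha_t \;=\; +\infty,
\]
since the tail of a divergent series of nonnegative terms still diverges. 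This directly contradicts the finiteness statement in Lemma \ref{L:LossFcnConvergence}, and so the assumption $\delta > 0$ must fail.

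This argument is essentially one line once Lemma \ref{L:LossFcnConvergence} is in hand, so there is no substantial obstacle: the Jensen step converts the $L^1$ statement of the lemma into the $L^2$ form required to invoke the summability bound, and the divergence of $\sum \alpha_t$ finishes the argument. I would note briefly that this lemma only yields $\liminf = 0$, not convergence; the separate Lemma \ref{L:GradientLossConv_limsup} (relying on the continuity/regularity of $\theta \mapsto \nabla_\theta L(\theta)$ and the a priori bound $\|\theta_{t+1}-\theta_t\|_2 \leq C\alpha_t$ from Lemma \ref{L:ParameterUpdateBound}) is what bridges $\liminf$ and $\limsup$, producing the full convergence stated in Theorem \ref{T:ConvergenceCriticalPointRTRL}.
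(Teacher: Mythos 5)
Your proof is correct and follows essentially the same route as the paper: a contradiction argument combining the summability bound $\sum_t \alpha_t\,\mathbb{E}[\|\nabla_\theta L(\theta_t)\|_2^2]<\infty$ from Lemma \ref{L:LossFcnConvergence} with the divergence $\sum_t\alpha_t=\infty$. Your explicit Jensen step bridging $\mathbb{E}\|\nabla_\theta L(\theta_t)\|_2$ and $\mathbb{E}[\|\nabla_\theta L(\theta_t)\|_2^2]$ is a small but welcome clarification that the paper's own proof leaves implicit.
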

\begin{proof}
The proof of this lemma is standard and follows directly from Lemma \ref{L:LossFcnConvergence}. Indeed, let us assume that there is some constant $\eta>0$ so that for some $\tau>0$ and for all $t\geq\tau$ we have that $\mathbb{E}\|\nabla_{\theta} L(\theta_t)\|^{2}_{2} \geq \eta$. Then, we naturally have that
\begin{align}
\sum_{t=\tau}^{\infty}\alpha_{t}\mathbb{E}\|\nabla_{\theta} L(\theta_t)\|^{2}_{2}\|&\geq \eta \sum_{t=\tau}^{\infty}\alpha_{t}=\infty,
\end{align}
which contradicts the statement of Lemma \ref{L:LossFcnConvergence}. This concludes the proof of the lemma.
\end{proof}

\begin{lemma}\label{L:GradientLossConv_limsup}
Assume that Assumptions \ref{as:data_generation}, \ref{A:L_0_Bound} and \ref{A:LearningRate} hold. We have that
\begin{align}
\limsup_{t\rightarrow\infty}\mathbb{E}\|\nabla_{\theta} L(\theta_t)\|_{2}&=0.
\end{align}
\end{lemma}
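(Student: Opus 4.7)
The plan is to proceed by contradiction, using a cycle-of-stopping-times argument that exploits the summability $\sum_{t\geq 0}\alpha_t\,\mathbb{E}[\|\nabla_{\theta} L(\theta_t)\|_{2}^{2}]<\infty$ already established inside the proof of Lemma \ref{L:LossFcnConvergence}. Two preparatory observations are used repeatedly. First, Corollary \ref{L:BoundDerivativesL} gives a uniform bound on the Hessian of $L$, so $\nabla_{\theta} L$ is globally Lipschitz on $\Theta$ with some constant $L_{\nabla}<\infty$, and $\|\nabla_{\theta} L(\theta)\|_{2}\leq C_L$ for all $\theta$. Second, Lemma \ref{L:ParameterUpdateBound} yields the deterministic bound $\|\theta_{t+1}-\theta_t\|_{2}\leq C_\theta \alpha_t$. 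Finally, combining Lemma \ref{L:GradientLossConv_liminf} with the boundedness of $\|\nabla_{\theta} L(\theta_t)\|_2$ gives $\liminf_{t\to\infty}\mathbb{E}\|\nabla_\theta L(\theta_t)\|_{2}^{2}=0$, and by Jensen's inequality $(\mathbb{E}\|\nabla_\theta L(\theta_t)\|_2)^{2}\leq \mathbb{E}\|\nabla_\theta L(\theta_t)\|_{2}^{2}$, so it suffices to prove $\limsup_{t\to\infty}\mathbb{E}\|\nabla_\theta L(\theta_t)\|_{2}^{2}=0$.

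Assume for contradiction that $\rho:=\limsup_{t\to\infty}\mathbb{E}\|\nabla_\theta L(\theta_t)\|_{2}^{2}>0$, and fix $0<\delta_1<\delta_2<\rho$. With $\sigma_0=0$, recursively define the deterministic times $\tau_k=\inf\{t>\sigma_{k-1}:\mathbb{E}\|\nabla_{\theta} L(\theta_t)\|_{2}^{2}\leq \delta_1\}$ and $\sigma_k=\inf\{t>\tau_k:\mathbb{E}\|\nabla_{\theta} L(\theta_t)\|_{2}^{2}\geq \delta_2\}$. The liminf/limsup assumptions force $\tau_k,\sigma_k<\infty$ for every $k$ (otherwise one of the two asymptotic values would be violated), and by construction $\mathbb{E}\|\nabla_{\theta} L(\theta_t)\|_{2}^{2}>\delta_1$ for every $t\in[\sigma_k,\tau_{k+1})$.

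The core step is a uniform lower bound on the cumulative step sizes within each cycle. Using $|a^{2}-b^{2}|\leq (|a|+|b|)|a-b|$ together with the $L_{\nabla}$-Lipschitzness and $C_L$-boundedness of $\nabla_\theta L$ and Lemma \ref{L:ParameterUpdateBound}, one obtains the pathwise telescoping bound $\big|\|\nabla_{\theta} L(\theta_s)\|_{2}^{2}-\|\nabla_{\theta} L(\theta_t)\|_{2}^{2}\big|\leq 2C_L L_\nabla C_\theta \sum_{r=s}^{t-1}\alpha_r$ for $s<t$. Taking expectations and applying this to $s=\sigma_k$, $t=\tau_{k+1}$ gives $\delta_2-\delta_1\leq \mathbb{E}\|\nabla_\theta L(\theta_{\sigma_k})\|_{2}^{2}-\mathbb{E}\|\nabla_\theta L(\theta_{\tau_{k+1}})\|_{2}^{2}\leq 2C_LL_\nabla C_\theta \sum_{t=\sigma_k}^{\tau_{k+1}-1}\alpha_t$, whence $\sum_{t=\sigma_k}^{\tau_{k+1}-1}\alpha_t\geq c:=(\delta_2-\delta_1)/(2C_LL_\nabla C_\theta)>0$, uniformly in $k$.

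To reach a contradiction, I first observe that there must be infinitely many cycles: if there were only finitely many, then either $\mathbb{E}\|\nabla_\theta L(\theta_t)\|_{2}^{2}>\delta_1$ eventually, contradicting $\liminf=0$, or $\mathbb{E}\|\nabla_\theta L(\theta_t)\|_{2}^{2}<\delta_2$ eventually, contradicting $\limsup=\rho>\delta_2$. Combining infinitely many cycles with the cycle-level lower bound and the pointwise inequality on $[\sigma_k,\tau_{k+1})$ yields $\sum_{t\geq 0}\alpha_t\,\mathbb{E}\|\nabla_\theta L(\theta_t)\|_{2}^{2}\geq \delta_1\sum_{k\geq 1}\sum_{t=\sigma_k}^{\tau_{k+1}-1}\alpha_t\geq \delta_1 c \sum_{k\geq 1}1=\infty$, contradicting Lemma \ref{L:LossFcnConvergence}. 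The main obstacle I expect is verifying the pathwise Lipschitz-in-squared-norm estimate rigorously and uniformly: while Corollary \ref{L:BoundDerivativesL} is stated in the max-norm, the constants transfer to $\|\cdot\|_2$ only up to dimension-dependent factors, but since the parameter dimension $d_\theta=Nd+N^{2}+N+d$ is fixed these factors are harmless and the argument goes through.
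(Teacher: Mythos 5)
Your proof is correct, and it follows the same classical Bertsekas--Tsitsiklis contradiction-by-level-crossings strategy as the paper, but it closes the contradiction by a genuinely different (and somewhat leaner) route. Both arguments use the uniform Lipschitzness of $\nabla_\theta L$ (from Corollary \ref{L:BoundDerivativesL}) together with $\|\theta_{t+1}-\theta_t\|_2\leq C\alpha_t$ (Lemma \ref{L:ParameterUpdateBound}) to show that the cumulative step sizes over each crossing interval are bounded below by a positive constant. The difference is in how that lower bound is contradicted. The paper tracks \emph{up-crossings} of $\mathbb{E}\|\nabla_\theta L(\theta_t)\|_2$ from $\eta/2$ to $\eta$, needs the extra step showing the expected gradient norm stays above $\eta/4$ on each such interval, and then returns to the loss-evolution identity (\ref{ObjFunctionEvolution2}), invoking the convergence of $\mathbb{E}[L(\theta_t)]$ and Proposition \ref{P:ControlFluctuationsTerm} to conclude that $\sum_{s=t}^{\hat t-1}\alpha_s\to 0$. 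You instead track \emph{down-crossings} of $\mathbb{E}\|\nabla_\theta L(\theta_t)\|_2^2$ from $\delta_2$ to $\delta_1$ (which conveniently keeps the quantity above $\delta_1$ on the whole interval $[\sigma_k,\tau_{k+1})$ with no analogue of the $\eta/4$ step), and derive the contradiction directly by summing $\alpha_t\,\mathbb{E}\|\nabla_\theta L(\theta_t)\|_2^2>\delta_1\alpha_t$ over infinitely many disjoint cycles, contradicting the summability statement of Lemma \ref{L:LossFcnConvergence}. This buys you a shorter second half -- you never need to reopen the fluctuation analysis -- at the cost of working with the squared norm and the preliminary Jensen/boundedness reduction, which is harmless here since $\|\nabla_\theta L\|_2$ is uniformly bounded and $d_\theta$ is fixed. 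All the ingredients you cite are available at this point in the paper, so the argument is complete.
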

\begin{proof}
The proof of this result is classical and largely follows the proof of Proposition 1 in \cite{BertsekasThitsiklis2000} modulo some necessary adjustments due to the differences on the setup. We present the full details for completeness.

Let us assume that $\limsup_{t\rightarrow\infty}\mathbb{E}\|\nabla_{\theta} L(\theta_t)\|_{2}>0$. This means that there is some $\eta>0$ such that $\mathbb{E}\|\nabla_{\theta} L(\theta_t)\|_{2}<\eta/2$ for infinitely many $t$ and $\mathbb{E}\|\nabla_{\theta} L(\theta_t)\|_{2}>\eta$ for infinitely many $t$. This means that there is an infinite subset of integers $B\subset\mathbb{N}_{+}$ so that for all $t\in B$ there is some $\hat{t}>t$ with the properties
\begin{align}
\mathbb{E}\|\nabla_{\theta} L(\theta_t)\|_{2}&<\frac{\eta}{2}\nonumber\\
\mathbb{E}\|\nabla_{\theta} L(\theta_{\hat{t}})\|_{2}&>\eta\nonumber\\
\frac{\eta}{2}<\mathbb{E}\|\nabla_{\theta} L(\theta_s)\|_{2}&<\eta, \text{ for }s\in(t,\hat{t}).\nonumber
\end{align}

Clearly $\hat{t}$ depends on $t$, but we do not show this explicitly in the notation. By Corollary \ref{L:BoundDerivativesL}, $\nabla_{\theta} L(\theta)$ is globally Lipschitz in $\ell_2$ norm, so we have
\begin{align}
\mathbb{E}\|\nabla_{\theta} L(\theta_{t+1})\|_{2}-\mathbb{E}\|\nabla_{\theta} L(\theta_t)\|_{2}&\leq \mathbb{E}\|\nabla_{\theta} L(\theta_{t+1})-\nabla_{\theta} L(\theta_{t})\|_{2}\nonumber\\
&\leq L_{\nabla L}\|\theta_{t+1}-\theta_{t}\|_{2}\nonumber\\
&\leq L_{\nabla L} C \alpha_{t},\nonumber
\end{align}
where $L_{\nabla L}$ denotes the global Lipschitz constant of $\nabla L(\theta)$ and where we used (\ref{UniformBoundonThetaUpdate}) in the last line.

Let now $t\in B$ large enough so that $L_{\nabla L} C \alpha_{t}<\frac{\eta}{4}$. Then, we shall have that $\mathbb{E}\|\nabla_{\theta} L(\theta_t)\|_{2}>\frac{\eta}{4}$, otherwise the condition $\mathbb{E}\|\nabla_{\theta} L(\theta_{t+1})\|_{2}>\frac{\eta}{2}$ will be violated. Hence, for all $t\in B$ we shall have that $\mathbb{E}\|\nabla_{\theta} L(\theta_t)\|_{2}>\frac{\eta}{4}$. Next, with that bound in mind, we have the following estimate for all $t\in B$
\begin{align}
\frac{\eta}{2}&\leq \mathbb{E}\|\nabla_{\theta} L(\theta_{\hat{t}})\|_{2}-\mathbb{E}\|\nabla_{\theta} L(\theta_t)\|_{2}\nonumber\\
&\leq \mathbb{E}\|\nabla_{\theta} L(\theta_{\hat{t}})-\nabla_{\theta} L(\theta_t)\|_{2}\nonumber\\
&\leq L_{\nabla L}\|\theta_{\hat{t}}-\theta_{t}\|_{2}\nonumber\\
&\leq L_{\nabla L} C \sum_{s=t}^{\hat{t}-1}\alpha_{s},\nonumber
\end{align}
which then implies that 
\begin{align}
\liminf_{t\rightarrow\infty}\sum_{s=t}^{\hat{t}-1}\alpha_{s}&\geq \frac{\eta}{2L_{\nabla L} C}>0.\label{Eq:LimsupStatementToContradict}
\end{align}

Now, going back to (\ref{ObjFunctionEvolution2}) we get
\begin{align}
\mathbb{E}\left[L(\theta_{\hat{t}})\right] &=  \mathbb{E}\left[L(\theta_t)\right] -  \sum_{s=t}^{\hat{t}-1}\alpha_s \mathbb{E}\left[\|\nabla_{\theta} L(\theta_s)\|^{2}_{2}\right] + \sum_{s=t}^{\hat{t}-1}\mathbb{E}\left[\alpha_s \nabla_{\theta} L(\theta_s)^{\top} ( G_s - \nabla_{\theta} L(\theta_s)  ) + R_s\right]\nonumber\\
&\leq \mathbb{E}\left[L(\theta_t)\right] -  \left(\frac{\eta}{4}\right)^{2}\sum_{s=t}^{\hat{t}-1}\alpha_s  +\sum_{s=t}^{\hat{t}-1} \mathbb{E}\left[\alpha_s \nabla_{\theta} L(\theta_s)^{\top} ( G_s - \nabla_{\theta} L(\theta_s)  ) + R_s\right].\nonumber
\end{align}

This then gives the inequality
\begin{align}
\sum_{s=t}^{\hat{t}-1}\alpha_s&\leq \frac{16}{\eta^{2}}\left[\mathbb{E}\left[L(\theta_t)\right]-\mathbb{E}\left[L(\theta_{\hat{t}})\right]+\sum_{s=t}^{\hat{t}-1}\mathbb{E}\left[\alpha_s \nabla_{\theta} L(\theta_s)^{\top} ( G_s - \nabla_{\theta} L(\theta_s)  ) + R_s\right]\right].\nonumber
\end{align}

Using now the fact that $\mathbb{E}\left[L(\theta_t)\right]$ converges to a finite value as $t\rightarrow\infty$ via Lemma \ref{L:LossFcnConvergence} and the uniform boundedness of $\sum_{s=0}^{\infty}\mathbb{E}\left[\alpha_s \nabla_{\theta} L(\theta_s)^{\top} ( G_s - \nabla_{\theta} L(\theta_s)  ) + R_s\right]$, which follows from (\ref{Eq:PoissonEquationFluctuations}) and Proposition \ref{P:ControlFluctuationsTerm}, we obtain that
\begin{align}
\lim_{t\rightarrow \infty, t\in B}\sum_{s=t}^{\hat{t}-1}\alpha_s&=0,\nonumber
\end{align}
which contradicts (\ref{Eq:LimsupStatementToContradict}). Hence, we shall indeed have that $\limsup_{t\rightarrow\infty} \mathbb{E}\|\nabla_{\theta} L(\theta_t)\|_{2}=0$ as desired.
\end{proof}

\section{Numerical Examples}\label{S:Numerics}

In this section, we compare the numerical performance of the RTRL algorithm with the more widely-used TBPTT algorithm for several examples. Section \ref{NumericalLinearRNN} evaluates the performance of RTRL for linear RNNs on synthetic data. Section \ref{NumericalElmanRNN} evaluates RTRL performance for single-layer RNNs (with the Elman architecture) on synthetic data. A comparison of RTRL and TBPTT for neural ODEs is provided in Section \ref{NumericalNeuralODE}. RTRL is compared with TBPTT for several different small-scale RNN architectures for natural language processing (NLP) on a sequence of length 1 million characters in Section \ref{NumericalNLP}. Section \ref{NumericalOrderBook} compares RTRL with TBPTT on financial time series data for the order book of a stock. 

In each of these cases, we focus on relatively small models (in terms of the number of parameters and hidden units) when comparing RTRL and TBPTT. Of course, deep learning has found that generally large models can perform better (e.g., millions of parameters), where RTRL would not be computationally tractable. Our objective here is therefore limited to evaluating whether, for a series of fixed problems, RTRL performs better than TBPTT. Improved computational performance would motivate (A) further research into computationally tractable approximations of RTRL for large-scale problems and (B) the application of RTRL to problems where small or medium-scal models are appropriate (i.e., limited or noisy data where large models may overfit).

\subsection{Linear RNNs} \label{NumericalLinearRNN}

Consider the linear RNN
\begin{eqnarray}
S_{t+1} &=& S_t +  W S_t \Delta + B X_t \sqrt{\Delta}, \notag \\
\hat{Y}_t &=& A S_{t+1}, \notag \\
L_T(\theta) &=& \frac{1}{T} \sum_{t=1}^T ( \hat{Y}_t - Y_t )^2,
\label{LinearRNNExample}
\end{eqnarray}
where $Y_t$ is generated from the process
\begin{eqnarray}
S_{t+1}^{\ast} &=& S_t^{\ast} +  W^{\ast} S_t^{\ast} \Delta + B^{\ast} X_t \sqrt{\Delta}, \notag \\
Y_t &=& A^{\ast} S_{t+1}^{\ast}.
\end{eqnarray}
\eqref{LinearRNNExample} can be considered a (linear) neural SDE with time step size $\Delta$
or, alternatively, it can be re-written in the standard form of a linear RNN
by recognizing that $S_{t+1} = \bar{W} S_t + \bar{B} X_t$ where
$\bar{W} = I - W\Delta $ and $\bar{B} = B \sqrt{\Delta}$. To ensure ergodicity of $S_t$, we would need to constrain the $d \times d$ matrix $W$ to be negative definite (assuming a sufficiently small time step size $\Delta$). There are various approaches which could be implemented. $W = -\theta^{\top} \theta$ where $\theta$ is a $d \times d$ matrix parameters would guarantee that $W$ is negative semi-definite. A slightly more mathematically elegant construction of a negative definite matrix is $W = -\exp( \theta^{\top} \theta )$. In the following examples, we do not however impose any constraints on the matrix $W$ and simply let the RTRL/TBPTT algorithms try to directly learn an appropriate matrix parameter. 

In the following numerical example, the data $X_t$ is a $2 \times 1$ vector with an i.i.d. Gaussian distribution, $W$ is a $10 \times 10$ matrix, $B$ is a $10 \times 2$ matrix, $A$ is a $1 \times 10$ vector, and $\Delta = 10^{-2}$. The objective is to train the parameters $\theta = (W, B, A)$ to minimize the error $L_T(\theta)$. The true parameters $\theta^{\ast} = (W^{\ast}, B^{\ast}, A^{\ast})$, which are used to generate the data $Y_t$, are randomly initialized. $W^{\ast}$ is constructed from a Wishart distribution such that it is positive definite while $B^{\ast}$ and $A^{\ast}$ have i.i.d. standard normal elements. Before each training run, the model parameters $\theta$ are randomly initialized. The time step size $\Delta = 10^{-2}$. 

The specific construction for the Wishart distribution of $W^{\ast}$ is:
\begin{eqnarray}
W^{\ast} = \sum_{i=1}^n G^{(i)} ( G^{(i)}  )^{\top},
\end{eqnarray}
where $G^{(i)}$ is a $10 \times 1$ i.i.d. standard Gaussian vector and $n = 20$. 

We compare the standard TBPTT and RTRL algorithms for training the parameters $\theta$ using a mini-batch size of $10^3$. Optimization updates are performed using the RMSprop algorithm with an initial learning rate magnitude $\textrm{LR}_0$, which is gradually reduced according to a learning rate schedule. Figure \ref{LinearRNNfigure1} compares RTRL and TBPTT for several different initial learning rate magnitudes and choices of truncation length $\tau$ for TBPTT. Typically, RTRL performs better than TBPTT. In several cases, TBPTT becomes unstable (hence the log-loss $\log L_T(\theta)$ is only plotted for a few training iterations in some of the figures). In particular, TBPTT seems to require a very carefully selected choice of learning rate magnitude and truncation $\tau$ to achieve good performance. Increasing the standard deviation of the data sequence $X_t$ by a factor of $4$, we re-train the linear RNN with the RTRL and TBPTT algorithms in Figure \ref{LinearRNNfigure2}. RTRL outperforms TBPTT, typically achieving a much smaller loss. 

\begin{figure}[htbp]
\centering
\subfloat[RTRL]{\includegraphics[width=5cm]{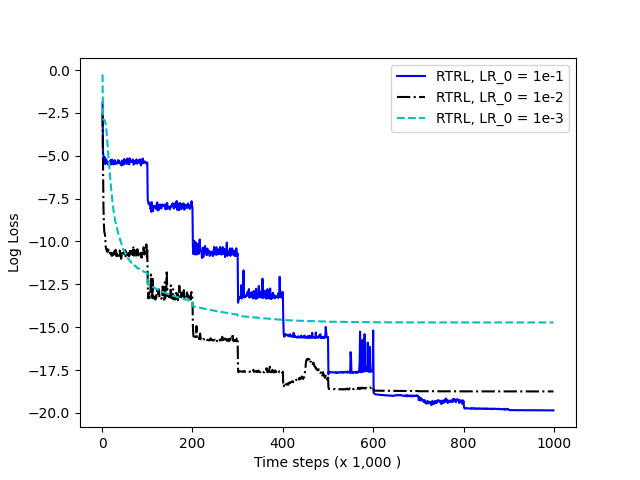}}
\subfloat[TBPTT ($\tau = 1$)]{\includegraphics[width=5cm]{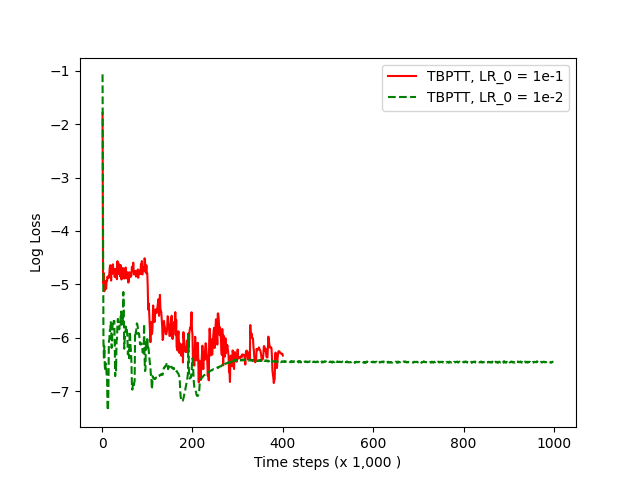}}
\subfloat[TBPTT ($\tau = 2$)]{\includegraphics[width=5cm]{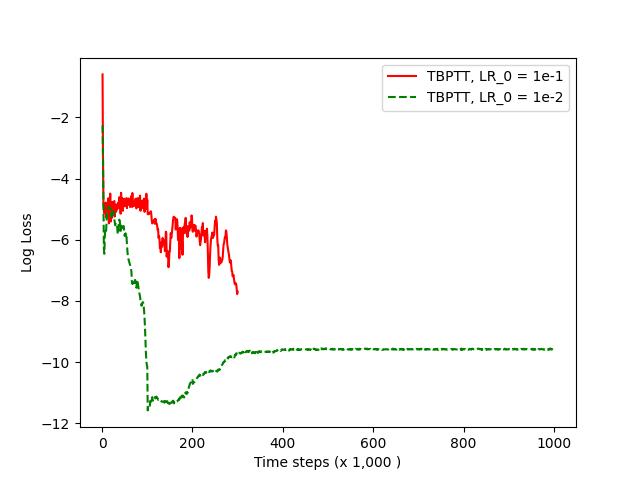}}\\
\subfloat[TBPTT ($\tau = 10$)]{\includegraphics[width=5cm]{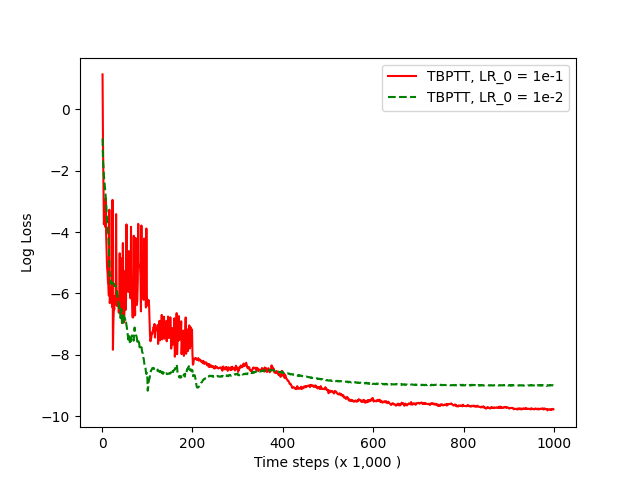}}
\subfloat[TBPTT ($\tau = 100$)]{\includegraphics[width=5cm]{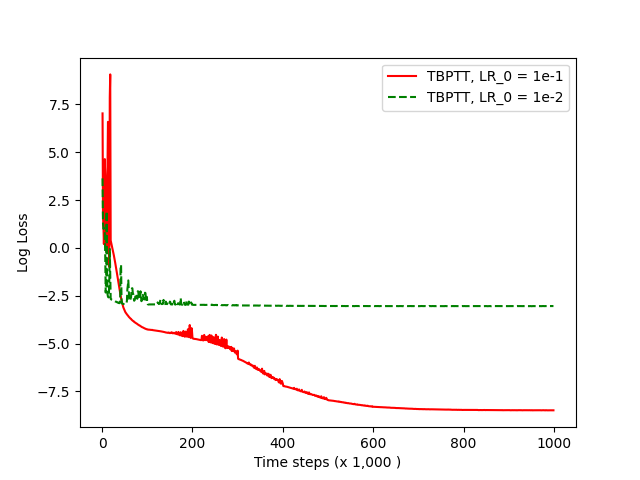}}
\subfloat[TBPTT ($\tau = 1000$)]{\includegraphics[width=5cm]{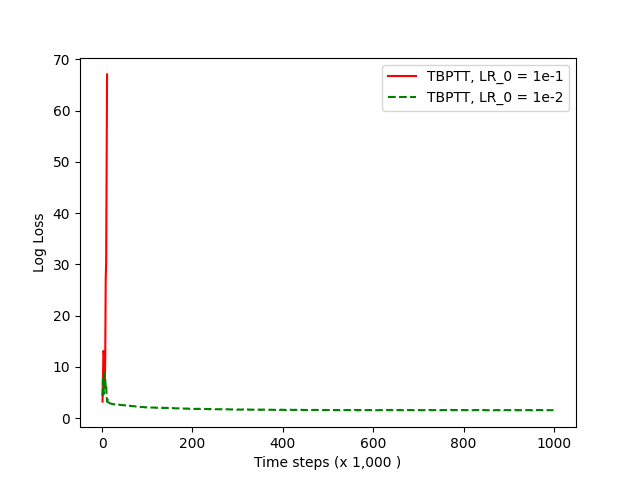}}
\caption{ Clockwise: RTRL, TBPTT ($\tau = 1$), TBPTT ($\tau = 2$), TBPTT ($\tau = 10$), TBPTT ($\tau = 100$), and TBPTT ($\tau = 1,000$).}
\label{LinearRNNfigure1}
\end{figure}

\begin{figure}[htbp]
\centering
\subfloat[RTRL]{\includegraphics[width=5.5cm]{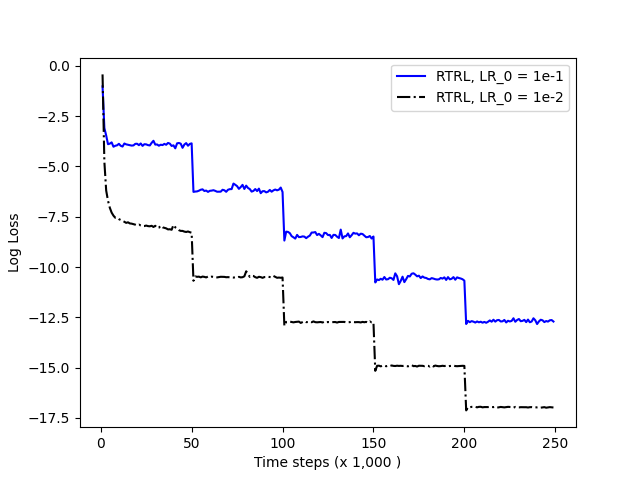}}
\subfloat[TBPTT ($\tau = 2$)]{\includegraphics[width=5.5cm]{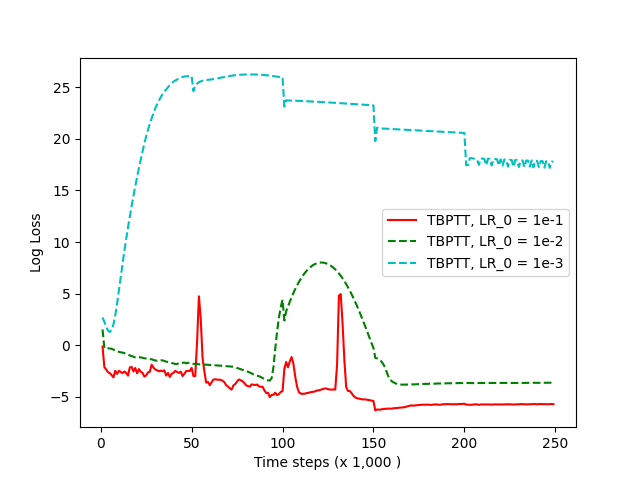}}
\\
\subfloat[TBPTT ($\tau = 10$)]{\includegraphics[width=5.5cm]{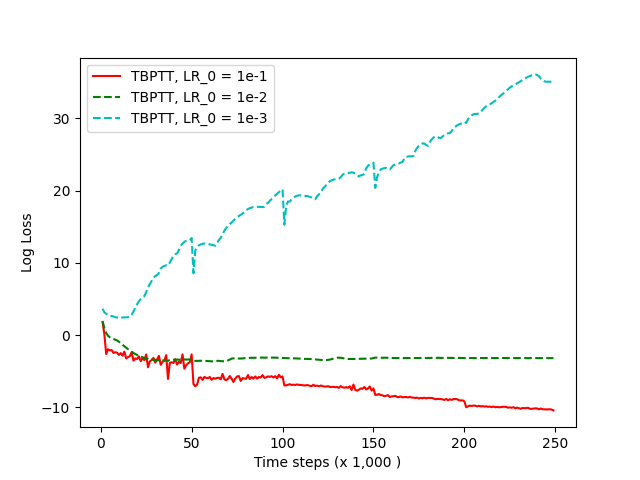}}
\subfloat[TBPTT ($\tau = 100$)]{\includegraphics[width=5.5cm]{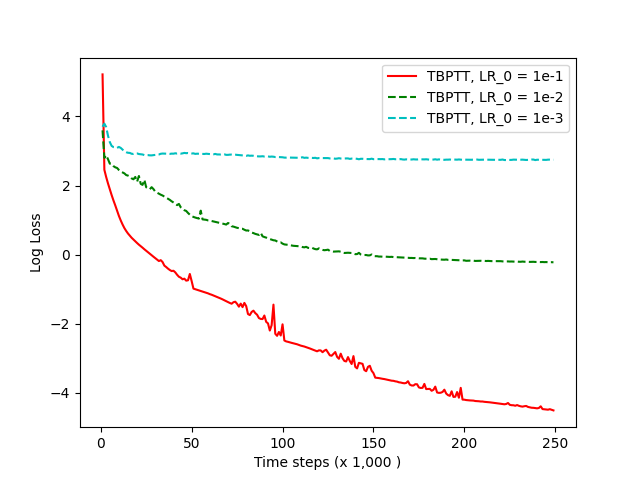}}
\caption{ Clockwise: RTRL, TBPTT ($\tau = 2$), TBPTT ($\tau = 10$), and TBPTT ($\tau = 100$).}
\label{LinearRNNfigure2}
\end{figure}

\subsection{Single-layer RNNs} \label{NumericalElmanRNN}

We now consider a single-layer RNN with the standard Elman network architecture. Specifically,
\begin{eqnarray}
S_{t+1} &=& \sigma(W S_t + B X_t), \notag \\
\hat{Y}_t &=& A S_{t+1}, \notag \\
L_T(\theta) &=& \frac{1}{T} \sum_{t=1}^T (\hat{Y}_t - Y_t)^2,
\end{eqnarray}
where $Y_t$ is generated from the process
\begin{eqnarray}
S_{t+1}^{\ast} &=& \sigma \big{(} \bar{W}^{\ast} S_t^{\ast} + B^{\ast} X_t \big{)}, \notag \\
Y_t &=& A^{\ast} S_{t+1}^{\ast},
\end{eqnarray}
where the activation function $\sigma(\cdot)$ is the $\tanh$ function and the weight matrix $\bar{W}^{\ast}$ is constructed as $(I - c W^{\ast})$ with $c = 10^{-3}$. The true parameters $W^{\ast}, B^{\ast},$ and $A^{\ast}$ are randomly generated in the same way as in the previous section except $B^{\ast}_{i,j}$ has standard deviation $10^{-1}$. $X_t$ is a $2 \times 1$ vector with an i.i.d. Gaussian distribution. For the case of ReLU activation functions (i.e., $\sigma(z) = \max(z,0)$), Figure \ref{SimpleRNNreluFigure} displays convergence results for the TBPTT algorithm while Figure \ref{RTRLSimpleRNNreluFigure} presents the training results for the RTRL algorithm. RTRL outperforms TBPTT for ReLU activation functions. 

We next consider a numerical comparison where the activation function $\sigma(z) = \tanh(z)$. The matrix $\bar{W}^{\ast}$ is again constructed as $(I - c W^{\ast})$ with $c = 10^{-3}$. The TBPTT results are presented in Figure \ref{SimpleRNNtanhC001Figure} while the RTRL results are displayed in Figure \ref{RTRLSimpleRNNtanhFigure}. RTRL outperforms TBPTT, with faster convergence and a lower overall value for the objective function. Finally, we consider $\sigma(z) = \tanh(z)$ where the matrix $\bar{W}^{\ast} = (I - c W^{\ast})$ is constructed with a different value $c = 10^{-2}$. The corresponding RTRL and TBPTT results are displayed in Figures \ref{RTRLSimpleRNNtanhC01Figure} and \ref{SimpleRNNtanhC01Figure}, respectively. RTRL outperforms TBPTT in this case. 

\begin{figure}[htbp]
\centering
\includegraphics[width=5.5cm]{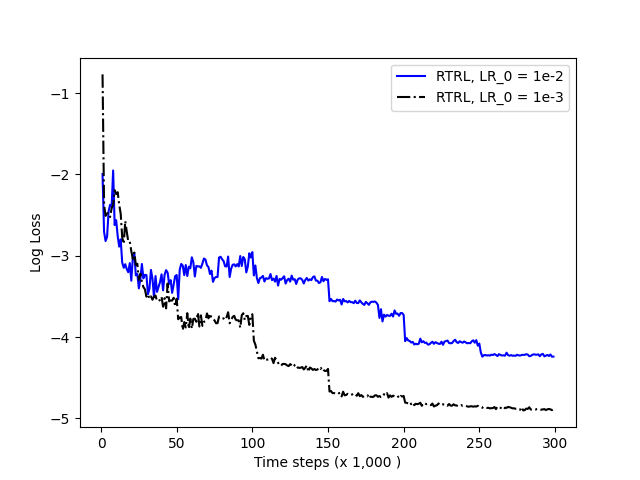}
\caption{RTRL algorithm for ReLU activation functions.}
\label{RTRLSimpleRNNreluFigure}
\end{figure}

\begin{figure}[htbp]
\centering
\subfloat[TBPTT ($\tau = 1$)]{\includegraphics[width=5.5cm]{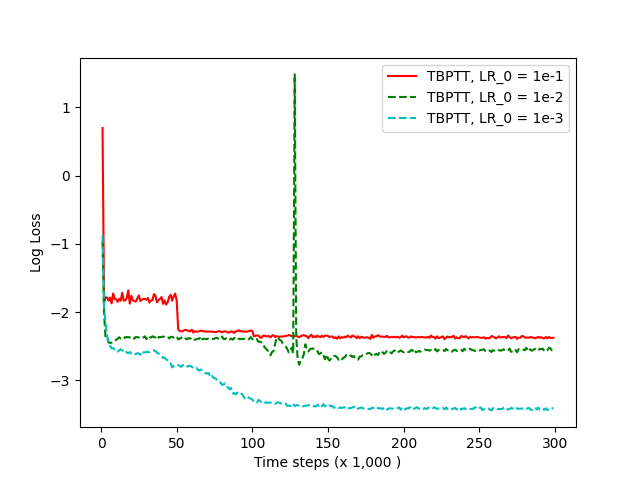}}
\subfloat[TBPTT ($\tau = 2$)]{\includegraphics[width=5.5cm]{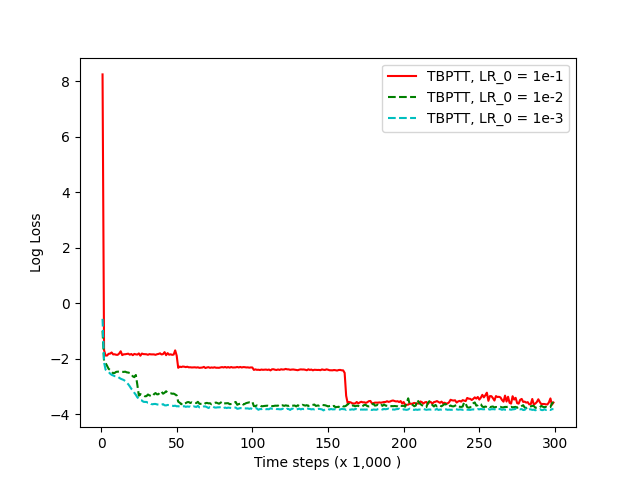}}\\
\subfloat[TBPTT ($\tau = 10$)]{\includegraphics[width=5.5cm]{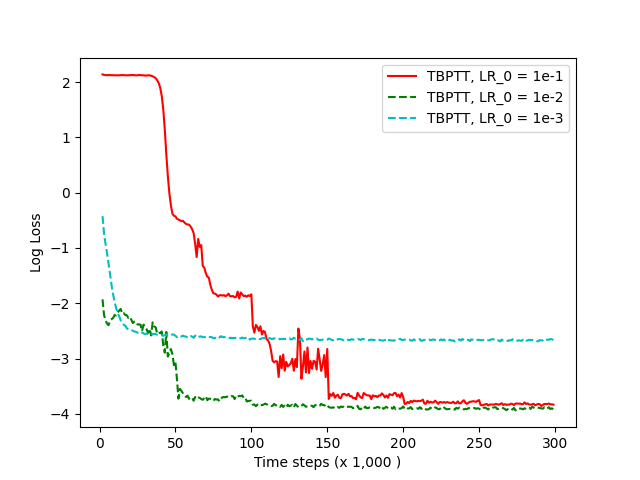}}
\subfloat[TBPTT ($\tau = 1000$)]{\includegraphics[width=5.5cm]{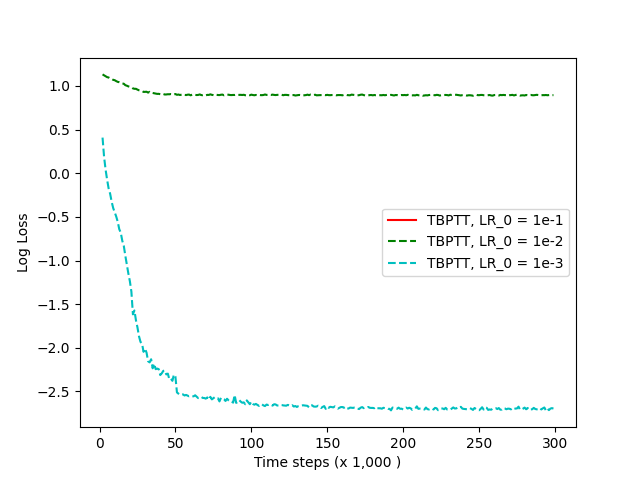}}
\caption{ 
Activation function is a ReLU unit.}
\label{SimpleRNNreluFigure}
\end{figure}

\begin{figure}[htbp]
\centering
\includegraphics[width=5.5cm]{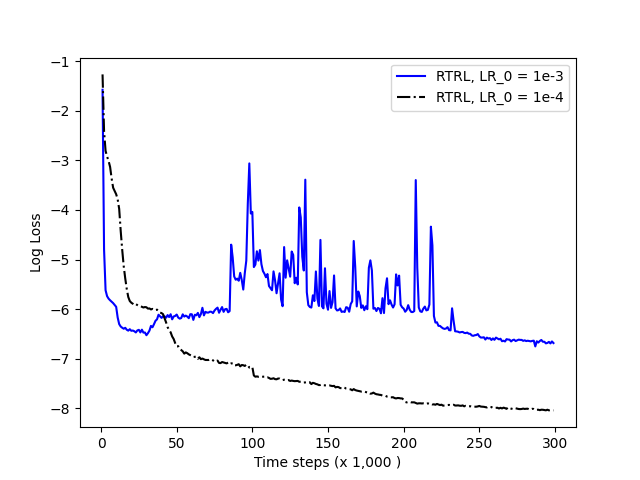}
\caption{ RTRL algorithm for tanh activation functions with $c = 10^{-3}$.}
\label{RTRLSimpleRNNtanhFigure}
\end{figure}

\begin{figure}[htbp]
\centering
\subfloat[TBPTT ($\tau = 1$)]{\includegraphics[width=5.5cm]{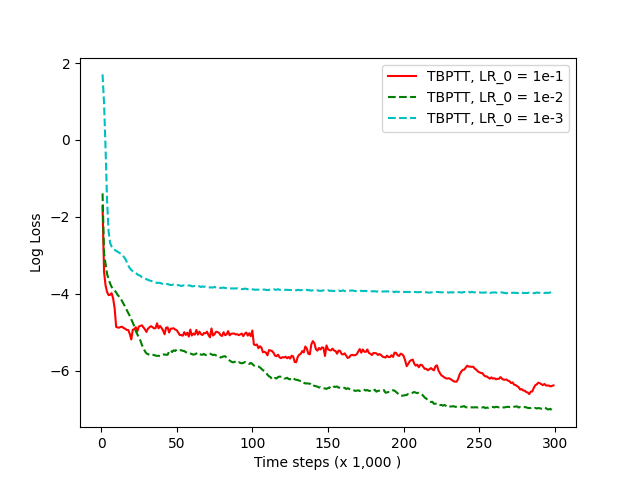}}
\subfloat[TBPTT ($\tau = 2$)]{\includegraphics[width=5.5cm]{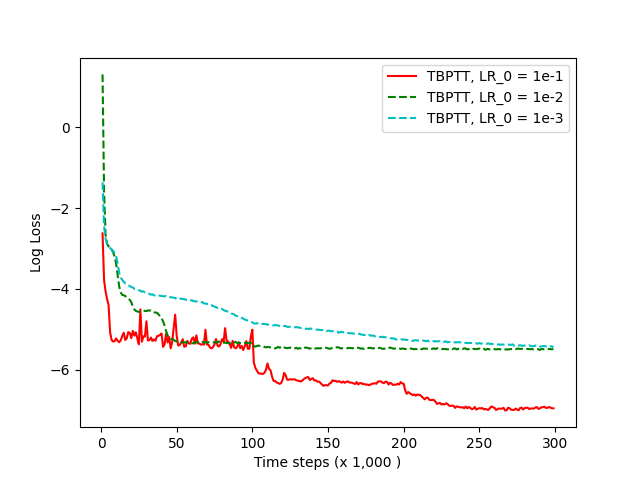}}\\
\subfloat[TBPTT ($\tau = 10$)]{\includegraphics[width=5.5cm]{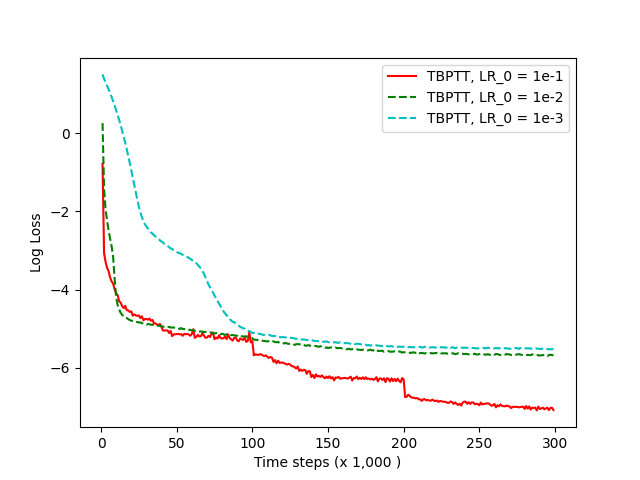}}
\subfloat[TBPTT ($\tau = 1000$)]{\includegraphics[width=5.5cm]{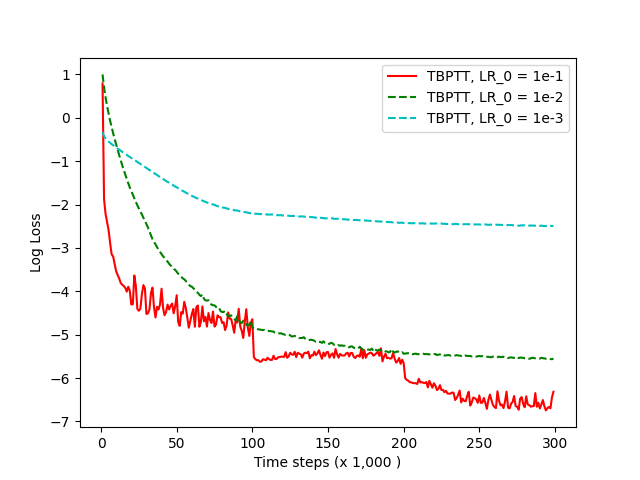}}
\caption{ 
Activation function is a tanh unit with $c = 10^{-3}$.}
\label{SimpleRNNtanhC001Figure}
\end{figure}

\begin{figure}[htbp]
\centering
\includegraphics[width=5.5cm]{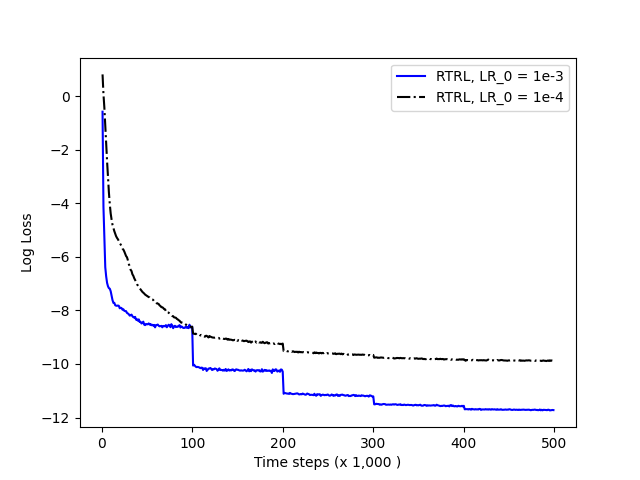}
\caption{ RTRL algorithm for tanh activation functions with $c = 10^{-2}$.}
\label{RTRLSimpleRNNtanhC01Figure}
\end{figure}

\begin{figure}[htbp]
\centering
\subfloat[TBPTT ($\tau = 1$)]{\includegraphics[width=5.5cm]{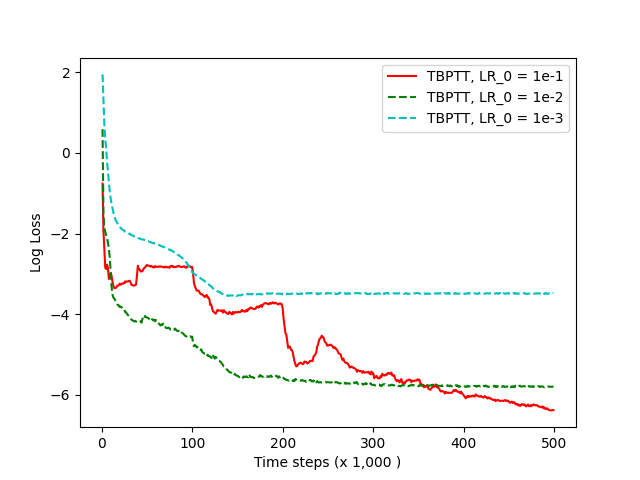}}
\subfloat[TBPTT ($\tau = 2$)]{\includegraphics[width=5.5cm]{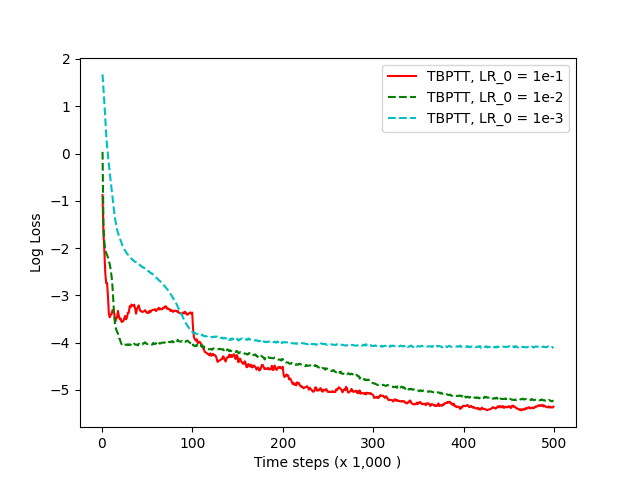}}\\
\subfloat[TBPTT ($\tau = 10$)]{\includegraphics[width=5.5cm]{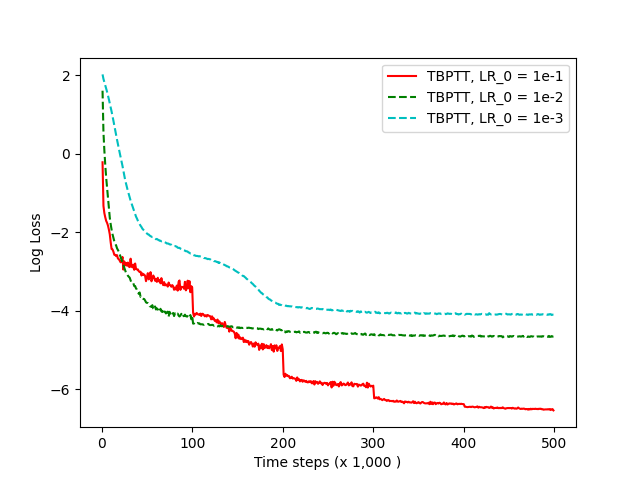}}
\subfloat[TBPTT ($\tau = 1000$)]{\includegraphics[width=5.5cm]{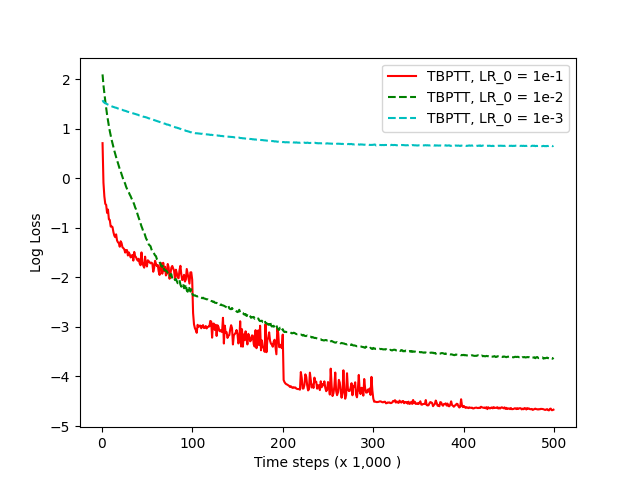}}
\caption{ 
Activation function is a tanh unit with $c = 10^{-2}$.}
\label{SimpleRNNtanhC01Figure}
\end{figure}

\subsection{Neural ODEs/SDEs} \label{NumericalNeuralODE}
We now compare the RTRL and TBPTT algorithms for training a (discretized) neural SDE:

\begin{eqnarray}
S_{t+1} &=& S_t -  \sigma(US_t + BX_t) \Delta - C S_t \Delta, \notag \\
\hat{Y}_t &=& A S_{t+1}, \notag \\
L_T(\theta) &=& \frac{1}{T} \sum_{t=1}^T (\hat{Y}_t - Y_t)^2,
\end{eqnarray}
where $C = \exp( W^{\top} W )$ is positive definite, the parameters to be optimized are $\theta = (U, B, W)$, and $Y_t$ is generated from the process
\begin{eqnarray}
S_{t+1}^{\ast} &=& S_{t}^{\ast} - \sigma(\bar{U}^{\ast} S_t^{\ast} + B^{\ast} X_t) \Delta - W^{\ast} S_t \Delta, \notag \\
Y_t &=& A^{\ast} S_{t+1}^{\ast},
\end{eqnarray}
where $W^{\ast}$ is a positive definite matrix generated from the Wishart distribution. Figure \ref{NeuralODEComparison} compares the performance of the TBPTT and RTRL algorithms for training the parameters $\theta$.

\begin{figure}[htbp]
\centering
\includegraphics[width=5.5cm]{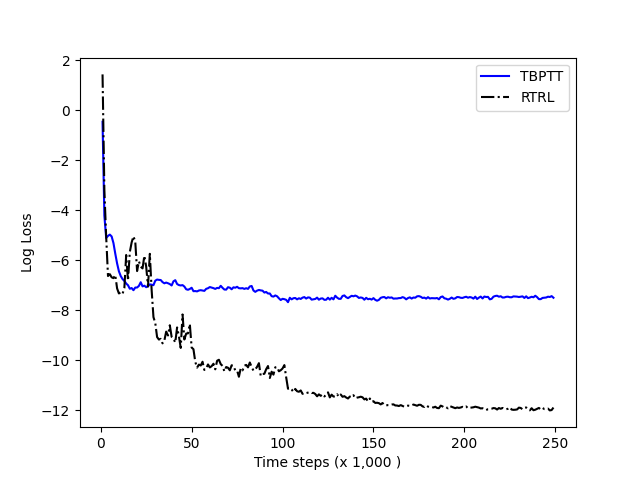}
\caption{ Comparison of TBPTT and RTRL for training a neural ODE.}
\label{NeuralODEComparison}
\end{figure}

\subsection{Natural Language Processing} \label{NumericalNLP}
We now compare RTRL and TBPTT for predicting the next character in a sequence of 1 million characters from Shakespeare \cite{Karpathy2015}. The objective
function is the cross-entropy error. Figure \ref{NLPcomparison} presents the comparisons for several different architectures, including: a GRU-type architecture with 10 and 20 units, an Elman network with tanh activation functions 10 and 50 units, and an Elman network with ELU activation functions with 10 and 50 units. RTRL does not consistently perform better than TBPTT, although it does perform better in several notable cases, including the GRU network with 20 units. Since the results are for a specific dataset with relatively small model sizes (due to the computational cost of RTRL), it is difficult to draw general conclusions except that it may be worthwhile, for a specific dataset, to implement and compare both RTRL and TBPTT to determine which yields the most accurate model. 

\begin{figure}[htbp]
\centering
\subfloat[GRU (10 hidden units)]{\includegraphics[width=5.5cm]{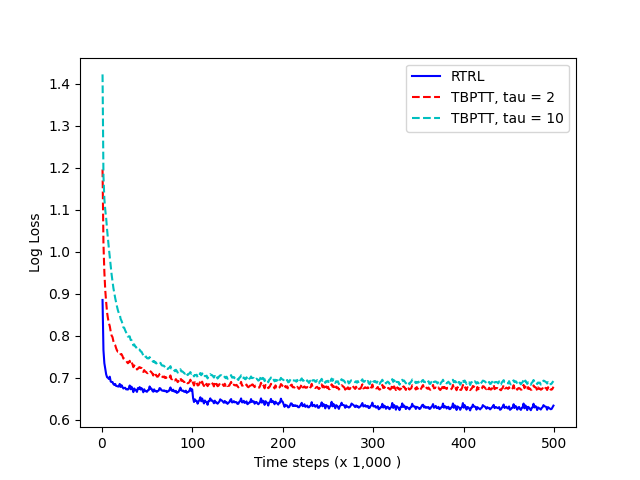}}
\subfloat[GRU (20 hidden units)]{\includegraphics[width=5.5cm]{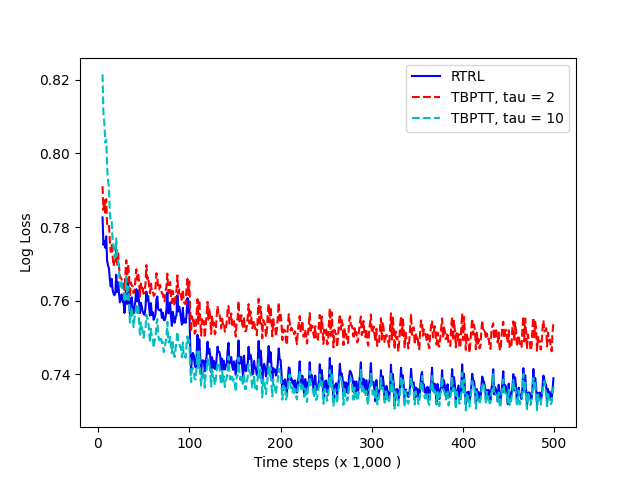}}\\
\subfloat[Elman RNN with 10 tanh hidden units]{\includegraphics[width=5.5cm]{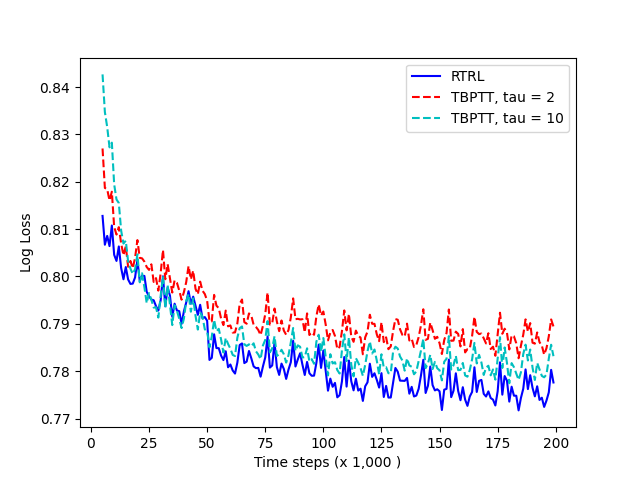}}
\subfloat[Elman RNN with 50 tanh hidden units]{\includegraphics[width=5.5cm]{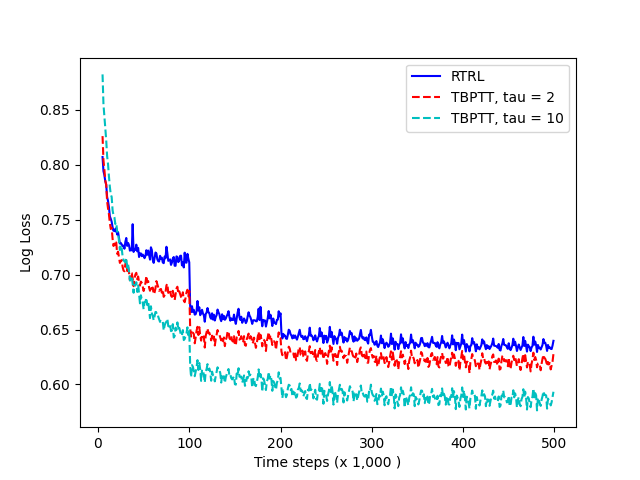}}\\
\subfloat[Elman RNN with 10 ELU hidden units]{\includegraphics[width=5.5cm]{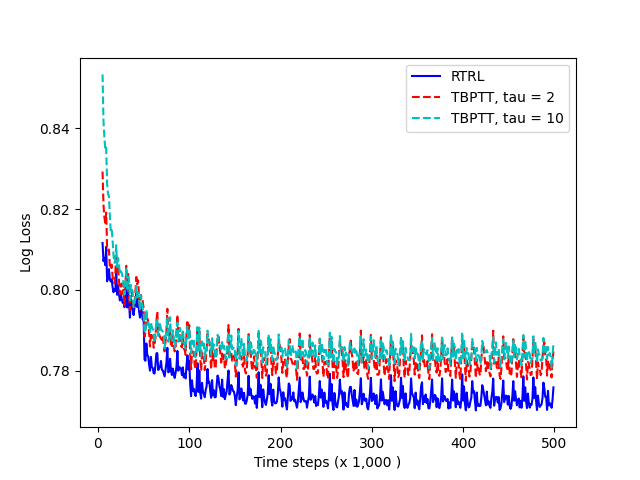}}
\subfloat[Elman RNN with 50 ELU hidden units]{\includegraphics[width=5.5cm]{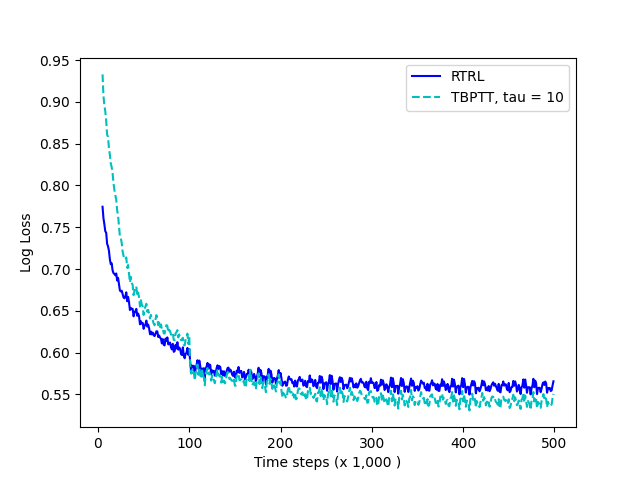}}
\caption{ First row: a GRU network (10 hidden units) and a GRU network (20 hidden units). Second row: Elman RNN with 10 tanh hidden units and 50 tanh hidden units. Third row: Elman RNN with 10 ELU hidden units and 50 ELU hidden units.}
\label{NLPcomparison}
\end{figure}

\subsection{Financial Time Series Data} \label{NumericalOrderBook}
Finally, we compare RTRL and TBPTT for training a single-layer Elman recurrent network on financial time series data. The dataset is order book data from the stock Amazon consisting of a time series of approximately 180 million order book events from January 2, 2014 - September 1, 2017. The objective function is the cross-entropy error to classify whether the mid-price increases, decreases, or stays the same at the next event in the time series. Recurrent neural network classification models for price moves have been previously   trained on order book data using the TBPTT algorithm \cite{UniversalSirignanoCont}. In the example here, the recurrent network has $25$ hidden units with $\tanh()$ activation functions. The input variables to the recurrent neural network are the previous price change and the current order book queue imbalance \cite{QueueImbalance}. A softmax layer is applied to its output to produce a probability distribution over the three classes (price increases, decreases, or stays the same). Figure \ref{OBcomparison} compares the training results from TBPTT and RTRL. 

\begin{figure}[htbp]
\centering
\includegraphics[width=5.5cm]{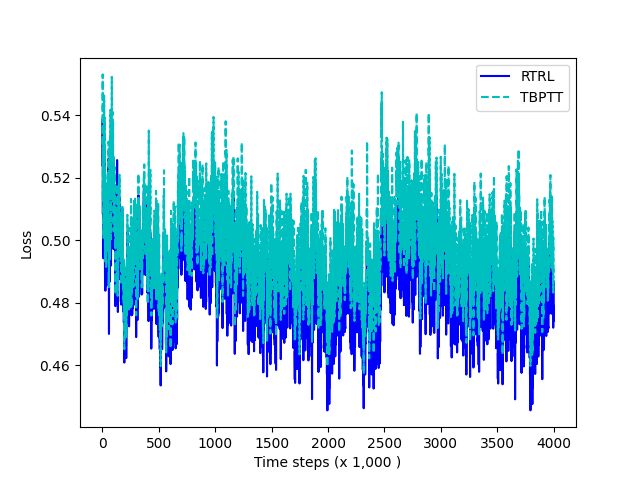}
\includegraphics[width=5.5cm]{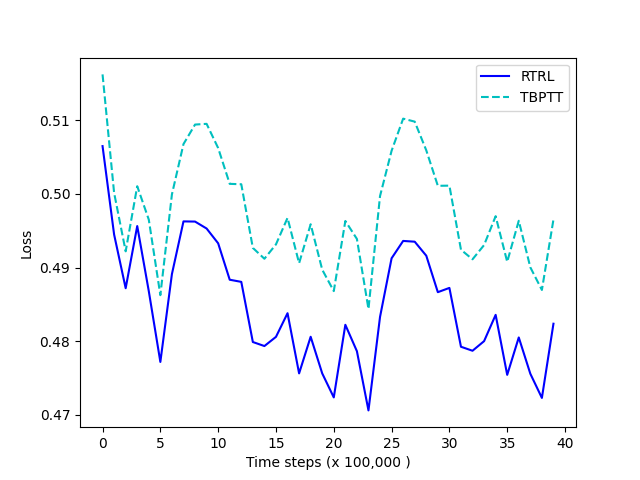}
\caption{Left: Loss averaged over $1,000$ time steps for each ``training epoch". Right: Loss averaged over $100,000$ time steps for each ``training epoch" to provide a less noisy comparison.}
\label{OBcomparison}
\end{figure}

\section*{Acknowledgements}
This article is part of the project ``DMS-EPSRC: Asymptotic Analysis of Online Training Algorithms in Machine Learning: Recurrent, Graphical, and Deep Neural Networks" (NSF DMS-2311500). Samuel Lam's research is also supported by the EPSRC Centre for Doctoral Training in Mathematics of Random
Systems: Analysis, Modelling and Simulation (EP/S023925/1).

\appendix

\section{Bounds for the Derivatives of Sigmoid Function} \label{CubicEquation}

\begin{proof}[Proof of Lemma \ref{L:BoundsSigmoidFcn}]
Since $e^{-y} > 0$, we have $0 < \sigma(y) < 1$, which leads to the first bound. The second bound follows from:
\begin{equation*}
    \sigma'(y) = p_1(\sigma(y)), \quad p_1(x) = x(1 - x) = x - x^2 = -\bracket{x - \frac{1}{2}}^2 + \frac{1}{4}.
\end{equation*}

Since $0 < \sigma(y) < 1$, it follows that $0< \sigma'(y) < 1/4$, with maximum of $\sigma'(y)$ attained at $\sigma(y) = 1/2 \iff y = 0$. This leads to the second bound. \\

Differentiate again yields
\begin{align*}
\sigma''(y) &= \sigma'(y) - 2\sigma(y) \sigma'(y) \\
&= \sigma(y) - \sigma(y)^2 -2 \sigma(y)^2 + 2 \sigma(y)^3 \notag \\
&= p_2(\sigma(y)),
\end{align*}
where
\begin{equation*}
p_2(x) = 2x^3 - 3x^2 + x = 2\bracket{x-\frac{1}{2}}^3 - \bracket{x-\frac{1}{2}}.
\end{equation*}

The derivative of $p_2(x)$ is 
\begin{equation*}
p'_2(x) = 6x^2 - 6x + 1 = 6\bracket{x-\frac{1}{2}}^2 - \frac{1}{2},
\end{equation*}
which has zeroes at 
\begin{equation*}
    x^\pm_* = \frac{1}{2} \pm \frac{\sqrt{3}}{6}, \quad x^+_* \approx 0.789, \quad x^-_* \approx 0.211.
\end{equation*}

Checking $p_2(0) = p_2(1) = 0$, one could confirm that $\sigma''(y)$ attains maxmimum at $\sigma(y) = p^+$ and minimum at $\sigma(y) = p^-$. One could then show that $p_2(x^+_*) \leq \sigma''(y) \leq p_2(x^-_*)$, where
\begin{equation}
    p_2(x^-_*) = -p_2(x^+_*) = \frac{\sqrt{3}}{18} < \frac{1.8}{18} = \frac{1}{10}.\nonumber
\end{equation}

Differentiate again yields
\begin{align*}
\sigma'''(y) &= 6 \sigma(y)^2 \sigma(y) ( 1 - \sigma(y) ) - 6 \sigma(y)^2 (1 - \sigma(y) ) + \sigma(y) (1 - \sigma(y)) \\
&= 6 \sigma(y)^3 - 6 \sigma(y)^4 - 6 \sigma(y)^2 + 6 \sigma(y)^3 + \sigma(y) - \sigma(y)^2 \\
&= p_3(\sigma(y)),\nonumber
\end{align*}
where 
\begin{equation*}
p_3(x) = -6x^4 + 12x^3 - 7x^2 + x = -6\bracket{x-\frac{1}{2}}^4 + 2\bracket{x-\frac{1}{2}}^2 - \frac{1}{8}.
\end{equation*}

The polynomial $p_3(x)$ has derivative
\begin{equation*}
p_3'(x) = -24x^3 + 36x^2 - 14x + 1 = -24\bracket{x-\frac{1}{2}}^3 + 4\bracket{x-\frac{1}{2}},
\end{equation*}
which has zeros at $x_* = 1/2$ \textbf{and}
\begin{equation}
    x_{\pm} = \frac{1}{2} \pm \sqrt{\frac{1}{6}}, \quad x_+ \approx 0.908, \quad x_- \approx 0.092.\nonumber
\end{equation}

Checking again $p_3(0) = p_3(1) = 0$, and noting that
\begin{equation}
    p_3(x_+) = p_3(x_-) = \frac{1}{24}, \quad m(x_*) = -\frac{1}{8},\nonumber
\end{equation}
one could confirm that $\sigma'''(y)$ attains maximum at $\sigma(y) = x_\pm$ and minimum at $\sigma(y) = x_* = 1/2 \equiv y = 0$. Therefore $-1/8 = p_3(x_*) \leq \sigma'''(y) \leq p_3(x_\pm) = 1/24$, which leads to the final bound.
\end{proof}

\section{Important Inequalities} \label{S:important_inequalities}
We indicate the application of the following inequalities
\begin{itemize}
\item the Cauchy-Schwarz inequality: for any $a_k, b_k \in \R$
\begin{equation} \label{eq:CSone} 
    \bracket{\sum_{k=1}^n a_k b_k}^2 \leq \bracket{\sum_{k=1}^n a_k^2} \bracket{\sum_{k=1}^n b_k^2},
\end{equation}
as well as its special case
\begin{equation} \label{eq:CStwo}
    \bracket{\sum_{k=1}^n a_k}^2 \leq n \bracket{\sum_{k=1}^n a_n^2},
\end{equation}
\item the Young's inequality: for any $\epsilon > 0$, 
\begin{equation} \label{eq:Young}
    |ab| \leq \frac{\epsilon}{2} a^2 + \frac{1}{2\epsilon} b^2,
\end{equation}
\item the Jensen inequality: for any convex functions $\varphi$, and sequence $(w_k)$ such that $\sum_k w_k = 1$, we have
\begin{equation} \label{eq:Jen}
    \varphi\bracket{\sum_{k=1}^n w_k x_k} \leq \sum_{k=1}^n w_k \varphi(x_k)
\end{equation}
\end{itemize}

Many proofs of the technical lemmas also involves the study of a sequence $(a_k)_{k\geq 0}$. If the sequence satisfies the following recursive inequality:
\begin{equation}
    a_k \leq M_1 a_{k-1} + M_2,\nonumber
\end{equation}
for some $M_1, M_2 \geq 0$, then

\begin{itemize}
\item for $M_1 < 1$:
\begin{equation} \label{eq:Recone}
a_k \leq M_1^k a_0 + \frac{1-M_1^k}{1-M_1}M_2 \leq M_1^k a_0 + \frac{1}{1-M_1}M_2,
\end{equation}
which imply
\begin{equation*}
a_0 \leq \frac{M_2}{1-M_1} \implies a_k \leq \frac{M_2}{1-M_1},
\end{equation*}
\item and for $M_1 > 1$, 
\begin{equation} \label{eq:Rectwo}
a_k \leq M_1^k a_0 + \frac{M_1^k-1}{M_1-1}M_2 \leq M_1^k \bracket{a_0 + \frac{M_2}{M_1 - 1}}.
\end{equation}
\end{itemize}

\bibliographystyle{vancouver}
\bibliography{export}
\end{document}